\newcommand\BibTeX{{\rmfamily B\kern-.05em \textsc{i\kern-.025em b}\kern-.08em
T\kern-.1667em\lower.7ex\hbox{E}\kern-.125emX}}
\def\BState{\State\hskip-\ALG@thistlm}
\algnewcommand{\Or}{\textbf{or}\,}
\algnewcommand\algorithmicswitch{\textbf{switch}}
\algnewcommand\algorithmiccase{\textbf{case}}
\def\myMSFigureScale{0.22}
\def\myLineScale{1}
\def\myMSFigureScalend{0.22}
\newcommand{\tikzcircle}[2][red,fill=red]{\tikz[baseline=-0.5ex]\draw[#1,radius=#2] (0,0) circle ;}
\newcommand{\tikzsquare}[2][red,fill=red]{\tikz\draw[#1] (0,0) rectangle (#2, #2) ;}
\newcommand{\tikzline}[2][red,thick]{\tikz\draw[#1,#2] (0,0) -- (0.5em, 0.5em) ;}
\newcommand{\N}{\mathbb{N}}
\newcommand{\R}{\mathbb{R}}
\newcommand{\set}[1]{\{#1\}}
\newcommand{\eps}{\varepsilon}
\newcommand{\argmin}{\mathop{\mathrm{argmin}}}
\newcommand*{\defeq}{\mathrel{\rlap{\raisebox{0.3ex}{$\m@th\cdot$}}\raisebox{-0.3ex}{$\m@th\cdot$}}=}
\begin{document}

\runninghead{Lim et al.}

\title{Lazy Incremental Search for Efficient Replanning with Bounded Suboptimality Guarantees}

\author{Jaein Lim\affilnum{1}, Mahdi Ghanei\affilnum{1}, R. Connor Lawson\affilnum{1}, Siddhartha Srinivasa\affilnum{2} and Panagiotis Tsiotras\affilnum{1}}

\affiliation{\affilnum{1}Georgia Institute of Technology, Atlanta, Georgia\\
\affilnum{2}University of Washington, Seattle, WA}

\corrauth{Jaein Lim, Georgia Institute of Technology, Atlanta, Georgia 30332.}

\email{jaeinlim126@gatech.edu}

\begin{abstract}
We present a lazy incremental search algorithm, Lifelong-GLS (L-GLS), along with its bounded suboptimal version, Bounded L-GLS (B-LGLS) that combine the search efficiency of incremental search algorithms with the evaluation efficiency of lazy search algorithms for fast replanning in  problem domains where edge-evaluations are more expensive than vertex-expansions.
The proposed algorithms generalize Lifelong Planning A* (LPA*) and its bounded suboptimal version, Truncated LPA* (TLPA*), within the Generalized Lazy Search (GLS) framework, so as to restrict expensive edge evaluations only to the current shortest subpath when the cost-to-come inconsistencies are propagated during repair.
We also present dynamic versions of the L-GLS and B-LGLS algorithms, called Generalized D* (GD*) and Bounded Generalized D* (B-GD*), respectively, for efficient replanning with non-stationary queries, designed specifically for navigation of mobile robots.  
We prove that the proposed algorithms are complete and correct in finding a solution that is guaranteed not to exceed the optimal solution cost by a user-chosen factor.
Our numerical and experimental results support the claim that the proposed integration of the incremental and lazy search frameworks can help find solutions faster compared to the regular incremental or regular lazy search algorithms when the underlying graph representation changes often. 
\end{abstract}

\keywords{Replanning, Lazy Search, Incremental Search, Bounded Suboptimality}

\maketitle

\section{Introduction}
Replanning is essential to every decision-making agent operating in a complex or dynamic environment.
Agents in such environments often operate with limited computational resources and with partial information, making it prohibitive to construct an accurate model of a complex environment once and for all. 
Even if constructing an accurate representation of the environment were feasible, it may not be prudent to do so, as the model itself becomes out-of-date in a dynamic environment, and hence any initial plan quickly becomes obsolete or irrelevant.
The ability to replan fast in order to adapt to environment changes is crucial for robust and responsive autonomy.

In this paper, we (re)consider path-planning problems on graph representations of a complex and dynamic environment suitable for efficient replanning.
To better understand the issues involved, consider the case of
%
%
 a mobile robot navigating through a partially known environment with limited sensing. 
To produce a motion plan all the way to a distant location, the robot must make assumptions about the feasibility or cost of out-of-sight, or yet unexplored, edges on the underlying search graph, some of which will likely be incorrect.
As the robot executes its initial plan and perceives new information about the environment, it may need to update the plan to avoid unknown or dynamic obstacles and leverage discovered shortcuts~\cite{Stentz1995, Koenig2005}.

Replanning also appears in settings where the world may be fully known but is too complex to be modeled accurately.
In this case, an initial solution on a coarse model may be improved via replanning on refined models, as the on-board resources allow.
Sampling-based motion planning algorithms offer a case in point.
These algorithms generate an increasingly dense series of graphs by sampling the free configuration space in an anytime fashion. 
Sampling-based planning becomes especially useful in high-dimensional spaces where a global graph representation is neither sufficient nor tractable in order to find a good quality path~\cite{Lavalle1998, Karaman2011}.
The rate of convergence toward the optimal solution is determined by the choice of the replanning strategy used to efficiently update the old plan upon refinement by sampling, and by employing an efficient replanning strategy.
These have been shown to indeed improve the convergence rate in a variety of problems~\cite{Arslan2013, Gammell2015, Strub2020a, Strub2020b}. 
The key idea to efficient replanning is to restrict the replanning routine to only the relevant paths that could possibly improve the current solution upon subsequent graph densifications. 
As a result, the exploitation (replanning) phase can be shortened, leaving more resources available for the exploration (sampling) phase. 

The necessity of replanning is not limited to improving solutions over time; replanning can also help find a solution of a complex problem rather quickly by solving the problem sequentially in relaxed settings. 
For instance, the multi-agent path finding problem (MAPF)~\cite{Stern2019} deals with finding collision-free paths of multiple agents on a graph.
This problem has intractable complexity~\cite{Yu2013}, as the size of the joint planning space grows exponentially with the number of agents.
There exist complete and optimal MAPF solvers \cite{Sharon2015, Boyarski2015, Felner2018, Li2020} which relax the search complexity of the joint planning space by decomposing the problem into multiple single-agent planning problems with incremental inter-agent collision avoiding constraints.
%
%
This decomposition reduces the exponential complexity of the search at the expense of many replannings of the individual agents, as they repair their old paths to satisfy additionally discovered inter-agent constraints online.
Efficient replanning of individual planning problems, thus, helps find the overall solution faster~\cite{Boyarski2020}.
%
Fortunately, in these settings the change in the environment is often relatively small, such that a large part of the previous plan can be reused.
Incremental search algorithms that reuse existing search results can facilitate the update of the current plan, resulting in reduced computations compared to searching from scratch~\cite{Ramalingam1996, Koenig2004, Koenig2005}.
These incremental methods minimally propagate the cost inconsistencies induced by changes in the graph to repair the search tree to become consistent again.
The efficiency of this minimal inconsistency propagation has been widely manifested in many classical robotics applications~\cite{Arslan2013, Gammell2015, Boyarski2020}.

Unfortunately, existing incremental search algorithms~\cite{Koenig2004, Koenig2005, Aine2016} are designed specifically to reduce the number of vertex expansions, and they are agnostic to the number of edge evaluations. 
These methods often evaluate edges excessively to find the new optimal (or bounded suboptimal) solution, incurring a significant computational overhead in problem domains where edge evaluations are expensive. Indeed, in many replanning problems (e.g., navigating in a partially known environment, nonholomic dynamics, sampling-based planning, multi-agent path finding problem), evaluating an edge can be more expensive than expanding a vertex. 
An edge evaluation typically involves multiple collision checks in the configuration space~\cite{Kavraki1996,Lavalle1998}, solving a two-point boundary value problem~\cite{Karaman2010,Webb2013}, propagating the system dynamics with a closed-loop controller~\cite{Kuwata2009}, or checking all-to-all inter-agent collisions~\cite{Shome2020}. 
In this work, we seek to remedy such excessive edge evaluations of prior incremental search algorithms by borrowing ideas from the lazy search framework of~\cite{Bohlin2000, Cohen2014, Hauser2015, Dellin2016, Mandalika2018, Mandalika2019} in problem domains where edge evaluations are more expensive than vertex expansions.

\subsection*{Contributions}
In this paper we present a class of incremental search algorithms that improve upon existing incremental search algorithms such as Lifelong Planning A* (LPA*)~\cite{Koenig2004} and D*-Lite~\cite{Koenig2005} and their bounded suboptimal variants, namely, Truncated LPA* (TLPA*) and Truncated D*-Lite (TD*)~\cite{Aine2016}. 
We address their common drawback, often overlooked in classical planning, namely, excessive edge evaluations, and we propose to generalize the current incremental search algorithms using a lazy search framework to mitigate unnecessary edge evaluations.

Our proposed generalization saves a significant amount of computation that is wasted when evaluating irrelevant edges. 
As a result, the proposed algorithms find the solution much faster compared to classical incremental search algorithms, especially in problem domains where edge evaluations are expensive.

We provide theoretical results that guarantee the completeness and correctness of the proposed algorithms.
Specifically, we show that the returned solution is bounded above by a user-chosen multiplicative factor of the optimal solution, given the graph.
Our numerical experiments support our claim that generalizing incremental search algorithms within the lazy search framework indeed result in much faster replanning. 

This paper extends our previous work~\cite{Lim2021}, where we proposed L-GLS, an algorithm
that combines the vertex efficiency of LPA*~\cite{Koenig2004} with the edge efficiency of GLS~\cite{Bohlin2000, Mandalika2018, Mandalika2019} for more efficient replanning in problem domains where edge-evaluations are expensive, by adding several implementation details, and 
by relaxing the optimality constraint so as to find a bounded suboptimal solution more quickly based on the two relaxation techniques, namely, truncation of inconsistency propagation during repair~\cite{Aine2016} and by adding an inflation heuristic edge estimate during search~\cite{Pohl1970}. 
We introduce three additional algorithms (B-LGLS, GD*, B-GD*) and show that 
by relaxing the optimality constraint of previous lazy incremental search algorithms we can further save computational resources, especially when the changes are frequent, yet rarely significant. 
Extensive experimental results are also included  to demonstrate the efficacy of our proposed 
methods for replanning applications in dynamic environments. 
%

%
%

The paper is organized as follows: in the next section we provide a taxonomy and a comprehensive literature review of the relevant algorithms, namely, those that utilize either lazy search, bounded suboptimal search, or incremental search, highlighting the connections with the proposed methods. 
Then, we formally introduce the problem formulation and the notation used throughout the paper and
we present an optimal lazy incremental search algorithm, namely, Lifelong-GLS (L-GLS).
%
Afterwards we present its bounded suboptimal version, namely, Bounded L-GLS (B-LGLS), which ensures that a replanned solution does not exceed the optimal solution by a given factor.  
The next two sections
present  dynamic versions of the lazy incremental search algorithms L-GLS and B-LGLS, called Generalized D* (GD*) and Bounded Generalized D* (B-GD*), respectively, for non-stationary planning queries in a partially known environment.
We  provide extensive experimental results comparing the proposed algorithms 
for solving replanning problems in dynamic environments using sampling-based algorithms and navigation problems in dynamic graphs.
Finally, we conclude the paper
with a summary of our results and their impact on real-world replanning applications.

\section{Related Work} \label{sec:related_work}

Many prior works have studied incremental graph search and lazy motion planning, but few integrate these approaches into a single algorithm.
In this section we offer a comprehensive literature review of these works and situate our own work within the existing taxonomy of lazy and incremental search algorithms.

We classify planners according to their utilization in terms of three important properties:

\begin{enumerate}
    \item \textit{Lazy search}: The planner uses an admissible edge-weight heuristic to reduce the number of relatively expensive edge evaluations. 
    Lazy planners vary primarily in the manner they select edges for full evaluation.
    
    \item \textit{Bounded suboptimality}: The planner returns a solution with cost within a specified multiplicative bound of the optimal cost. 
    Given a bounded suboptimal planner, an anytime asymptotically optimal planner may be constructed by repeatedly solving the same query, while decreasing the suboptimality bound.
    
    \item \textit{Lifelong planning}: The planner reuses prior search results to accelerate subsequent planning queries. 
    For sequences of similar queries, lifelong planning significantly reduces the total cost of solving all queries compared to replanning from scratch. 
    We restrict the definition of lifelong planning to include only planners that can handle arbitrary changes to the underlying graph.
\end{enumerate}

\begin{figure*}
    \centering
    \includegraphics[width=\textwidth]{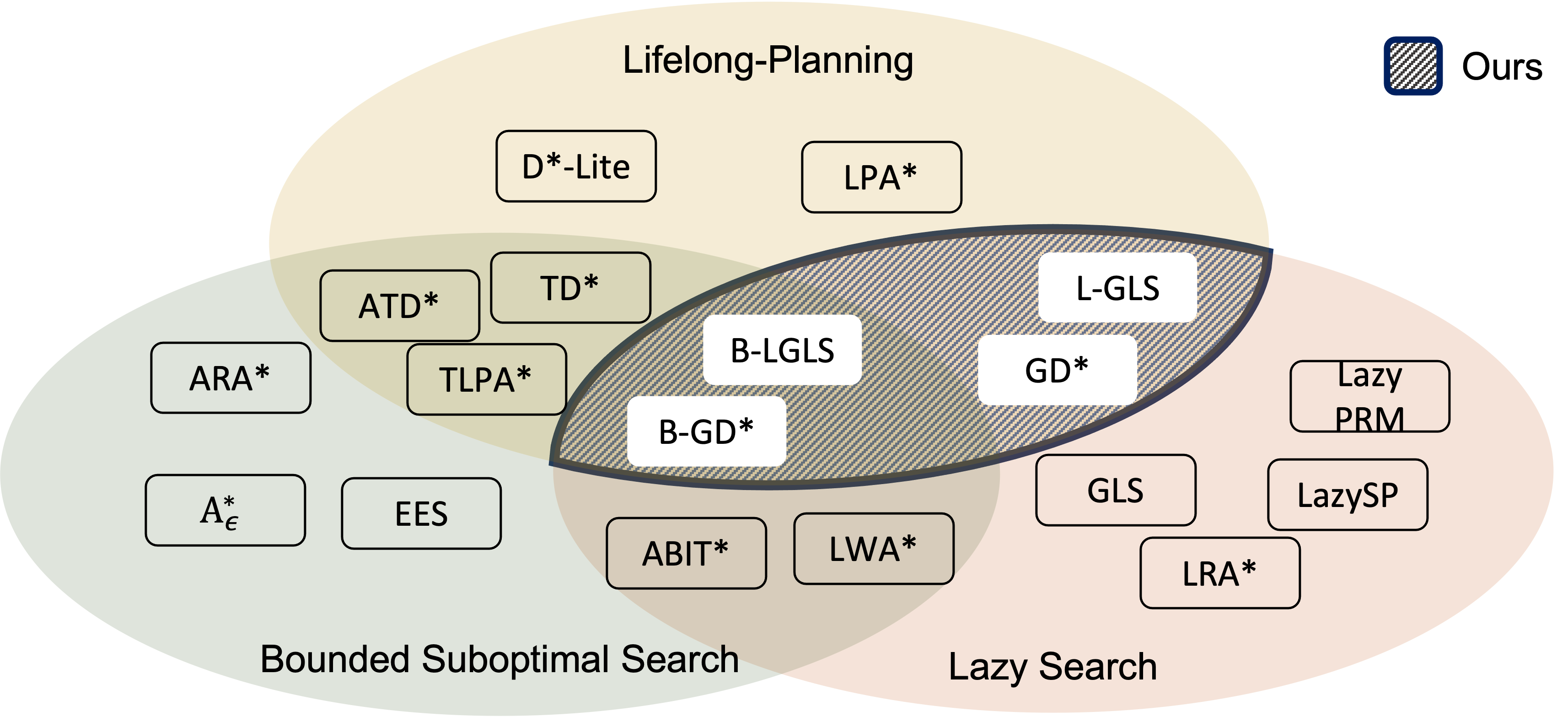}
    \caption{Taxonomy of relevant motion planning algorithms.}
    \label{fig:taxonomy}
\end{figure*}

Figure~\ref{fig:taxonomy} illustrates some of the most popular current planners according to the above classification.
The following subsections give a brief summary each class of planners, describing the salient features of each algorithm.

\subsection{Lazy Search}

The earliest and most widely studied category of planners are based lazy search. 
Lazy edge evaluation as a form of heuristic for faster motion planning stems from the observation that in many practical settings, the time spent on edge evaluation dominates the rest of the computation time~\cite{Hauser2015}.
Thus, reducing the number of edge evaluations presents a natural opportunity to speed up planning.
When inexpensive admissible heuristics that underestimate the cost of an edge are available, planning may proceed for a while using only these heuristics to guide the search, deferring evaluation until it is necessary to ensure correctness.

Lazy Probabilistic Roadmap (LazyPRM) \cite{Bohlin2000} was the first algorithm to introduce laziness, initializing all edges by a heuristic instead of performing actual edge evaluations.
Upon completing the graph search, the edges on the candidate solution path are evaluated, after which the graph is updated with their true edge costs, and the search begins anew.
The first solution containing only already-evaluated edges is the optimal one.

The Lazy Shortest Path (LazySP) \cite{Haghtalab2018} framework generalizes LazyPRM by explicitly introducing an \textit{edge selector} function.
Upon finding a new candidate solution, the user-specified edge selector function can choose any combination of edges to evaluate, including those not belonging to the candidate path.
The authors in \cite{Haghtalab2018} demonstrate this approach by incorporating prior information about the environment into the edge selector.
Evaluating edges on a candidate path in order, from start to goal, as in LazyPRM, is called \textit{forward selection}.

Both LazyPRM and LazySP algorithms search for a complete candidate path to the goal before any edge evaluation is performed, a behavior of \textit{infinite lookahead}.
Infinite lookahead is proven to minimize edge evaluations before an optimal path is found, but it requires a high search effort to generate many complete candidate paths.
Lazy Receding-Horizon A* (LRA*)~\cite{Mandalika2018} introduces the more general notion of \textit{$n$-step lookahead}, in which the search is limited to $n$ edges beyond the current evaluated frontier, after which a cost-to-go heuristic is used to estimate the remainder of the path cost.
The parameter $n$ allows the user to tune the computational effort spent on lazy search.

Generalized Lazy Search (GLS)~\cite{Mandalika2019} further generalizes LazySP by introducing user-specified \textit{evaluation events}.
In GLS the search continues until the conditions of a specified event are met, at which point the search immediately stops and edge evaluation begins.
Infinite and $n$-step lookahead are examples of evaluation events.
The user may also specify an arbitrary edge selector, as in LazySP.
The theoretical properties of GLS do not explicitly depend on the underlying search algorithm.
The new algorithms presented in this work are instances of the GLS framework, specifying particular underlying search algorithms and evaluation events that preserve correctness, while incorporating additional desirable properties.

Advanced Informed Trees (AIT$^*$)~\cite{Strub2020a} is another algorithm that uses lazy edge evaluations to build informed heuristics via a backward search rooted at the goal vertex.
%
%
While AIT$^*$ is comparable in spirit and performance to the other lazy algorithms presented here, it notably lacks in flexibility.
A heuristic built via lazy backward search must necessarily use infinite lookahead, since it must connect to the forward search tree to be usable at all.
The GLS framework offers more flexibility balancing computational effort between search and evaluation, as well as offering the potential to incorporate prior knowledge into both search and heuristic estimation.
Therefore, in this work we limit our attention to GLS-based algorithms.

\subsection{Bounded Suboptimality}

Strictly optimal motion planning is computationally difficult and may be unnecessary in many problem instances.
Algorithms with bounded suboptimality exploit this fact to achieve dramatic performance improvements by returning suboptimal solutions with cost within a constant factor of the optimal one.
Note that, unlike lazy search, the heuristic in these algorithms estimates the cost-to-go rather than the cost of traversing an edge.
%

Weighted A$^*$ ($w$A$^*$)~\cite{Pohl1970} inflates the given heuristic by a constant factor to bias the search.
Increasing the inflation factor prioritizes expansion of vertices close to the goal, since the heuristic comprises a smaller proportion of their estimated solution cost.
This modification lends the search a greedy character, leading to faster discovery of an initial solution.
The returned solution suboptimality factor equals the heuristic inflation factor.
Anytime Repairing A$^*$ (ARA$^*$)~\cite{Likhachev2003} tracks and propagates only the inconsistencies introduced by changing the suboptimality bound in Weighted A$^*$, resulting in a more efficient anytime implementation.
Note that ARA* is an anytime algorithm but not a lifelong-planning algorithm as it does not handle general graph changes.

\newcommand{\Aeps}{A$^*_\eps\,$}
Focal search (\Aeps)~\cite{Pearl1982} incorporates directly the use of an inadmissible heuristic.
Without modification, A$^*$ always uses an admissible heuristic to decide vertex expansion, increasing the lower bound on possible solutions at every iteration until an optimal solution is found.
\Aeps, on the other hand, applies a suboptimality factor to the current A$^*$ lower bound, producing a range of solution costs which it considers \textit{in focus} and a corresponding \textit{focal set} of in-focus vertices.
The inadmissible heuristic can be used to order the expansion of the focal set without violating the suboptimality bound.
When the focal set is empty, \Aeps expands the next vertex by an admissible heuristic, raising the lower bound and replenishing the focal set.
Since inadmissible heuristics can be much better estimators of path cost compared to admissible heuristics, \Aeps can find a solution with far fewer expansions than A$^*$.

Several extensions based on focal search have been explored. \citet{Cohen2018} introduce the Anytime Focal Search (AFS) framework that addresses considerations for transforming focal search into an efficient anytime algorithm, concluding that the use of bounded-cost rather than bounded-suboptimal subsearches yields a more efficient algorithm overall.
%
Explicit Estimation Search (EES)~\cite{Thayer2011} builds on focal search, using an additional heuristic to take the potential cost increment into consideration to remove the negative correlation between OPEN and FOCAL. 
Multi-Heuristic A$^*$ (MHA$^*$)~\cite{Aine2016a} provides a framework to combine multiple arbitrarily inadmissible heuristics with a single admissible heuristic for bounded suboptimal search and a finite number of expansions of each vertex.
It should be noted that
these algorithms solve a single-query problem, and they do not handle general graph changes.

\subsection{Lifelong Planning}

Incremental search algorithms solve a sequence of similar problems efficiently by reusing the previous search results to facilitate finding a new plan quickly~\cite{Ramalingam1996, Koenig2004, Aine2016}. Instead of building a new search tree from scratch, these methods identify the portion of the previous search tree that is inconsistent with the graph changes. Expanding only the inconsistent vertices produces a new optimal solution and a tree consistent with the changed graph. These methods can often expand significantly fewer vertices compared to searching from scratch after each graph change.

The LPA* algorithm~\cite{Koenig2004} utilizes a consistent heuristic to restrict the repair of the search tree to only the relevant part for the current problem, making the search tree consistent with the relevant graph changes. 
LPA* chooses the inconsistent vertices from a priority queue similar to the one used in A* search~\cite{Hart1968}, such that only the optimal path candidates are chosen to be repaired, 
in a best-first search fashion. 
LPA* stores, for each vertex, two distinct cost-to-come values to identify cost inconsistencies, and finds the new optimal solution by propagating the cost-to-come inconsistency upon graph changes. 
LPA* is provably optimal and efficient, in the sense that no vertex is expanded more than twice; also, it does not make any limiting assumptions about the structure of the underlying graph. 
It can work with any type of graphs as well as any type of changes (e.g., edge addition/removal, weight increase/decrease). 
These theoretical properties have made the LPA* algorithm 
(and, in particular, its dynamic version, D*-Lite)
the backbone to numerous applications where efficient replanning is imperative~\cite{Koenig2005, Arslan2013, Gammell2015, Koenig2002}. 

Recently, the efficiency of LPA* has been further improved at the expense of optimality, by truncating the inconsistency propagation as early as the current solution is guaranteed to be bounded suboptimal~\cite{Aine2016}. 
Finding the optimal solution exactly is usually a computationally expensive process, especially when there exists many good optimal path candidates~\cite{Pearl1982}. 
If finding a good enough solution instead of the exact solution is satisfactory, then a significant amount of computations spent to find the best among a set of good solutions can be eliminated.
Truncated Lifelong Planning A* (TLPA*)~\cite{Aine2016} achieves this by stopping the repair procedure of LPA* as soon as the current solution is guaranteed not to exceed the best possible solution in the current graph more than a given factor. 
Compared to LPA*, which uses a binary notion of change to propagate all changes to find the optimal solution regardless of their impact, TLPA* only expands the vertices with significant changes. 
Hence, TLPA* can expand much fewer vertices compared to LPA*, as it restricts repair to both the relevant and the significantly changed part of the tree.

Unfortunately, both of the LPA* and TLPA* algorithms are vertex-optimal, but they are indifferent to the number of edge evaluations. They often incur a significant amount of unnecessary edge evaluations, slowing down the replanning process. 
In this paper we address this issue by generalizing these algorithms within the lazy search framework.
Before delving into this topic further, let us first discuss the two properties of LPA*/TLPA* that cause unnecessary edge evaluations.

\begin{figure*}[ht]
	\centering
	\begin{subfigure}{\myMSFigureScale\textwidth}
		\includegraphics[width=\myLineScale\linewidth]{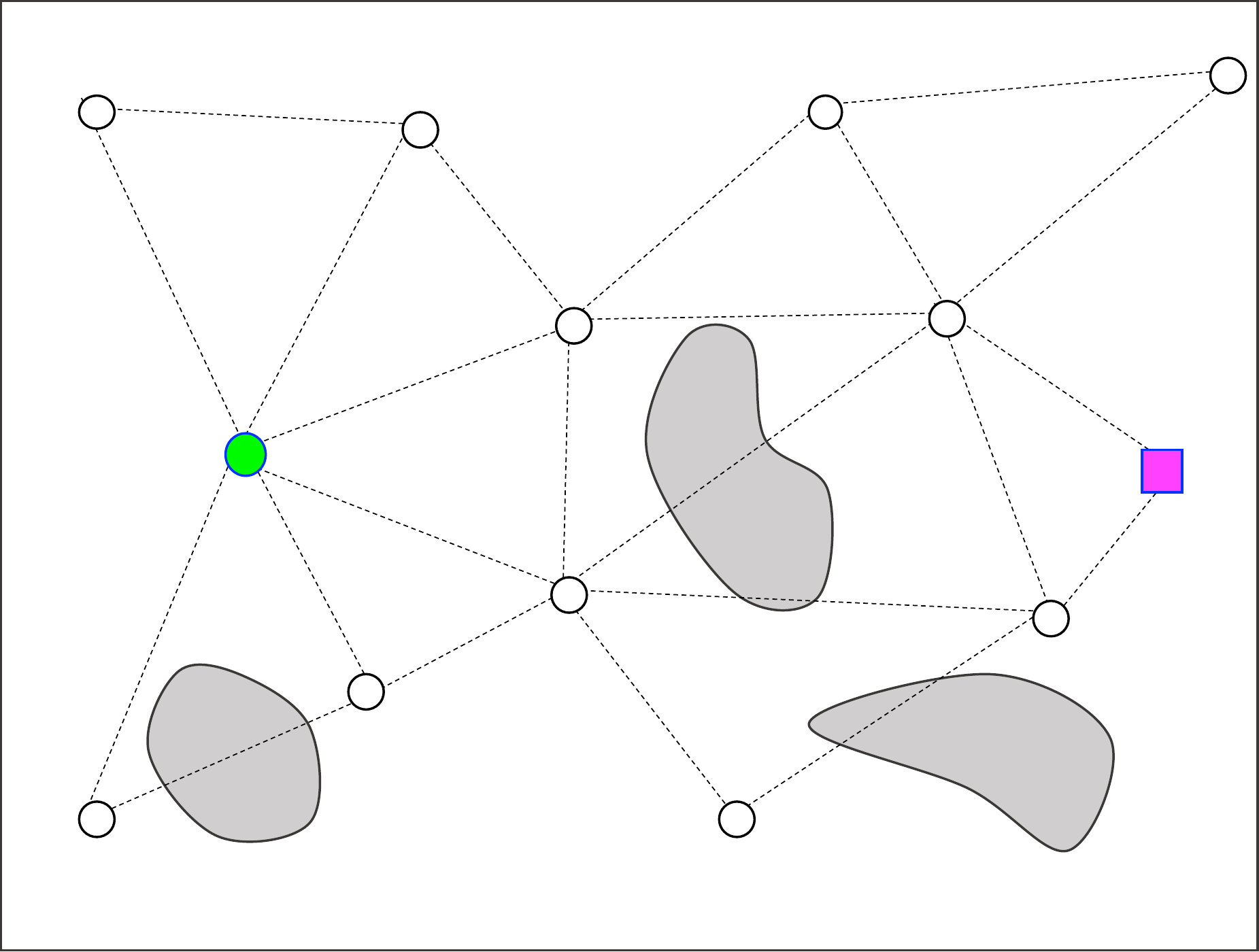}
		\caption{}
	\end{subfigure}
	\begin{subfigure}{\myMSFigureScale\textwidth}
		\includegraphics[width=\myLineScale\linewidth]{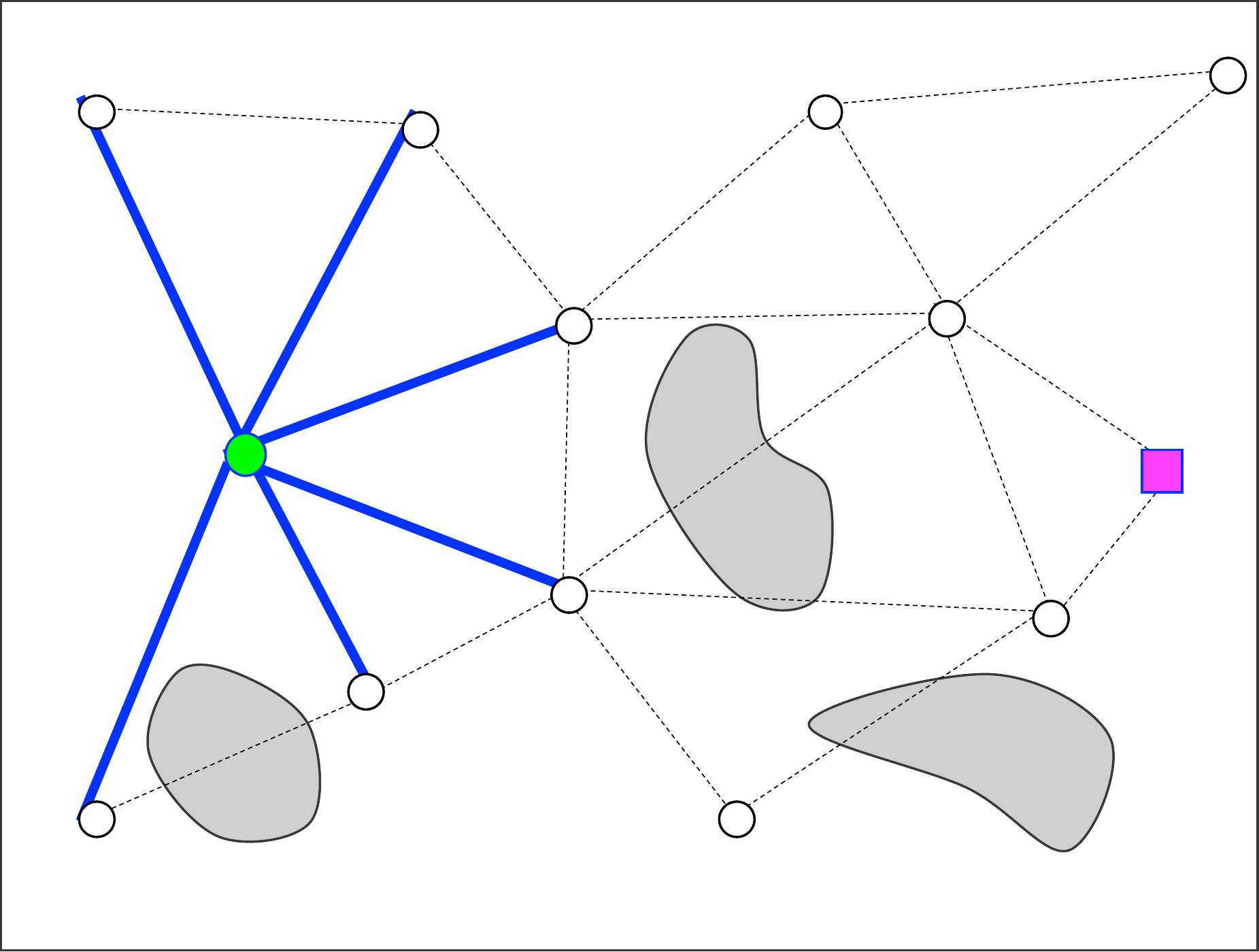}
		\caption{}
	\end{subfigure}
		\begin{subfigure}{\myMSFigureScale\textwidth}
		\includegraphics[width=\myLineScale\linewidth]{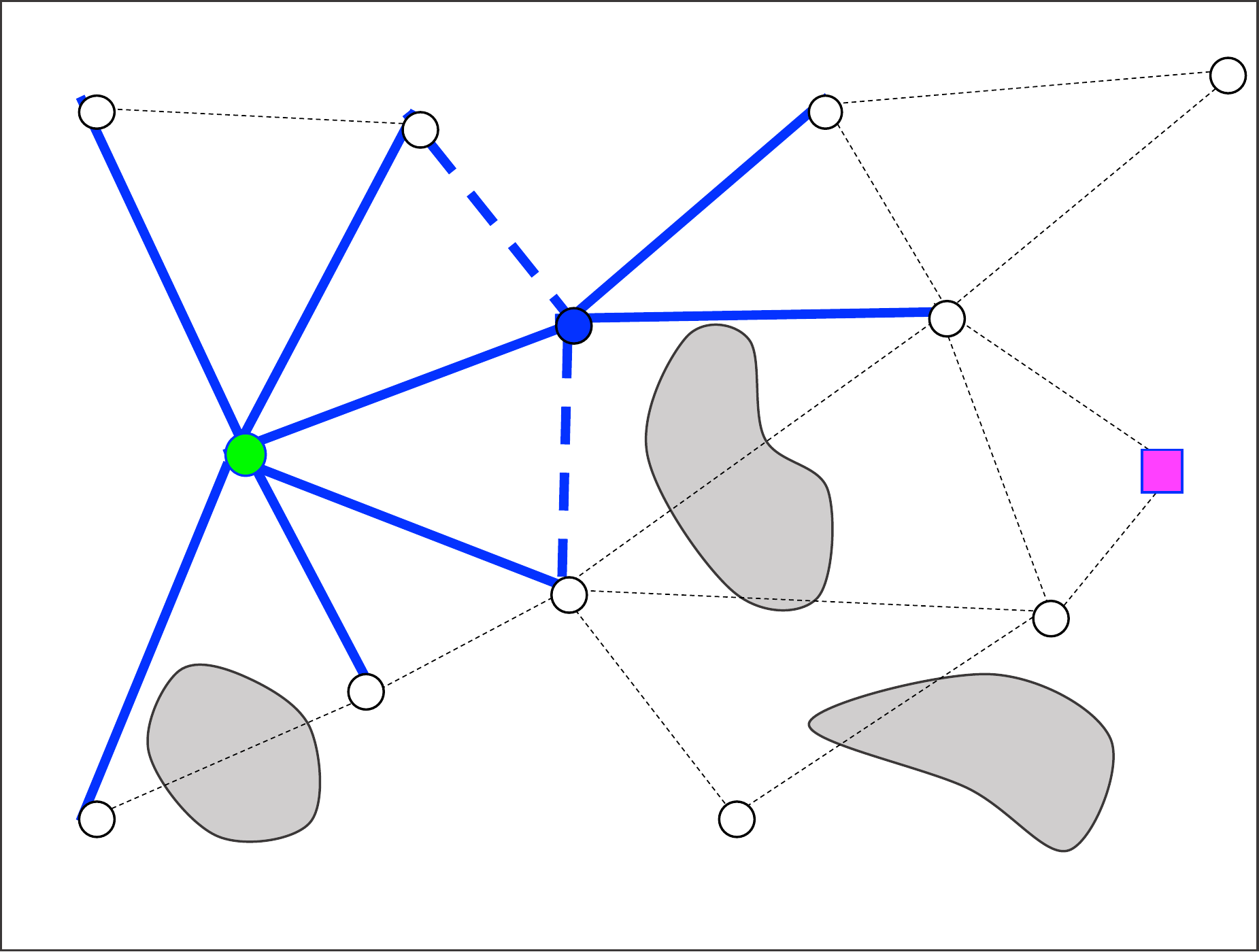}
		\caption{}
	\end{subfigure}
	\begin{subfigure}{\myMSFigureScale\textwidth}
		\includegraphics[width=\myLineScale\linewidth]{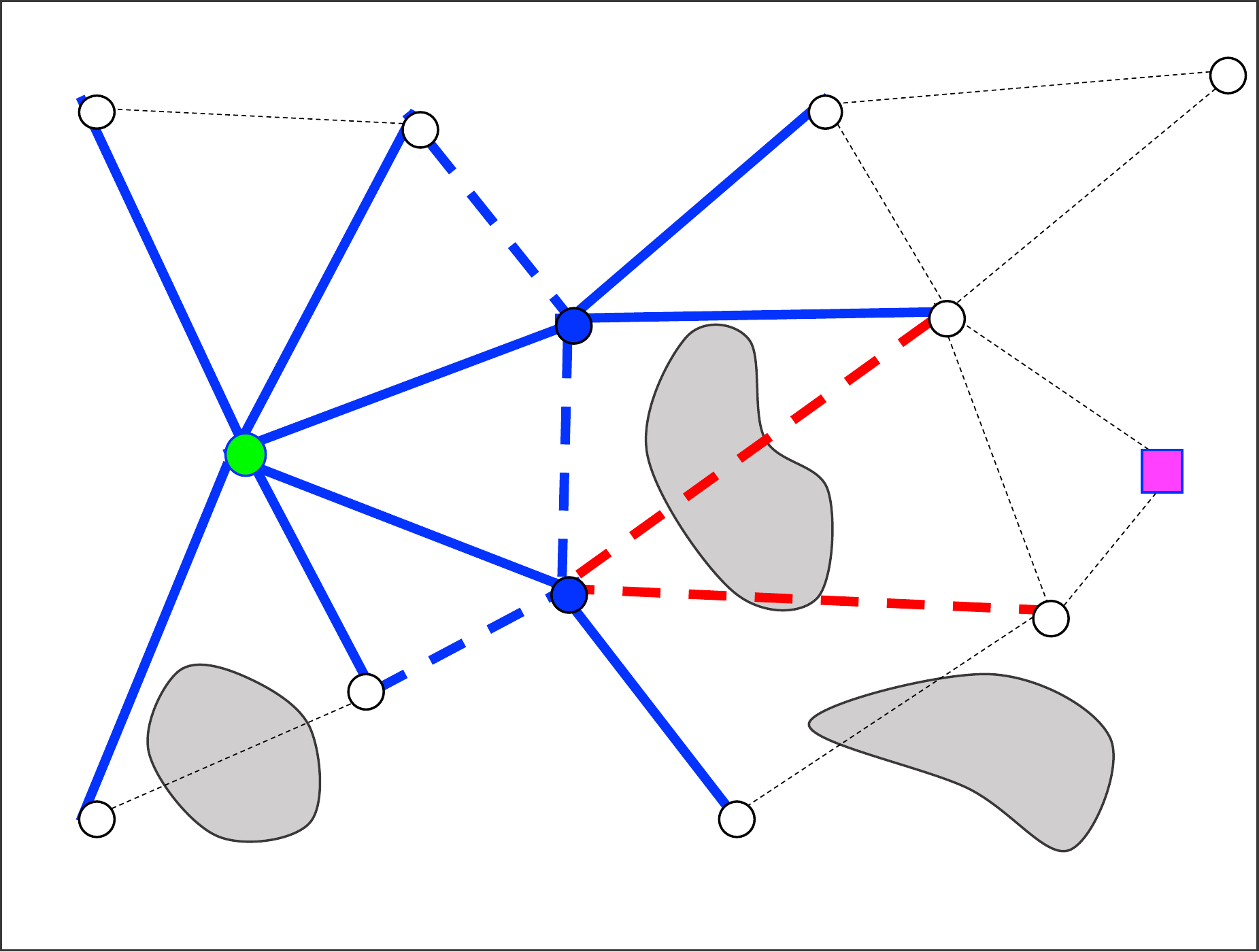}
		\caption{}
	\end{subfigure}
		\begin{subfigure}{\myMSFigureScale\textwidth}
		\includegraphics[width=\myLineScale\linewidth]{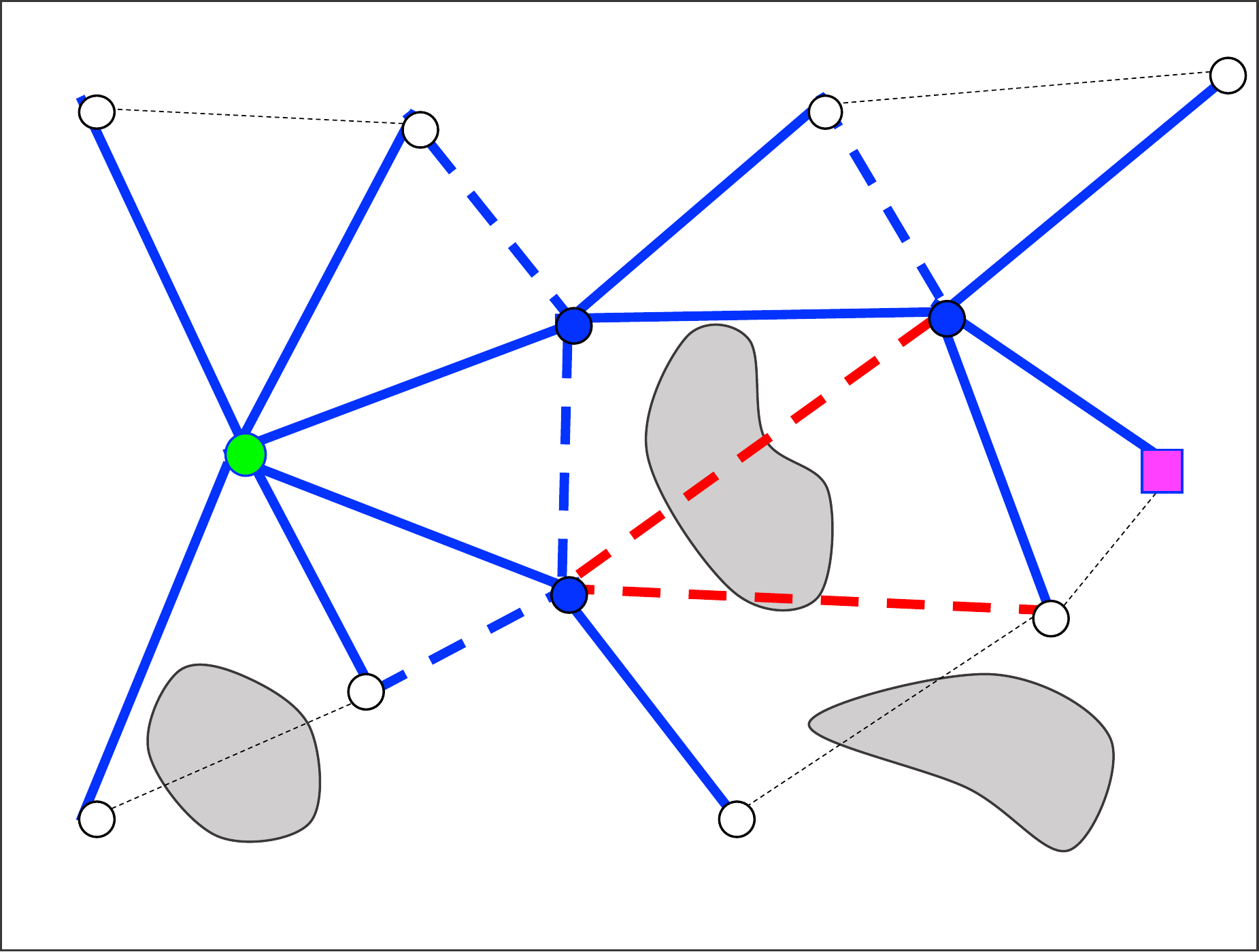}
		\caption{}
	\end{subfigure}
	\begin{subfigure}{\myMSFigureScale\textwidth}
		\includegraphics[width=\myLineScale\linewidth]{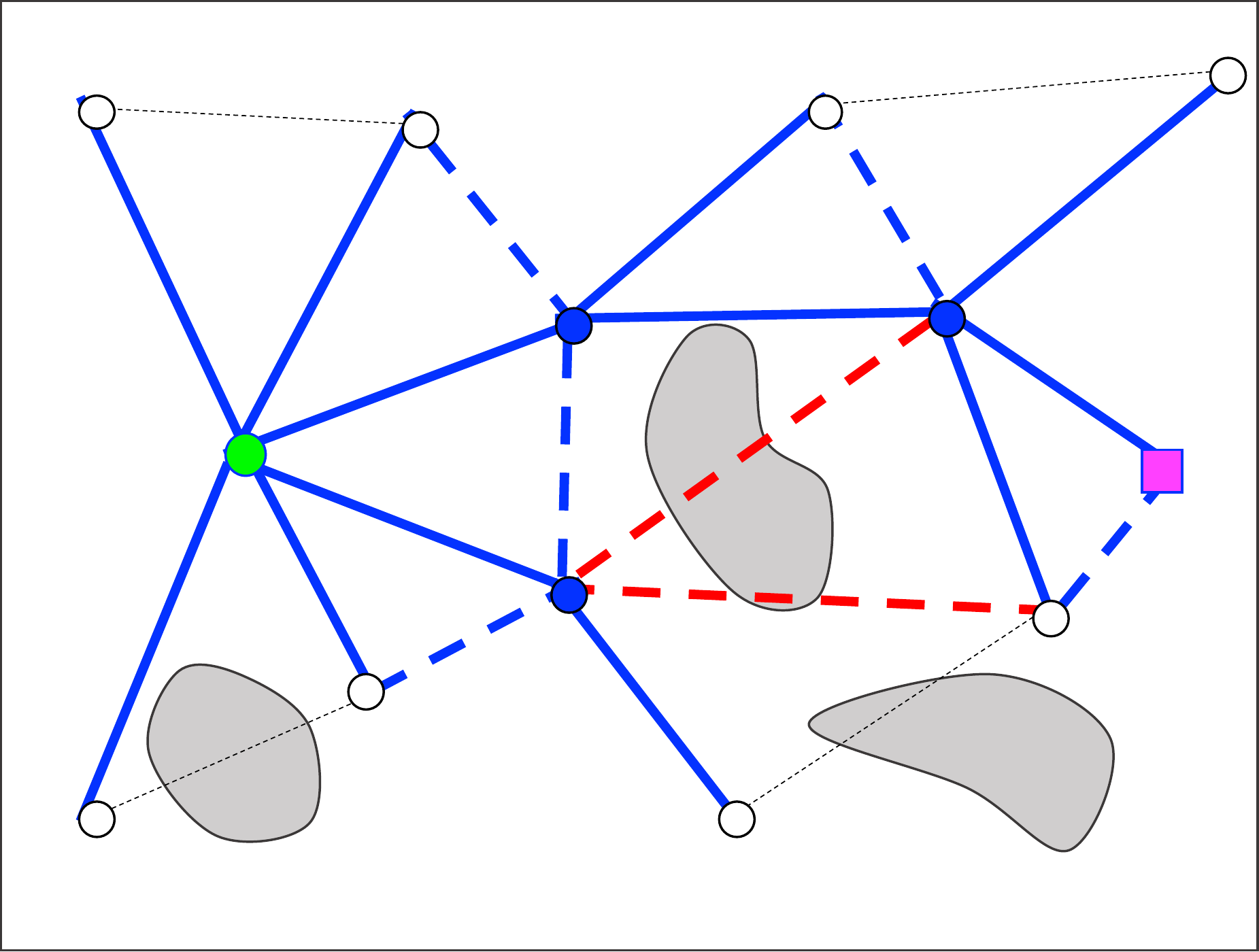}
		\caption{}
	\end{subfigure}
		\begin{subfigure}{\myMSFigureScale\textwidth}
		\includegraphics[width=\myLineScale\linewidth]{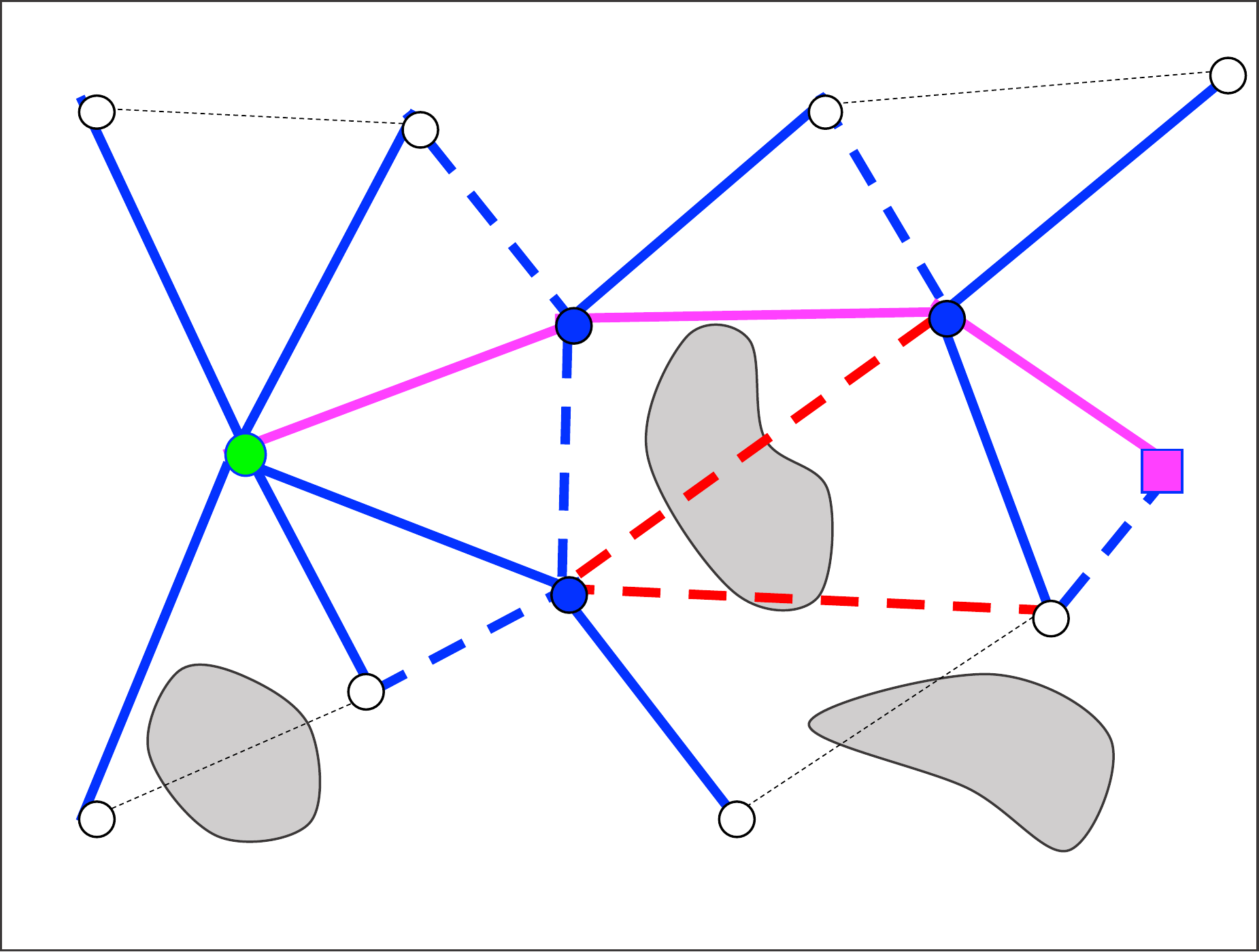}
		\caption{}
	\end{subfigure}
	\begin{subfigure}{\myMSFigureScale\textwidth}
		\includegraphics[width=\myLineScale\linewidth]{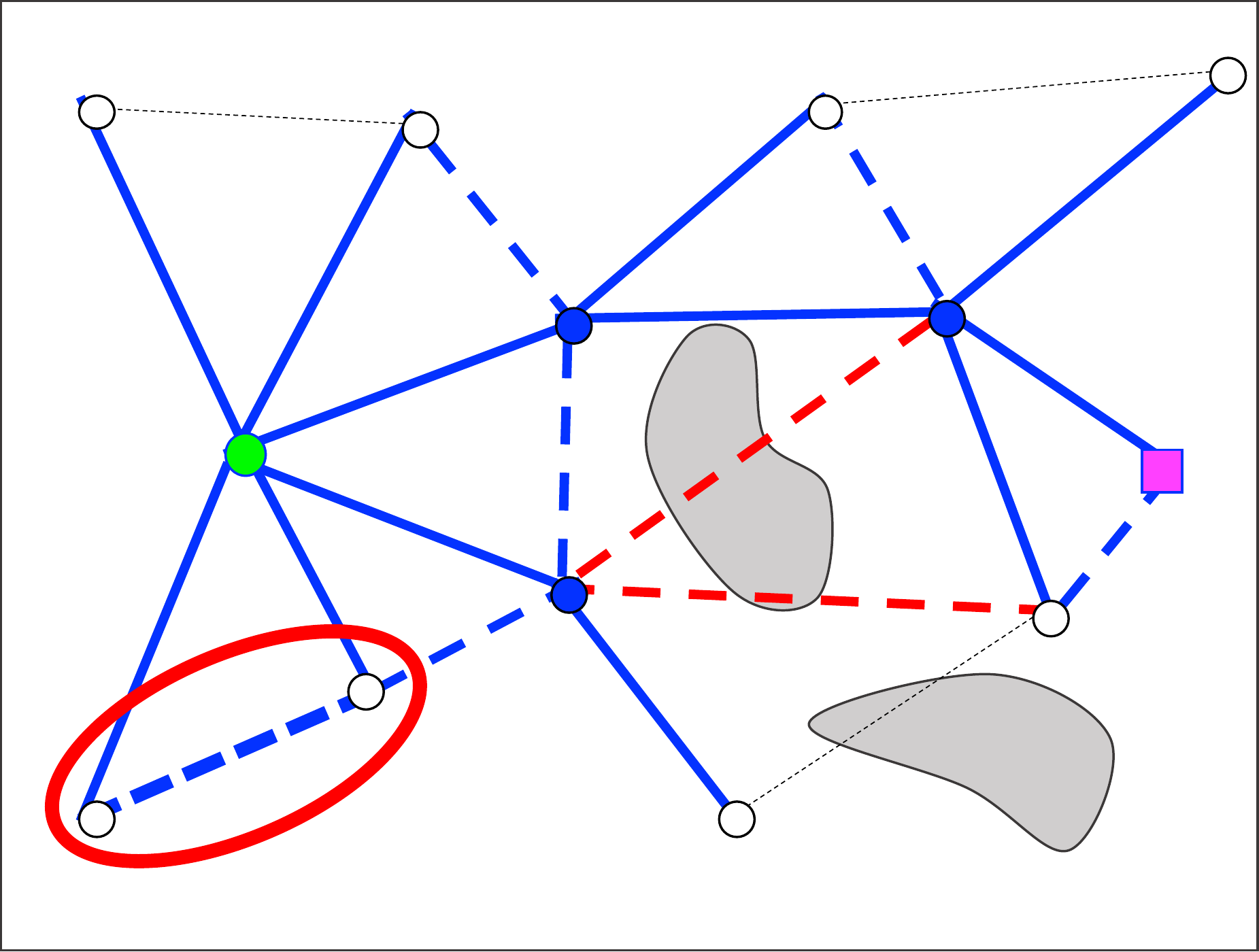}
		\caption{}
	\end{subfigure}
	\caption{The propagation of the LPA* search from (a) to (h) while it searches to find the shortest path from the start vertex~(\tikzcircle[blue, fill=green]{2.5pt}) to the goal vertex~(\tikzsquare[blue, fill=magenta]{4.5pt}) given 
	the graph in a dynamic environment. 
	The colored lines are the evaluated edges, where the bold edges are part of the current search tree and the dashed edges are not. The expanded vertices are shown with blue dots. 
	The search begins by expanding the start vertex as shown in (b), 
	and stops after expanding the goal vertex, as shown in (f).
	The optimal solution is found in the search tree in (g). 
	All the incident edges of the expanded vertices are evaluated regardless of their relevancy to the current problem. 
	After finding the optimal solution, the environment changes in the bottom left corner in (h), where the previous solution does not change.
    Evaluating this edge does not produce any cost-to-come inconsistencies which are relevant to the new problem instance.
	}
	\label{lgls:f:lpa_drawback}
\end{figure*}

LPA* needs to evaluate all the incident edges when expanding an inconsistent vertex to find a new optimal parent. 
Similar to A*, which expands the frontier vertices with the lowest cost estimates (the so-called $f$-value) as a best-first search to grow the optimal search tree, LPA*/TLPA* expands only the inconsistent vertices in a best-first manner to repair the optimal search tree. 
Hence, when a vertex is expanded to find the new optimal parent, all the values of the incident edges must have been known for the correctness of the algorithm. 
This A*-like propagation is often referred as a “zero-step lookahead” in the literature~\cite{Mandalika2018, Mandalika2019}, where a no heuristic estimate of the edge value is used to prioritize the next best vertex\footnote{This terminology is different from the one-step lookahead used in LPA*~\cite{Koenig2004}, which refers to the rhs-value of an inconsistent vertex being one-step better informed than its g-value with respect to the current graph.}.
Regardless of their potential to be a part of the optimal path, in LPA* all the incident edges are evaluated upon expanding a vertex. 
Among those edges, some edges are not relevant to the current problem, i.e., they are not helpful in constructing a new optimal search tree, and hence evaluating them is not necessary.
Figure~\ref{lgls:f:lpa_drawback}(a)-(f) illustrates this point, where all the incident edges of expanded vertices are evaluated regardless of their relevancy to the current problem.

In order to identify the inconsistent part of the tree in the current graph, LPA* also needs to evaluate all the changed edges before repair propagation begins. 
When a new edge is evaluated, the vertex corresponding to the child (end) vertex of the edge is updated by assigning a new optimal parent and a new cost-to-come value. 
This vertex is then inserted in the priority queue if the cost-to-come is altered since last search (and is therefore inconsistent), so that the inconsistency information can propagate to its successor vertices. 
When the updated vertex in the search tree cannot possibly be on the optimal path, which can be determined using a consistent heuristic cost-to-go, then the vertex is not expanded, as the change cannot help find the optimal path in the current search, and the propagation stops. 
When the change is not relevant, LPA* never uses this new edge information to find a new optimal solution, wasting computational resources by evaluating the edge. 
Regardless of whether the propagation continues or stops, LPA* needs to evaluate all changed edges, and when there are many changed edges, then evaluating those edges can result in significant computational overhead. 
Figure~\ref{lgls:f:lpa_drawback}(h) illustrates this case, where the edge in the bottom left is evaluated as the obstacle disappears, although such an edge is not relevant to the new problem instance.

Our proposed algorithms eliminate these evaluations by restricting edge evaluations only to the relevant and significant part of the graph.

\section{Problem Formulation} \label{sec:problem_formulation}

We first introduce the common variables and notation that will be used throughout the rest of the paper.
Additional notation and definitions will appear in each section of the proposed algorithms, as they become relevant to describe each specific algorithm. 

\subsection{Lazy Edge Weight Function}
%
%
Let $G =(V,E)$ be a directed graph with vertex set $V$ and edge set $E.$
For a vertex $v\in V$, we denote the set of predecessor vertices of $v$ with $pred(v)$ and the set of its successor vertices with $succ(v)$.
%
%
For each edge $e\in E$, a weight function $w: E\to (0,\infty]$ assigns a positive real number, including infinity, to this edge, e.g., the distance to traverse this edge, and infinity if traversing the edge is infeasible.
%
%
We denote an admissible heuristic weight function with $\widehat{w}: E\to (0,\infty),$
which assigns to an edge a non-overestimating positive real number such that $\widehat{w}(e)\leq w(e)$ for all $e\in E.$ 
We assume that evaluating the true weight $w$ is computationally expensive, but the heuristic edge $\widehat{w}$-value is relatively easy to compute.
%
%
Let $E_\mathrm{eval} \subseteq E$ be the subset of the edge set $E$, whose $w$-values have been computed in the current graph.
%
%
We introduce a lazy weight function $\overline{w} : E \to (0,\infty]$ which assigns to an edge its admissible heuristic weight $\widehat{w}$ before evaluation and its true weight $w$ after evaluation, that is,
\begin{equation}
	\overline{w}(e) \defeq \begin{cases}
		w(e), & \mathrm{if}\; e\in E_\mathrm{eval}, \\
		\widehat{w}(e), & \mathrm{otherwise.}
	\end{cases}
\end{equation}

\subsection{Path Planning Problem}
%
%
Define a path $\pi = (v_1, v_2 ,\ldots, v_m)$ on the graph $G=(V,E)$ as an ordered set of distinct vertices $v_i \in V$, $i = 1,\ldots,m$ such that, $v_1 = v_\mathrm{s}$ the start vertex and $v_m = v_\mathrm{g}$ the goal vertex, and such that
for any two consecutive vertices $v_i, v_{i+1}$, there exists an edge $e = (v_i, v_{i+1}) \in E.$ 
Throughout this paper, we will interchangeably denote a path as the sequence of such edges. 
%
%
With some abuse of notation, we denote the cost of a path as $w(\pi) \defeq \sum_{e\in \pi} w(e)$. 
Likewise, we denote $\overline{w}(\pi) \defeq \sum_{e\in \pi} \overline{w}(e)$ for the lazy cost estimate of the path $\pi$.
%
%
Let $v_\mathrm{s}, v_\mathrm{g} \in V$ be the start and goal vertices, respectively, and define a path-planning problem as the problem of finding a path given a tuple $P= (G,w, v_\mathrm{s},v_\mathrm{g}).$ 
Let $\Pi$ be the set of all finite-cost paths connecting $v_\mathrm{s}$ to $v_\mathrm{g}$ in $G$.
Then, the shortest path-planning problem seeks to solve 
\begin{equation}
	\pi^* \defeq \argmin_{\pi \in \Pi}w(\pi).
\end{equation}
%
%
Given a tuple $P^\varepsilon=(G,w, v_\mathrm{s},v_\mathrm{g},\varepsilon)$ with $\varepsilon>1$, a bounded-suboptimal path-planning problem is the problem of finding a path $\pi^\varepsilon$ such that $w(\pi^\varepsilon) \leq \varepsilon w(\pi^*).$
In this case, we say that the path $\pi^\varepsilon$ is $\varepsilon$-bounded. 

\subsection{Replanning Problem}
%
%
The optimal replanning problem is the problem of finding the shortest paths $(\pi^*_i)_{i\in \N}$, given a sequence of path-planning problem instances $(P_i)_{i\in\N}$, where each $P_i=(G_i, w_i, v_\mathrm{s_i},v_\mathrm{g_i})$ is a path-planning problem, possibly with different graph, weight, start and goal vertices.
%
%
The stationary optimal replanning problem finds the shortest paths given a sequence of path-planning problems $(P_i)_{i\in\N}$ where the start and goal vertices remain the same, 
i.e., $v_\mathrm{s_i}=v_\mathrm{s}$ and $v_\mathrm{g_i}=v_\mathrm{g}$ for all $i\in\N.$ 
%
%
The stationary bounded-suboptimal replanning problem solves bounded-suboptimal path-planning problems given a sequence of of path-planning problems $(P^\varepsilon_i)_{i\in\N}$, where each $P^\varepsilon_i=(G_i,w_i,v_\mathrm{s},v_\mathrm{g},\varepsilon_i)$ has the same start and goal vertices.
%
%
The non-stationary optimal replanning problem is to find the shortest paths given a sequence of path-planning problems with different start vertices but with the same goal vertex. 
Similarly, the non-stationary bounded-suboptimal replanning problem is to find a bounded suboptimal path given a sequence of path-planning problems with different start vertices but with the same goal vertex. 

%
%

\section{Lifelong Lazy Search} \label{sec:lifelong_gls}

In this section we present our first proposed lazy incremental search algorithm, namely, Lifelong-GLS (L-GLS)~\cite{Lim2021} that combines the vertex efficiency of LPA* with the edge efficiency of GLS to solve a stationary optimal replanning problem.
L-GLS returns the shortest path given the current graph, regardless of changes in either the graph topology or the edge values.
L-GLS maintains a lazy LPA* search tree to update the inconsistencies that arise both from graph changes and edge value discrepancies between the heuristic weight and the actual weight.
Unlike LPA* however, L-GLS restricts the edge evaluations only to the optimal path candidates, and unlike GLS, L-GLS uses previous search results to find a new optimal path. 

The lazy LPA* search tree is identical to the standard LPA* search tree~\cite{Koenig2004}, except that lazy LPA* uses the lazy weight function $\overline{w}$ instead of the actual weight function $w$. 

\subsection{Lazy LPA* Search Tree}
%
%
Each vertex of the lazy search tree corresponds to a unique vertex in $G$, storing the two cost-to-come values to reach that vertex from the start vertex, namely, the $g$-value and the $rhs$-value. 
Similarly to LPA*, these two cost-to-come values are used to identify the inconsistent vertices of the search tree.
The $g$-value is the accumulated cost-to-come by traversing the previous search tree, whereas the $rhs$-value is the cost-to-come based on the $g$-value of the predecessor and the $\overline{w}$-value of the current edge.
Hence, the $rhs$-value is potentially better informed than the $g$-value, and it is defined as follows:
\begin{equation}
	rhs(v) \defeq 
	\begin{cases}
		0, & \mathrm{if}\; v=v_\mathrm{s}, \\
		\min_{u\in pred(v)} (g(u)+\overline{w}(u,v)), &
		\mathrm{otherwise.}
	\end{cases}
\end{equation}
A vertex $v$ with $g(v)=rhs(v)$ is called consistent, otherwise it is called inconsistent. 
An inconsistent vertex is locally overconsistent if $g(v)>rhs(v)$ and locally underconsistent if $g(v)<rhs(v).$
%
%
Additionally, the $rhs$-value minimizing the predecessor of $v$ is stored as a backpointer, denoted with 
\begin{equation}
	bp(v)\defeq \argmin_{u\in pred(v)}(g(u)+\overline{w}(u,v)).
\end{equation}
Hence, the subpath from $v_\mathrm{s}$ to $v$ is readily retrieved by following the backpointers from $v$ to $v_\mathrm{s}$. 

%
%
A queue $Q$ prioritizes the inconsistent vertices using the key 
\begin{equation}
	k(v)=[\min (g(v),rhs(v))+h(v) \; ; \min (g(v),rhs(v))], 
\end{equation}
with lexicographic ordering, where $h(v)$ is a consistent heuristic cost-to-go from $v$ to $v_\mathrm{g}$. 

\subsection{Lifelong-GLS}
%
%
The proposed algorithm, Lifelong-GLS (L-GLS), consists of two loops: the inner loop and the outer loop. 
The inner loop is the main search loop which guarantees to return the shortest path in the current graph upon termination. 
The outer loop updates the current graph heuristically to reflect any external graph changes.
The edge evaluations in the inner loop may induce internal changes to the graph. Both external and internal changes are efficiently repaired by a lazy LPA* search tree. 

In the inner loop, the lazy LPA* search tree updates the new shortest path from $v_\mathrm{s}$ toward $v_\mathrm{g}$ in the current graph $G$ based on the previous search results. 
The lazily evaluated LPA* search tree uses the lazy estimates of the edge values when it propagates the inconsistencies to find the shortest subpath to the goal in the current graph. 
The first unevaluated edge on the shortest subpath returned by the lazy LPA* is then evaluated. 
If the evaluation results in an inconsistency, then the lazy LPA* search tree is updated and returns the next best subpath for evaluation. 
If all the edges on the current shortest path to the goal returned by the lazy LPA* have already been evaluated, then L-GLS has found the optimal solution and exits the inner loop.

In the outer loop, L-GLS waits for graph changes. When the edges of the graph $G$ change, L-GLS assigns admissible heuristic values to the changed edges instead of assigning the true edge values by evaluation. 
Then, the inner loop begins again to search for the new optimal path. 
As long as the heuristic edge values do not overestimate the true edge values, L-GLS finds the new optimal path, a result we prove in the Section ``Analysis of the L-GLS Algorithm'' below.
%
By assigning admissible heuristic values to the changed edges, only a subset of the changed edges that could be on the shortest path in the current graph are actually evaluated. 

\subsection{Details of the Algorithm and Main Procedures}

\begin{algorithm}
	\caption{\textsf{Lifelong-GLS}($G, v_\mathrm{s}, v_\mathrm{g}$)}\label{lgls:a:lgls}
	\begin{algorithmic}[1]
		\Procedure{CalculateKey}{$v$} \Return
		\State {[$\min (g(v),rhs(v))+h(v) \; ; \min (g(v),rhs(v))$];}
		\EndProcedure
		\Procedure{UpdateVertex}{$v$}
		\If {$v\neq v_\mathrm{s}$}
		\State $bp(v) \gets \argmin_{u\in pred(v)} (g(u)+\overline{w}(u,v))$;
		\State $rhs(v) \gets g(bp(v)) + \overline{w}(bp(v),v)$;
		\EndIf
		\If {$v\in Q$} 
		$Q.\textsc{Remove}(v)$;
		\EndIf
		\If {$g(v)\neq rhs(v)$} 
		\State $Q.\textsc{Insert}((v,\textsc{CalculateKey}(v)))$; 
		\EndIf
		\EndProcedure
		\Procedure{ComputeShortestPath}{$\textsc{Event}$}
		\While{
				$Q.\textsc{TopKey} \prec \textsc{CalculateKey}(v_\mathrm{g})$ \Or \\
			 $g(v_\mathrm{g})\neq rhs(v_\mathrm{g})$
		}
		\State $u\gets Q.\textsc{Pop}()$;\label{lgls:algo:line:vertexexpansion}
		\If {$g(u) > rhs(u)$}
		\State $g(u)\gets rhs(u)$; 
		\If {$\textsc{Event}(u)$ is triggered} \label{lgls:algo:line:expansionevent}
		\State \Return path from $v_\mathrm{s}$ to u;
		\EndIf
		\For {\textbf{all }$v\in succ(u)$}
		\textsc{UpdateVertex}($v$);
		\EndFor
		\Else
		\State $g(u)\gets \infty$;
		\For {\textbf{all }$v\in succ(u) \cup \set{u}$}
		\State \textsc{UpdateVertex}($v$);
		\EndFor
		\EndIf 
		\EndWhile
		\EndProcedure
		\Procedure{EvaluateEdges}{$\overline{\pi}$}
		\For{\textbf{each} $ e \in \overline{\pi}$}
		\If{$e \notin E_\mathrm{eval}$} 
		\State $\overline{w}(e) \gets w(e)$; 
		\State $E_\mathrm{eval}\gets E_\mathrm{eval} \cup \set{e}$;
		\If {$\overline{w}(e) \neq \widehat{w}(e) $} \Return $e$; 
		\EndIf
		\EndIf
		\EndFor 
		\EndProcedure
		\Procedure{Main}{$ $}
		\For {$\textbf{all } e \in E$} $\overline{w}(e) \gets \widehat{w}(e)$; 
		\EndFor
		\State $E_\mathrm{eval}\gets \varnothing$
		\State $rhs(v_\mathrm{s})\gets 0$;
		\State $\textsc{UpdateVertex}(v_\mathrm{s})$;
		\While{true}
		\Repeat \label{lgls:algo:line:mainsearchbegin}
		\State $\overline{\pi} \gets \textsc{ComputeShortestPath}(\textsc{Event})$;
		\State $(u,v)\gets \textsc{EvaluateEdges}(\overline{\pi})$; \label{lgls:algo:line:evaluateedge}
		\State $\textsc{UpdateVertex}(v)$;
		\Until{$v_\mathrm{g} \in \overline{\pi}$ \textbf{and} $\overline{\pi} \subseteq E_\mathrm{eval}$} 
		\label{lgls:algo:line:mainsearchend}
		\State Wait for changes in $E$;
		\State $L \gets $the set of edges that changed;
		\For {$\textbf{all } e=(u,v) \in L$}
		\State $\overline{w}(e) \gets \widehat{w}(e)$; 
		\State $E_\mathrm{eval} \gets E_\mathrm{eval}\backslash \set{e}$;
		\State $\textsc{UpdateVertex}(v)$;
		\EndFor
		\EndWhile
		\EndProcedure
	\end{algorithmic}
\end{algorithm}

\begin{algorithm}
	\caption{Candidate $\textsc{Event}$ Definitions~\cite{Mandalika2019}}\label{lgls:a:events}
	\begin{algorithmic}[1]
		\Procedure{ShortestPath}{$v$}
		\If{$v=v_\mathrm{g}$} \Return true; \EndIf
		\EndProcedure
		\Procedure{ConstantDepth}{$v$, depth $\alpha$}
		\State $\overline{\pi} \gets$ path from $v_\mathrm{s}$ to $v$;
		\State $\alpha_v \gets $ number of unevaluated edges in $\overline{\pi}$;
		\If{$\alpha_v = \alpha $ \textbf{or} $v=v_\mathrm{g}$} \Return true; \EndIf
		\EndProcedure
	\end{algorithmic}
\end{algorithm}

In this section we describe step-by-step the procedures used in L-GLS in greater detail. 
Before the first search begins, all $g$-values of the vertices are initialized with $\infty$, similar to the regular LPA*, and all lazy estimates of edge values are assigned with admissible heuristic values.
The first search begins by setting $rhs(v_\mathrm{s})= 0$ and by inserting  $v_\mathrm{s}$ in the priority queue $Q$.
In the main search loop (Line~\ref{lgls:algo:line:mainsearchbegin}-\ref{lgls:algo:line:mainsearchend} of Algorithm~\ref{lgls:a:lgls}) the lazy LPA* search tree is grown with \textsc{ComputeShortestPath(Event)} until an \textsc{Event} is triggered by the expansion of a leaf vertex which just became consistent upon this expansion (Line~\ref{lgls:algo:line:expansionevent} of Algorithm~\ref{lgls:a:lgls}).
\textsc{Event} is a binary function that returns \textsf{true} by checking whether the current expansion of a vertex satisfies a user-defined condition, for example, the path to the goal is found or the path of a certain depth is found (see Algorithm~\ref{lgls:a:events}).
When \textsc{Event} returns \textsf{true}, the subpath to this leaf vertex which triggered the \textsc{Event} is returned for evaluation (Line~\ref{lgls:algo:line:evaluateedge} of Algorithm~\ref{lgls:a:lgls}). 
Then, \textsc{EvaluateEdges} evaluates the unevaluated edges along this subpath and updates the lazy estimates with their true weights. 
If the evaluation of an edge results in a different value than the previous lazy estimate, then \textsc{EvaluateEdges} returns the edge for the lazy LPA* to update this change accordingly by \textsc{UpdateVertex}. 
The inconsistency is propagated by the lazy LPA* again until the next time the \textsc{Event} is triggered. 
If the path to the goal is found, and all the edges along this path have been evaluated, then the path is the optimal path in the current graph. 
This procedure repeats again when the graph changes. 

The procedure \textsc{UpdateVertex} is identical to that of the regular LPA*. The only difference is that when $\textsc{UpdateVertex}(v)$ is called, the $rhs$-value of the vertex $v$ is updated based on the lazy estimate of the incident edge values. 
This is done to avoid edge evaluations of irrelevant incident edges of $v$. 
When a minimizing predecessor is found lazily, then the vertex assigns its backpointer to this predecessor. Finally, the key of this vertex is updated with \textsc{CalculateKey} to be prioritized in the queue $Q.$ 

The choice of the $\textsc{Event}$ function determines the balance between vertex expansions (Line~\ref{lgls:algo:line:vertexexpansion} of Algorithm~\ref{lgls:a:lgls}) and edge evaluations (Line~\ref{lgls:algo:line:evaluateedge} of Algorithm~\ref{lgls:a:lgls}), as in the GLS framework. 
For example, if one chooses the \textsc{ShortestPath} as the \textsc{Event}, then the algorithm becomes a version of Lifelong-LazySP~\cite{Dellin2016}. 
That is, the lazy LPA* repairs the inconsistent part of the tree all the way up to the goal vertex, and then returns the shortest path to the goal for evaluation. This minimizes the number of edge evaluations of the inner loop.
On the other hand, if one chooses the \textsc{ConstantDepth} of GLS as the \textsc{Event}, then the algorithm becomes a version of Lifelong-LRA*~\cite{Mandalika2018}. 
The number of tree repair steps (vertex expansions) of the lazy LPA* is reduced, since the inconsistency propagation is restricted not to exceed a certain depth before evaluating the edges.
This comes at the expense of possibly more edge evaluations. 
Some candidate \textsc{Event} definitions of GLS~\cite{Mandalika2019} are reproduced in Algorithm~\ref{lgls:a:events}.

Note that the lazy LPA* algorithm maintained under the L-GLS algorithm is almost identical to the regular LPA* algorithm, except for three points. 
First, the procedure \textsc{UpdateVertex} of L-GLS updates an inconsistent vertex with respect to the lazy estimates of the incident edges instead of their actual values. 
Second, \textsc{ComputeShortestPath} is identical to that of LPA* in the way it expands the inconsistent vertices of the lowest key first, that is, when it expands an inconsistent vertex, it makes an underconsistent vertex overconsistent and an overconsistent vertex consistent. 
The difference is that when the overconsistent vertices are expanded, the \textsc{Event} checks whether to continue or stop propagating the inconsistency information to the successor vertices. 
Finally, when the graph changes, L-GLS updates the changed edge values with admissible heuristic values lazily instead of evaluating them to find the exact values. 
Hence, the lazy LPA* inherits all the theoretical properties of the regular LPA*, albeit with respect to a different weight, namely $\overline{w}$ rather than $w$. 
This becomes useful when proving the correctness of the algorithm. 

The L-GLS algorithm is different from the GLS algorithm in the following points. 
First, L-GLS stores the previous search results to propagate any inconsistencies efficiently in dynamic graphs, whereas GLS is explicitly designed for a shortest path planning problem in a static graph. 
%
%
L-GLS can possibly evaluate much fewer edges compared to the GLS from scratch, since the search tree of L-GLS is better informed than that of GLS. 
Second, the exact values for all feasible edges are known a priori in the GLS framework, that is, the heuristic estimates of all feasible edges are accurate. 
The edge evaluation only reveals a binary trait of the edge, that is, whether the edge is feasible or not, rather than its exact cost. 
This is relaxed in L-GLS, such that the edge costs can vary upon evaluation. 
This relaxation is important in problem domains where obtaining an accurate heuristic edge cost may be difficult. 
As long as the heuristic edge cost does not overestimate the actual edge cost, L-GLS finds the optimal solution in the current graph.

\subsection{Illustrative Example}

We visualize the search and evaluation process of L-GLS with infinite-lookahead and compare the results to those of LPA* in a dynamic 2D environment, as depicted in Figure~\ref{lgls:f:2d_lgls}. 
An implicit graph is first constructed whose vertices are arranged in a grid and edges are defined for two vertices within a certain distance. 
The same graph topology is used throughout this example, and only some of the edge weights are changed because of environment changes. 
Each algorithm finds the shortest path within the graph using the previous search tree. 

\begin{figure*}[ht]
	\centering
	\begin{subfigure}{\myMSFigureScale\textwidth}
		\includegraphics[width=\myLineScale\linewidth]{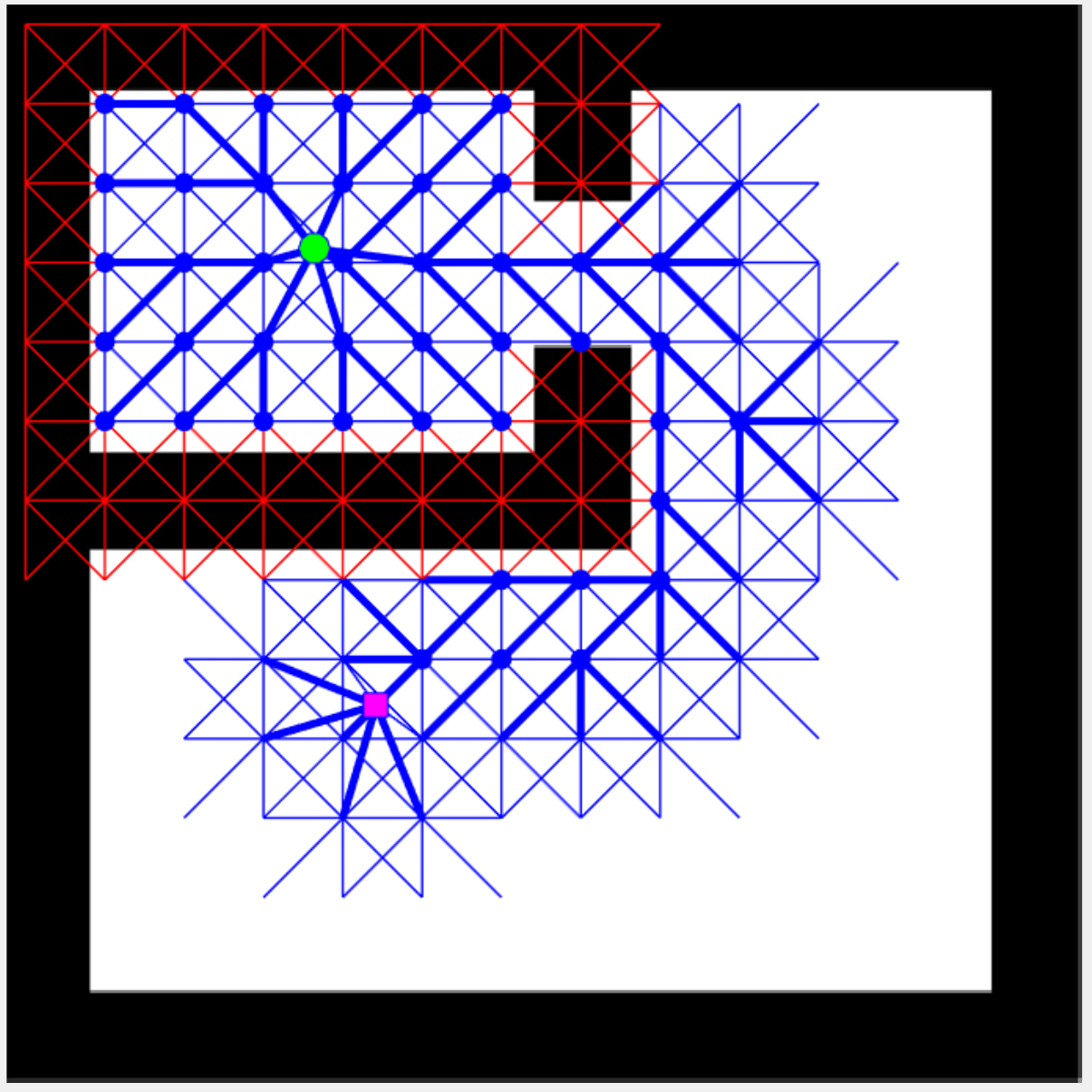}
	\end{subfigure}
	\begin{subfigure}{\myMSFigureScale\textwidth}
		\includegraphics[width=\myLineScale\linewidth]{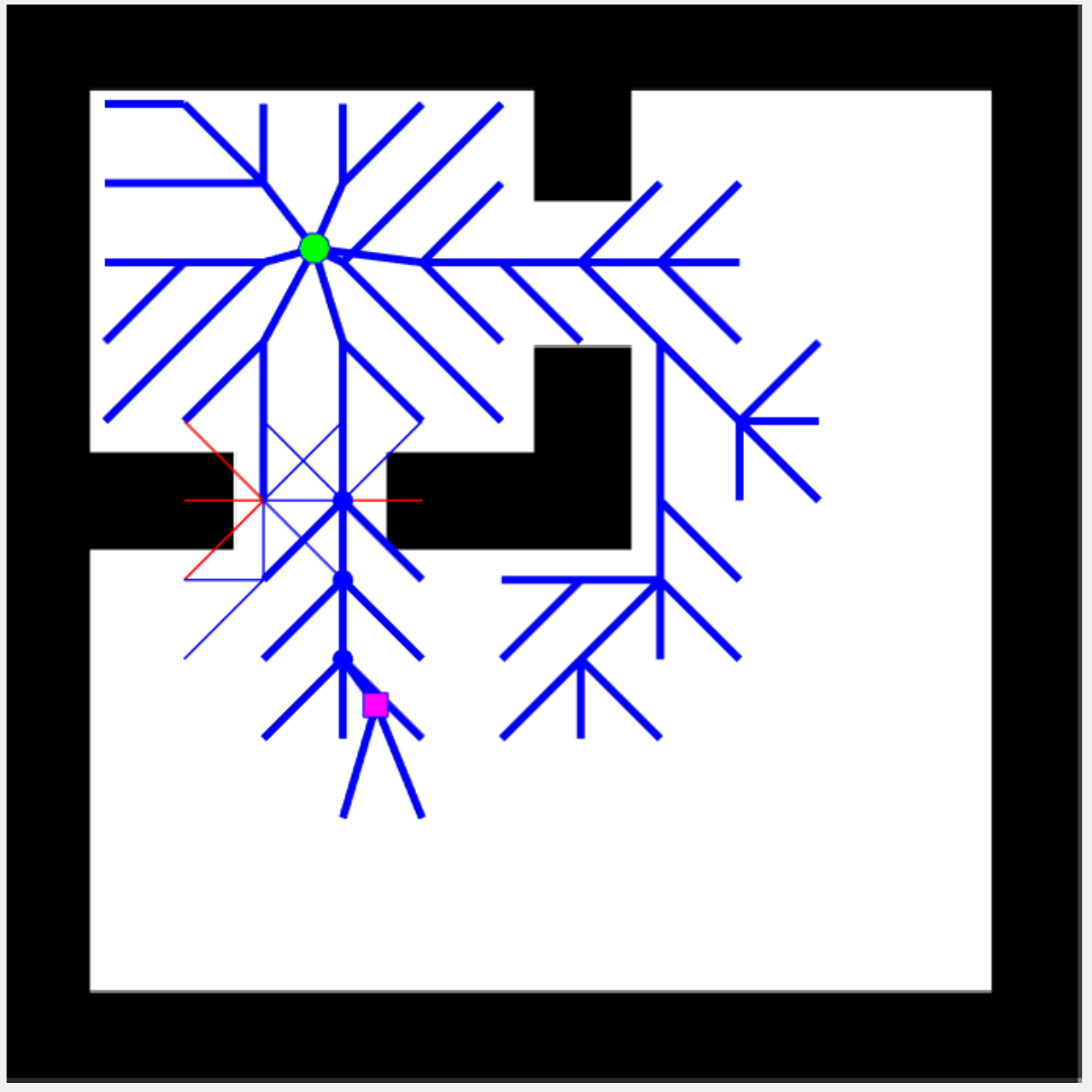}
	\end{subfigure}
	\begin{subfigure}{\myMSFigureScale\textwidth}
		\includegraphics[width=\myLineScale\linewidth]{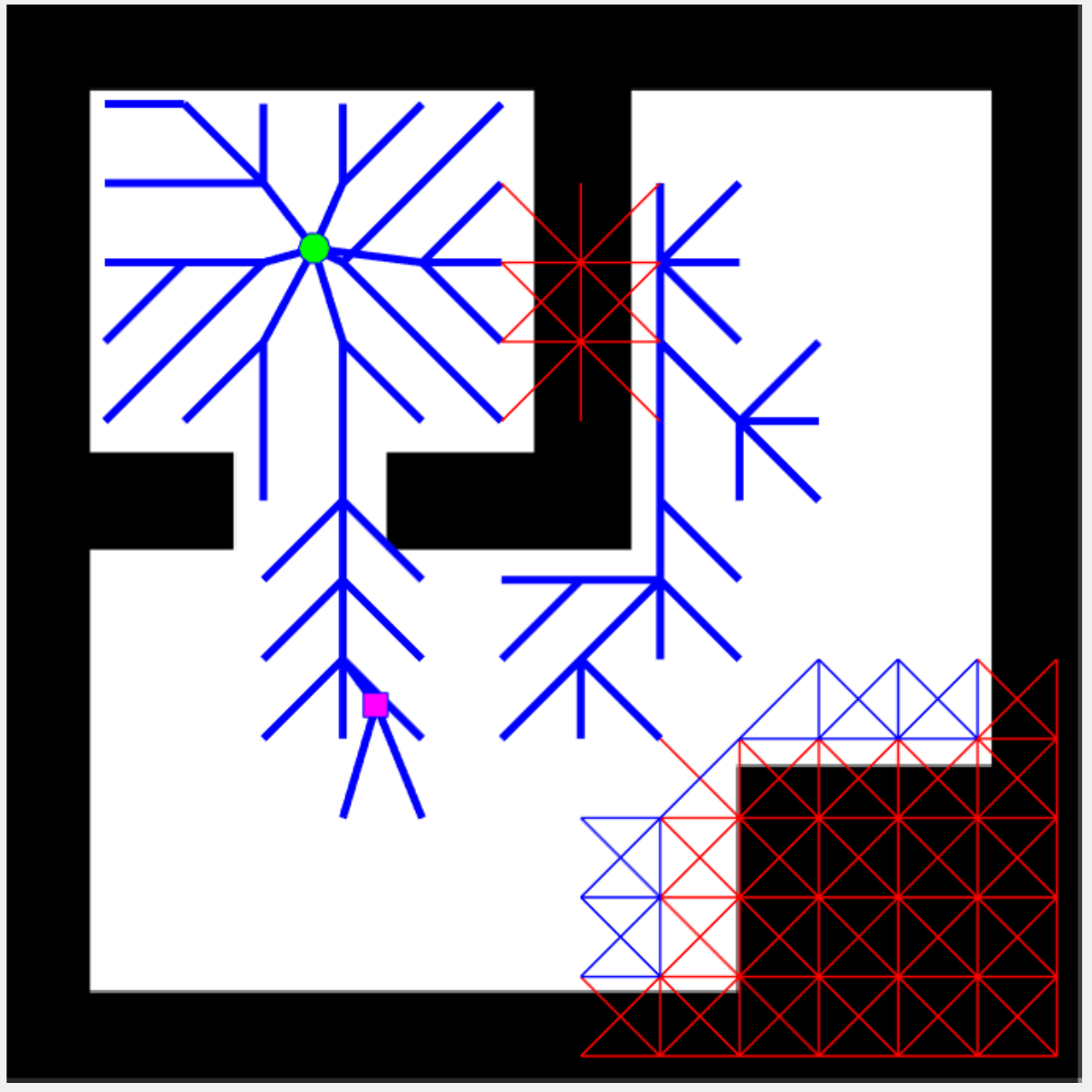}
	\end{subfigure}
	\begin{subfigure}{\myMSFigureScale\textwidth}
		\includegraphics[width=\myLineScale\linewidth]{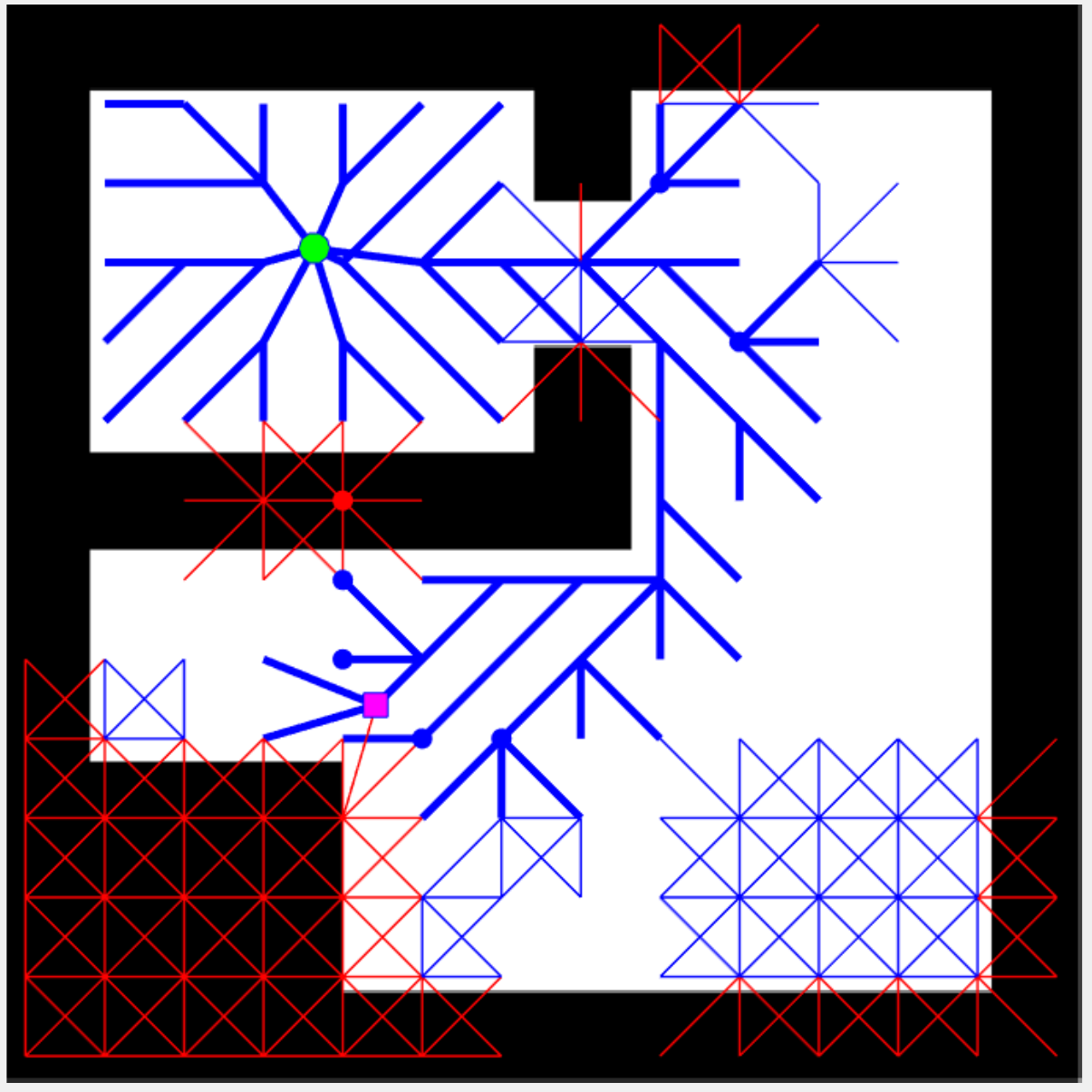}
	\end{subfigure}
	\begin{subfigure}{\myMSFigureScale\textwidth}
		\includegraphics[width=\myLineScale\linewidth]{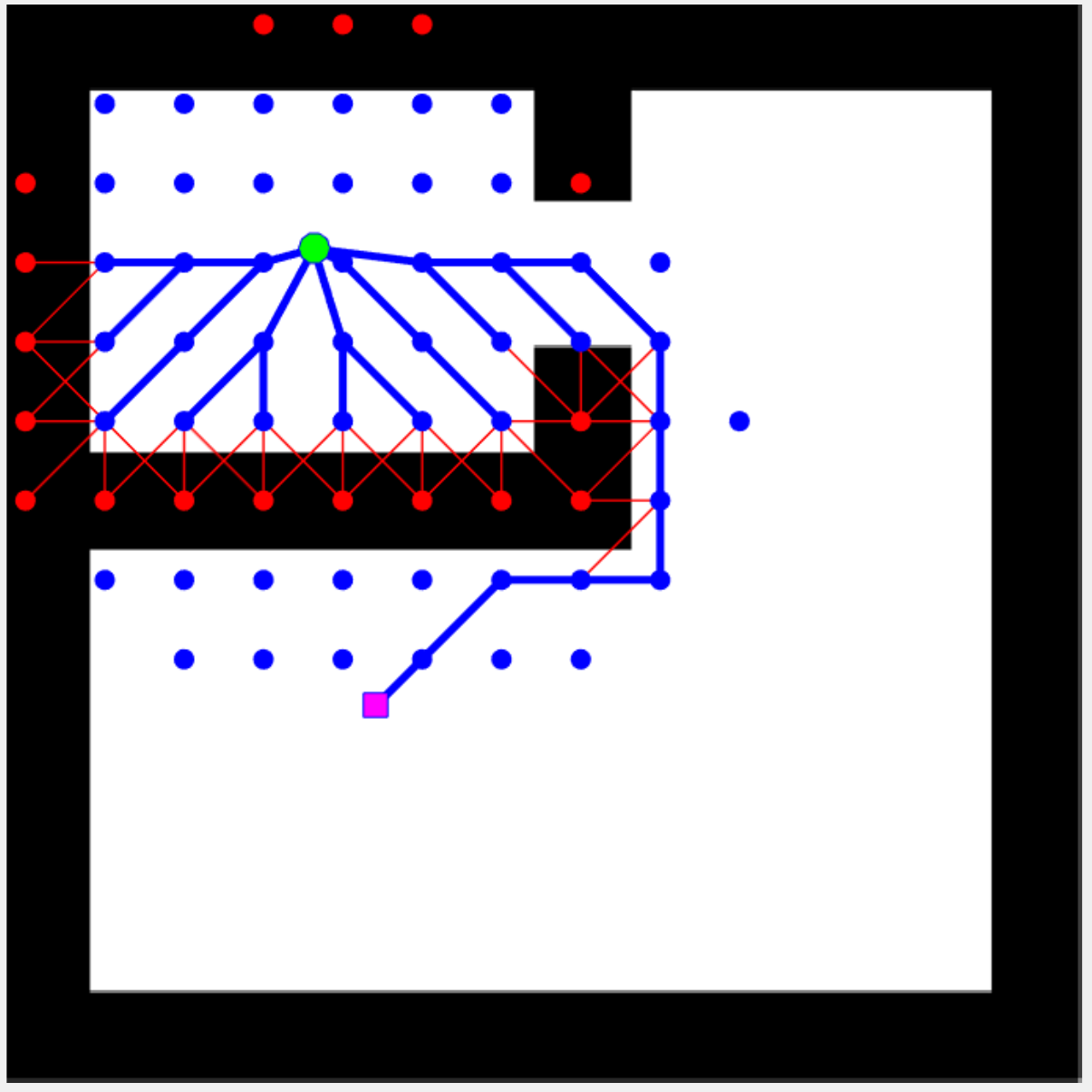}
		\caption{}
	\end{subfigure}
	\begin{subfigure}{\myMSFigureScale\textwidth}
		\includegraphics[width=\myLineScale\linewidth]{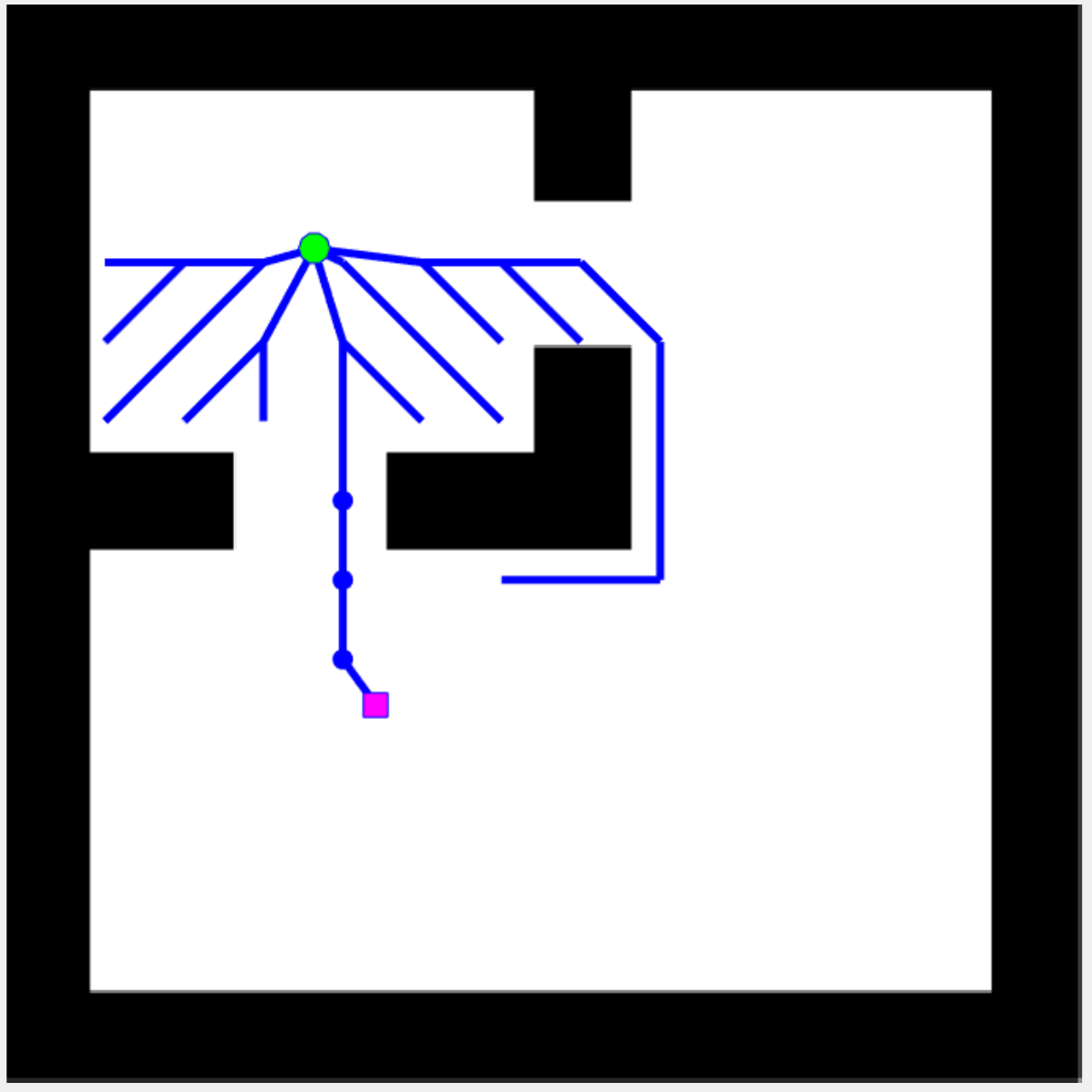}
		\caption{}
	\end{subfigure}
	\begin{subfigure}{\myMSFigureScale\textwidth}
		\includegraphics[width=\myLineScale\linewidth]{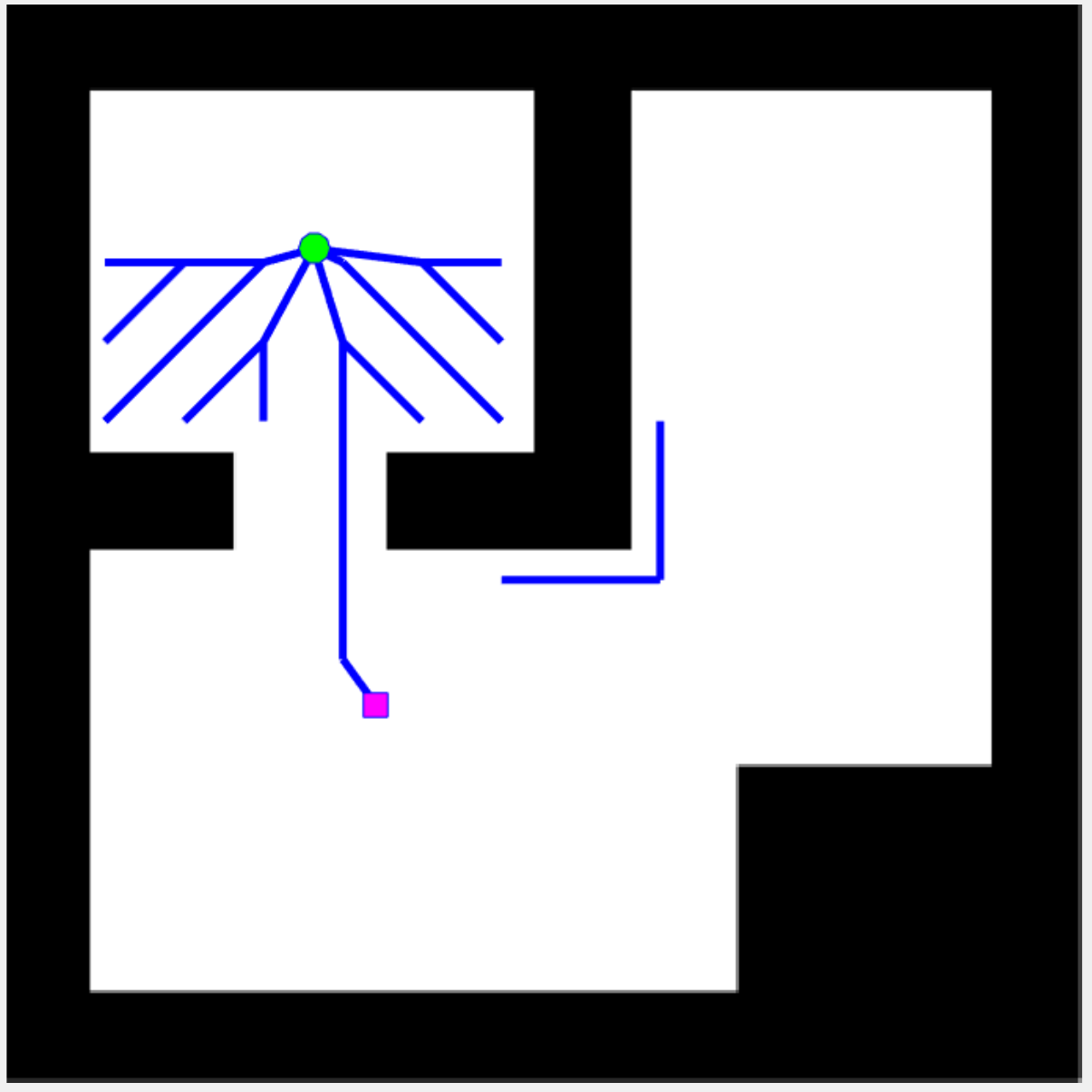}
		\caption{}
	\end{subfigure}
	\begin{subfigure}{\myMSFigureScale\textwidth}
		\includegraphics[width=\myLineScale\linewidth]{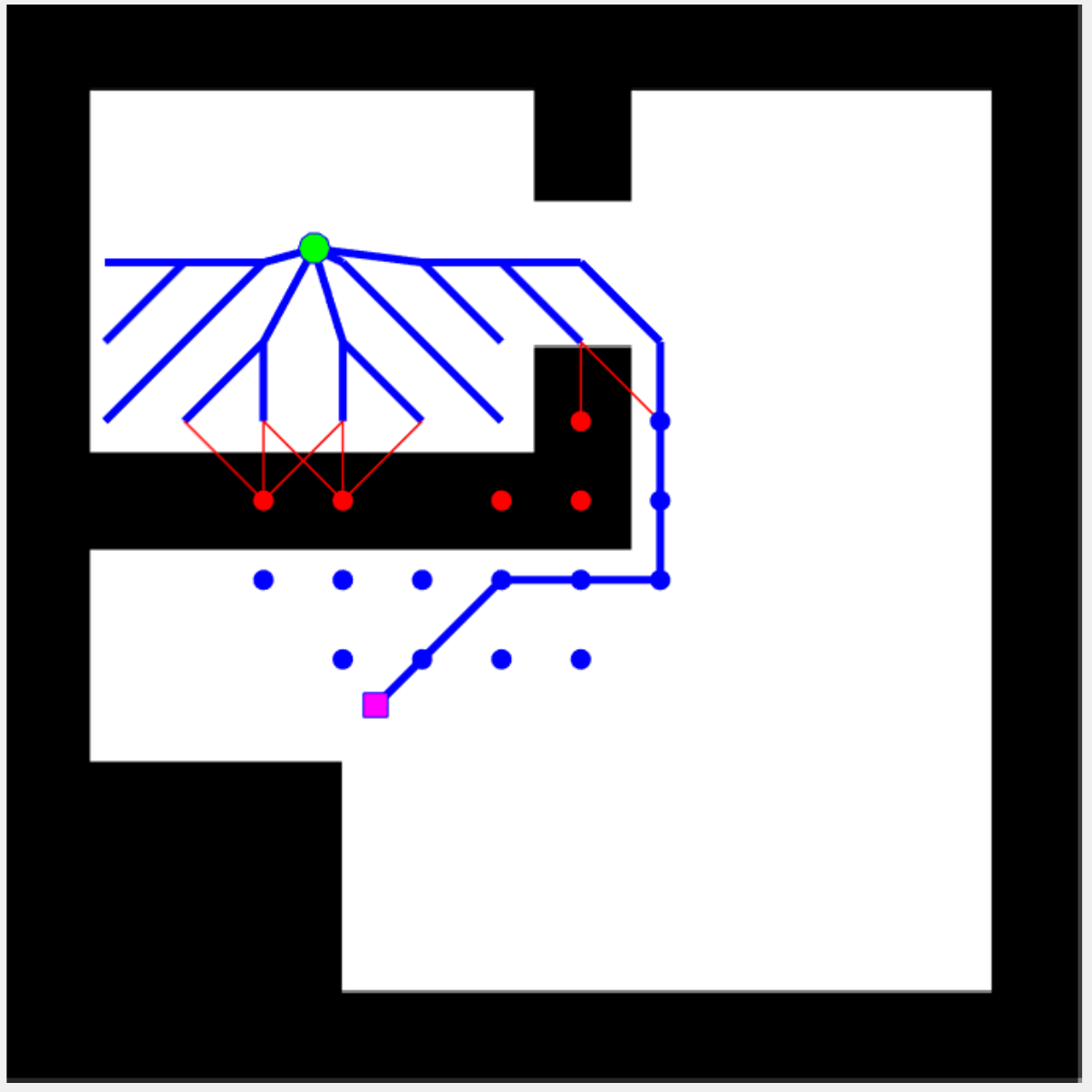}
		\caption{}
	\end{subfigure}
	\caption{LPA*(top row) and L-GLS with $\infty$-lookahead (bottom row) search results to find the shortest path from start vertex(\tikzcircle[blue, fill=green]{2.5pt}) to goal vertex(\tikzsquare[blue, fill=magenta]{4.5pt}) per environment change, from left to right: (a) first search, (b) second search, (c) third search, and (d) final search. 
		Lines(\tikzline[blue,semithick]{}\tikzline[red,semithick]{}) are the evaluated edges, and dots (\tikzcircle[blue, fill=blue]{2.0pt}\,\tikzcircle[red, fill=red]{2.0pt}) are the expanded vertices during the current search. Bold lines(\tikzline[blue,very thick]{}) are the edges belonging to the current search tree. Blue and red colors represent free space and obstacles, respectively.}
	\label{lgls:f:2d_lgls}
\end{figure*}

In the first search, LPA* is equivalent to A*, and L-GLS is equivalent to GLS (See Figure~\ref{lgls:f:2d_lgls}.a). LPA* evaluated 390 edges and expanded 45 vertices, whereas L-GLS and GLS both evaluated 61 edges and expanded 314 vertices.

After the first search, only a small part of the environment changes (see Figure~\ref{lgls:f:2d_lgls}.b), opening a shorter passage to the goal. 
LPA* evaluated 18 edges corresponding to the change, then expanded 4 inconsistent vertices to find the shortest path in the current graph. L-GLS evaluated 4 edges that belong to the new shortest path to the goal, and expanded 4 inconsistent vertices. 
The GLS evaluated 7 edges and expanded 6 inconsistent vertices. 

When the environment changed in the irrelevant region (see Figure~\ref{lgls:f:2d_lgls}.c), LPA* evaluated 153 edges corresponding to this environment change, but did not expand any vertices, as they were irrelevant to the current search. 
L-GLS did not do any additional operations to find the shortest path, since the path was already optimal. GLS was identical to the previous search with 7 edge evaluations and 6 vertex expansions.

Finally, the environment changed back to that of the first search episode with the addition to a new obstacle in the irrelevant region. The GLS search was identical to the first search episode with 61 edge evaluations and 314 vertex expansions. 
On the other hand, L-GLS evaluated only 11 edges and expanded 83 vertices. 
This is because the majority of the relevant edges were already evaluated during the previous searches, and the majority of the relevant vertices were already consistent. Similarly, LPA* expanded a fewer number of vertices and evaluated a fewer number of edges compared to the first search episode with 273 edge evaluations and 9 vertex expansions, since it utilized the previous search results.
The results are summarized in Table~\ref{lgls:t:lgls2d}.

\begin{table}[ht]
	\centering
 	\caption{Number of edge evaluations and number of vertex expansions for different planners over four consecutive search queries in a dynamic environment of Figure~\ref{lgls:f:2d_lgls}. }
	\begin{small}
	\begin{tabular}{lccc}
		\toprule
		\textbf{}      & LPA* & GLS & L-GLS\\ 
		\midrule
		\textbf{First Query}    &          &   & \\
		\# Edge Evaluation  & 390 & 61 & 61 \\ 
		\# Vertex Expansion & 45 & 314 & 314 \\
		&          &      &  \\
		\textbf{Second Query}    &          &  & \\
		\# Edge Evaluation  & 18 & 7  & 4 \\ 
		\# Vertex Expansion & 4 & 6  & 4 \\	         
		&          &      & \\
		\textbf{Third Query}     &          &  & \\
		\# Edge Evaluation  & 153 & 7  & 0\\ 
		\# Vertex Expansion & 0 & 6 &  0 \\	         
		&          &     &  \\
  		\textbf{Fourth Query}   &          &  &  \\
		\# Edge Evaluation  & 273 & 61 & 11 \\ 
		\# Vertex Expansion & 9 & 314 & 83 \\	         
		&          &    &   \\
  		\textbf{Total}     &          &  & \\
		\# Edge Evaluation  & 834 & 136 & 76 \\ 
		\# Vertex Expansion & 81 & 640 & 401 \\	         
		\bottomrule
	\end{tabular}
	\end{small}
	\label{lgls:t:lgls2d}
\end{table}

\subsection{Analysis of the L-GLS Algorithm} \label{lgls:sec:analysis}

We now present some of the properties of L-GLS to provide further insights how the algorithm works. 
We also prove the completeness and correctness of the algorithm, based on the inherited properties from both the LPA* and the GLS algorithms. First, let us state two facts that are invariant during the main search loop.

\begin{restatable}{invariant}{LGLSLazyestimateInvariant}
	\label{lgls:invariant:LGLSLazyestimateInvariant}
	The lazy estimate of an edge never overestimates the true edge value, that is, $\overline{w} \leq w$. 
\end{restatable}
\begin{proof}
	Since $\overline{w}(e) = w(e)$ for all $e\in E_\mathrm{eval},$ and $\overline{w}(e) = \widehat{w}(e) \leq w(e)$ for all $e\notin E_\mathrm{eval},$ it follows that $\overline{w}(e)\leq w(e)$ for all $e\in E.$
\end{proof}

The next invariant property shows that when the lazy LPA* returns the shortest subpath to the goal, then this subpath is optimal. 
This follows from the theoretical properties of LPA*, which are similar to A*. 
\begin{restatable}{invariant}{LGLSLPAInvariant}
	\label{lgls:invariant:LGLSLPAInvariant}
	The returned subpath $\overline{\pi}$ from $v_\mathrm{s}$ to $v$ of \textsc{ComputeShortestPath}(\textsc{Event}) is optimal with respect to $\overline{w}$, that is,
	$\overline{\pi} =\argmin_{\pi \in \Pi_v} \overline{w}(\pi)$, where $\Pi_v$ is the set of paths from $v_\mathrm{s}$ to $v.$
\end{restatable}
\begin{proof}
	\textsc{ComputeShortestPath} with an $\textsc{Event}$ returns the path $\overline{\pi}$ from $v_\mathrm{s}$ to $v$, when the triggering vertex $v$ is expanded. Right before the expansion, $v$ was locally overconsistent.
	Theorem~6 of LPA* \cite{Koenig2004} states that whenever \textsc{ComputeShortestPath} selects a locally overconsistent vertex for expansion, then the g-value of $v$ is optimal with respect to $\overline{w}$.
\end{proof}

Next, we show the completeness and correctness of the inner loop of L-GLS. The first theorem is due to the completeness of GLS~\cite{Mandalika2019}, which we restate here. 
\begin{restatable}{theorem}{LGLScomplete}
	\label{lgls:invariant:LGLScomplete}
	Let \textsc{Event} be a function that upon halting ensures that there is at least one unevaluated edge on the current shortest path or that the goal is reached. 
	Then, the inner loop (Line~\ref{lgls:algo:line:mainsearchbegin}-\ref{lgls:algo:line:mainsearchend}) of L-GLS implemented with \textsc{Event} on a finite graph terminates.
\end{restatable}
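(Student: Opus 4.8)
The plan is to show that each iteration of the \texttt{Repeat} loop (Lines~\ref{lgls:algo:line:mainsearchbegin}--\ref{lgls:algo:line:mainsearchend}) either (i) strictly increases the size of $E_\mathrm{eval}$, or (ii) leaves $E_\mathrm{eval}$ unchanged but terminates the loop. Since $E$ is finite and $E_\mathrm{eval}$ only grows (within a single inner-loop invocation — edges are never removed from $E_\mathrm{eval}$ inside the inner loop, only in the outer loop), case~(i) can happen only finitely many times, so the loop must eventually hit case~(ii) and halt. First I would argue that each individual call to \textsc{ComputeShortestPath}(\textsc{Event}) terminates: this is exactly the termination of LPA*'s \textsc{ComputeShortestPath} on a finite graph (Theorem in~\cite{Koenig2004}), which the lazy LPA* tree inherits since it is the ordinary LPA* algorithm run with weight function $\overline{w}$ on the same finite graph. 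Likewise \textsc{EvaluateEdges} and \textsc{UpdateVertex} manifestly terminate (a finite loop over the edges of $\overline{\pi}$, and a constant number of queue operations, respectively). So the only way the inner loop can fail to terminate is by executing infinitely many iterations of the \texttt{Repeat} loop.

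Next I would examine what one \texttt{Repeat} iteration does. It computes $\overline{\pi}$ via \textsc{ComputeShortestPath}(\textsc{Event}), then calls \textsc{EvaluateEdges}($\overline{\pi}$). By the hypothesis on \textsc{Event}, upon halting either $v_\mathrm{g}\in\overline{\pi}$ or $\overline{\pi}$ contains at least one unevaluated edge (in fact the \textsc{ConstantDepth}/\textsc{ShortestPath} events return a path whose triggering leaf has $\alpha$ unevaluated edges with $\alpha\ge 1$, or reaches the goal). I split into two cases. If $\overline{\pi}$ contains an edge $e\notin E_\mathrm{eval}$, then \textsc{EvaluateEdges} evaluates it, adds it to $E_\mathrm{eval}$, and $|E_\mathrm{eval}|$ strictly increases — this is case~(i). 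If instead $\overline{\pi}$ has no unevaluated edge, then by the \textsc{Event} hypothesis we must have $v_\mathrm{g}\in\overline{\pi}$, so the loop-exit condition $v_\mathrm{g}\in\overline{\pi}$ \textbf{and} $\overline{\pi}\subseteq E_\mathrm{eval}$ on Line~\ref{lgls:algo:line:mainsearchend} is satisfied, and the \texttt{Repeat} loop terminates — this is case~(ii). Combining: every iteration that does not terminate the loop strictly increases $|E_\mathrm{eval}|$, which is bounded by $|E|<\infty$; hence only finitely many iterations occur.

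The one point that needs a little care — and which I expect to be the main obstacle — is the behavior of \textsc{EvaluateEdges} when it returns early: after evaluating an edge $e$ it finds that $\overline{w}(e)\ne\widehat{w}(e)$ and returns $e$ immediately, without evaluating the remaining unevaluated edges of $\overline{\pi}$. I must be sure this still counts as progress. It does: even though the returned $(u,v)$ triggers \textsc{UpdateVertex}($v$) and the next \textsc{ComputeShortestPath} call may produce a totally different $\overline{\pi}$, the key invariant is that the evaluated edge $e$ has been permanently added to $E_\mathrm{eval}$ for the remainder of this inner-loop invocation, so $|E_\mathrm{eval}|$ has still strictly increased in that iteration. (An edge already in $E_\mathrm{eval}$ is never re-evaluated, by the guard \texttt{if} $e\notin E_\mathrm{eval}$.) The only subtlety left is whether an iteration could add \emph{no} edge to $E_\mathrm{eval}$ yet not terminate: that would require $\overline{\pi}\subseteq E_\mathrm{eval}$ (so \textsc{EvaluateEdges} adds nothing) while $v_\mathrm{g}\notin\overline{\pi}$ — but then \textsc{Event} halted on a path with no unevaluated edge and not reaching the goal, contradicting the theorem's hypothesis on \textsc{Event}. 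Hence no such "stalling" iteration exists, and the argument is complete. A minor caveat worth stating explicitly in the write-up is that \textsc{EvaluateEdges} on line~\ref{lgls:algo:line:evaluateedge} may return nothing (when $\overline{\pi}\subseteq E_\mathrm{eval}$); in that case the subsequent \textsc{UpdateVertex}($v$) call is a no-op on a $\perp$ argument, and the loop exits — this is consistent with case~(ii) and can be mentioned in passing.
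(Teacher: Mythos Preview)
Your proposal is correct and follows essentially the same approach as the paper: each non-terminating iteration of the \texttt{Repeat} loop evaluates at least one new edge (by the hypothesis on \textsc{Event}), and since $|E|<\infty$ this can happen only finitely often. The paper's own proof is a two-sentence version of this argument; your write-up is more careful in that it also explicitly argues termination of \textsc{ComputeShortestPath} (via LPA* on the finite graph with weight $\overline{w}$) and handles the early-return and no-return cases of \textsc{EvaluateEdges}, all of which the paper leaves implicit.
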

\begin{proof}
	Suppose the path to the goal has not been evaluated, such that \textsc{ComputeShortestPath(Event)} returns at least one unevaluated edge to evaluate. Since there is a finite number of edges, the inner loop will eventually terminate.
\end{proof}

\begin{restatable}{theorem}{LGLScorrect}
	\label{lgls:theorem:LGLScorrect}
	L-GLS finds the shortest path with respect to the current graph when the inner loop (Line~\ref{lgls:algo:line:mainsearchbegin}-\ref{lgls:algo:line:mainsearchend}) terminates.
\end{restatable}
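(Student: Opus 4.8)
The plan is to chain together the two invariants already established. First I would note that, by Theorem~\ref{lgls:invariant:LGLScomplete}, the inner loop terminates, and by its exit condition on Line~\ref{lgls:algo:line:mainsearchend} the path $\overline{\pi}$ returned by the last call to \textsc{ComputeShortestPath} satisfies $v_\mathrm{g}\in\overline{\pi}$ and $\overline{\pi}\subseteq E_\mathrm{eval}$; that is, $\overline{\pi}$ is a genuine $v_\mathrm{s}$-to-$v_\mathrm{g}$ path (obtained by following backpointers, hence a sequence of distinct vertices) all of whose edges have been evaluated. Since every edge $e\in\overline{\pi}$ lies in $E_\mathrm{eval}$, we have $\overline{w}(e)=w(e)$ for each such edge, and therefore $\overline{w}(\overline{\pi})=w(\overline{\pi})$.

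Next I would invoke Invariant~\ref{lgls:invariant:LGLSLPAInvariant} at the terminal vertex $v=v_\mathrm{g}$: the returned path is $\overline{w}$-optimal among all $v_\mathrm{s}$-to-$v_\mathrm{g}$ paths, so $\overline{w}(\overline{\pi})=\min_{\pi\in\Pi}\overline{w}(\pi)\le\overline{w}(\pi^*)$, where $\pi^*$ denotes the $w$-shortest path (which exists and is a $v_\mathrm{s}$-to-$v_\mathrm{g}$ path as soon as $\Pi\neq\varnothing$). Combining this with Invariant~\ref{lgls:invariant:LGLSLazyestimateInvariant}, which gives $\overline{w}\le w$ edgewise and hence $\overline{w}(\pi)\le w(\pi)$ for every path $\pi$, yields the chain
\[
w(\pi^*)\ \le\ w(\overline{\pi})\ =\ \overline{w}(\overline{\pi})\ \le\ \overline{w}(\pi^*)\ \le\ w(\pi^*),
\]
in which the first inequality is optimality of $\pi^*$ with respect to $w$, and the last is Invariant~\ref{lgls:invariant:LGLSLazyestimateInvariant} applied to $\pi^*$. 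Since the two ends of the chain coincide, every inequality is an equality; in particular $w(\overline{\pi})=w(\pi^*)$, so $\overline{\pi}$ is a shortest path in the current graph with respect to $w$, as claimed.

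Finally I would dispose of the degenerate case $\Pi=\varnothing$ (no finite-cost path exists): then the lazy LPA* tree has $g(v_\mathrm{g})=rhs(v_\mathrm{g})=\infty$, \textsc{ComputeShortestPath} never expands $v_\mathrm{g}$ as a consistent vertex, and the inner loop simply reports that no solution exists — consistent with correctness. I do not expect a genuine obstacle here; the only points requiring care are (i) that the path on which the exit test $\overline{\pi}\subseteq E_\mathrm{eval}$ is performed is literally the object returned by the final \textsc{ComputeShortestPath} call, so that ``all edges evaluated'' transfers to it, and (ii) that Invariant~\ref{lgls:invariant:LGLSLPAInvariant} is being applied with $v=v_\mathrm{g}$, so that its conclusion $\overline{\pi}=\argmin_{\pi\in\Pi}\overline{w}(\pi)$ holds verbatim — both of which are immediate from the pseudocode.
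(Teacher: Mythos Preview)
Your proposal is correct and follows essentially the same approach as the paper: both proofs chain Invariant~\ref{lgls:invariant:LGLSLPAInvariant} (optimality of $\overline{\pi}$ with respect to $\overline{w}$) with Invariant~\ref{lgls:invariant:LGLSLazyestimateInvariant} ($\overline{w}\le w$) and the termination condition $\overline{\pi}\subseteq E_\mathrm{eval}$ to sandwich $w(\overline{\pi})$ between $w(\pi^*)$ and itself. Your version is slightly more thorough in that it explicitly invokes completeness, handles the degenerate case $\Pi=\varnothing$, and flags the two places where care is needed, but the core argument is identical to the paper's.
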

\begin{proof}
	Let $\pi^*$ be the optimal path with respect to $w$ in the current graph, that is, $w(\pi^*)=\min_{\pi\in\Pi} w(\pi)$, where $\Pi$ is the set of all paths from $v_\mathrm{s}$ to $v_\mathrm{g}$. L-GLS terminates its inner-loop when $v_g \in \overline{\pi}$ and $\overline{\pi}\subseteq E_\mathrm{eval},$ where $\overline{\pi}$ is the output subpath of \textsc{ComputeShortestPath(Event)}.
	Then, we have 
	\begin{equation}
		\overline{w}(\overline{\pi}) = \sum_{e\in\overline{\pi}} \overline{w}(e)
		\leq \sum_{e'\in\pi^*} \overline{w}(e') 
		\leq \sum_{e'\in\pi^*} w(e') 
		= w(\pi^*),
	\end{equation}
	where the first inequality holds by Invariant~\ref{lgls:invariant:LGLSLPAInvariant}, and the second inequality follows by Invariant~\ref{lgls:invariant:LGLSLazyestimateInvariant}. 
	Hence, $\overline{w}(\overline{\pi}) \leq w(\pi^*),$ and since $\overline{\pi} \subseteq E_\mathrm{eval},$ we have $w(\overline{\pi}) = \overline{w}(\overline{\pi}) \leq w(\pi^*).$
	But $w(\pi^*) \leq w(\overline{\pi})$, since $\pi^*$ is the optimal path.
	Therefore, $\overline{\pi}$ must be the optimal path with respect to $w.$
\end{proof}


\section{Bounded Suboptimal Lazy Search} \label{sec:bounded_lgls}
%
%
The lazy search framework relies on using an admissible heuristic edge evaluation, and its efficiency depends on the accuracy of the heuristic edge values. In other words, the lazy search works best when the error between the actual edge value and the heuristic edge value is minimal. As in any heuristic search, the smaller the gap between the heuristic and the actual value, the faster the search. 
Similarly to A*, as a perfect heuristic cost-to-go leads to the minimal search (i.e., the minimum number of vertex expansions)~\cite{Hart1968}, a perfect heuristic edge value leads L-GLS to the minimal search and evaluation.  
However, obtaining an accurate and admissible heuristic can be as difficult as solving the original problem itself. 

Finding an inadmissible heuristic is relatively easy, which then often provides a more informed guidance in many problem domains~\cite{Pohl1970, Likhachev2008, Barer2014}. One way is to inflate the heuristic value by multiplying with some constant factor greater than 1. 
The inflation of the heuristic cost-to-go allows a goal-oriented greedy search, and it proves to be faster in many problem domains, as a bounded solution can be found rather quickly at the expense of optimality \cite{Pohl1970, Pearl1982, Aine2016a, Likhachev2003}.

Unfortunately, the L-GLS algorithm cannot be directly used with inadmissible edge heuristics, as it will harm the efficiency of the algorithm.
If the heuristic estimate is not admissible and there is a discrepancy between the actual value and the heuristic value, then the entire subtree must be repaired upon evaluation to find another admissible path, making the lazy search inefficient. 
This excessive search is necessary to find the optimal path.
Truncation~\cite{Aine2016} can relax this excessive search, as it restricts replanning to only the part of the tree needed to maintain a bounded suboptimal path. 
Incorporating truncation in the lazy search framework is the main topic of this section.

To this end, we present a bounded version of L-GLS, called B-LGLS, 
that uses a lazy version of Truncated LPA* (TLPA*)~\cite{Aine2016} in place of lazy LPA* to handle more general cases of heuristic edge weight functions. 
In fact, this extension is beneficial in two different aspects.
First, when an accurate heuristic is difficult to compute, or a more informed heuristic is available but not necessarily admissible, B-LGLS can handle inadmissible heuristics with a guaranteed solution quality. 
Second, when the change in the environment is not significant, such that complete and thorough replanning may not produce a significantly better quality solution, B-LGLS can reduce the search effort.

In essence, B-LGLS uses inflation of the heuristic edge weight and truncation of the inconsistency propagation. 
Truncation stops the propagation of inconsistencies of the lazy search tree, and inflation restricts the evaluation of non-promising edges. 
Both inflation and truncation are two techniques to make the search itself lazy, in the sense that they restrict the repair of the tree to only the part where it is necessary to guarantee that the current solution is bounded-suboptimal. 

Next, we first describe the core elements of B-LGLS, namely the lazy-TLPA* search, and then we provide a complete description of B-LGLS which builds on the lazy-TLPA*. 

\subsection{Lazy-TLPA* Search Tree} \label{lgls:sec:tlpa}

The lazy-TLPA* search tree is identical to TLPA*~\cite{Aine2016}, except that lazy-TLPA* uses a lazy weight function $\widetilde{w}$ instead of the actual weight function $w$. 
The lazy weight function $\widetilde{w} : E \to (0,\infty]$ 
assigns to an edge its inflated heuristic weight $\varepsilon_1 \widehat{w}$ before evaluation and its true weight $w$ after evaluation, that is,
\begin{equation}
	\widetilde{w}(e) \defeq \begin{cases}
		w(e), & \mathrm{if}\; e\in E_\mathrm{eval}, \\
		\varepsilon_1 \widehat{w}(e), & \mathrm{otherwise,}
	\end{cases}
\end{equation}
for some constant factor $\varepsilon_1 \geq 1,$ where $\widehat{w}\leq w$ is an admissible heuristic weight, and
$E_\mathrm{eval}$, as before, is the subset of the edge set $E$, whose $w$-values have been computed in the current graph.
We call $\varepsilon_1$ the inflation factor. 
Hence, the theoretical properties of TLPA* hold exactly the same for the lazy-TLPA*, except that now they hold 
with respect to $\widetilde{w}$ instead of the actual $w$. 

Similarly to lazy-LPA*, the $rhs$-value of a vertex $v$ is defined as follows:
\begin{equation}
	rhs(v) \defeq 
	\begin{cases}
		0, & \mathrm{if}\; v=v_\mathrm{s}, \\
		\min_{u\in pred(v)} (g(u)+\widetilde{w}(u,v)), &
		\mathrm{otherwise.}
	\end{cases}
\end{equation}
%
%
Also, the $rhs$-value minimizing the predecessor of $v$ is stored as a backpointer, denoted with 
\begin{equation}
	bp(v)\defeq \argmin_{u\in pred(v)}(g(u)+\widetilde{w}(u,v)).
\end{equation}
Next, we briefly describe the fundamental properties of TLPA*.
A full description can be found in~\cite{Aine2016}.

TLPA* is identical to LPA*, except that it uses two truncation rules to stop the best-first inconsistency propagation of the LPA* search. 
Recall that LPA* uses the two cost-to-come values, the $g$-value and $rhs$-value to indentify inconsistent vertices.
In addition to these, TLPA* uses an additional cost-to-come value, denoted with $g^\pi$-value, which assigns to a vertex the cost-to-come along the current path following its backpointer to $v_\mathrm{s}$. 
The $g^\pi$-value of a vertex reflects the cost-to-come on the current tree with or without repair, whereas the $rhs$-value of a vertex reflects the cost-to-come after the repair.
Hence, the $g^\pi$-value may be different than the $rhs$-value. 
Also, the $g^\pi$-value may be different than the $g$-value, the previous cost-to-come value without any repair. 
The $g^\pi$-value reflects the in-process cost-to-come value during repair propagation. 
TLPA* uses the $g^\pi$-value to decide whether an inconsistency should be further propagated or not when an inconsistent vertex from the priority queue is chosen for expansion. 
The decision is based on the two truncation rules.    

The first truncation rule applies to all inconsistent vertices selected for expansion.
When an inconsistent vertex $v$ with the lowest key value from the priority queue is selected for expansion, TLPA* first checks whether the current solution cost is already within the bound even without any further repair. 
That is, given some truncation factor $\varepsilon_2 \geq 1$, TLPA* checks whether the inequality 
\begin{equation} \label{blgls:eq:truncation1}
g^\pi(v_\mathrm{g}) \leq \varepsilon_2 (\min\{ g(v), rhs(v)\} +h(v)),
\end{equation}
holds for the vertex $v$ selected for expansion. 
If so, then TLPA* stops expanding any remaining inconsistent vertices and returns an existing bounded suboptimal solution. 
Since the lowest key value of the priority queue is a lower bound on the optimal solution cost of the current graph, i.e., $\min[g(v), rhs(v)] +h(v) \leq g^*(v_\mathrm{g})$, once the inequality (\ref{blgls:eq:truncation1}) is satisfied, then the current solution cost without further repair does not exceed the optimal solution by more than $\varepsilon_2,$ that is, $g^\pi(v_\mathrm{g}) \leq \varepsilon_2 g^*(v_\mathrm{g})$ holds. 
Thus, truncating the propagation still guarantees that the current solution is within the desired suboptimality bound. 

The second truncation rule applies to an underconsistent vertex (i.e., a vertex $v$ with $g(v)<rhs(v)$) selected for expansion by checking
\begin{equation}\label{blgls:eq:truncation2}
    g^\pi(v)+h(v)\leq \varepsilon_2 (g(v)+h(v)).
\end{equation}
If this inequality holds, then the underconsistent vertex is put in the $\textsf{TRUNCATED}$ list instead of the regular priority queue, so that the propagation of inconsistency constrained to this vertex stops.
Since $g(v)$ holds the previous shortest path cost and $h(v)$ is a consistent heuristic cost-to-go, $g(v)+h(v)$ is a lower bound on the solution cost constrained to pass through $v$. 
Thus, if $g^\pi(v)+h(v)\leq \varepsilon_2 (g(v)+h(v))$, then any vertex $v'$ that uses $g(v)$ to compute its $rhs(v')$ will not underestimate the actual solution cost by more than a $\varepsilon_2$ factor. 
Truncating this inconsistency propagation guarantees that the solution is within the desired bound as the old path has not deteriorated beyond this bound. 

The rest of the TLPA* algorithm is similar to LPA*. It uses the same \textsc{CalculateKey} procedure to compute the key values of inconsistent vertices as in LPA*, and it prioritizes inconsistent vertices according to the lexicographical ordering of the key values. TLPA* uses the same \textsc{UpdateVertex} procedure to update the $rhs$-value, except that TLPA* does not insert an inconsistent vertex into the priority queue if the vertex is already truncated, i.e., is in the \textsf{TRUNCATED} list.
The main propagation loop, the procedure \textsc{ComputePath} is equivalent to the procedure \textsc{ComputeShortestPath} of LPA*, except that \textsc{ComputePath} computes the $g^\pi$-value of a goal and an underconsistent vertex and applies the two truncation rules mentioned above. 
The auxiliary routines of TLPA*, namely, computing the $g^\pi$-value and obtaining the path, are provided in Algorithm~\ref{lgls:a:tlpastaraux}.
For now, line \ref{blgls:algo:line:expansionevent} in Algorithm~\ref{lgls:a:tlpastaraux} can be ignored. 
It can be shown that TLPA* guarantees the following properties: 
\begin{itemize}
\item \textbf{Bounded Suboptimality}: When the \textsc{ComputePath} function returns the path constructed using \textsc{ObtainPath}($v_\mathrm{g}$), the path has cost less than or equal to $\varepsilon_2 g^*(v_\mathrm{g})$ for a chosen $\varepsilon_2 \geq 1.$
\item \textbf{Efficiency}: In \textsc{ComputePath}, no vertex is expanded more than twice.
\end{itemize}


\subsection{The Bounded L-GLS Algorithm}

Bounded L-GLS (B-LGLS) is identical to L-GLS, except that the heuristic search tree is repaired with the lazy-TLPA* instead of the lazy-LPA*, and the heuristic edge value is inflated.
Like L-GLS, B-LGLS contains two loops: an inner loop and an outer loop. 
In the inner loop, the inconsistency between the heuristic search tree and the actual path is repaired using the lazy-TLPA* to produce a bounded suboptimal path candidate. 
In the outer loop, any changes are updated lazily with inflated heuristic edge values. 
The B-LGLS algorithm is presented in Algorithm~\ref{lgls:a:blgls}, where the differences between B-LGLS and L-GLS are colored in blue.

\begin{algorithm}
	\caption{\textsf{Bounded L-GLS}($G, v_\mathrm{s}, v_\mathrm{g}$)}\label{lgls:a:blgls}
	\begin{small}
	\begin{algorithmic}[1]
		\Procedure{CalculateKey}{$v$} \Return
			\State {[$\min (g(v),rhs(v))+h(v) \; ; \min (g(v),rhs(v))$];}
		\EndProcedure
		\Procedure{UpdateVertex}{$v$}
			\If {$v\neq v_\mathrm{s}$}
				\State $bp(v) \gets \argmin_{u\in pred(v)} (g(u)+\widetilde{w}(u,v))$;
				\State $rhs(v) \gets g(bp(v)) + \widetilde{w}(bp(v),v)$;
			\EndIf
			\If {$v\in Q$} 
				$Q.\textsc{Remove}(v)$;
			\EndIf
			\If {$g(v)\neq rhs(v)$ and \textcolor{blue}{$v\notin \textsf{TRUNCATED}$}} 
				\State $Q.\textsc{Insert}((v,\textsc{CalculateKey}(v)))$; 
			\EndIf
		\EndProcedure
		\Procedure{ComputePath}{$\textsc{Event}$}
			\While{$Q.\textsc{TopKey} \prec \textsc{CalculateKey}(v_\mathrm{g})$ \Or \\
				$g(v_\mathrm{g})\neq rhs(v_\mathrm{g})$}
				\State $u\gets Q.\textsc{Pop}()$;\label{blgls:algo:line:vertexexpansion}
				\State \textcolor{blue}{$g^\pi(v_\mathrm{g})\gets$  \textsc{ComputeG\small{PI}}($v_\mathrm{g}$);}
				\If{\textcolor{blue}{$g^\pi(v_\mathrm{g}) \leq \varepsilon_2 (\min \{g(u),rhs(u)\} + h(u))$}}
					\State \Return \textcolor{blue}{\textsc{ObtainPath}($v_\mathrm{g}$);}
				\EndIf
				\If {$g(u) > rhs(u)$}
					\State $g(u)\gets rhs(u)$; 
					\If {$\textsc{Event}(u)$ is triggered} \label{blgls:algo:line:expansionevent}
						\State \Return \textsc{ObtainPath}($u$);
					\EndIf
					\For {\textbf{all }$v\in succ(u)$}
						\textsc{UpdateVertex}($v$);
					\EndFor
				\Else
					\State \textcolor{blue}{$g^\pi(u)\gets \textsc{ComputeG\small{PI}}(u)$;}
					\If{\textcolor{blue}{$g^\pi(u)+h(u)\leq \varepsilon_2(g(u)+h(u))$}}
						\State \textcolor{blue}{\textsf{TRUNCATED}.\textsc{Insert}($u$);}
					\Else 
						\State $g(u)\gets \infty$;
						\For {\textbf{all }$v\in succ(u) \cup \set{u}$}
							\State \textsc{UpdateVertex}($v$);
						\EndFor
					\EndIf 
				\EndIf 
			\EndWhile
		\EndProcedure
		\Procedure{EvaluateEdges}{$\overline{\pi}$}
			\For{\textbf{each} $ e \in \overline{\pi}$}
			\If{$e \notin E_\mathrm{eval}$} 
			\State \textcolor{blue}{$\widetilde{w}(e) \gets w(e)$}; 
			 $E_\mathrm{eval}\gets E_\mathrm{eval} \cup \set{e}$;
			\If {\textcolor{blue}{$\widetilde{w}(e) \neq \widehat{w}(e) $}} \Return $e$; 
			\EndIf
			\EndIf
			\EndFor 
			\EndProcedure
		\Procedure{Main}{$ $}
			\For {$\textbf{all } e \in E$} $\textcolor{blue}{\widetilde{w}(e) \gets \varepsilon_1\widehat{w}(e)}$; 
			\EndFor
			\State $E_\mathrm{eval}\gets \varnothing$;
			$rhs(v_\mathrm{s})\gets 0$; \textcolor{blue}{$g^\pi(v_\mathrm{s})\gets 0$;}
			\State $\textsc{UpdateVertex}(v_\mathrm{s})$;
			\While{true}
			\Repeat \label{blgls:algo:line:mainsearchbegin}
				\State $\overline{\pi} \gets \textsc{ComputePath}(\textsc{Event})$;
				\State $(u,v)\gets \textsc{EvaluateEdges}(\overline{\pi})$; \label{blgls:algo:line:evaluateedge}
				\State $\textsc{UpdateVertex}(v)$;
				\For {\textcolor{blue}{all $s\in$ \textsf{TRUNCATED}}}
					\State \textcolor{blue}{Remove $s$ from \textsf{TRUNCATED};}
					\State \textcolor{blue}{$g^\pi(s)\gets \infty;$}
				\EndFor
			\Until{$v_\mathrm{g} \in \overline{\pi}$ \textbf{and} $\overline{\pi} \subseteq E_\mathrm{eval}$} 
				\label{blgls:algo:line:mainsearchend}
				\State Wait for changes in $E$;
				\State $L \gets $the set of edges that changed;
				\For {$\textbf{all } e=(u,v) \in L$}
				\State \textcolor{blue}{$\widetilde{w}(e) \gets \varepsilon_1 \widehat{w}(e)$}; 
				\State $E_\mathrm{eval} \gets E_\mathrm{eval}\backslash \set{e}$;
				\State $\textsc{UpdateVertex}(v)$;
				\EndFor
			\EndWhile
		\EndProcedure
	\end{algorithmic}
	\end{small}
\end{algorithm}

\begin{algorithm}
	\caption{Lazy Truncated LPA*: auxiliary routines: \cite{Aine2016}}\label{lgls:a:tlpastaraux}
	\begin{small}
	\begin{algorithmic}[1]
		\Procedure{ComputeG\small{PI}}{$v$}
			\State cost$\gets$0; $v'\gets v$; 
			\State visited $\gets \varnothing$;
			\While{$v' \neq v_\mathrm{s}$}
				\If{$v'\in$ visited \Or $bp(v')=$ \textbf{null}}
					\State $g^\pi(v) \gets \infty$; \Return 
				\Else
					\State insert $v'$ in visited;
					\State cost $\gets$ cost + \textcolor{blue}{$\widetilde{w}(bp(v'),v')$}; $v'\gets bp(v')$; 
				\EndIf
			\EndWhile
			\State \Return $g^\pi(v) \gets$ cost; 
		\EndProcedure
		\Procedure{ObtainPath}{$v$}
			\State $\pi \gets v$
			\While{$v\neq v_\mathrm{s}$}
			    \If{$bp(v)\in \textsf{TRUNCATED}$}
			        \State $\pi \gets \pi \cup \pi(bp(v))$;
			        \Return $\pi$; 
		        \EndIf
				\State $\pi \gets \pi \cup bp(v)$; $v \gets bp(v)$
			\EndWhile
			\State \Return $\pi$;
		\EndProcedure
	\end{algorithmic}
	\end{small}
\end{algorithm}

\subsubsection{Details of the Algorithm and Main Procedures.}

In this section we describe step-by-step the procedures of B-LGLS in greater detail. 

Before the first search begins, all the cost-to-come values are initialized with $\infty$, and all the edges are assigned with an inflated heuristic value. 
The search begins by assigning the $rhs$-value and the $g^\pi$-value of the start vertex $v_\mathrm{s}$ to 0, and inserting $v_\mathrm{s}$ in the priority queue via $\textsc{UpdateVertex}.$
In the main search loop (Lines~\ref{blgls:algo:line:mainsearchbegin}-\ref{blgls:algo:line:mainsearchend} of Algorithm~\ref{lgls:a:blgls}), the lazy TLPA* search tree is grown with \textsc{ComputePath(Event)} until the first truncation rule is satisfied or an \textsc{Event} is triggered. 
Then, \textsc{EvaluateEdges} evaluates the unevaluated edges along the subpath and updates their lazy estimate. 
The inconsistency induced by the edge evaluation is propagated by the lazy TLPA* until next time the first truncation rule applies, or an \textsc{Event} is triggered when expanding an overconsistent vertex.
If the path to the goal is found, and all the edges along this path are evaluated, then the path is a bounded suboptimal solution. This procedure repeats again when the graph changes.

When the graph changes, the weights of the changed edges are updated with inflated heuristic values, and the end vertices of the updated edges are put in the priority queue via \textsc{UpdateVertex}.
The procedure \textsc{UpdateVertex} is identical to that of LPA*, except that it uses the lazy edge estimate $\widetilde{w}$ to update the $rhs$-value and its $bp$,
and also it does not put a vertex in the priority queue if that vertex has been already truncated.

\subsection{Illustrative Example}

We visualize the differences between LPA*, TLPA*, L-GLS, and B-LGLS for two consecutive planning problems in a 2D environment shown in Figure~\ref{lgls:f:2d_blgls}.
The top row shows the first problem instance and the bottom row shows the second problem instance with a new opening in the middle of the map. 

\begin{figure*}[ht]
	\centering
	\begin{subfigure}{\myMSFigureScale\textwidth}
		\includegraphics[width=\myLineScale\linewidth]{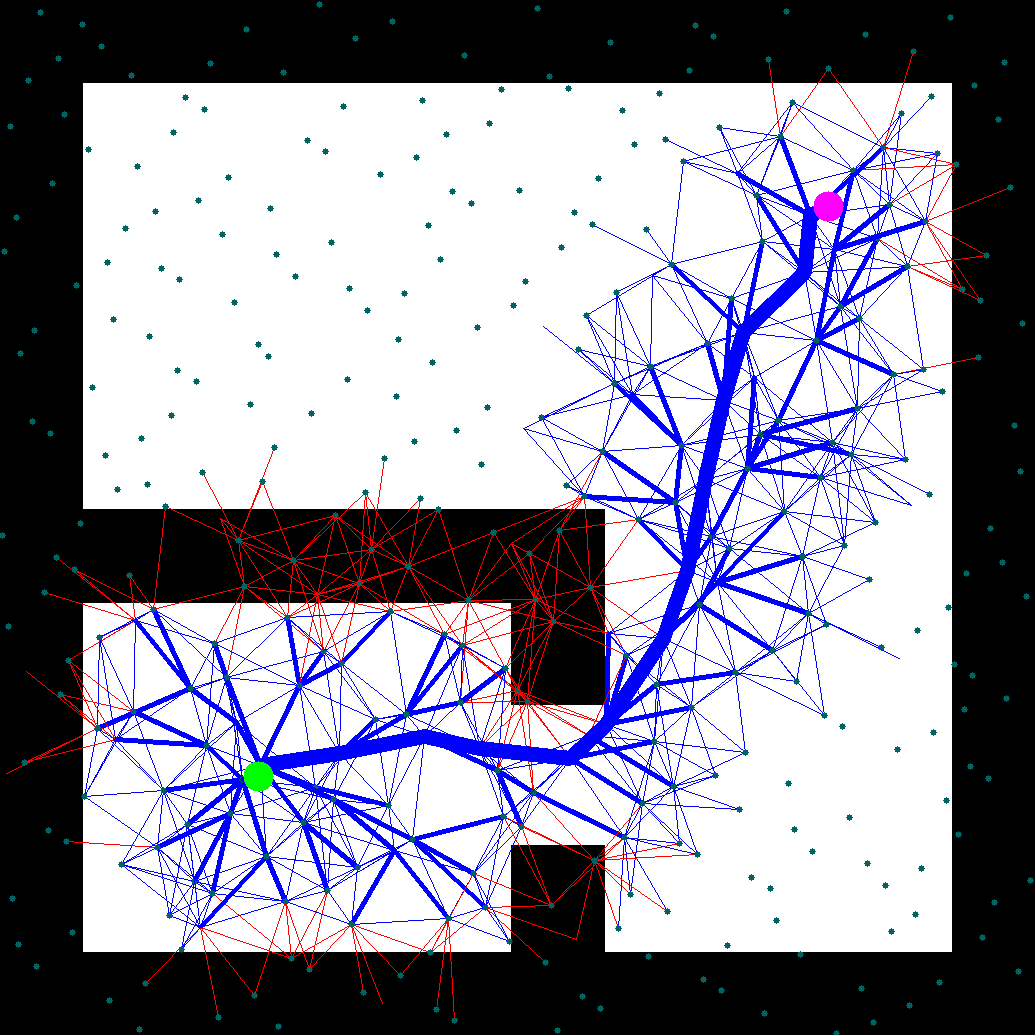}
	\end{subfigure}
	\begin{subfigure}{\myMSFigureScale\textwidth}
		\includegraphics[width=\myLineScale\linewidth]{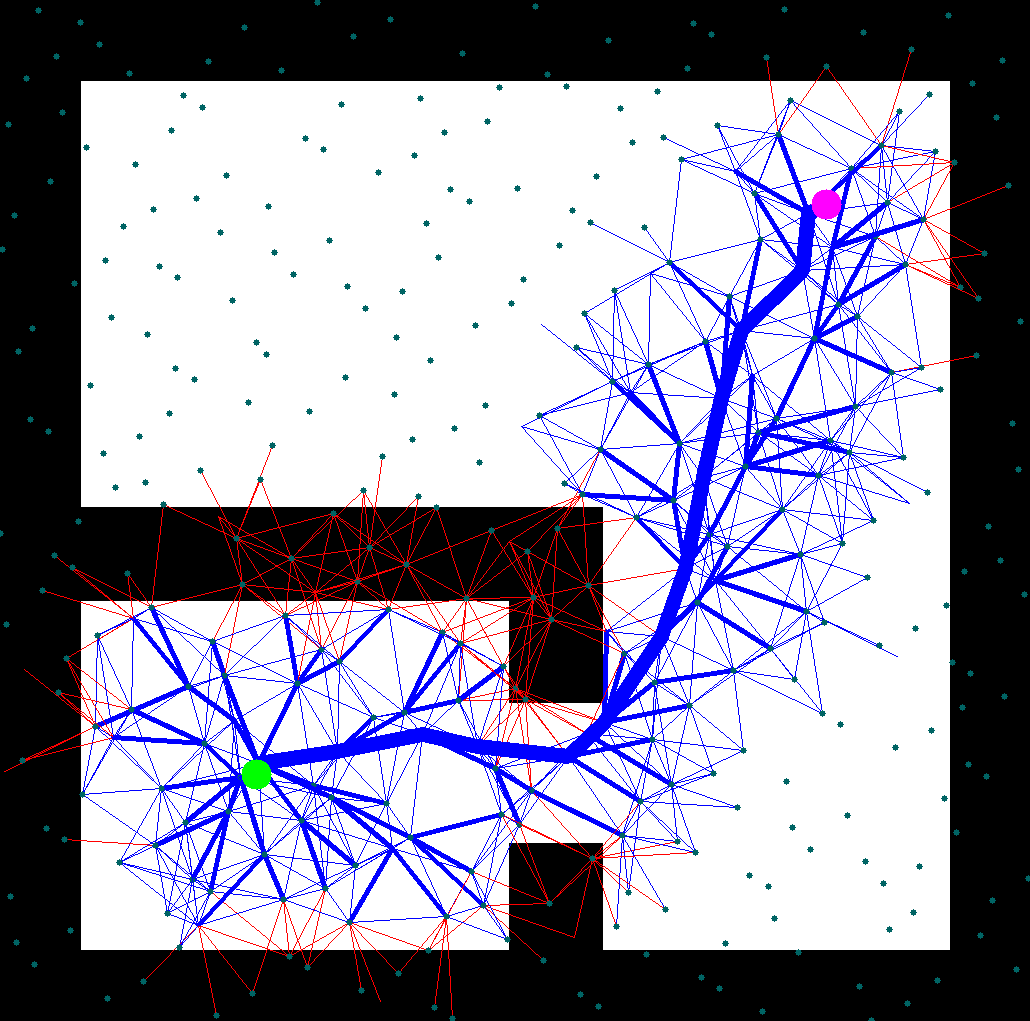}
	\end{subfigure}
	\begin{subfigure}{\myMSFigureScale\textwidth}
		\includegraphics[width=\myLineScale\linewidth]{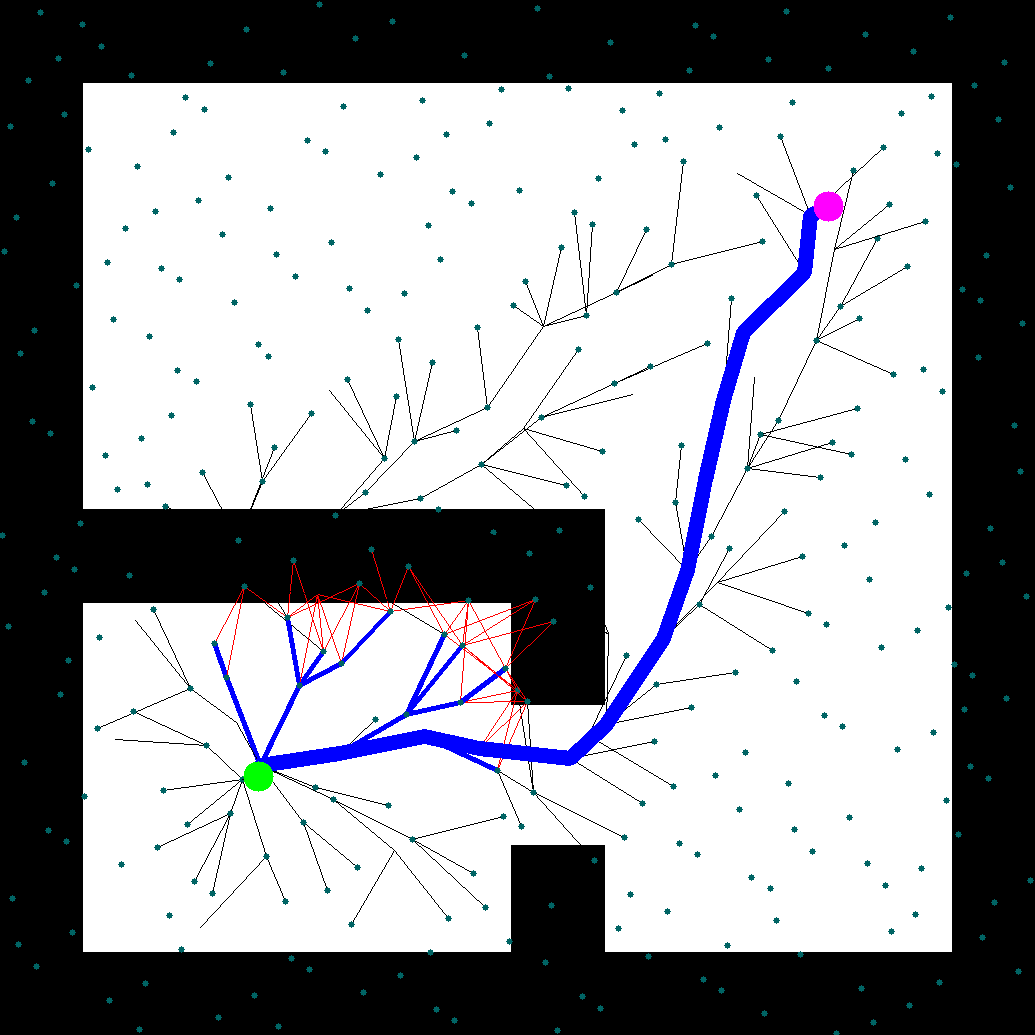}
	\end{subfigure}
	\begin{subfigure}{\myMSFigureScale\textwidth}
		\includegraphics[width=\myLineScale\linewidth]{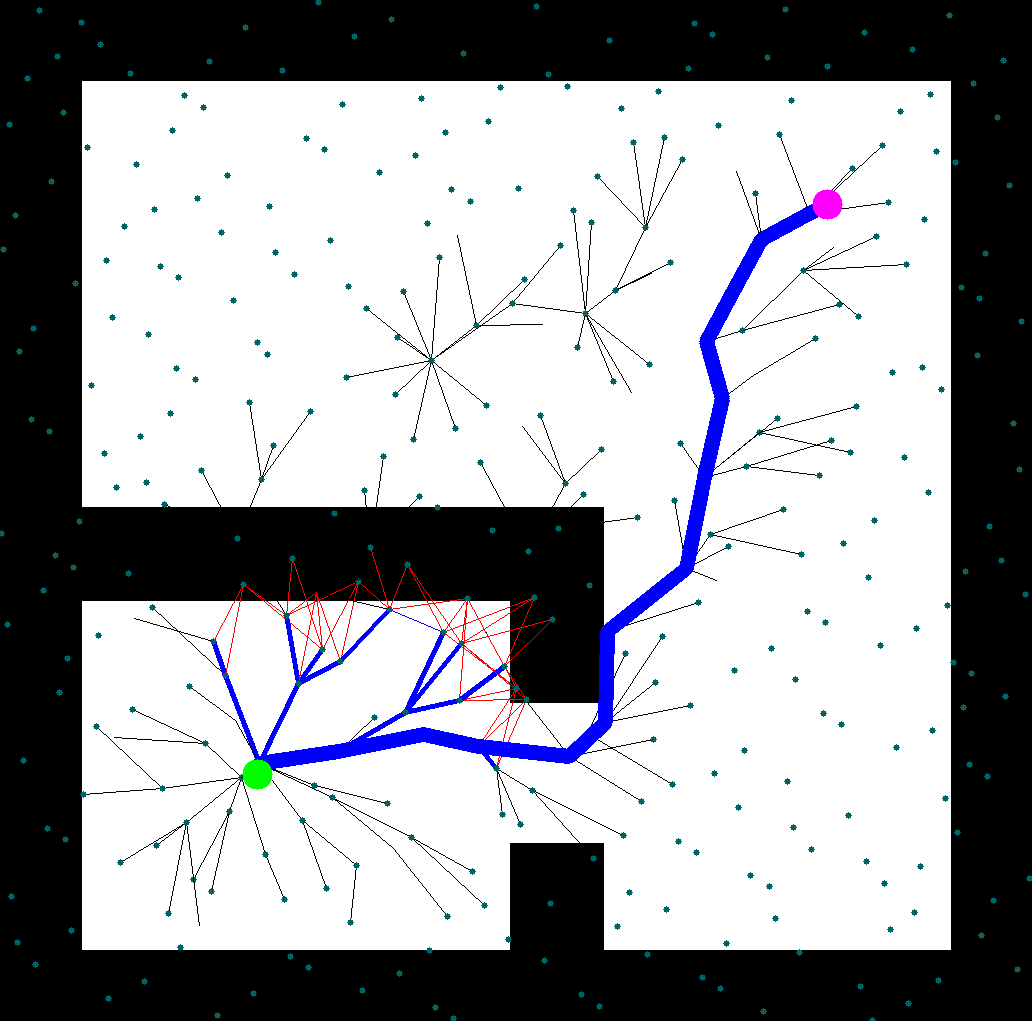}
	\end{subfigure}
	\begin{subfigure}{\myMSFigureScale\textwidth}
		\includegraphics[width=\myLineScale\linewidth]{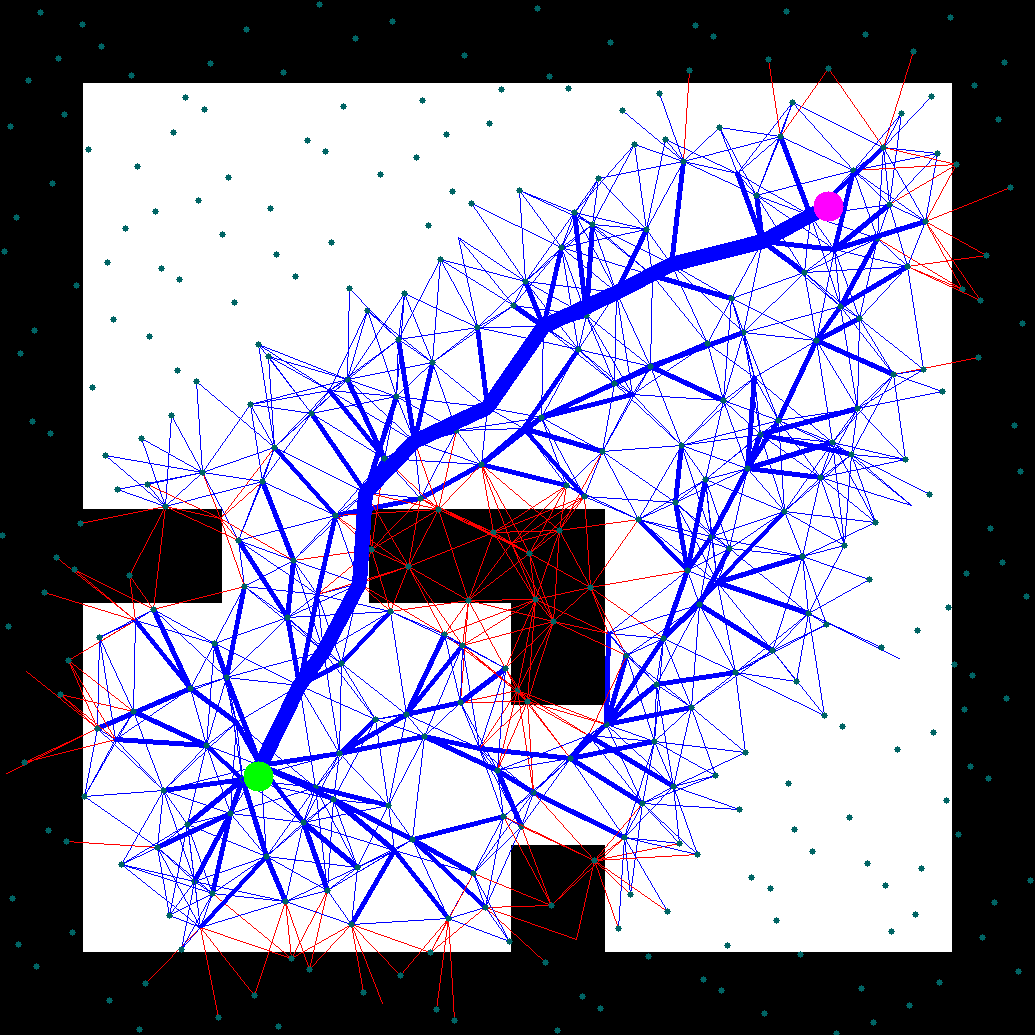}
		\caption{LPA*}
	\end{subfigure}
	\begin{subfigure}{\myMSFigureScale\textwidth}
		\includegraphics[width=\myLineScale\linewidth]{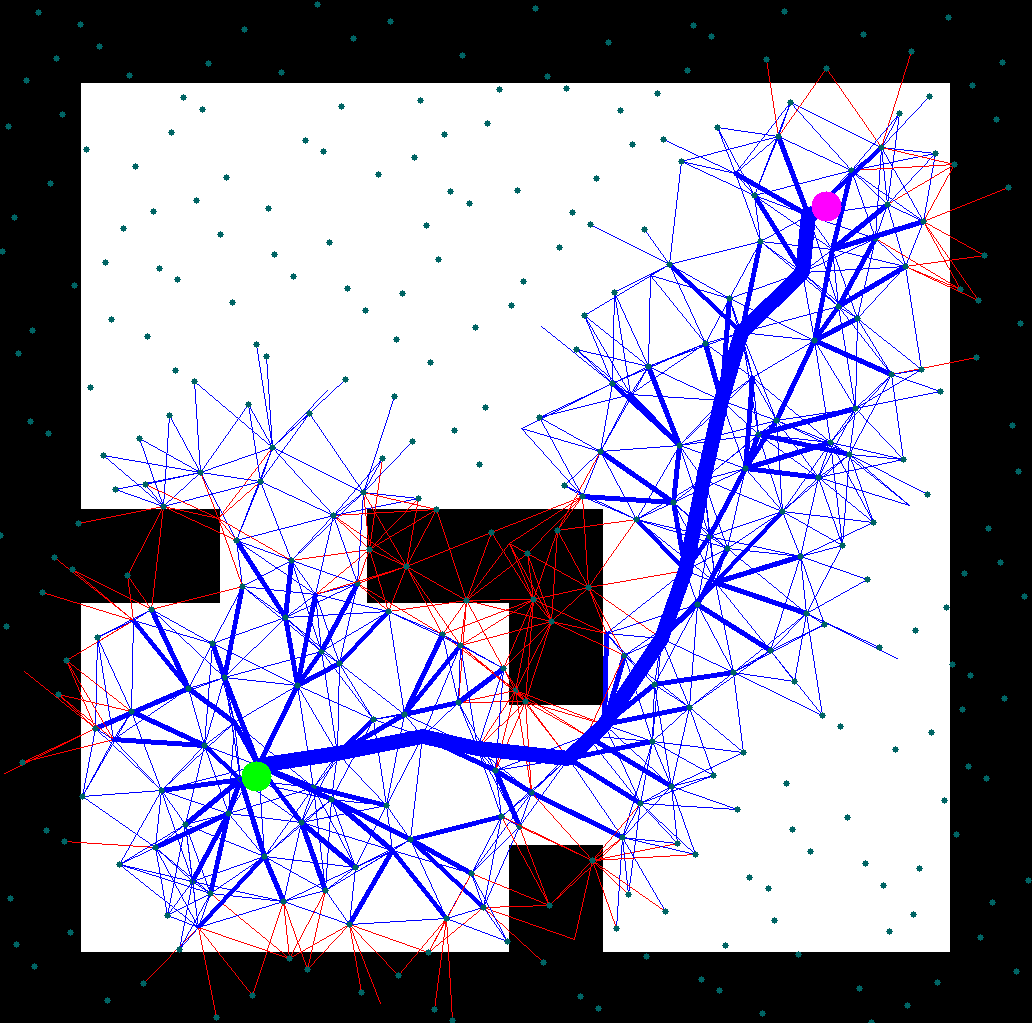}
		\caption{TLPA*(1.44)}
	\end{subfigure}
	\begin{subfigure}{\myMSFigureScale\textwidth}
		\includegraphics[width=\myLineScale\linewidth]{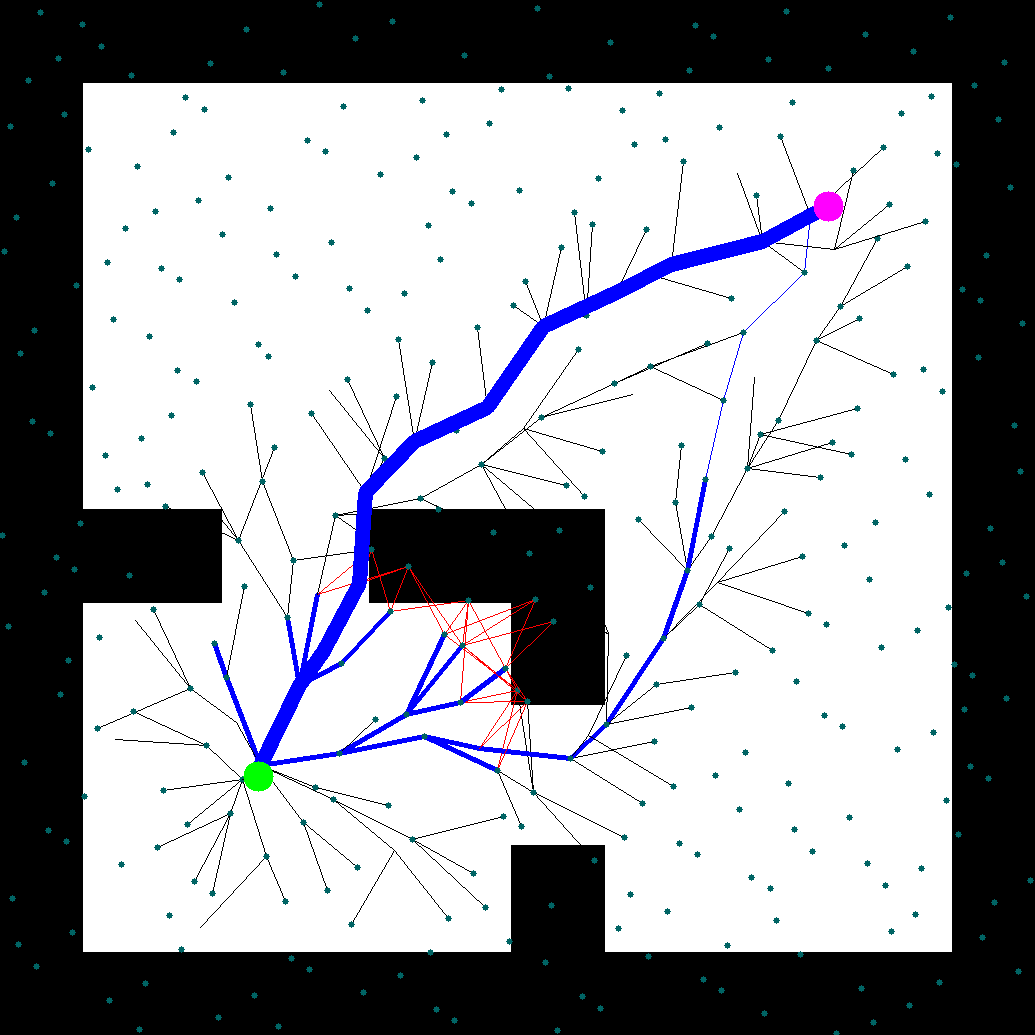}
		\caption{L-GLS}
	\end{subfigure}
	\begin{subfigure}{\myMSFigureScale\textwidth}
		\includegraphics[width=\myLineScale\linewidth]{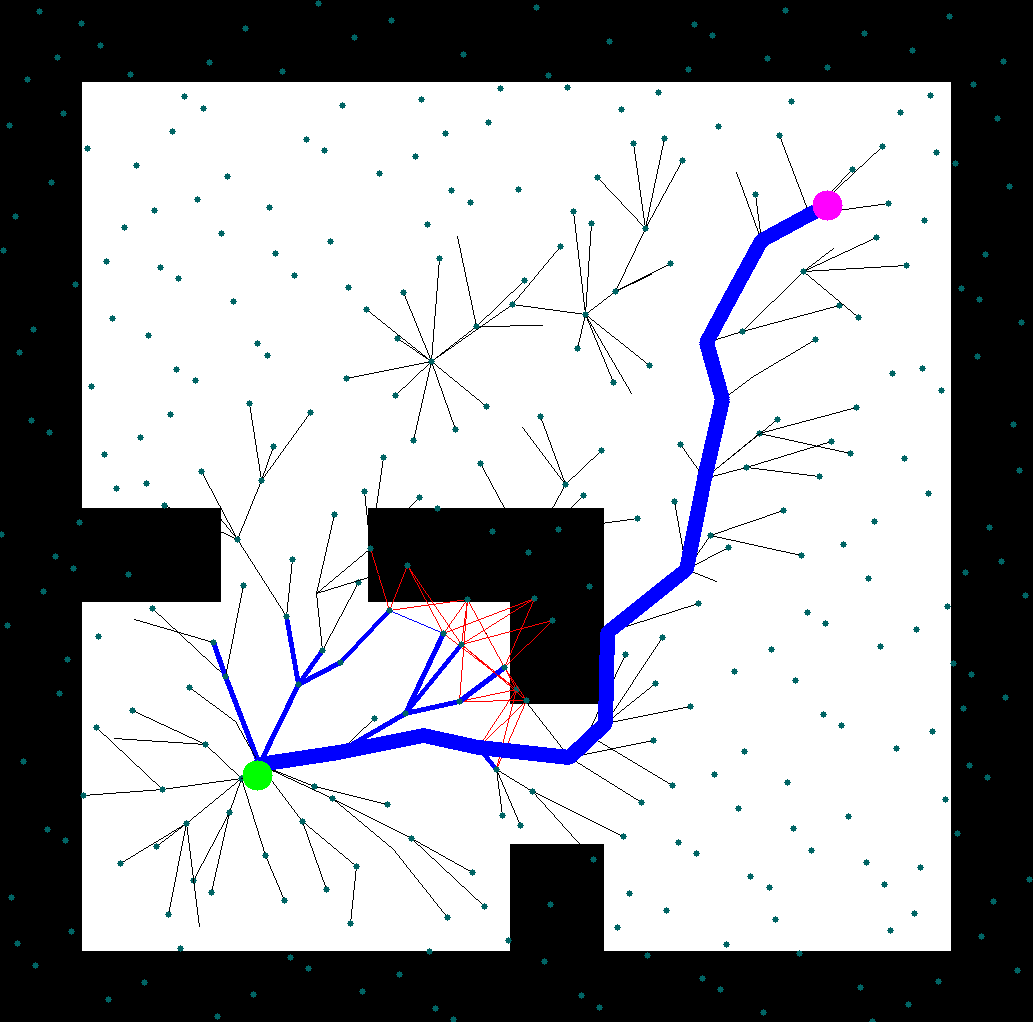}
		\caption{B-LGLS}
	\end{subfigure}
	\caption{First search (top row) and second search (bottom row) to find a bounded suboptimal path from start vertex(\tikzcircle[blue, fill=green]{2.5pt}) to goal vertex(\tikzsquare[blue, fill=magenta]{4.5pt}) per environment change with $\varepsilon_1=1.2$ and $\varepsilon_2=1.2$, from left to right: (a) LPA*, (b) TLPA*, (c) L-GLS, and (d) B-LGLS. 
		Lines(\tikzline[blue,semithick]{}\tikzline[red,semithick]{}) are the evaluated edges during the current search.  Blue and red colors represent free space and obstacles, respectively. Bold lines(\tikzline[blue,very thick]{}) are the edges belonging to the current search tree and the path is drawn with the boldest lines (\tikzline[blue,ultra thick]{}).}
	\label{lgls:f:2d_blgls}
\end{figure*}

\begin{table*}[ht]
	\centering
 	\caption{Number of edge evaluations, number of vertex expansions, and solution length for different planners over two consecutive search queries in a dynamic environment of Figure~\ref{lgls:f:2d_blgls}. }
	\begin{small}
	\begin{tabular}{lccccc}
		\toprule
		\textbf{}      & LPA* & TLPA*(1.44) & GLS& L-GLS & B-LGLS(1.2,1.2) \\ 
		\midrule
		\textbf{First Query} in scene 1    &          &       &          &        &       \\
		\# Edge Evaluation  & 826 & 826 & 66 & 66 & 68 \\ 
		\# Vertex Expansion & 58 & 58 & 2000 & 2000 & 604 \\
		Solution Length & 0.924 & 0.924 & 0.924 & 0.924 & 0.956  \\
		&          &       &          &        &       \\
		\textbf{Second Query} in scene 2    &          &       &          &        &       \\
		\# Edge Evaluation  & 267 & 93 & 35 & 14 & 0 \\ 
		\# Vertex Expansion & 23 & 1 & 871 & 315 & 1 \\
		Solution Length & 0.84 & 0.924 & 0.84 & 0.84 & 0.956 \\		         
		&          &       &          &        &       \\
		\textbf{Total}    &          &       &          &        &       \\
		\# Edge Evaluation  & 1093 & 919 & 101 & 80 & 68 \\ 
		\# Vertex Expansion & 81 & 59 & 2871 & 2315 & 605 \\
		\bottomrule
	\end{tabular}
	\end{small}
	\label{lgls:t:blgls2d}
\end{table*}

Each problem instance was solved using a graph with a fixed topology whose vertices are sampled with a Halton sequence~\cite{Halton1964} and an edge is defined for two vertices within a certain distance. 
The vertices are shown with black dots, and the evaluated edges are shown in either blue\,(feasible) or red\,(infeasible) lines. 
Unevaluated edges which belong to the lazy search trees are drawn in black thin lines in Figure~\ref{lgls:f:2d_blgls}(c) and \ref{lgls:f:2d_blgls}(d). 
Unused edges are not drawn.  
The final search trees grown from the start vertex (green dot) in the lower left toward the goal vertex (magenta dot) in the upper right are drawn with bold blue lines, and the resulting path is marked with thick blue lines, which is best viewed in color. 
From left to right, the search trees of LPA*, TLPA*, L-GLS, and B-LGLS of each instance are shown.

LPA* and L-GLS find the optimal paths for both of the problem instances, whereas TLPA* and B-LGLS do not waste computational resources to replan, as the current solution is guaranteed to be within a user-given parameter of $\varepsilon_1 \varepsilon_2= 1.44$.
Compared to TLPA*, B-LGLS finds the same solution with a significant fewer number of edge evaluations. 
Compared to L-GLS, B-LGLS saves both edge evaluations and vertex expansions.
These results are summarized in Table~\ref{lgls:t:blgls2d}. 

\subsection{Analysis of the Bounded L-GLS Algorithm} \label{b-lgls:sec:analysis}

The completeness of B-LGLS can be proven using the same argument as in Theorem~\ref{lgls:invariant:LGLScomplete}. 
Next, we prove that B-LGLS returns a bounded suboptimal solution. 
First, let us consider two useful invariants. 

\begin{restatable}{invariant}{BLGLSInvariantInflation}
	\label{lgls:invariant:BLGLSInvariantInflation}
	The lazy estimate of an edge never overestimates the true edge value by more than a factor of $\varepsilon_1$, that is, $\widetilde{w}\leq \varepsilon_1 w.$
\end{restatable}
\begin{proof}
    Since $\widetilde{w}(e)=w(e)$ for all $e\in E_\mathrm{eval}$, and $\widetilde{w}(e)=\varepsilon_1 \widehat{w}(e) \leq \varepsilon_1 w(e)$ for all $e\notin E_\mathrm{eval}$, it follows that $\widetilde{w}(e)\leq \varepsilon_1 w(e)$ for all $e\in E.$
\end{proof}

\begin{restatable}{invariant}{BLGLSInvariantTruncation}
	\label{lgls:invariant:BLGLSInvariantTruncation}
	When $\textsc{ComputePath}$ of TLPA* selects a locally overconsistent vertex $v$ for expansion, the path constrained to pass through $v$ does not overestimate the optimal solution cost by more than an $\varepsilon_2$ factor, that is, $g^\pi(v)+h(v) \leq \varepsilon_2 g^*(v_\mathrm{g}).$
\end{restatable}
\begin{proof}
    Suppose a locally overconsistent vertex $v$ is selected for expansion, that is, $g(v)\geq rhs(v).$ 
    From Lemma 6 of \cite{Aine2016} we have that $rhs(v)+h(v)\leq g^*(v_\mathrm{g})$ and from Lemma 7 of \cite{Aine2016} we have that $g^\pi(v)+h(v)\leq \varepsilon_2 (rhs(v)+h(v))$.
    Hence, the result $g^\pi(v)+h(v) \leq \varepsilon_2 g^*(v_\mathrm{g})$ follows. 
\end{proof}

Now we are ready to prove that B-LGLS is correct, that is, it returns a bounded suboptimal path when the inner loop terminates. This is stated in the next theorem. 
\begin{restatable}{theorem}{BLGLScorrect}
	\label{lgls:theorem:BLGLScorrect}
	B-LGLS finds a path that is not longer than the shortest path by more than a factor of $\varepsilon_1 \varepsilon_2$ with respect to the current graph when the inner loop (Lines~\ref{blgls:algo:line:mainsearchbegin}-\ref{blgls:algo:line:mainsearchend}) terminates.
\end{restatable}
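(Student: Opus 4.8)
The plan is to replay the proof of Theorem~\ref{lgls:theorem:LGLScorrect} with two substitutions: the inflation Invariant~\ref{lgls:invariant:BLGLSInvariantInflation} plays the role of the admissibility Invariant~\ref{lgls:invariant:LGLSLazyestimateInvariant}, and the bounded-suboptimality property of lazy-TLPA* with respect to $\widetilde{w}$ plays the role of the optimality property of lazy-LPA*. Write $\pi^*$ for the optimal path with respect to $w$ in the current graph, $g^*_{\widetilde{w}}(v_\mathrm{g}) \defeq \min_{\pi\in\Pi}\widetilde{w}(\pi)$ for the optimal cost-to-come to the goal with respect to $\widetilde{w}$, and $\overline{\pi}$ for the path returned by \textsc{ComputePath} on the last pass of the inner loop. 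The target is the chain
\[
w(\overline{\pi}) = \widetilde{w}(\overline{\pi}) \leq \varepsilon_2\, g^*_{\widetilde{w}}(v_\mathrm{g}) \leq \varepsilon_2\, \widetilde{w}(\pi^*) \leq \varepsilon_1 \varepsilon_2\, w(\pi^*).
\]

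First I would dispatch the three easy (in)equalities. When the inner loop exits, its guard gives $v_\mathrm{g}\in\overline{\pi}$ and $\overline{\pi}\subseteq E_\mathrm{eval}$; since $\widetilde{w}(e)=w(e)$ for every evaluated edge, summing over $\overline{\pi}$ yields $w(\overline{\pi})=\widetilde{w}(\overline{\pi})$. The middle inequality is immediate because $\pi^*\in\Pi$, so $g^*_{\widetilde{w}}(v_\mathrm{g})\leq\widetilde{w}(\pi^*)$. The last inequality is Invariant~\ref{lgls:invariant:BLGLSInvariantInflation} summed edge by edge over $\pi^*$, i.e., $\widetilde{w}(\pi^*)=\sum_{e\in\pi^*}\widetilde{w}(e)\leq\sum_{e\in\pi^*}\varepsilon_1 w(e)=\varepsilon_1 w(\pi^*)$.

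The substantive step is $\widetilde{w}(\overline{\pi})\leq\varepsilon_2\, g^*_{\widetilde{w}}(v_\mathrm{g})$, the bounded-suboptimality of the subpath returned by lazy-TLPA*. Here I would first observe that lazy-TLPA* inherits every property of TLPA*~\cite{Aine2016} verbatim with $w$ replaced by $\widetilde{w}$, and that each pass of the inner loop is exactly a TLPA* replanning step for the current $\widetilde{w}$ — an \textsc{UpdateVertex} of the changed or evaluated end-vertex followed by a \textsc{ComputePath} — with the $g^\pi$-reset of the \textsf{TRUNCATED} list before the next pass being precisely the between-query reinitialization of TLPA*. Now \textsc{ComputePath} returns a path reaching $v_\mathrm{g}$ in one of two ways. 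If it returns by the first truncation rule, the popped vertex $u$ satisfies $g^\pi(v_\mathrm{g})\leq\varepsilon_2(\min\{g(u),rhs(u)\}+h(u))$; because $u$ has the minimum key, $\min\{g(u),rhs(u)\}+h(u)$ lower-bounds $g^*_{\widetilde{w}}(v_\mathrm{g})$ (the usual key-bound lemma of LPA*/TLPA*~\cite{Koenig2004,Aine2016}), and $\overline{\pi}=\textsc{ObtainPath}(v_\mathrm{g})$ has $\widetilde{w}$-cost $g^\pi(v_\mathrm{g})$, so $\widetilde{w}(\overline{\pi})\leq\varepsilon_2\, g^*_{\widetilde{w}}(v_\mathrm{g})$. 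If instead it returns because an \textsc{Event} is triggered, then for the events of Algorithm~\ref{lgls:a:events} the triggering vertex is $v_\mathrm{g}$ itself, which was locally overconsistent when selected for expansion, so Invariant~\ref{lgls:invariant:BLGLSInvariantTruncation} gives $g^\pi(v_\mathrm{g})+h(v_\mathrm{g})\leq\varepsilon_2\, g^*_{\widetilde{w}}(v_\mathrm{g})$; since $h(v_\mathrm{g})=0$ and again $\overline{\pi}=\textsc{ObtainPath}(v_\mathrm{g})$ has cost $g^\pi(v_\mathrm{g})$, the bound follows. Either way $\widetilde{w}(\overline{\pi})\leq\varepsilon_2\, g^*_{\widetilde{w}}(v_\mathrm{g})$, completing the chain; combined with the completeness of the inner loop (proved exactly as Theorem~\ref{lgls:invariant:LGLScomplete}), B-LGLS returns an $\varepsilon_1\varepsilon_2$-bounded path.

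I expect the main obstacle to be the bookkeeping behind the ``inherits verbatim'' claim: the invariants of TLPA* are phrased for a single \textsc{ComputePath} call, whereas here the final call is preceded by an unknown number of edge re-evaluations and \textsf{TRUNCATED}-list resets, so one must verify that the hypotheses of Invariant~\ref{lgls:invariant:BLGLSInvariantTruncation} and of the key-bound lemma still hold on that last call. What keeps this routine rather than delicate is that the lazy construction decouples the two weight functions: with respect to $\widetilde{w}$ the inner loop performs nothing but successive, correctly initialized TLPA* replans, so all $\widetilde{w}$-level guarantees transfer unchanged, and the inflation factor $\varepsilon_1$ never enters until the final edge-by-edge step via Invariant~\ref{lgls:invariant:BLGLSInvariantInflation}. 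A secondary, purely cosmetic point is the case in which the triggering vertex of a non-\textsc{ShortestPath} event strictly precedes $v_\mathrm{g}$ along $\overline{\pi}$; there one simply applies the same bound to the prefix \textsc{ObtainPath}($v_\mathrm{g}$) of $\overline{\pi}$.
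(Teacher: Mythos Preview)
Your proof is correct and follows essentially the same route as the paper: the same four-term chain $w(\overline{\pi})=\widetilde{w}(\overline{\pi})\leq\varepsilon_2\,\widetilde{w}(\pi^*)\leq\varepsilon_1\varepsilon_2\,w(\pi^*)$, with Invariant~\ref{lgls:invariant:BLGLSInvariantTruncation} supplying the $\varepsilon_2$ factor and Invariant~\ref{lgls:invariant:BLGLSInvariantInflation} the $\varepsilon_1$ factor. The paper's proof is terser---it cites Invariant~\ref{lgls:invariant:BLGLSInvariantTruncation} directly without your explicit case split on whether \textsc{ComputePath} returns via the first truncation rule or via an \textsc{Event}---so your version is in fact more carefully justified on that step.
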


\begin{proof}
	Let $\pi^*$ be the optimal path with respect to $w$ in the current graph, that is, $w(\pi^*)=\min_{\pi\in\Pi} w(\pi)$, where $\Pi$ is the set of all paths from $v_\mathrm{s}$ to $v_\mathrm{g}$. B-LGLS terminates its inner-loop when $v_g \in \overline{\pi}$ and $\overline{\pi}\subseteq E_\mathrm{eval},$ where $\overline{\pi}$ is the output subpath of \textsc{ComputePath(Event)}.
	Then, we have 
	\begin{equation}
		\begin{aligned}
		\widetilde{w}(\overline{\pi}) & = \sum_{e\in\overline{\pi}} \widetilde{w}(e)
		\leq \varepsilon_2 \sum_{e'\in\pi^*} \widetilde{w}(e') 
		\\ & \leq \varepsilon_2 \sum_{e'\in\pi^*} \varepsilon_1 w(e') 
		= \varepsilon_1 \varepsilon_2 w(\pi^*),
		\end{aligned}
	\end{equation}
	where the first inequality holds by Invariant~\ref{lgls:invariant:BLGLSInvariantTruncation}, and the second inequality follows by Invariant~\ref{lgls:invariant:BLGLSInvariantInflation}. 
	Hence, $\widetilde{w}(\overline{\pi}) \leq \varepsilon_1 \varepsilon_2 w(\pi^*)$ and since $\overline{\pi} \subseteq E_\mathrm{eval},$ we have $w(\overline{\pi}) = \widetilde{w}(\overline{\pi}) \leq \varepsilon_1 \varepsilon_2 w(\pi^*).$ Therefore, the path cost of $\overline{\pi}$ is not greater than the optimal path cost by a factor $\varepsilon_1 \varepsilon_2.$
\end{proof}



\section{Generalized Dynamic Search (GD*)} \label{sec:generalized_dstar}

Mobile robots usually operate in environments that are only partially known or continuously changing. Therefore, they need to quickly replan, as the environment changes or when previously unknown parts of it become known. 
D*-Lite is an algorithm that is based on similar ideas as LPA* and uses previous search information to quickly replan while the robot is moving. 
Hence, D*-Lite  has been used in many practical applications. 
It uses the same underlying tree as LPA* to repeatedly compute the shortest path to the goal. 
Compared to LPA*, the D*-Lite algorithm is primarily aimed to handle environment changes while the robot moves along the previously calculated optimal solution.  

In this section, we detail the new algorithm, called Generalized D* (GD*), 
which generalizes D*-Lite~\cite{Koenig2005} in the lazy search framework.

The Generalized D* (GD*) algorithm consists of two loops: an inner loop that conducts the actual search given the current graph, and the outer loop that is mainly responsible for perceiving the environment changes and for updating the graph.  
Note that, similarly to D*-Lite, the search direction in the GD* algorithm is reversed, and the search is started from the goal vertex, progressing towards the start vertex. Thus, the $rhs(v)$ and $g(v)$ values are estimated distances calculated from the vertex $v$ to the goal vertex. Similarly, the heuristic values are computed from the start vertex instead of the goal.  

In the inner loop, the candidate shortest path is computed from the goal vertex to the start vertex based on the current heuristic weight values of the GD* tree.  Once a candidate solution is found, the first unevaluated edge along the shortest subpath is then evaluated. 
If the evaluation results in an inconsistency (e.g., collision with an obstacle), then GD* gets updated and returns a new candidate shortest path. 
Finally, if all the edges along the candidate path are evaluated and they produce no inconsistency, the candidate shortest path is the actual shortest path with respect to the current graph with the true edge values.

Once the shortest path is found, the robot executes the optimal plan and moves along a small segment of the shortest path in the outer loop.  
The algorithm then scans for graph changes.  In case a change is perceived, instead of evaluating the changed edges, the algorithm assigns lazy estimates using the admissible heuristic function for those edges.  
This is to ensure that the lazy estimate of the path cost does not overestimate the optimal path cost.  
Then the inner loop finds the new optimal path with respect to the updated graph.  
Therefore, only a subset of the changed edges that could be on the shortest path are evaluated. Algorithm~\ref{alg:GD} lays out the main procedures for GD*.

\begin{algorithm}
\begin{small}
\caption{GD* ($G, v_s, v_g$)} \label{alg:GD}
\begin{algorithmic}[1]
\Procedure{CalculateKey}{$v$} \textbf{return} 
    \State $\left[ \min(g(v), rhs(v)) + h(v, v_s) + km,  \min(g(v), rhs(v)) \right]$ \label{alg:gd:calckey}
\EndProcedure

\Procedure{UpdateVertex}{$v$}
    \If {$v \neq v_g$}
        \State  $bp(v) \gets \arg\min_{u \in succ(v)} g(u) + \overline{w}(u,v);$
        \State $rhs(v) \gets g(bp(v)) + \overline{w}(bp(v),v);$
    \EndIf
    \If {$v \in Q$}   Q.Remove($v$);
    \EndIf
    
    \If {$g(v) \neq rhs(v)$} \State  Q.Insert(($v$, \textproc{CalculateKey}($v$)));
    \EndIf
\EndProcedure

\Procedure{ComputeShortestPath}{$event$}
    \While {\textproc{Q.TopKey}$<$\textproc{CalculateKey}($v_g$) \textproc{ or } \\ $g(v_g) \neq rhs(v_g)$}
        \State $k_{old}$ $\gets$ \textproc{Q.TopKey};
        \State u $\gets$ \textproc{Q.Pop()};
        \If{$k_{old}<$ \textproc{CalculateKey($u$)}} \label{alg:gd:keycheck}
            \State \textproc{Q.Insert($u$, CalculateKey}($u$));
        \ElsIf{$g(u)>rhs(u)$}
            \State $g(u) \gets rhs(u)$;
            \If{\textproc{Event}$(u)$ is triggered}  \label{alg:gd:event_trigger}
                \State \Return path from $v_g$ to $u$
            \EndIf
            \ForAll{$v \in Pred(u)$}
                \textproc{UpdateVertex($v$);} \EndFor
        \Else{}
            \State $g(u) \gets \infty$;
            \ForAll{$v \in Pred(u) \cup u$}
                \State \textproc{UpdateVertex($v$);}  \EndFor
        \EndIf
    \EndWhile
\EndProcedure

\Procedure{EvaluateEdges}{$\overline{\pi}$}
    \ForAll{$e \in \overline{\pi}$}
        \If{$e \not\in E_{eval}$}
            \State $\overline{w}(e) \leftarrow w(e)$;
            \State $E_{eval} \leftarrow E_{eval} \cup \{e\}$;
            \If{$\overline{w}(e) \neq \widehat{w}(e)$} \Return $e$
            \EndIf
        \EndIf
    \EndFor
\EndProcedure

\Procedure{Main}{$ $}
    \ForAll{$e \in E$} $\overline{w}(e) \leftarrow \widehat{w}(e)$ \EndFor
    
    \State $km \gets 0$;
    \State $v_\mathrm{last} = v_\mathrm{current}$;
    \State $E_{eval} \leftarrow \emptyset$;
    \State $rhs(v_g) = 0$;
    \State \textproc{UpdateVertex($v_g$)};
    
    \While{$v_\mathrm{current} \neq v_g$}
        \Repeat
            \State $\overline{\pi} \gets $ \textproc{ComputeShortestPath($event$)}; \label{alg:gd:computeshortest}
            \State $(u,v) \gets$   \textproc{EvaluateEdges($\overline{\pi}$)}; \label{alg:gd:evaluateedges}
            \State \textproc{UpdateVertex($v$)}; \label{alg:gd:edgeweight_diff}
        \Until{$v_s \in \overline{\pi}$ and $\overline{\pi} \subseteq E_{eval}$}
        \State $v_\mathrm{current} \gets bp(v_\mathrm{current})$;
        \State Move to $v_\mathrm{current}$; \label{alg:gd:move_robot}
        \State $km \gets km + h(v_\mathrm{current}, v_\mathrm{last})$; \label{alg:gd:kmincr}
        \State $v_\mathrm{last} \gets v_\mathrm{current}$;
        
        \State Perceive the edge changes in $E$;
        \ForAll{$e=(u,v)$ with changed edge cost} 
            \State $\overline{w}(e) \leftarrow \widehat{w}(e)$; \label{alg:gd:perceive_changes}
            \State $E_{eval} \leftarrow E_{eval} \backslash \{e\}$;
            \State \textproc{UpdateVertex($v$);}
        \EndFor
    \EndWhile
\EndProcedure

\end{algorithmic}
\end{small}
\end{algorithm}

\subsection{Details of the GD* Algorithm}

The procedures for the GD* are very similar to the ones of LPA* described in Section
``Lifelong Lazy Search'' and are summarized briefly below.
Given a graph, the $g$-values of all the vertices are set to $\infty$ and the lazy edge values are set via an admissible heuristic.
As the robot moves along the path, the priorities in the queue must be recalculated to maintain the correct order for expansion. 
D*-Lite avoids this repeated reordering via adjusting the key values using an additional $km$ variable (Line~\ref{alg:gd:calckey}). 
The $km$ variable alters the priorities of the new vertices when they are added to the queue. 
Secondly, when a vertex is selected for expansion, its key gets recomputed (Line~\ref{alg:gd:keycheck}) and is only expanded if its key value is indeed a lower bound of the queue. 
Otherwise, the vertex gets reinserted into the queue with the recomputed key value. 
Note that the $km$ value gets incremented each time the robot moves (Line~\ref{alg:gd:kmincr}).
%

The search is initialized by setting the $rhs$-value of the goal vertex to zero and inserting it in the priority queue. Furthermore, the $km$ value is set to zero, as the robot has not moved yet.  
The main search loop of the algorithm (Line~\ref{alg:gd:computeshortest}) expands the search tree via the procedure \textproc{ComputeShortestPath}. \textproc{ComputeShortestPath} continues expanding the search tree via expanding frontier vertices until an event is triggered after that particular frontier vertex has just became consistent. 
Similarly to L-GLS (Algorithm \ref{lgls:a:lgls}), the edges of the subpath from the goal to this frontier vertex are returned for evaluation (Line \ref{alg:gd:evaluateedges}).

Similarly to L-GLS (Algorithm~\ref{lgls:a:lgls}), all the evaluated edges then get evaluated, and all lazy estimates get updated with their true weight values in \textproc{EvaluateEdges}. 
If the evaluation results in a different weight value compared to the previous lazy estimate for a particular edge, then this change is reflected via \textproc{UpdateVertex} (Line~\ref{alg:gd:edgeweight_diff}).
The vertex that caused the inconsistency is then updated and the changes from the inconsistency are handled via propagating them through the lazy D*-Lite tree \textproc{ComputeShortestPath} until interrupted via a triggered \textproc{event} (Line~\ref{alg:gd:event_trigger}). 
Once a path is found, the goal and all the edges get evaluated, and the path found will be the optimal path to the goal, given the current graph. 
Similarly to D*-Lite, the robot then progresses towards the goal by traversing the optimal path until it reaches the successor (Line~\ref{alg:gd:move_robot}) of the current vertex ($v_\mathrm{current}$).  
The $km$-value also gets updated via the heuristic to reflect the new start position for the algorithm. 
The algorithm then scans the graph for changes and repeats the same procedures in case of a change.

The procedure \textproc{UpdateVertex} is similar to that of the regular D*-Lite~\cite{Koenig2005}, the main difference being that the lazy estimate of the edge value ($\overline{w}$) is used for updating the $rhs$-value of the current vertex. 
Notice that this is done to avoid evaluation of the irrelevant edges. 
The backpointer of the current vertex is set to the minimizing successor using $\overline{w}$. 
As the last step, and if the vertex is inconsistent, the priority of the current vertex is updated based on the newly calculated key value using \textproc{CalculateKey}.

Note that similarly to the L-GLS  algorithm (Algorithm \ref{lgls:a:lgls}) the choice of the \textproc{Event} function determines the trade-off between edge evaluations and vertex expansions, as explained in 
Section ``Lookahead Variation'' later on.
Also, the procedures of the GD* algorithm are identical to the ones of the D*-Lite algorithm except for three places. 
First, the GD* algorithm uses the lazy weight function instead of the actual weight function during the \textproc{UpdateVertex} procedure. 
Second, the GD* algorithm assigns admissible heuristic weights to the edges instead of the actual weights when their values are changed (Line~\ref{alg:gd:perceive_changes}). 
Lastly, when a vertex is made overconsistent in \textproc{ComputeShortestPath}, the algorithm checks whether \textproc{Event} is triggered to continue or stop propagating the inconsistency information to the predecessors (Line~\ref{alg:gd:event_trigger}).

\subsection{Illustrative Example}

We visualize the progression of the D*-Lite and the GD* algorithms with the infinite-step lookahead, as shown in Figure \ref{ds:f:gd-2d}.  
Note that the same implicit graph is used for both algorithms with the only change being the edge weights due to environment changes. 
Similarly to LPA*, both algorithms find the shortest path during replanning using the previous search tree information.

\begin{figure*}[ht]
	\centering
	\begin{subfigure}{\myMSFigureScalend\textwidth}
		\includegraphics[width=\myLineScale\linewidth]{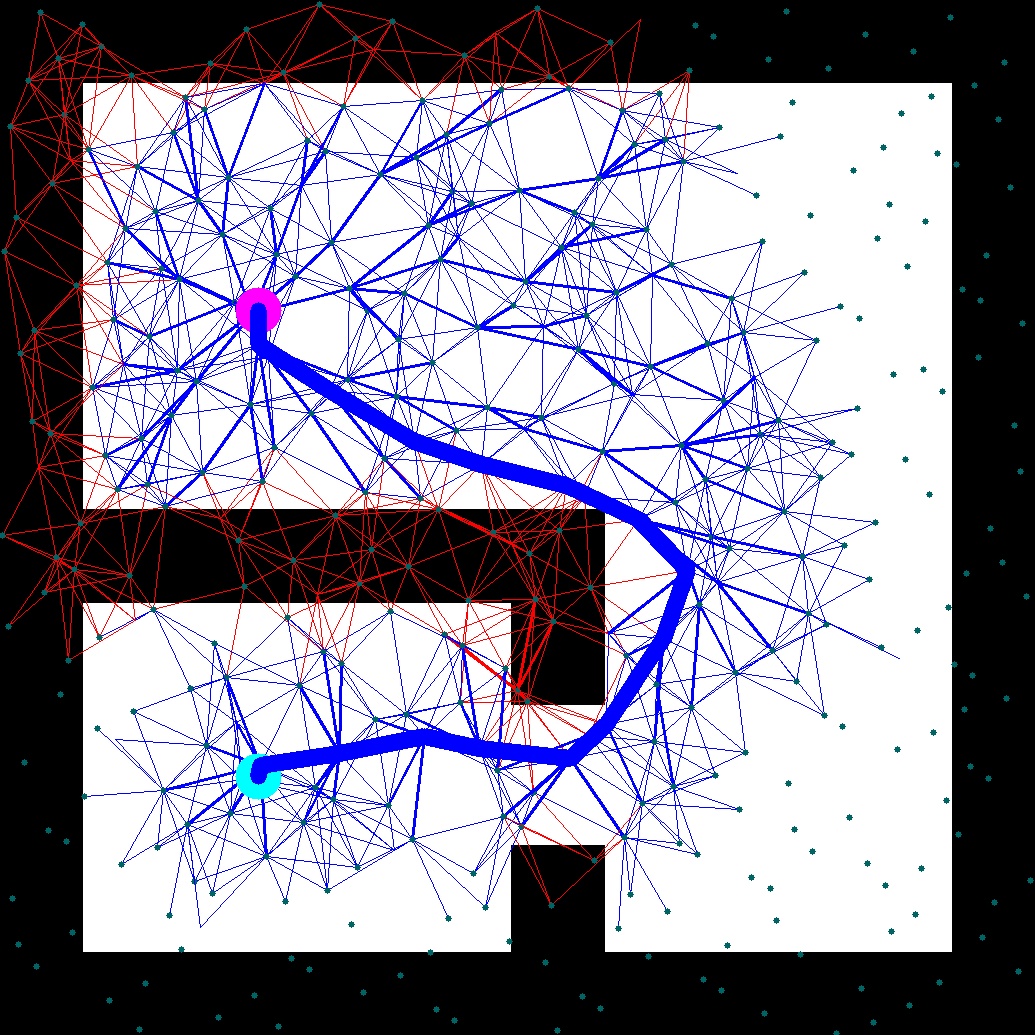}
	\end{subfigure}
	\begin{subfigure}{\myMSFigureScalend\textwidth}
		\includegraphics[width=\myLineScale\linewidth]{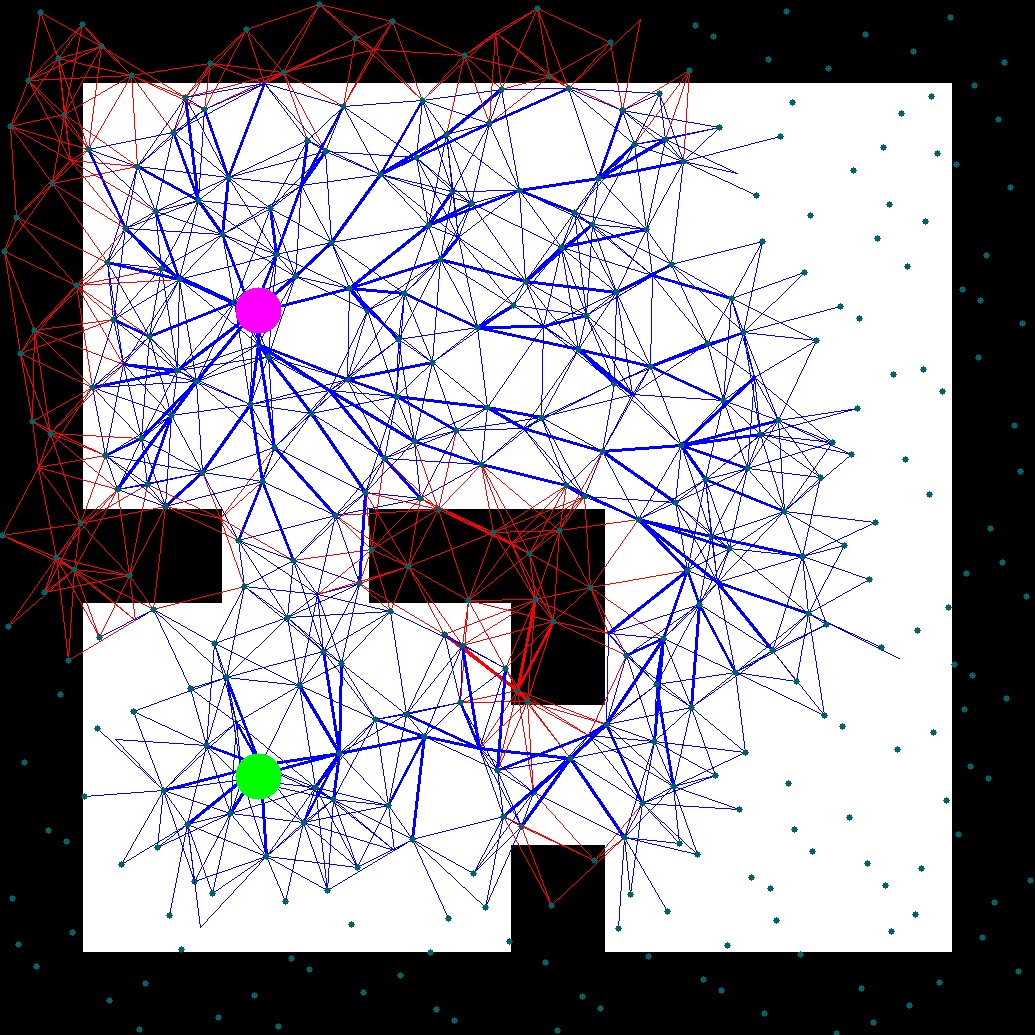}
	\end{subfigure}
	\begin{subfigure}{\myMSFigureScalend\textwidth}
		\includegraphics[width=\myLineScale\linewidth]{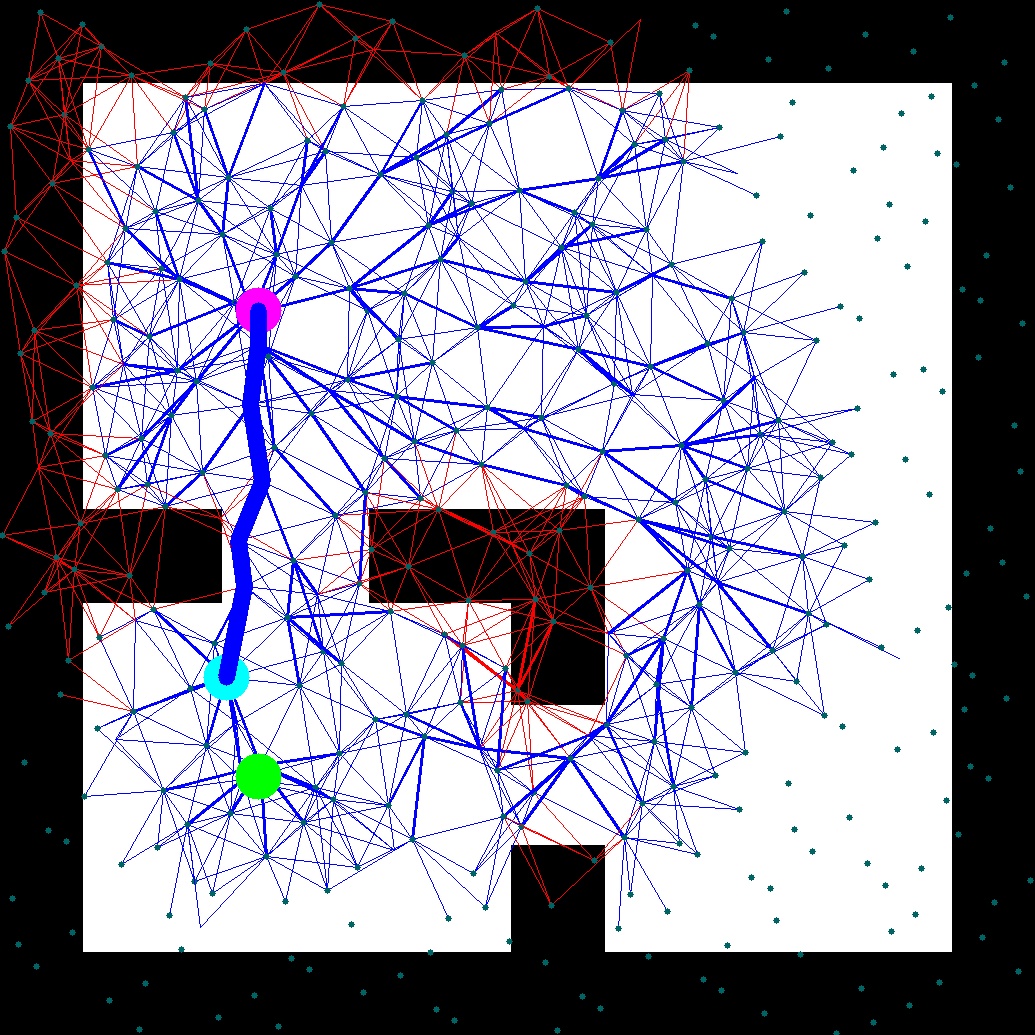}
	\end{subfigure}
	
	\medskip
	\begin{subfigure}{\myMSFigureScalend\textwidth}
		\includegraphics[width=\myLineScale\linewidth]{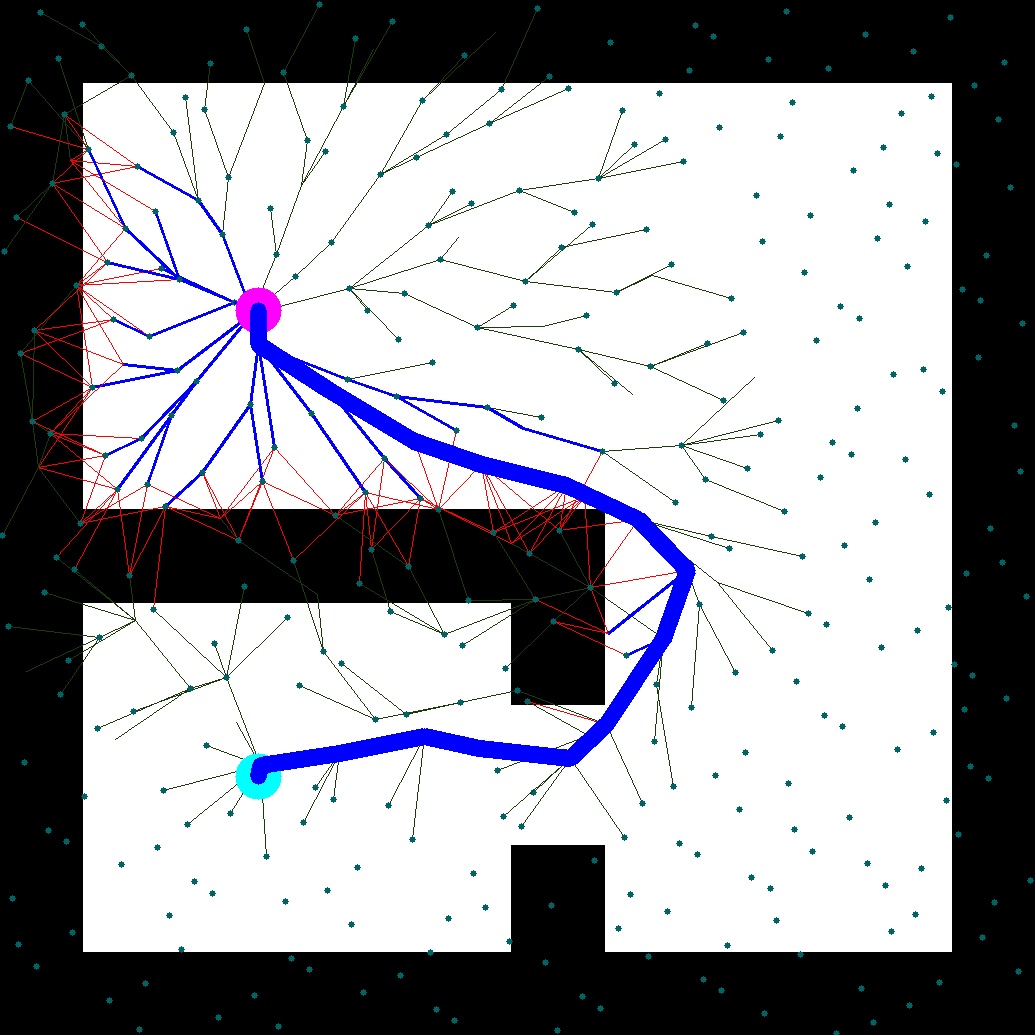}
		\caption{First}
	\end{subfigure}
	\begin{subfigure}{\myMSFigureScalend\textwidth}
		\includegraphics[width=\myLineScale\linewidth]{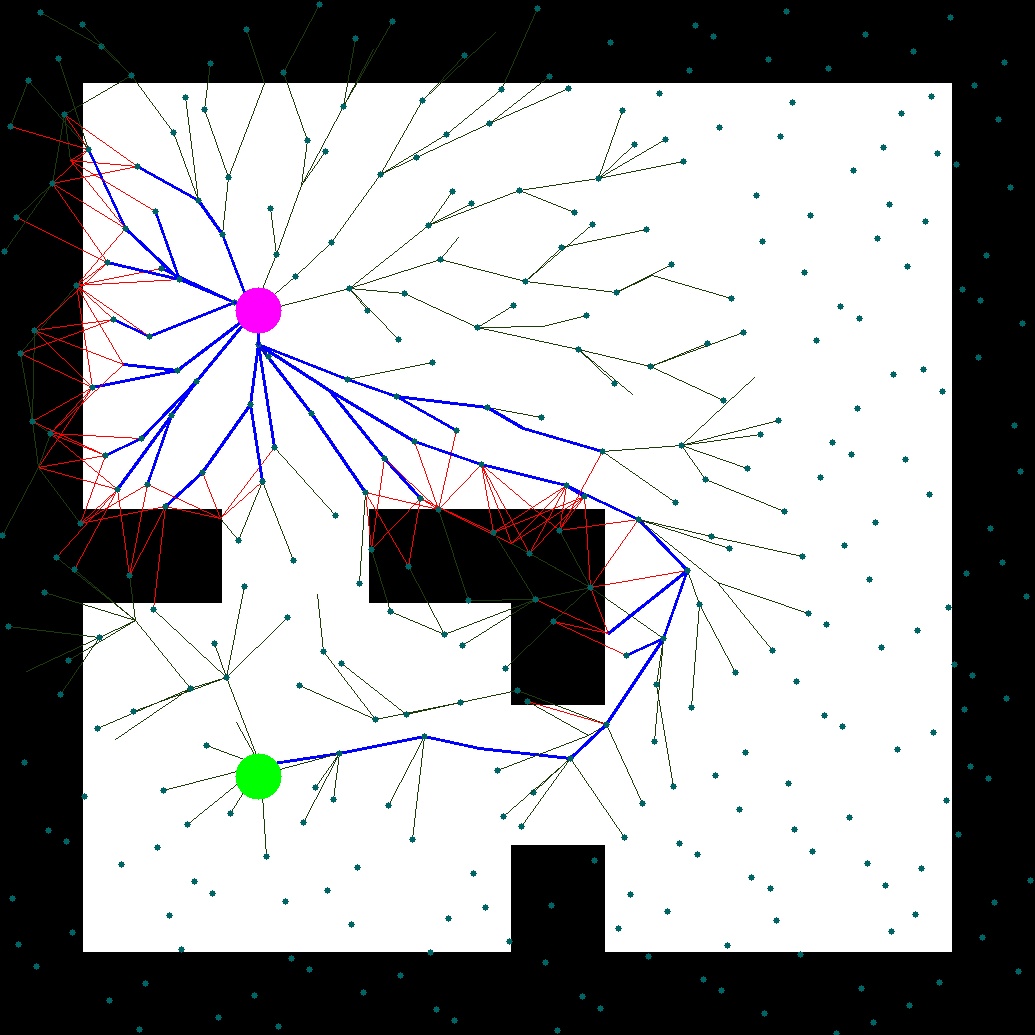}
		\caption{Env. Change}
	\end{subfigure}
	\begin{subfigure}{\myMSFigureScalend\textwidth}
		\includegraphics[width=\myLineScale\linewidth]{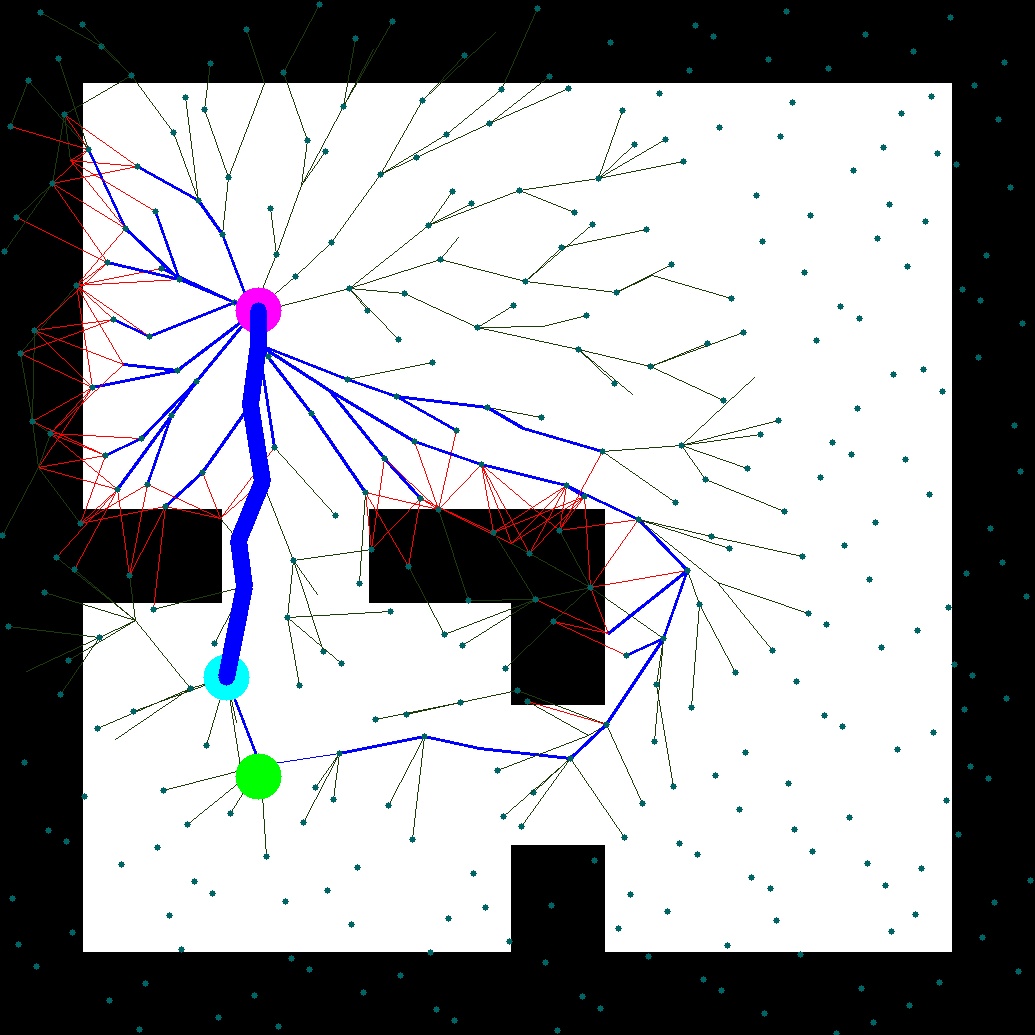}
		\caption{Second}
	\end{subfigure}
	\caption{D*-Lite(top row) and GD* with $\infty$-lookahead (bottom row) search results to find the shortest path from start vertex(\tikzcircle[blue, fill=green]{2.5pt}) to goal vertex(\tikzcircle[blue, fill=magenta]{2.5pt}) per environment change, from left to right: (a) first search, (b) environment change, (c) second search. Current robot position is shown with \tikzcircle[blue, fill=cyan]{2.5pt}.
	Lines(\tikzline[blue,semithick]{}\tikzline[red,semithick]{}) are the evaluated edges during the current search. Bold lines(\tikzline[blue,very thick]{}) are the edges belonging to the current search tree. Blue and red represents free and obstacle, respectively.}
	\label{ds:f:gd-2d}
\end{figure*}

The first search of D*-Lite is equivalent to the A* search with reversed order (swapping start and goal). The D*-Lite algorithm evaluates 1255 edges and expands 113 vertices, whereas GD* only evaluates 154 edges and expands 3305 vertices.
After finding the optimal path, the agent traverses the first segment of the found path, at which point an opening appears, making the path no longer optimal.  The environment change causes both D*-Lite and GD* to find the optimal path via replanning. Using the previous search tree information, the D*-Lite algorithm expands 7 vertices and evaluates 82 edges when replanning, while GD* only evaluates 4 edges and expands 18 vertices. Notice that the GD* algorithm finds the same optimal solution with almost an order of magnitude fewer edge evaluations for this simple 2D example.
These results are summarized in Table~\ref{ds:t:gd-2d}.

\begin{table}[ht]
	\centering
 	\caption{Number of edge evaluations and number of vertex expansions for different planners over two consecutive search queries in a dynamic environment of Figure~\ref{ds:f:gd-2d}. }
	\begin{small}
	\begin{tabular}{lcc}
		\toprule
		\textbf{}      & D*-Lite & GD* \\ 
		\midrule
		\textbf{First Query}    &          &   \\
		\# Edge Evaluation  & 1255 & 154  \\ 
		\# Vertex Expansion & 113 & 3305  \\
		&          &      \\
		\textbf{Second Query}    &          &  \\
		\# Edge Evaluation  & 82 & 4  \\ 
		\# Vertex Expansion & 7 & 18 \\	         
		&          &      \\
  		\textbf{Total}     &          &  \\
		\# Edge Evaluation  & 1347 & 158 \\ 
		\# Vertex Expansion & 120 & 3323 \\	         
		\bottomrule
	\end{tabular}
	\end{small}
	\label{ds:t:gd-2d}
\end{table}

\section{Generalized Bounded Dynamic Search (B-GD*)} \label{sec:generalized_bounded_dstar}

As mentioned earlier, using an inflated heuristic can help find an initial suboptimal solution to the goal faster. 
Note that this solution is suboptimal due to the heuristic function being inadmissible. 
Similarly to the B-LGLS algorithm, 
in this section we combine the idea of an inflated heuristic with truncation rules to introduce a bounded version of the Generalized D* (B-GD*). 
We will first discuss the lazy version of Truncated D*-Lite (TD*)~\cite{Aine2016}, and then include the inflated heuristic to obtain the bounded lazy D*.  

\subsection{Generalized TD*}

The truncation rules described for the Lazy-TLPA* search tree search
can be easily applied to D*-Lite. 
Using truncation, TD*~\cite{Aine2016} terminates the solution early once the solution cost during the replanning is found to be within the chosen bounds of the optimal cost. 
Similarly to TLPA*, TD* having a reversed search direction, also uses an extra cost-to-come value $g^\pi(s)$ along with the current path to the goal vertex for each vertex $\pi(s)$, in order to check for possible truncation. 
The truncation conditions are checked in \textproc{ComputeShortestPath} (Lines \ref{bgd:a:trun1} and \ref{bgd:a:trun2}) for terminating the propagation of inconsistency of a vertex early.  Furthermore, the vertices are only updated if they are not truncated in the \textproc{UpdateVertex}.  The auxiliary routines for TD* are identical to those of the TLPA* (Algorithm \ref{lgls:a:tlpastaraux}) except for the reversed search direction.  The rest of the TD* algorithm procedures remain the same as D*-Lite.  

\subsection{Bounded Generalized D*}

Bounded-GD* (B-GD*) is similar to the GD* except that a lazy TD* tree is used as the underlying search tree instead of lazy D*-Lite and that the heuristic edge values are inflated. 
Algorithm~\ref{alg:B-GD} outlines the B- GD* with the differences with the GD* version (Algorithm~\ref{alg:GD}) outlined in blue. Note the procedure \textproc{EvaluateEdges} for B-GD* algorithm is the same as the one in the B-LGLS (Algorithm \ref{lgls:a:blgls}) and hence skipped.

\begin{algorithm}
\begin{small}
\caption{Bounded-GD* ($G, v_s, v_g$)} \label{alg:B-GD}
\begin{algorithmic}[1]
\Procedure{CalculateKey}{$v$}
    \State  \textbf{return}  $[\min(g(v), rhs(v)) + h(v, v_s) + km,$ \\ $\min(g(v), rhs(v))]$ 
\EndProcedure

\Procedure{UpdateVertex}{$v$}
    \If {$v \neq v_g$}
        \State  $bp(v) \gets \arg\min_{u \in succ(v)} g(u) + \overline{w}(u,v);$
        \State $rhs(v) \gets g(bp(v)) + \overline{w}(bp(v),v);$
    \EndIf
    \If {$v \in Q$}   Q.Remove($v$)
    \EndIf
    
    \If {$g(v) \neq rhs(v)$ and \textcolor{blue}{$v \not\in$ TRUNCATED}}
        \State  Q.Insert(($v$, \textproc{CalculateKey}($v$)));
    \EndIf
\EndProcedure

\Procedure{ComputeShortestPath}{$event$}
    \While {\textproc{Q.TopKey}$<$\textproc{CalculateKey}($v_g$) \textproc{ or } \\ $g(v_g) \neq rhs(v_g)$}
        \State $k_{old}$ $\gets$ \textproc{Q.TopKey};
        \State u $\gets$ \textproc{Q.Pop()};
        \If{$k_{old}<$ \textproc{CalculateKey($u$)}}
            \State \textproc{Q.Insert($u$, CalculateKey}($u$));
        \Else{}
            \State \textcolor{blue}{$g^{\pi}(v_s) \gets \textproc{ComputeGPI($v_s$)}$};
            \If{\textcolor{blue}{$g^{\pi}(v_s) \leq \eps_2(min\{g(u), rhs(u)\} + h(u))$}} \label{bgd:a:trun1}
                \State \Return \textcolor{blue}{\textproc{ObtainPath($v_s$)}}
            \EndIf
            
            \If{$g(u)>rhs(u)$}
                \State $g(u) \gets rhs(u)$;
                \If{\textproc{Event}$(u)$ is triggered}
                     \State \Return path from $v_g$ to $u$
                \EndIf
                \ForAll{$v \in Pred(u)$}
                    \State \textproc{UpdateVertex($v$);}
                \EndFor
            \Else{}
                \State \textcolor{blue}{$g^{\pi}(u) \gets \textproc{ComputeGPI($u$)}$};
                \If{\textcolor{blue}{$g^{\pi}(u) + h(u) \leq$ \\ $\eps_2(min\{g(u), rhs(u)\} + h(u))$}}  \label{bgd:a:trun2}
                    \State \textcolor{blue}{TRUNCATED.\textproc{Insert($u$)}};
                \Else{}
                    \State $g(u) \gets \infty$;
                    \ForAll{$v \in Pred(u) \cup u$}
                        \State \textproc{UpdateVertex($v$);}  \EndFor
                \EndIf
            \EndIf
        \EndIf
    \EndWhile
\EndProcedure

\Procedure{Main}{$ $}
    \ForAll{$e \in E$} \textcolor{blue}{$\overline{w}(e) \gets \eps_1 \widehat{w}(e);$} \EndFor

    \State $km\gets 0$ ; $v_\mathrm{last} \gets v_\mathrm{current}$;
    \State $E_{eval} \gets \emptyset$ ; $rhs(v_g) \gets 0$ ; \textcolor{blue}{$g^{\pi}(v_g)\gets 0$};
    \State \textproc{UpdateVertex($v_g$)};
    
    \While{$v_\mathrm{current} \neq v_g$}
        \Repeat
            \State $\overline{\pi} \gets $ \textproc{ComputeShortestPath($event$)};
            \State $(u,v) \gets $   \textproc{EvaluateEdges($\overline{\pi}$)};
            \State \textproc{UpdateVertex($v$)};
            \ForAll{\textcolor{blue}{$s \in$ TRUNCATED}}
                \State \textcolor{blue}{Remove s from TRUNCATED;
                \State $g^{\pi}(s)\gets \infty$};
            \EndFor
        \Until {$v_s \in \overline{\pi}$ and $\overline{\pi} \subseteq E_{eval}$};
        \State $v_\mathrm{current} \gets bp(v_\mathrm{current})$; Move to $v_\mathrm{current}$;
        \State $km \gets km + h(v_\mathrm{current}, v_\mathrm{last})$;
        \State $v_\mathrm{last} \gets v_\mathrm{current}$;
        \State Perceive the edge changes in $E$;
        \ForAll{$e=(u,v)$ with changed edge cost}
            \State \textcolor{blue}{$\overline{w}(e) \gets \eps_1 \widehat{w}(e)$}; 
            \State $E_{eval} \gets E_{eval} \backslash \{e\}$;
            \State \textproc{UpdateVertex($v$)};
        \EndFor
    \EndWhile

\EndProcedure

\end{algorithmic}
\end{small}
\end{algorithm}

\subsection{Analysis of the Bounded GD* Algorithm}

As TD* inherits the same theoretical properties of TLPA*~\cite{Aine2016},  
B-GD* inherits the same theoretical properties as B-LGLS.
Namely, the final solution is bounded suboptimal, and a vertex is expanded at most twice.  
These properties can be proved similarly to the ones of the TLPA* with minor modifications, thus we omit the details.   

\subsection{Illustrative Example}

Using the same underlying graph, we visualize the results for D*-Lite, TD*, GD*, and B-GD* for two consecutive planning steps in the 2D environment in Figure~\ref{ds:f:bgd-2d}.  
The top row shows the first search.  The bottom row shows the second search after the environment changes, which occurs when the robot traverses one segment of the optimal path from the first plan. Both D*-Lite and GD* find the optimal solution, whereas the TD* and B-GD* find a bounded suboptimal path, saving computational resources. Moreover, the GD* and B-GD* algorithms save computational resources via reducing unnecessary edge evaluations compared to the D*-Lite and TD* algorithms. 
The performance comparison in terms of the number of edge evaluations and vertex expansions for the algorithms are summarized in Table~\ref{gd:t:2d-bd}.

\begin{figure*}[ht]
	\centering
	\begin{subfigure}{\myMSFigureScale\textwidth}
		\includegraphics[width=\myLineScale\linewidth]{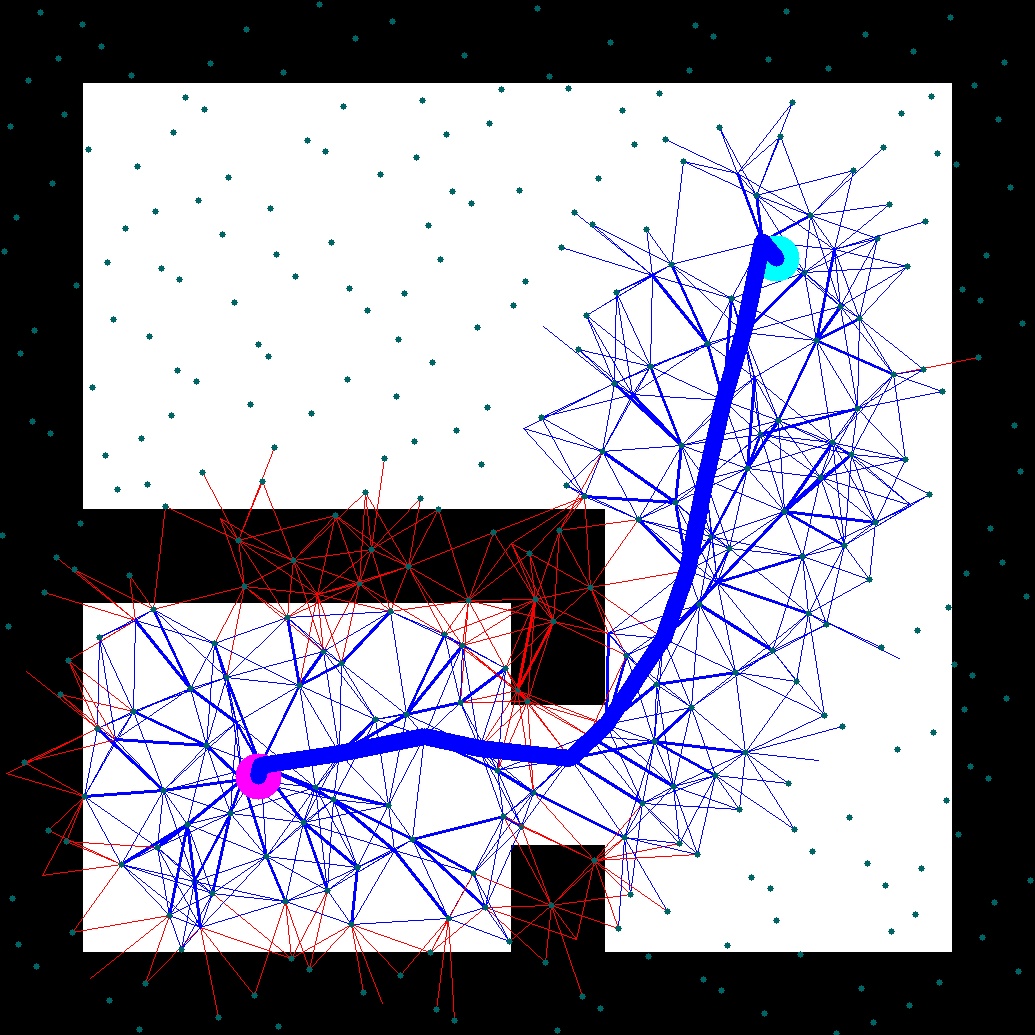}
	\end{subfigure} 
	\begin{subfigure}{\myMSFigureScale\textwidth}
		\includegraphics[width=\myLineScale\linewidth]{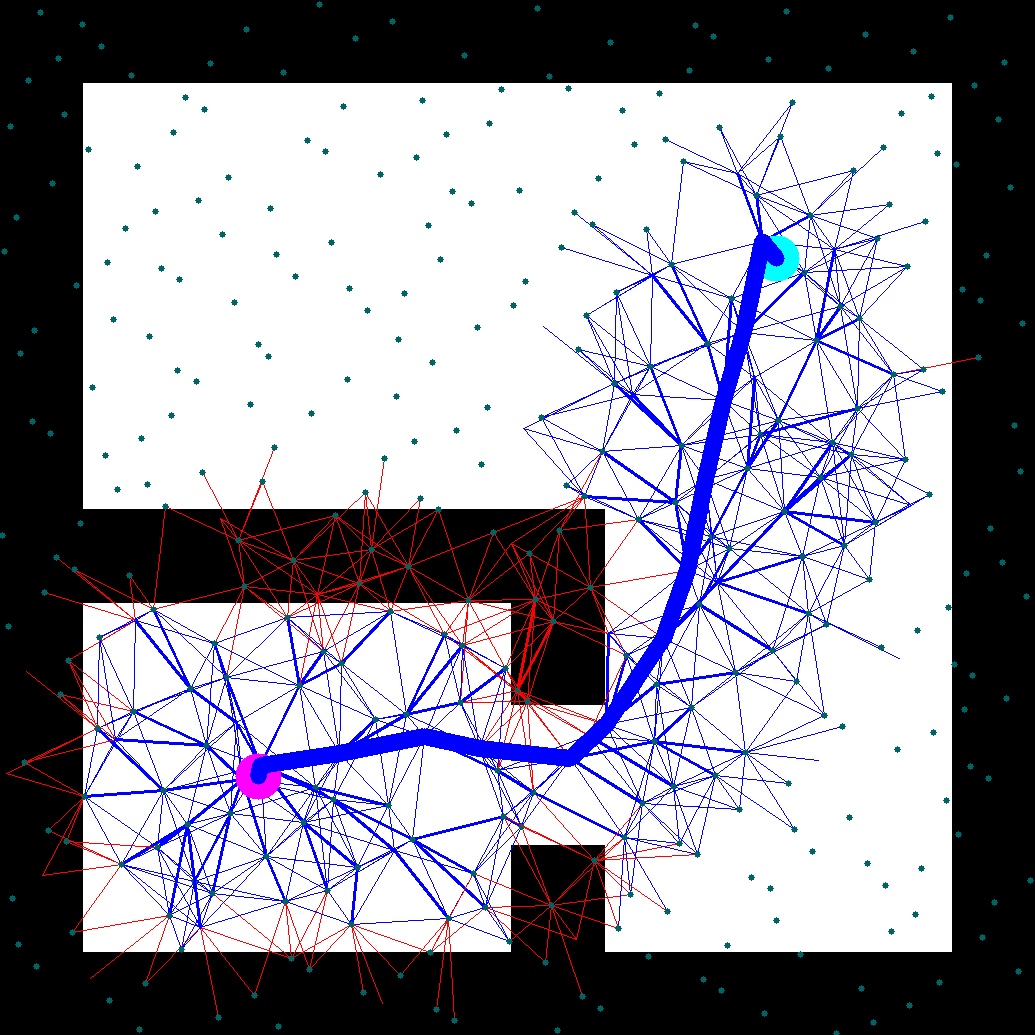}
	\end{subfigure}
	\begin{subfigure}{\myMSFigureScale\textwidth}
		\includegraphics[width=\myLineScale\linewidth]{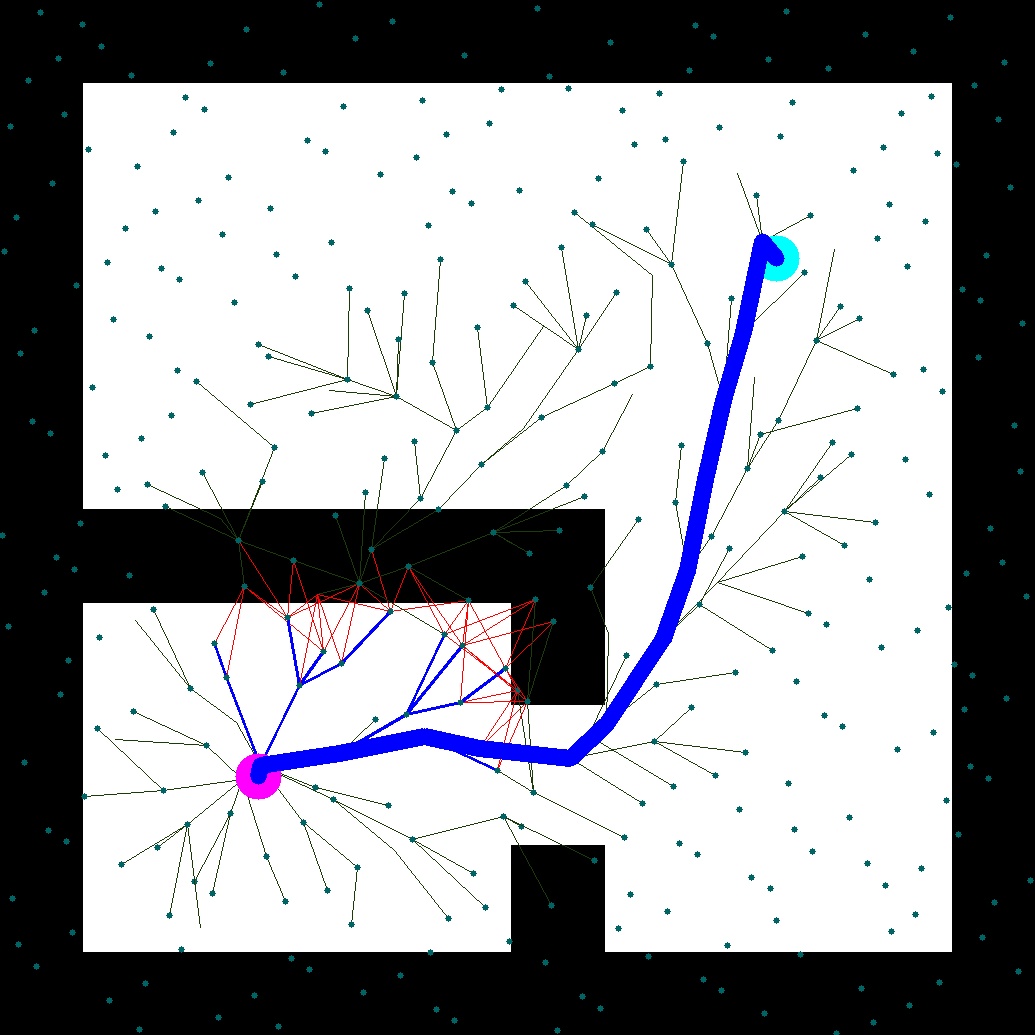}
	\end{subfigure}
	\begin{subfigure}{\myMSFigureScale\textwidth}
		\includegraphics[width=\myLineScale\linewidth]{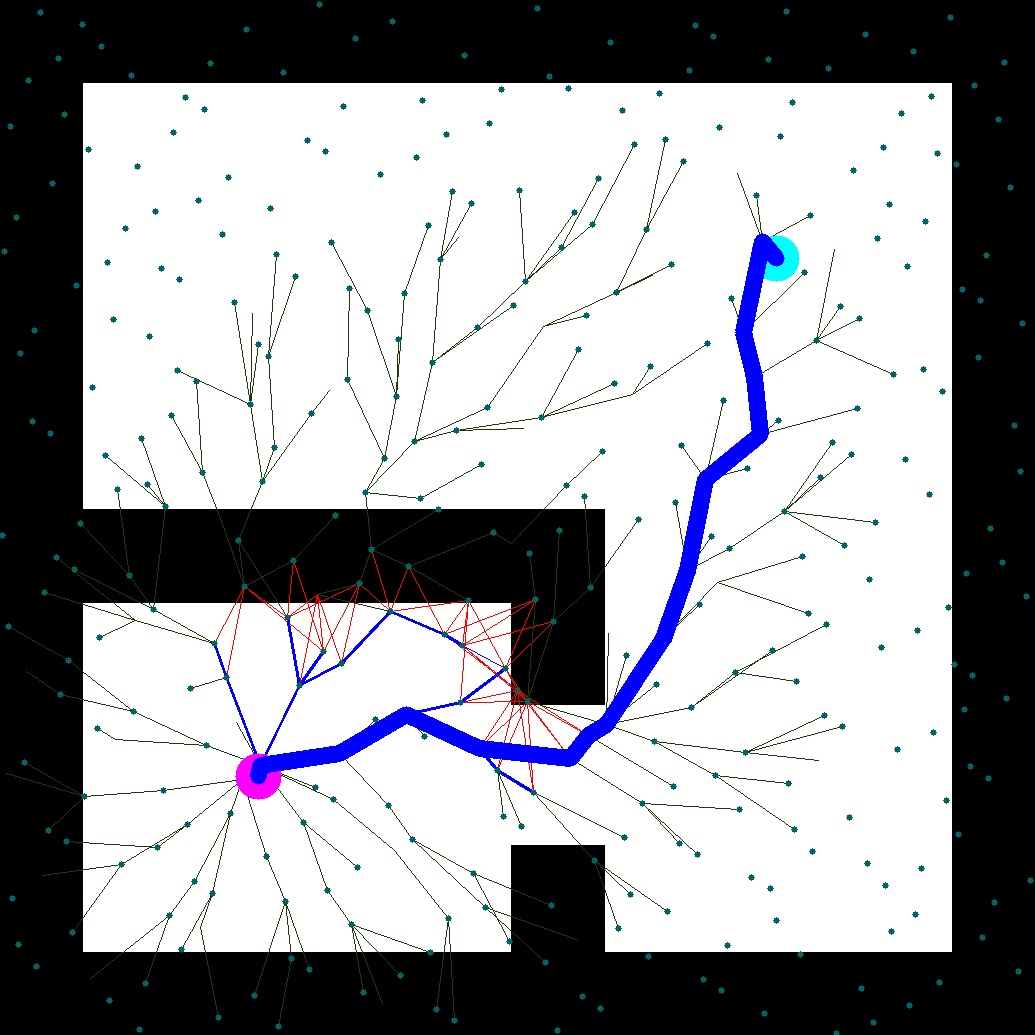}
	\end{subfigure}
	\medskip
	\begin{subfigure}{\myMSFigureScale\textwidth}
		\includegraphics[width=\myLineScale\linewidth]{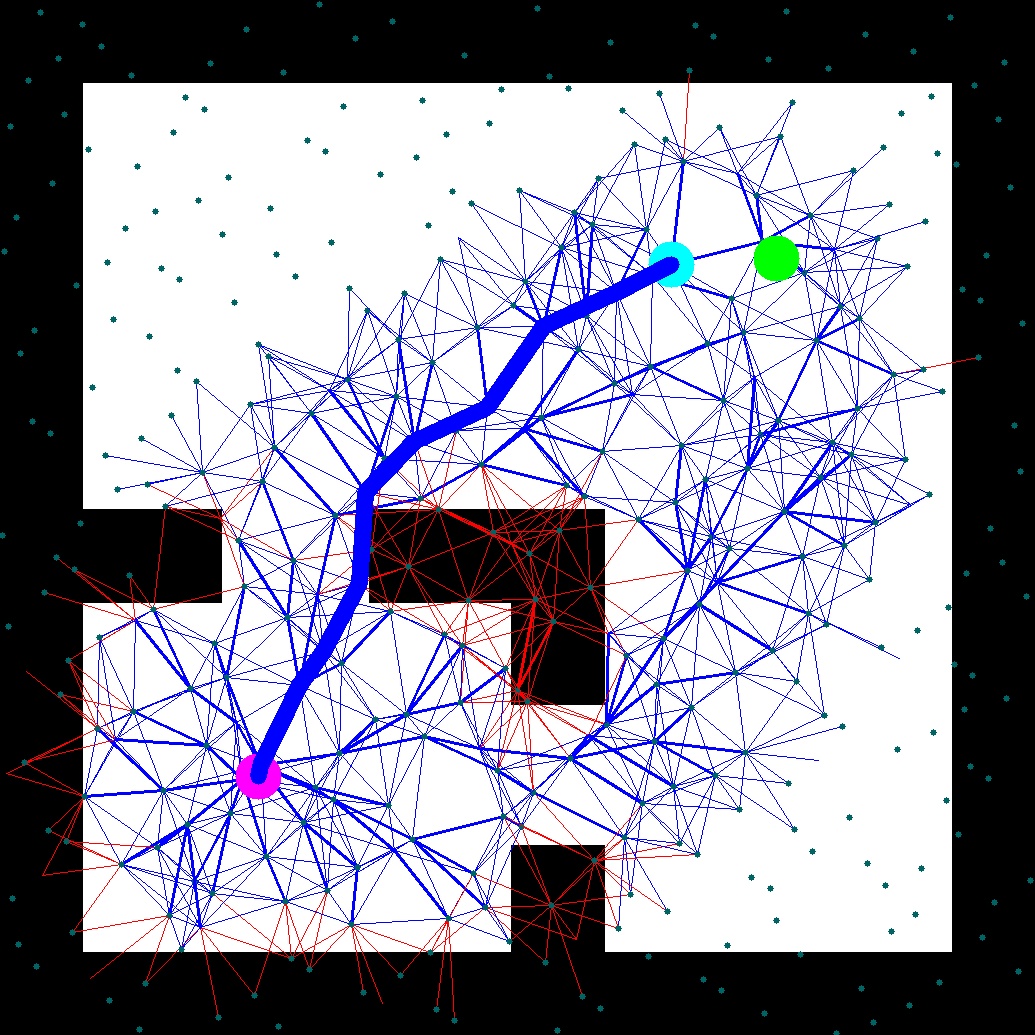}
		\caption{D*-Lite}
	\end{subfigure}
	\begin{subfigure}{\myMSFigureScale\textwidth}
		\includegraphics[width=\myLineScale\linewidth]{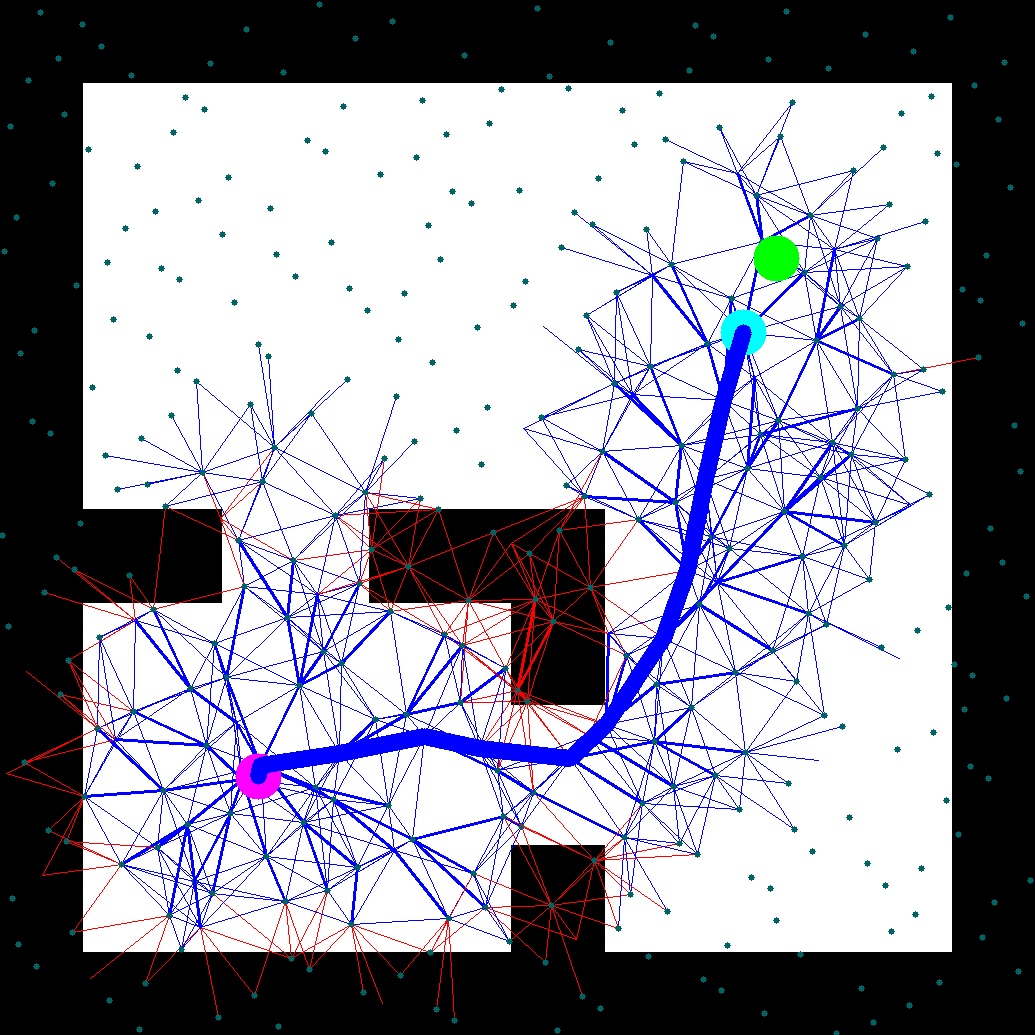}
		\caption{TD* (1.2)}
	\end{subfigure}
	\begin{subfigure}{\myMSFigureScale\textwidth}
		\includegraphics[width=\myLineScale\linewidth]{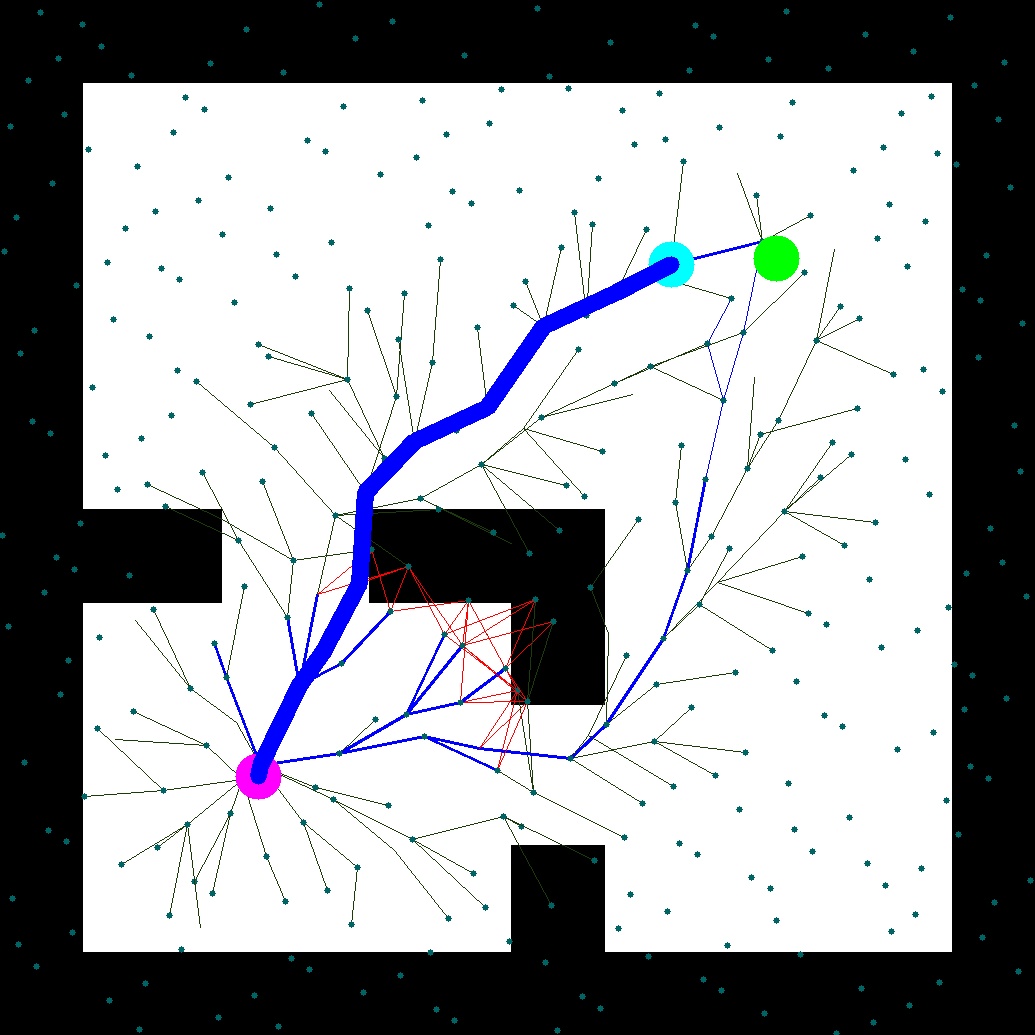}
		\caption{GD*}
	\end{subfigure}
	\begin{subfigure}{\myMSFigureScale\textwidth}
		\includegraphics[width=\myLineScale\linewidth]{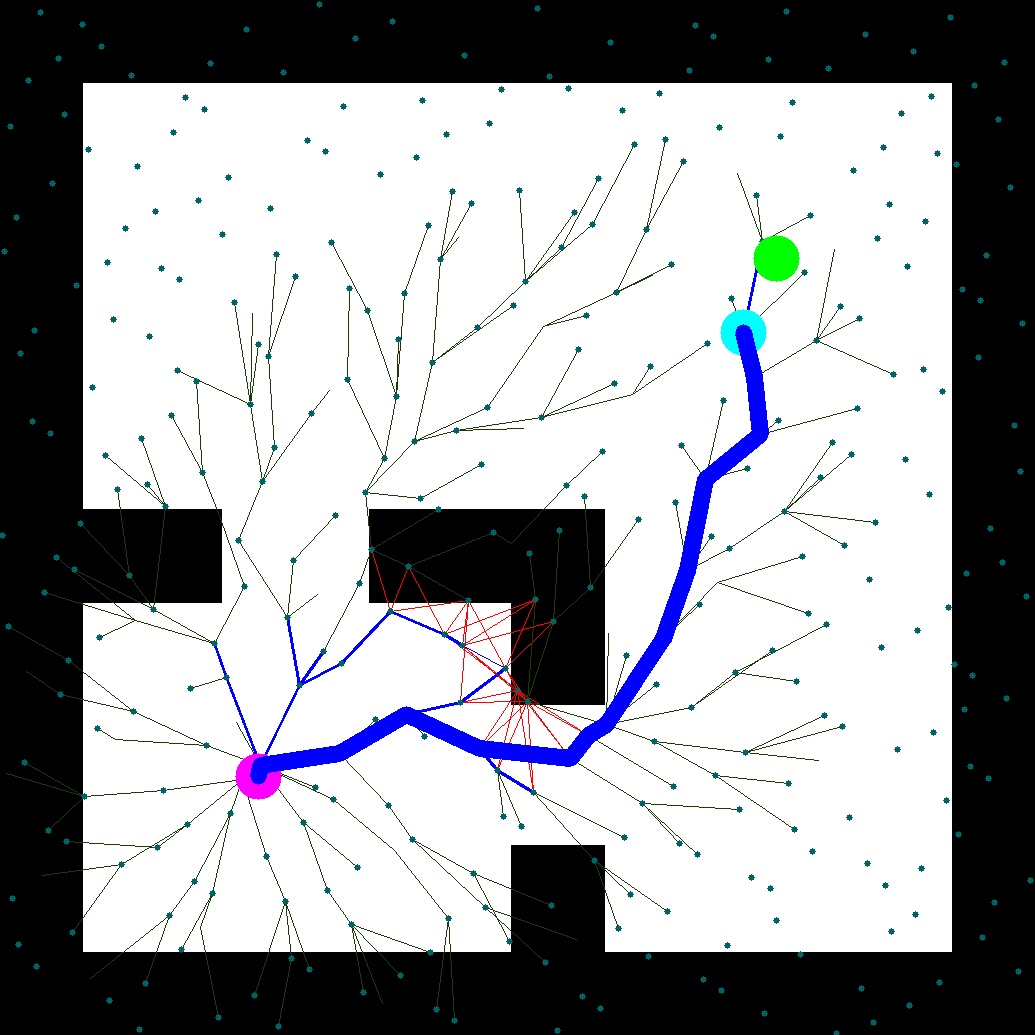}
		\caption{Bounded GD*}
	\end{subfigure}
	\caption{First search (top row) and second search (bottom row) to find a bounded suboptimal path from start vertex(\tikzcircle[blue, fill=green]{2.5pt}) to goal vertex(\tikzsquare[blue, fill=magenta]{4.5pt}) per environment change with $\varepsilon_1=1.2$ and $\varepsilon_2=1.2$, from left to right: (a) D*-Lite, (b) TD*, (c) GD*, and (d) Bounded B-GD*.  The current robot position is shown with \tikzcircle[blue, fill=cyan]{2.5pt}. 
		Lines(\tikzline[blue,semithick]{}\tikzline[red,semithick]{}) are the evaluated edges during the current search. Bold lines(\tikzline[blue,very thick]{}) are the edges belonging to the current search tree. Blue and red represents free and obstacle, respectively.}
	\label{ds:f:bgd-2d}
\end{figure*}

\begin{table*}[ht]
	\centering
 	\caption{Number of edge evaluations, number of vertex expansions, and solution length for different planners over two consecutive search queries in a dynamic environment of Figure \ref{ds:f:bgd-2d}. }
	\begin{small}
	\begin{tabular}{lcccc}
		\toprule
		\textbf{}      & D*-Lite & TD*(1.2) & GD* & B-GD*(1.2,1.2) \\ 
		\midrule
		\textbf{Planning} in scene 1    &          &       &              &       \\
		\# Edge Evaluation  & 832 & 832 & 67 &  78 \\ 
		\# Vertex Expansion & 64 & 64 & 2622 &  449 \\
		Solution Length & 0.878 & 0.878 & 0.878 & 0.907  \\
		&          &       &          &              \\
		\textbf{Replanning} in scene 2    &          &       &          &       \\
		\# Edge Evaluation  & 267 & 117 & 15 &  0 \\ 
		\# Vertex Expansion & 22 & 4 & 226 & 0 \\
		Solution Length & 0.767 & 0.857 & 0.767 & 0.879 \\		         
		&          &       &          &              \\
		\textbf{Total}    &          &       &          &               \\
		\# Edge Evaluation  & 1099 & 949 & 82 & 78 \\ 
		\# Vertex Expansion & 86 & 68 & 2848 & 449 \\
		\bottomrule
	\end{tabular}
	\end{small}
	\label{gd:t:2d-bd}
\end{table*}

\section{Experimental Results}  \label{sec:fixed_query_experiment}

In this section we compare the algorithms developed in the previous sections
(L-GLS, B-LGLS, GD*, B-GD*)
to existing incremental search and lazy search algorithms using various examples. 
This section is divided into two different parts. 
The first part tests the algorithms in a stationary replanning problem instance, while the second part compares the various algorithms in a non-stationary replanning problem instance. 
We also implement the proposed algorithms on a scaled ground vehicle platform to show their performance for robot navigation problems.

\subsection{Stationary Query Experiment (L-GLS, B-LGLS)} 

In this section we present numerical results comparing our lazy incremental search algorithms with LPA*, TLPA*, and GLS for a stationary replanning experiment, in which the start and goal of the planning instances are fixed, but either the environment or the graph approximation of the environment changes over time.
We demonstrate the efficiency of lazy incremental search algorithms in two different scenarios: first, a scenario in which the path planning problem is solved consecutively in a dynamic environment with a fixed-topology graph, and a second scenario in which graph approximations of a static environment get denser over time via incremental sampling.
We consider three different environments for benchmark testing: 
The Piano Movers' problem in SE(2), a manipulation problem in $\R^7$ using a PR2 robot (a mobile robot with 7DoF arm), and a non-holonomic motion planning problem in SE(2) using the RACECAR/J platform robot\footnote{\url{https://racecar.mit.edu/}}. 
All the algorithm implementations were done in C++, and the experiments were run on an 2.20 GHz Intel(R) Core(TM) i7-8750H CPU Ubuntu 18.04 LTS machine with 15.5GB of RAM within a docker container running Ubuntu 18.04 LTS.
The RACECAR/J experiment was run onboard using an NVIDIA Jetson TX2.

\subsubsection{Dynamic Environment and Static Graph.}

We first consider scenarios where the environment is dynamic, and we have available a topologically fixed graph approximating the environment. 
The search is performed on a pre-built graph with uniformly distributed vertices in the configuration space, in which an edge is added between two vertices if they belong to the first $k$-nearest neighbors. 
The graph topology does not change throughout the experiment, and only the edge values change due to underlying environment changes.
We repeat the experiment 100 times with randomly sampled start and goal pairs, comparing 5 different planners: LPA*, TLPA*, GLS, L-GLS, and B-LGLS. 
We report the average number of edge evaluations, the average number of vertex expansions, and the average total runtime of these planners.

\paragraph{Piano Movers' Problem:}
The purpose of the first experiment is to find the shortest path of a piano in a compact and dynamic apartment environment.
For each experiment, we find the shortest paths from the Apartment scenario in OMPL~\cite{Sucan2012} from a start configuration to a goal configuration without colliding with the moving obstacles (see Figure~\ref{lgls:f:piano}).  
There were six consecutive searches in the environment, where the first search was on scene 1 (Figure~\ref{lgls:f:piano}(a)), the second search was on scene 2 (Figure~\ref{lgls:f:piano}(b)), and the third search was on scene 3 (Figure~\ref{lgls:f:piano}(c)), and the fourth, the fifth, and the sixth searches were again repeated on scene 1, scene 2, and scene 3, respectively.
The search was performed on a randomly generated graph with 5,000 randomly sampled vertices and edges connecting two vertices if they belong to the first 20-nearest vertices. The vertices were sampled uniformly in $\R^3$. 
Figure~\ref{lgls:f:piano} show an example of this experiment. 

\begin{figure*}[ht]
	\centering
	\begin{subfigure}{0.30\textwidth}
		\includegraphics[width=\myLineScale\linewidth]{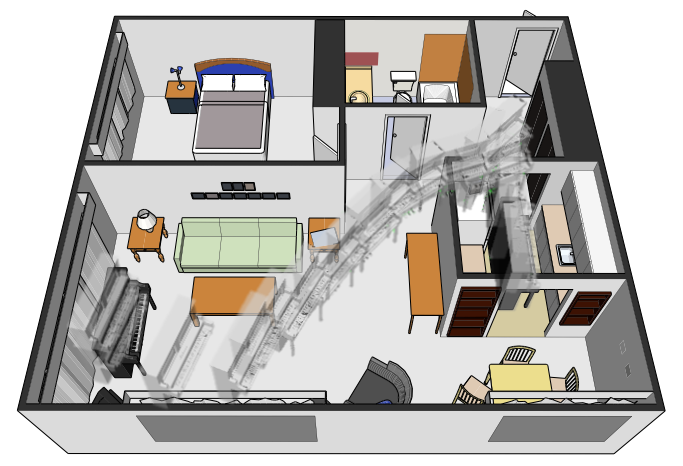}
		\caption{Scene 1}
	\end{subfigure}
	\begin{subfigure}{0.30\textwidth}
		\includegraphics[width=\myLineScale\linewidth]{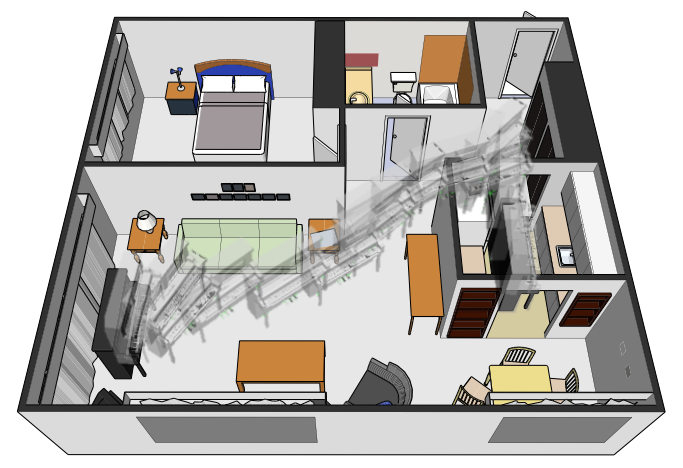}
		\caption{Scene 2}
	\end{subfigure}
	\begin{subfigure}{0.30\textwidth}
		\includegraphics[width=\myLineScale\linewidth]{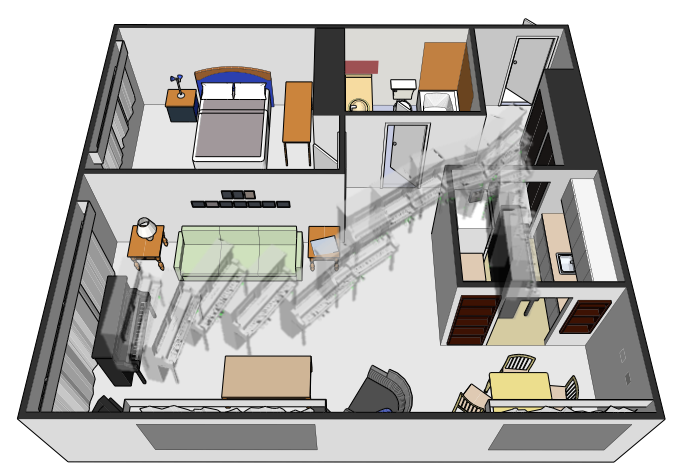}
		\caption{Scene 3}
	\end{subfigure}
	\caption{The shortest paths of the Piano Movers' problems in dynamic environment.}
	\label{lgls:f:piano}
\end{figure*}

We compare the search results, namely the number of edge evaluations, the number of vertex expansions, and the total runtime for 5 different algorithms: LPA*, TLPA* ($\varepsilon_2$=2), GLS (1-step lookahead), L-GLS (1-step lookahead), and B-LGLS(1-step lookahead, $\varepsilon_1=\sqrt{2}$, $\varepsilon_2=\sqrt{2}$). 
We repeated the experiment 100 times with randomly sampled start and goal pairs with 100 different graphs and report the average values.
The results are summarized in Figure~\ref{lgls:f:stat_piano}. 

\begin{figure*}[ht]
	\centering
	\begin{subfigure}{0.32\textwidth}
		\includegraphics[width=\myLineScale\linewidth]{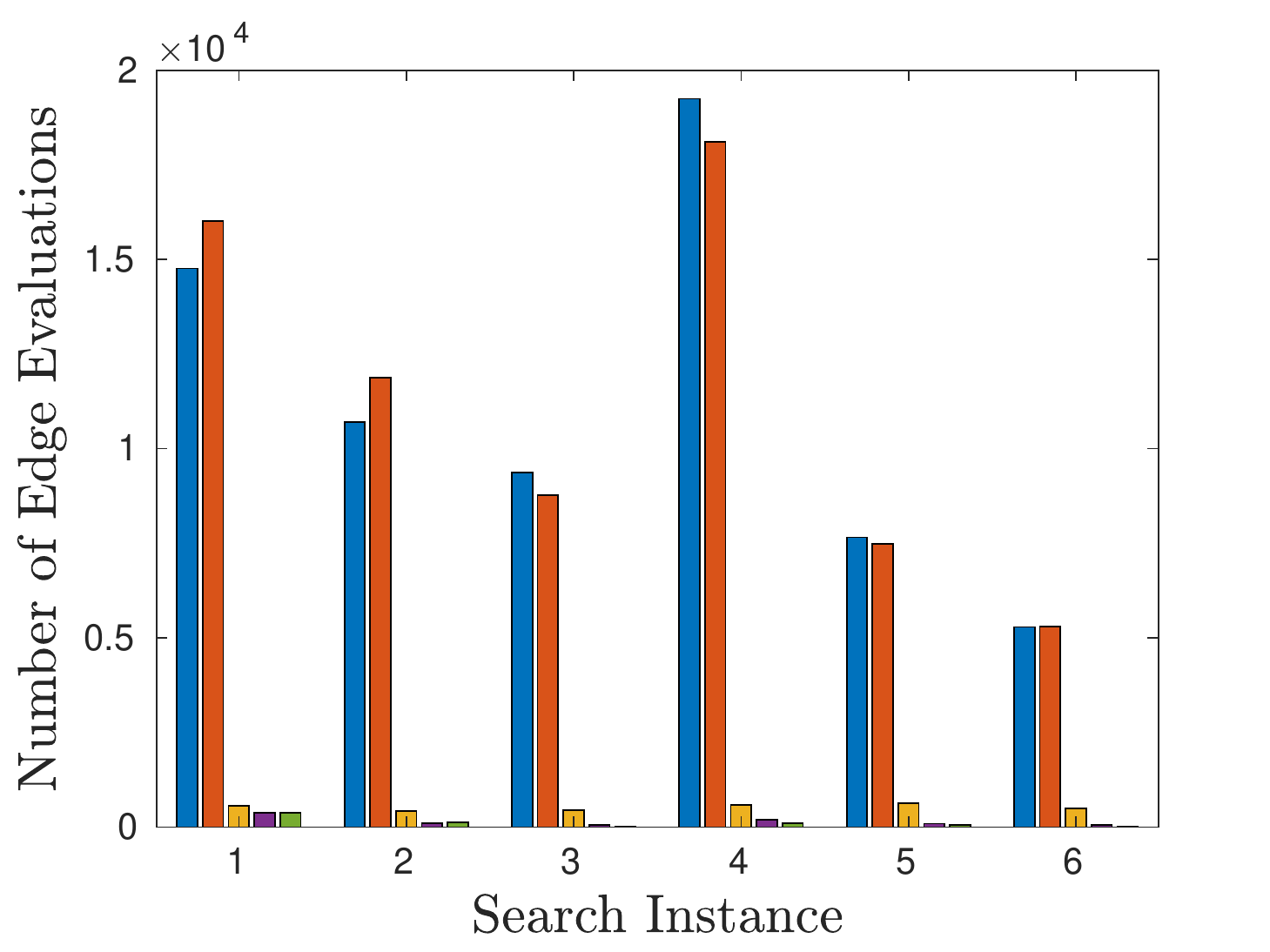}
		\caption{Edge evaluation}
	\end{subfigure}
	\begin{subfigure}{0.32\textwidth}
		\includegraphics[width=\myLineScale\linewidth]{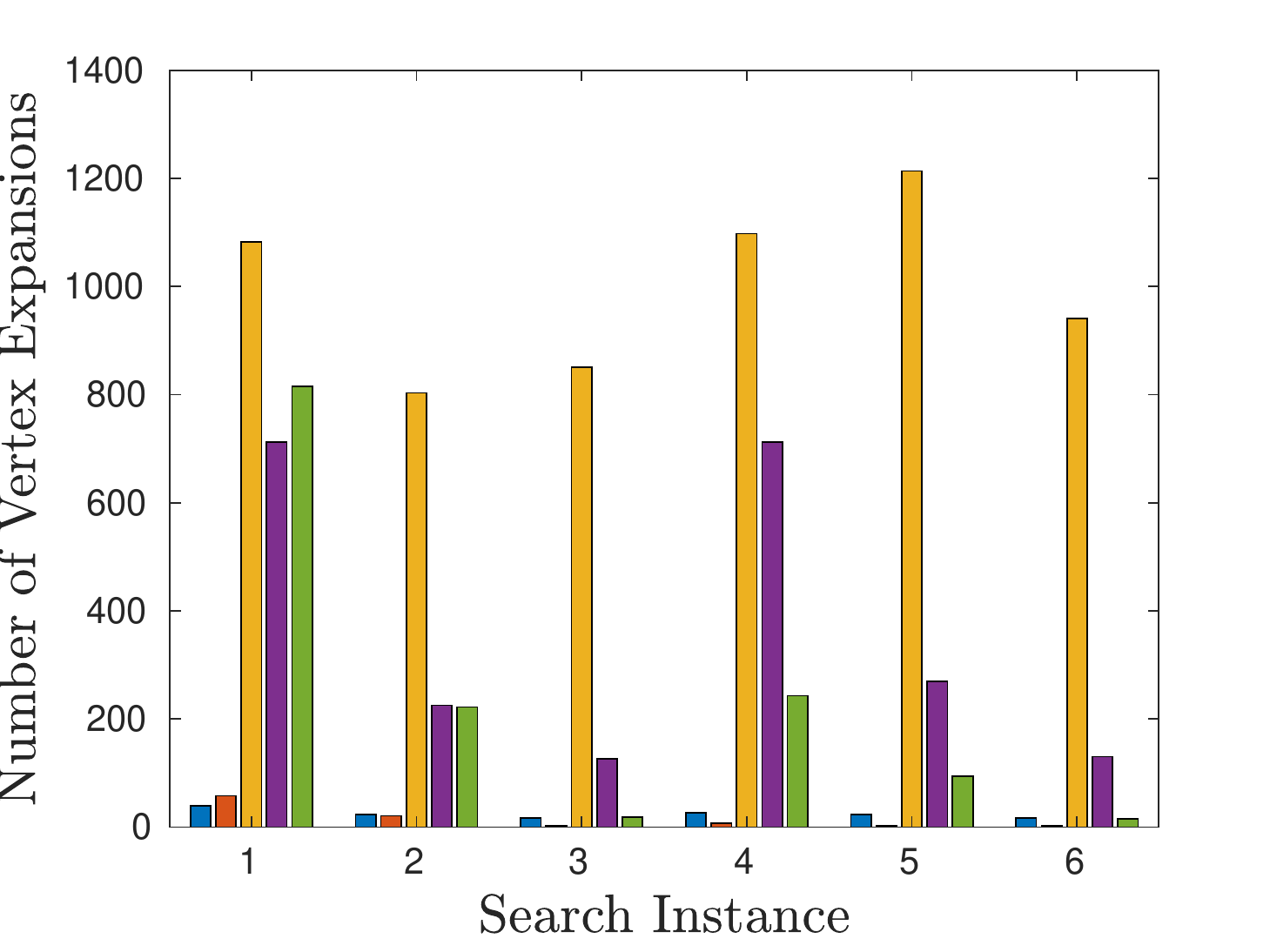}
		\caption{Vertex expansion}
	\end{subfigure}
	\begin{subfigure}{0.32\textwidth}
		\includegraphics[width=\myLineScale\linewidth]{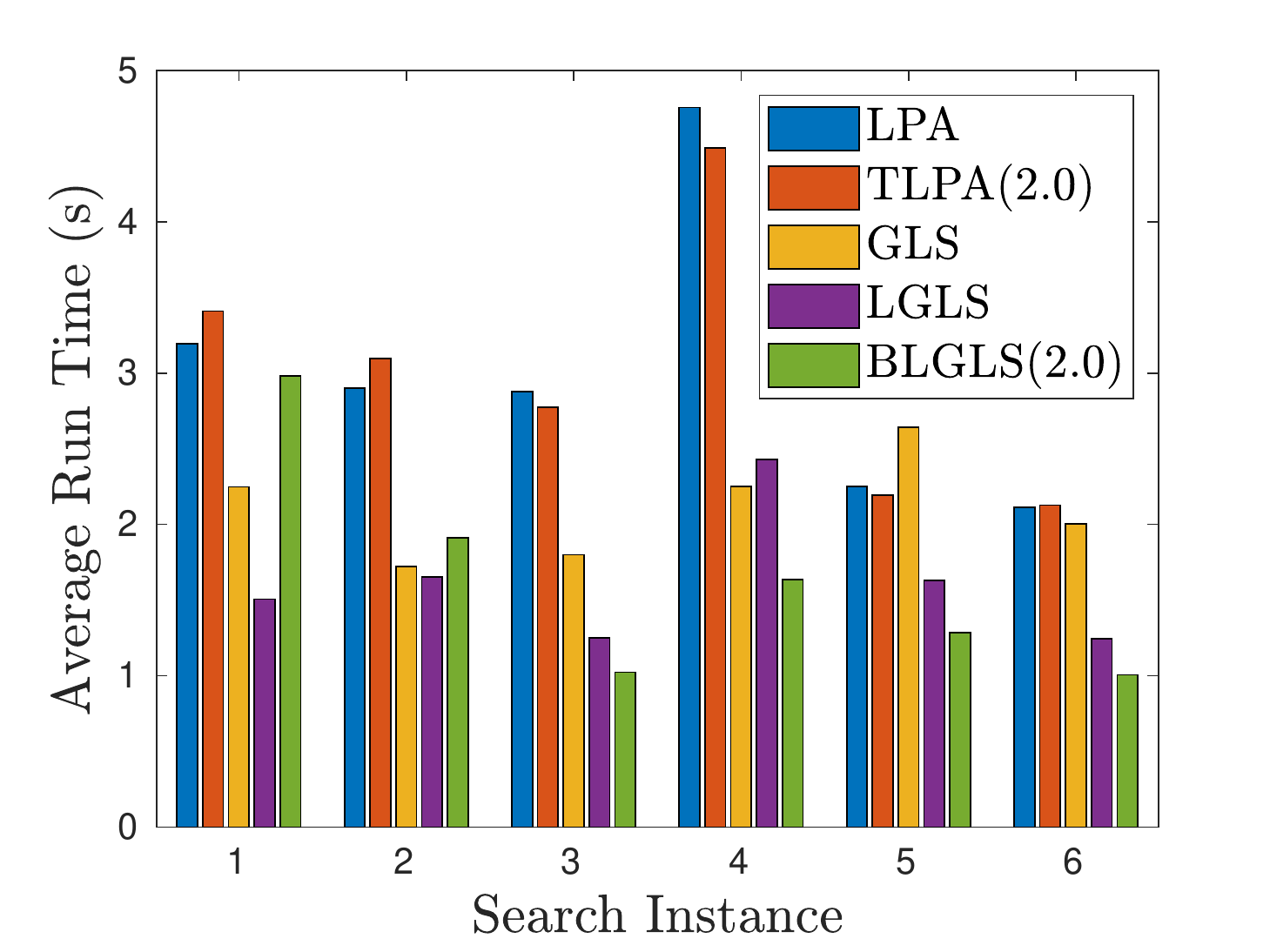}
		\caption{Total time}
	\end{subfigure}
	\caption{The average number of edge evaluations, the average number of vertex expansions, and the average total runtime of 100 random experiments for the Piano Movers' problem.}
	\label{lgls:f:stat_piano}
\end{figure*}

The average number of edge evaluations for the regular incremental search algorithms, namely, LPA* and TLPA*, are similar across the search instances.
TLPA* reduces the number of vertex expansions in consequent planning instances compared to LPA*, as it halts the cost inconsistency propagation as early as the current solution is guaranteed to be bounded suboptimal. 
Lazy search algorithms including GLS, L-GLS, and B-LGLS reduce edge evaluations by
a significant amount compared to the regular incremental search algorithms, and yet these algorithms expand more vertices to repair the cost inconsistencies between the heuristic edge value and the actual value.
L-GLS uses fewer edge evaluations and vertex expansions compared to GLS, as it reuses the previous search results. 
Although Bounded-LGLS uses more vertex expansions in the first search, it uses both fewer edge evaluations and vertex expansions in consequent planning instances compared to L-GLS, as it halts as early as the current solution is guaranteed to be bounded suboptimal.

\paragraph{PR2 Robot:}
The purpose of the second experiment is to find a path for the 7D right arm of PR2 robot from a start configuration to a goal configuration without collision in a dynamic environment where the obstacle moves, as shown in Figure~\ref{lgls:f:pr2}.
\begin{figure*}[ht]
	\centering
	\begin{subfigure}{0.45\textwidth}
		\includegraphics[width=\myLineScale\linewidth]{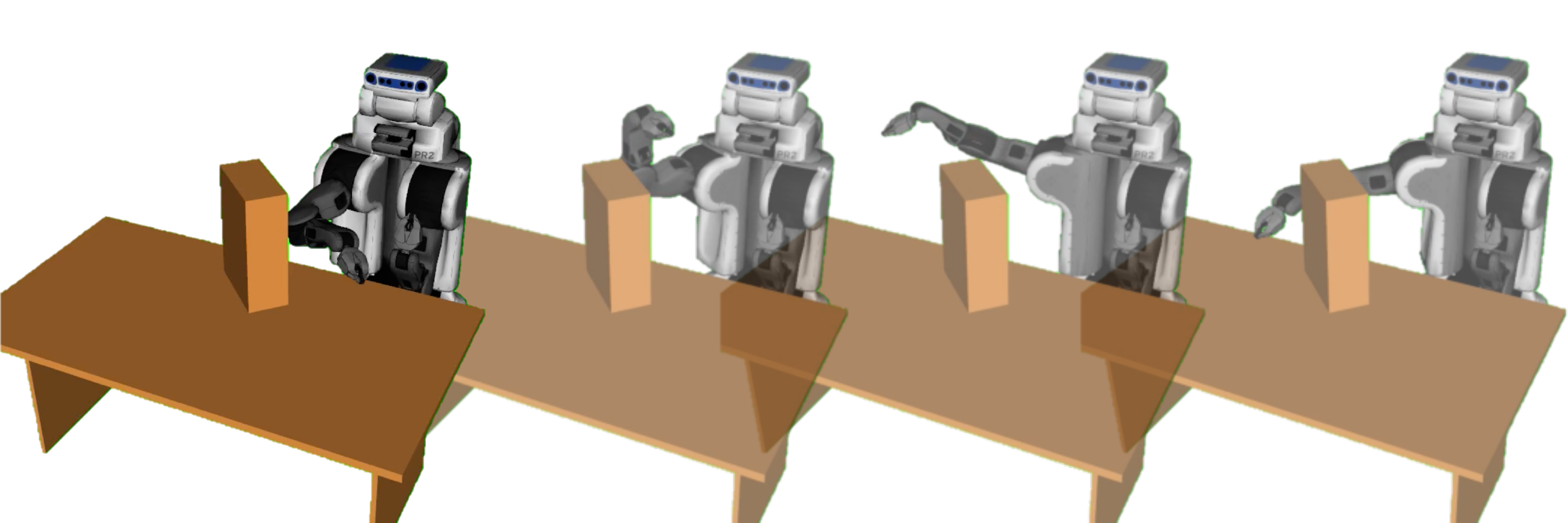}
		\caption{Scene 1}
	\end{subfigure}
	\begin{subfigure}{0.45\textwidth}
		\includegraphics[width=\myLineScale\linewidth]{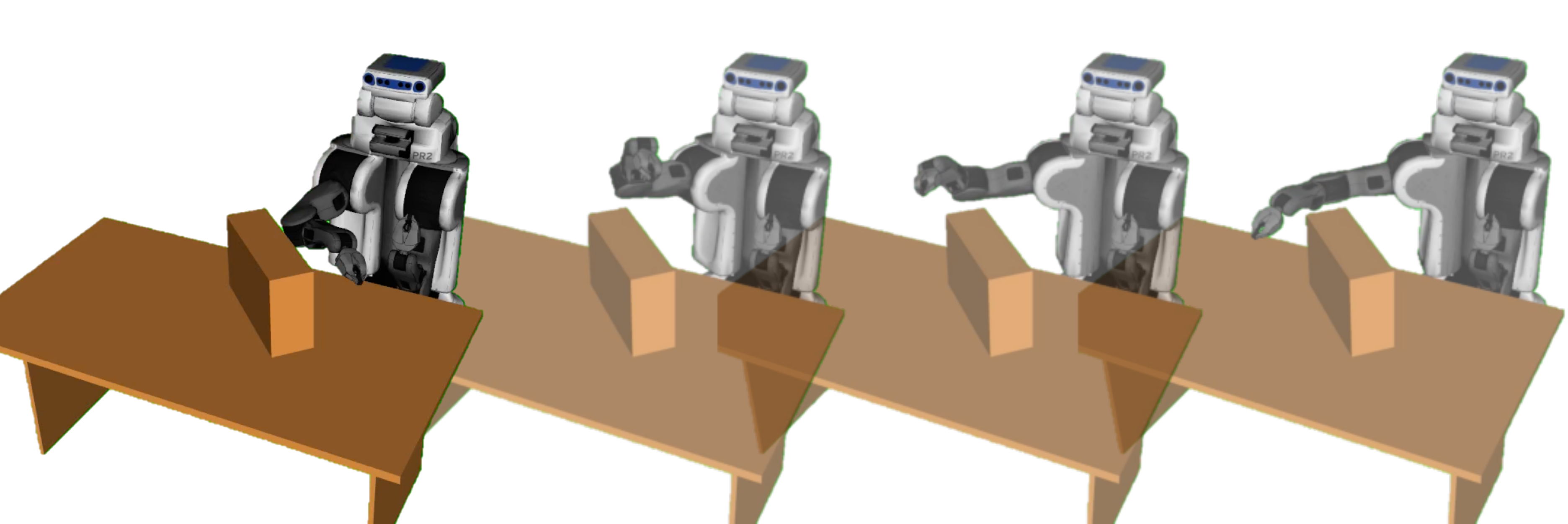}
		\caption{Scene 2}
	\end{subfigure}\hfill
	\caption{The shortest paths of the right arm of PR2 robot for the same query in dynamic environment.}
	\label{lgls:f:pr2}
\end{figure*}

There were six consecutive searches in the environment, where the first search was on scene 1 (Figure~\ref{lgls:f:pr2}(a)), the second search was on scene 2 (Figure~\ref{lgls:f:pr2}(b)), and the remaining searches were repeated alternating between the two scenes. 
We repeated the experiment 100 times with random start and goal pairs and 100 random graphs. 

For each experiment, similarly to the Piano Movers' problem, the search was performed on a randomly generated graph with 5,000 vertices. The vertices were randomly uniformly sampled in $\R^7$, bounded by the PR2 arm's joint-angle bounds. 
Two vertices are adjacent in this graph if they belong to the first 20 nearest vertices mutually.
We implemented LPA*, TLPA*, GLS, L-GLS, and B-LGLS as an OMPL Planner~\cite{Sucan2012} with the MoveIt! interface~\cite{Coleman2014}. 

We compared the search results, namely the number of edge evaluations and the number of vertex expansions for 5 different algorithms: LPA*, TLPA*~($\varepsilon_2$=2), GLS~(1-step lookahead), L-GLS~(1-step lookahead), and B-LGLS~(1-step lookahead, $\varepsilon_1=\sqrt{2}$, $\varepsilon_2=\sqrt{2}$).
The number of edge evaluations, the number of vertex expansions, and the runtime are recorded for each planning instance, and we took the averages of them over 100 random experiments. 
The results are summarized in Figure~\ref{lgls:f:stat_pr2}.

\begin{figure*}[ht]
	\centering
	\begin{subfigure}{0.32\textwidth}
		\includegraphics[width=\myLineScale\linewidth]{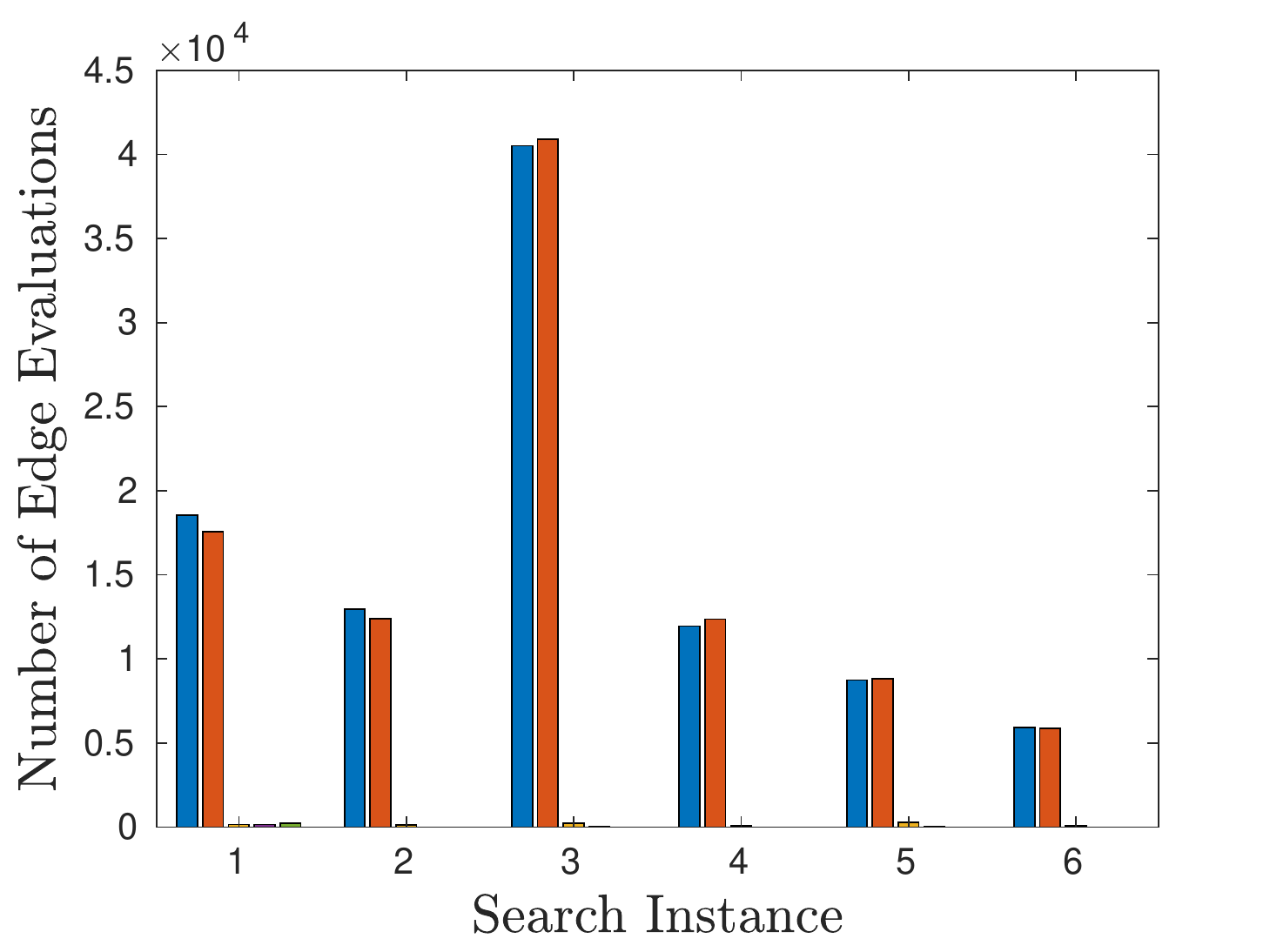}
		\caption{Edge evaluation}
	\end{subfigure}
	\begin{subfigure}{0.32\textwidth}
		\includegraphics[width=\myLineScale\linewidth]{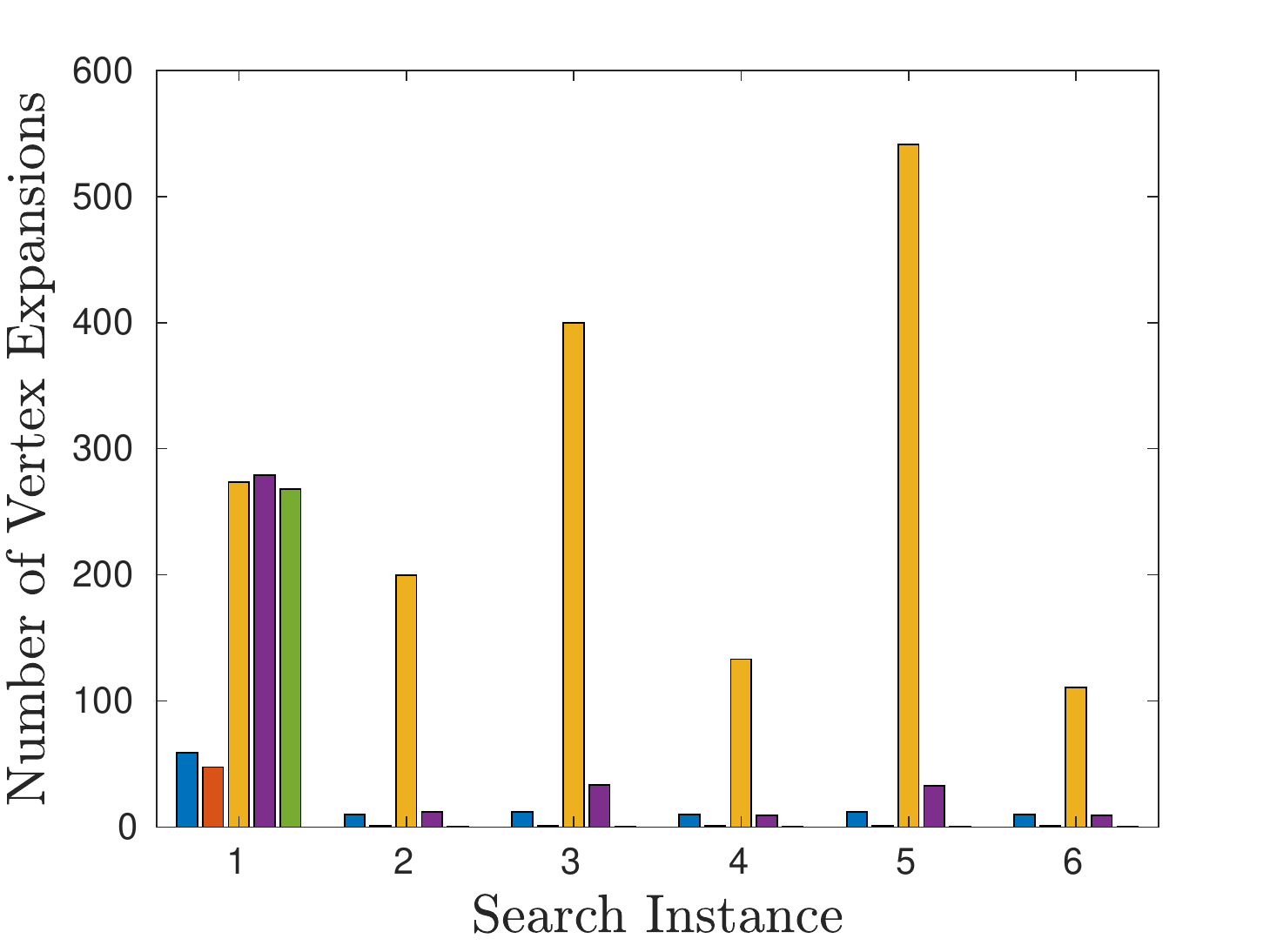}
		\caption{Vertex expansion}
	\end{subfigure}
	\begin{subfigure}{0.32\textwidth}
		\includegraphics[width=\myLineScale\linewidth]{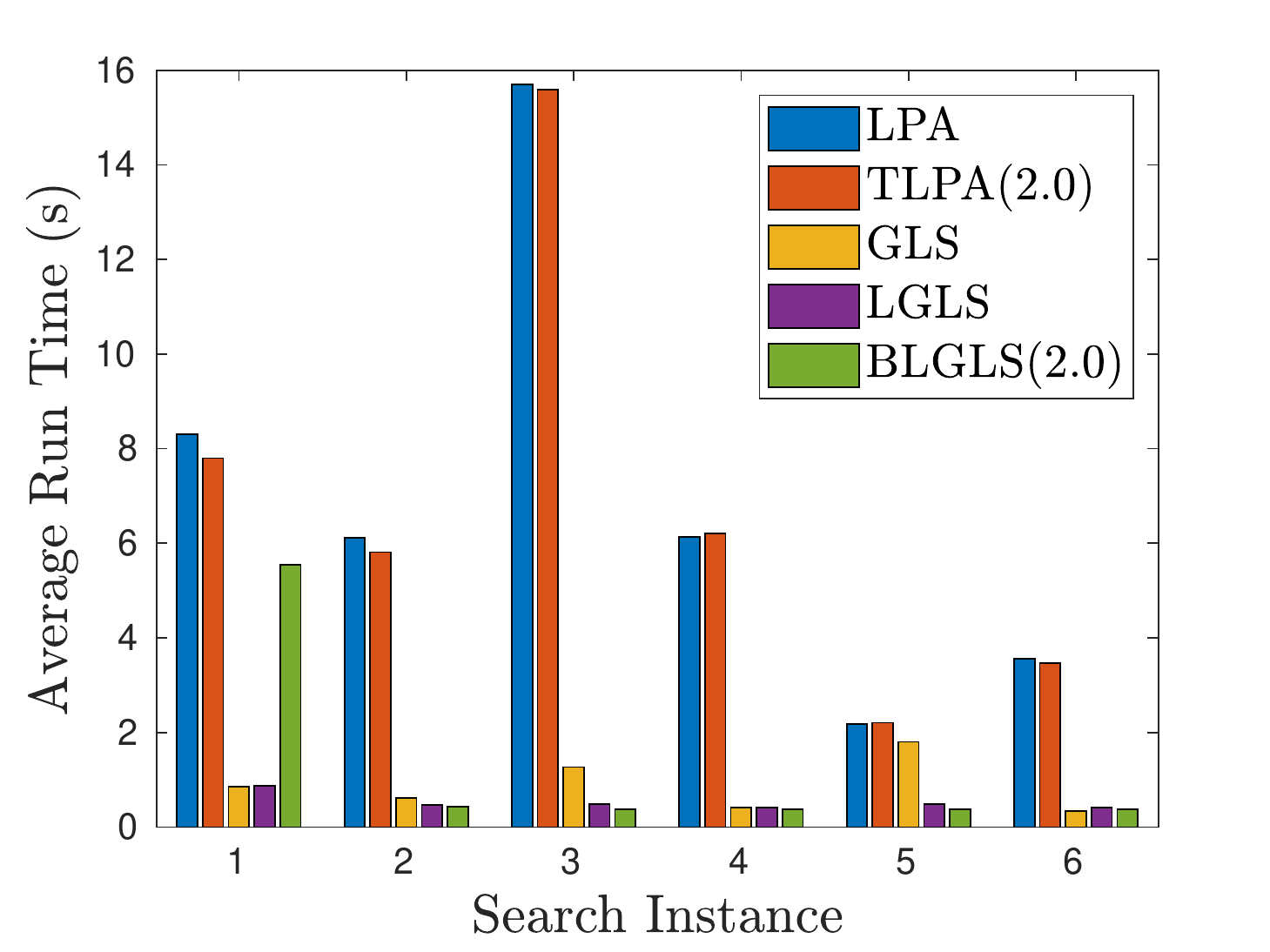}
		\caption{Total time}
	\end{subfigure}
	\caption{The average number of edge evaluations, the average number of vertex expansions, and the average total runtime taken over 100 random experiments for the PR2 robotic arm problem.}
	\label{lgls:f:stat_pr2}
\end{figure*}

A similar trend to the Piano Movers' problem
is observed in this PR2 robotic arm experiment as well. 
The regular incremental search algorithms, namely, LPA* and TLPA*, incur unnecessary edge evaluations compared to the lazy search algorithms. LPA* and TLPA* resulted in the same amount of edge evaluations, but TLPA* saves a few more vertex expansions compared to LPA* in consecutive searches.
The  Lazy search algorithms, namely, GLS, L-GLS, and B-LGLS save a significant amount of edge evaluations at the expense of more search.
The lazy incremental search algorithms, L-GLS and B-LGLS save both edge evaluations and vertex expansions in consecutive searches compared to GLS, as they use the previous search results.
B-LGLS further saves a number of edge evaluations and vertex expansions compared to L-GLS in the consecutive searches as it halts the search and evaluation as soon as the current solution is guaranteed to be bounded suboptimal. 

\paragraph{RacecarJ Experiment:}
The purpose of the third experiment is to find a dynamically feasible path for a non-holonomic vehicle from a start configuration to a goal configuration without collision in a dynamic environment where the obstacle moves, as shown in Figure~\ref{lgls:f:racecar}.
\begin{figure*}[ht]
	\centering
	\begin{subfigure}{0.32\textwidth}
		\includegraphics[width=\myLineScale\linewidth]{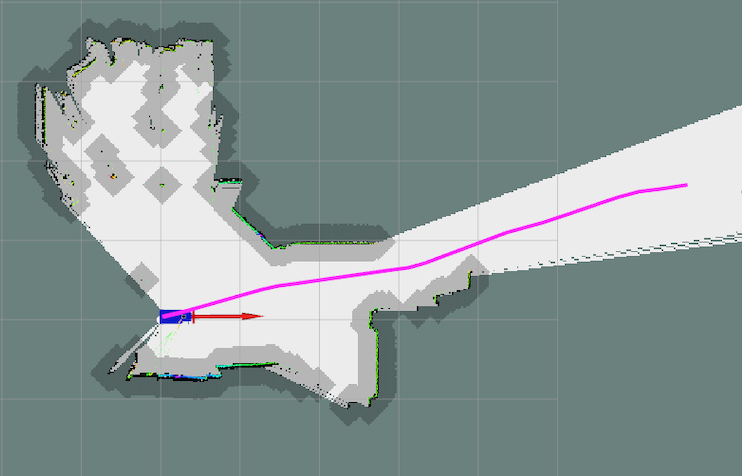}
		\caption{Scene 1}
	\end{subfigure}
	\begin{subfigure}{0.32\textwidth}
		\includegraphics[width=\myLineScale\linewidth]{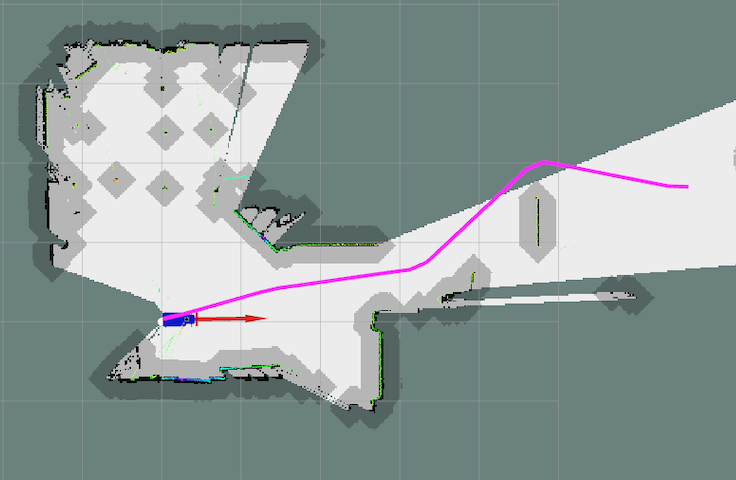}
		\caption{Scene 2}
	\end{subfigure}\hfill
	\begin{subfigure}{0.32\textwidth}
		\includegraphics[width=\myLineScale\linewidth]{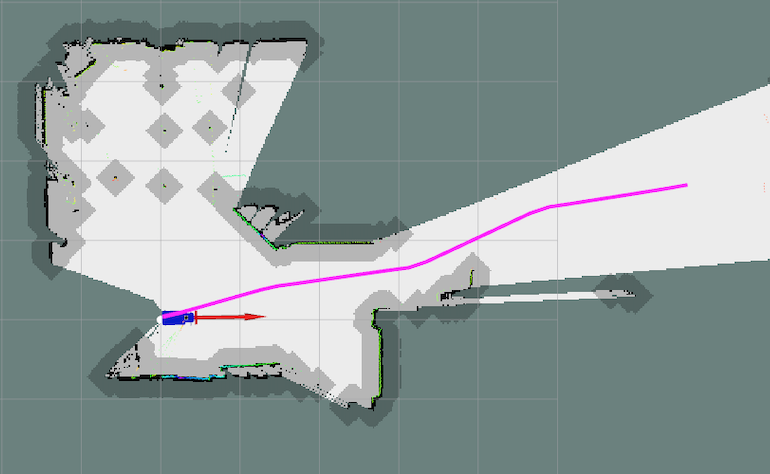}
		\caption{Scene 3}
	\end{subfigure}\hfill
	\caption{The shortest paths of the nonholonomic vehicle for the same query in dynamic environment.}
	\label{lgls:f:racecar}
\end{figure*}

There were six consecutive searches in the environment, where after each search, a randomly chosen 10 percent of the edges changed.
Figure~\ref{lgls:f:racecar} shows an example of the consecutive searches. 

The search was performed on a randomly sampled graph with 500 vertices where the edges were defined for two vertices if they belong to the first 10-nearest vertices. 
The vertices were sampled using a uniform distribution in $\R^3$ bounded by the map size, and the edges were connected using Reeds-Shepp curves~\cite{Reeds1990}.
The length of Reeds-Shepp curve connecting the two end vertices is the edge weight.
We implemented LPA*, TLPA*, GLS, L-GLS, and B-LGLS as an OMPL Planner~\cite{Sucan2012} on board a RacecarJ 1/10 scale robotic vehicle equipped with an NVIDIA Jetson TX2 board (Figure~\ref{fig:jetson}). 

\begin{figure}[ht]
	\centering
	\begin{subfigure}{0.4\textwidth}
    	\includegraphics[width=\linewidth]{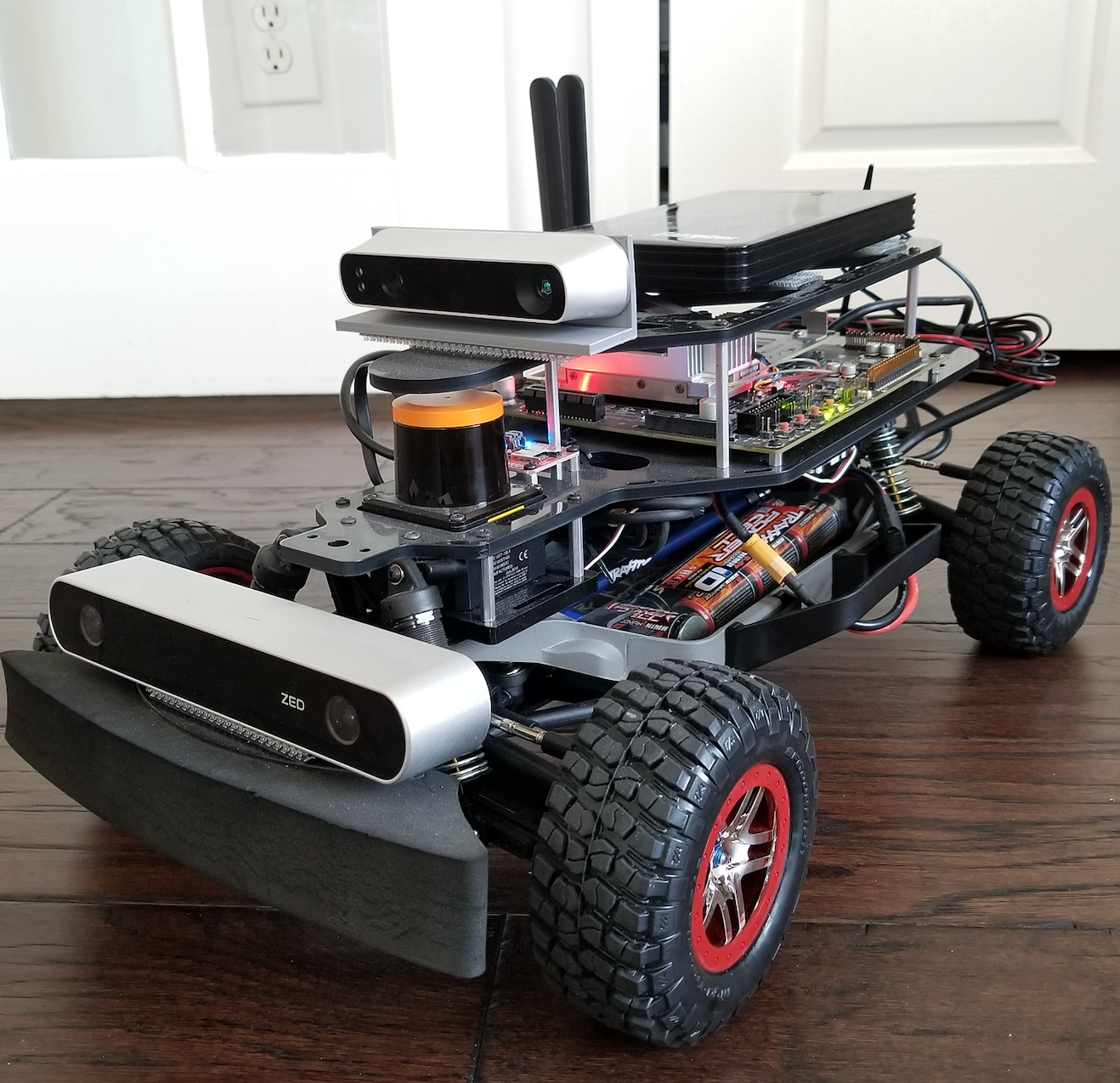}
	\end{subfigure}
	\caption{The RacecarJ platform.}
	\label{fig:jetson}
\end{figure}

We compare the search results, namely the number of edge evaluations and the number of vertex expansions for 5 different algorithms: LPA*, TLPA*~($\varepsilon_2$=2), GLS~(infinite-step lookahead), L-GLS~(infinite-step lookahead), and B-LGLS~(infinite-step lookahead, $\varepsilon_1=\sqrt{2}$, $\varepsilon_2=\sqrt{2}$).
The number of edge evaluations, the number of vertex expansions, and the runtime are recorded for each planning instance, and the average values over 100 random experiments are plotted in Figure~\ref{lgls:f:stat_racecar}.
\begin{figure*}[ht]
	\centering
	\begin{subfigure}{0.32\textwidth}
	\includegraphics[width=\myLineScale\linewidth]{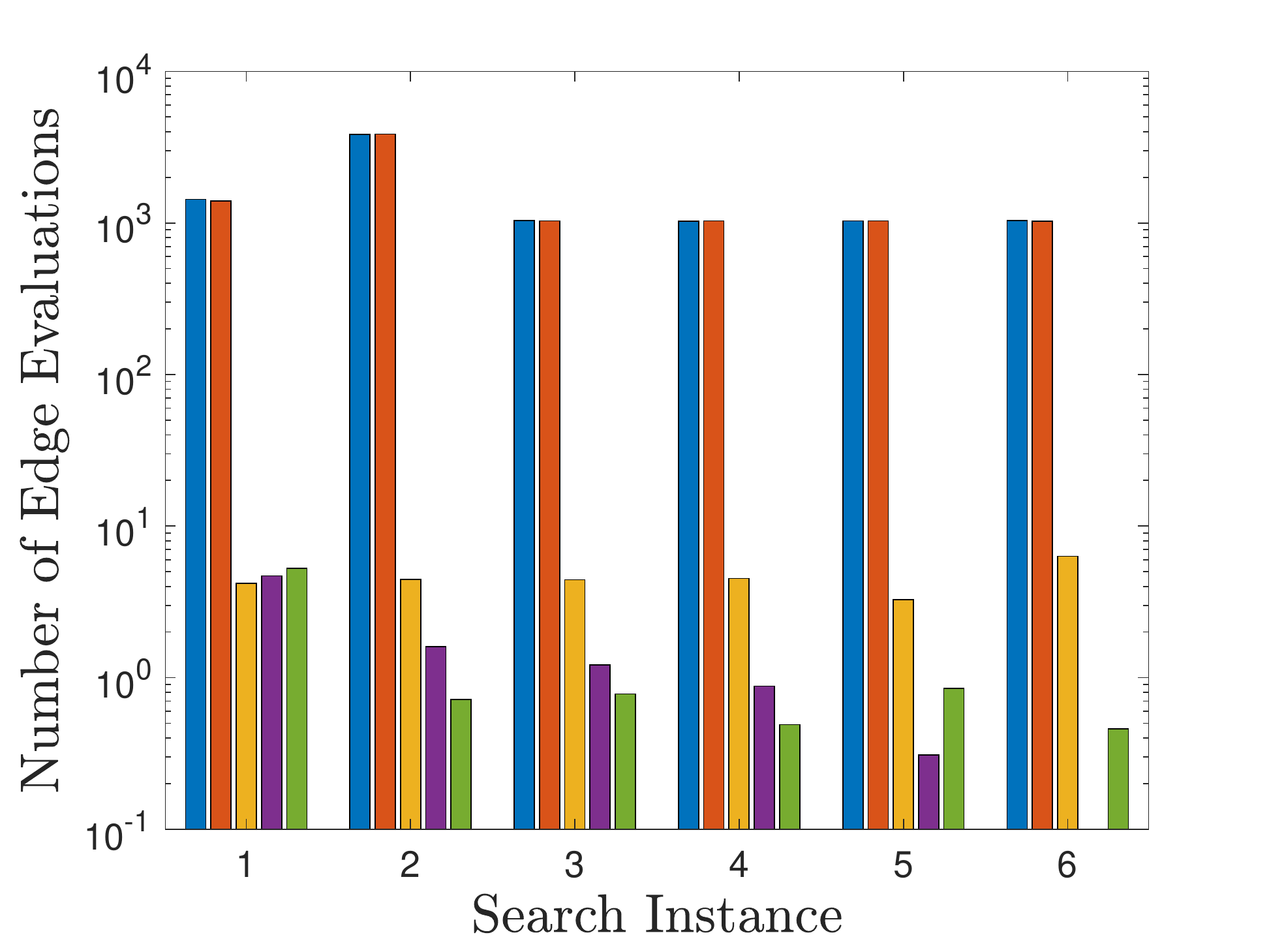}
		\caption{Edge evaluation}
	\end{subfigure}
	\begin{subfigure}{0.32\textwidth}
		\includegraphics[width=\myLineScale\linewidth]{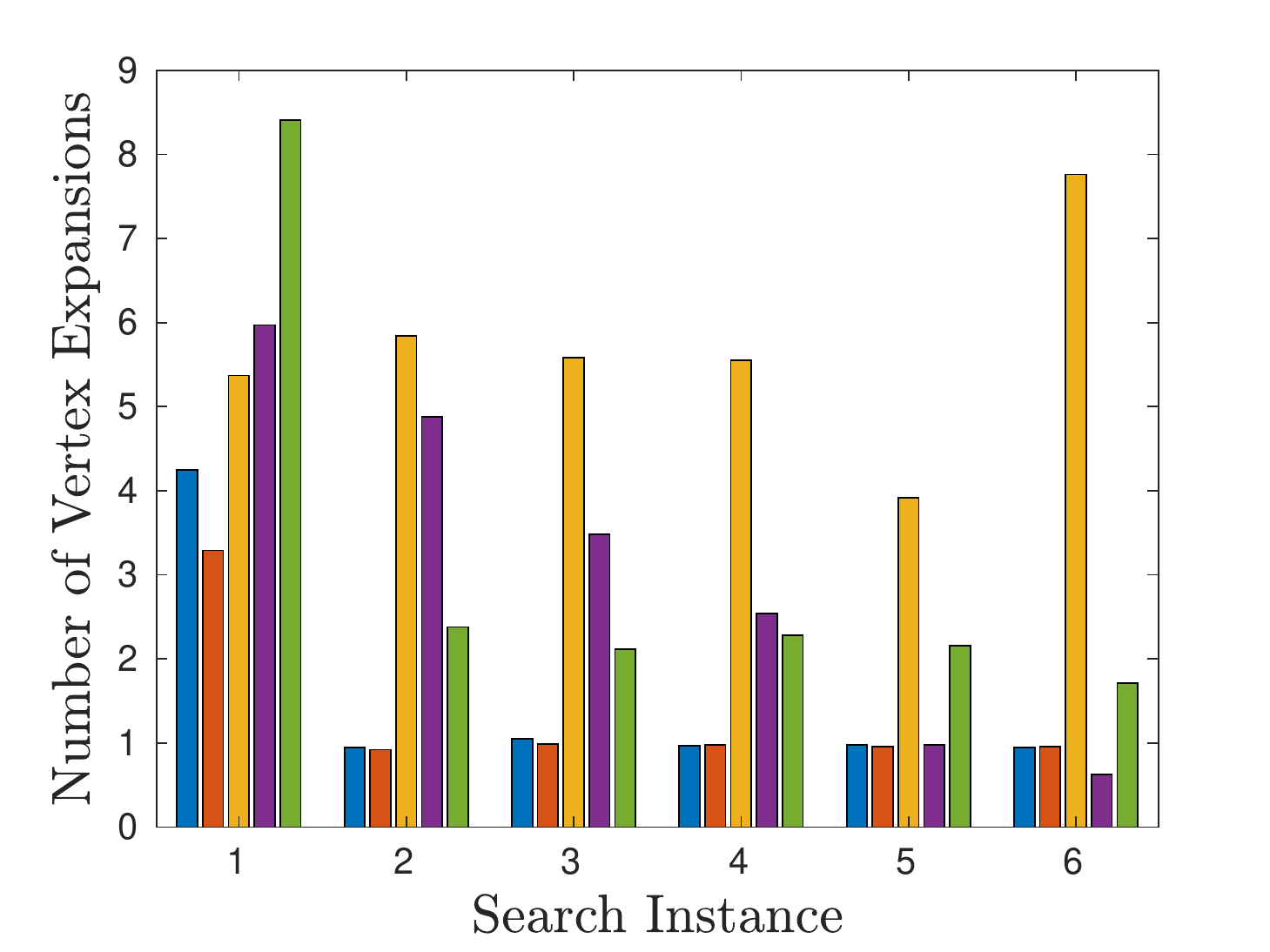}
		\caption{Vertex expansion}
	\end{subfigure}
	\begin{subfigure}{0.32\textwidth}
		\includegraphics[width=\myLineScale\linewidth]{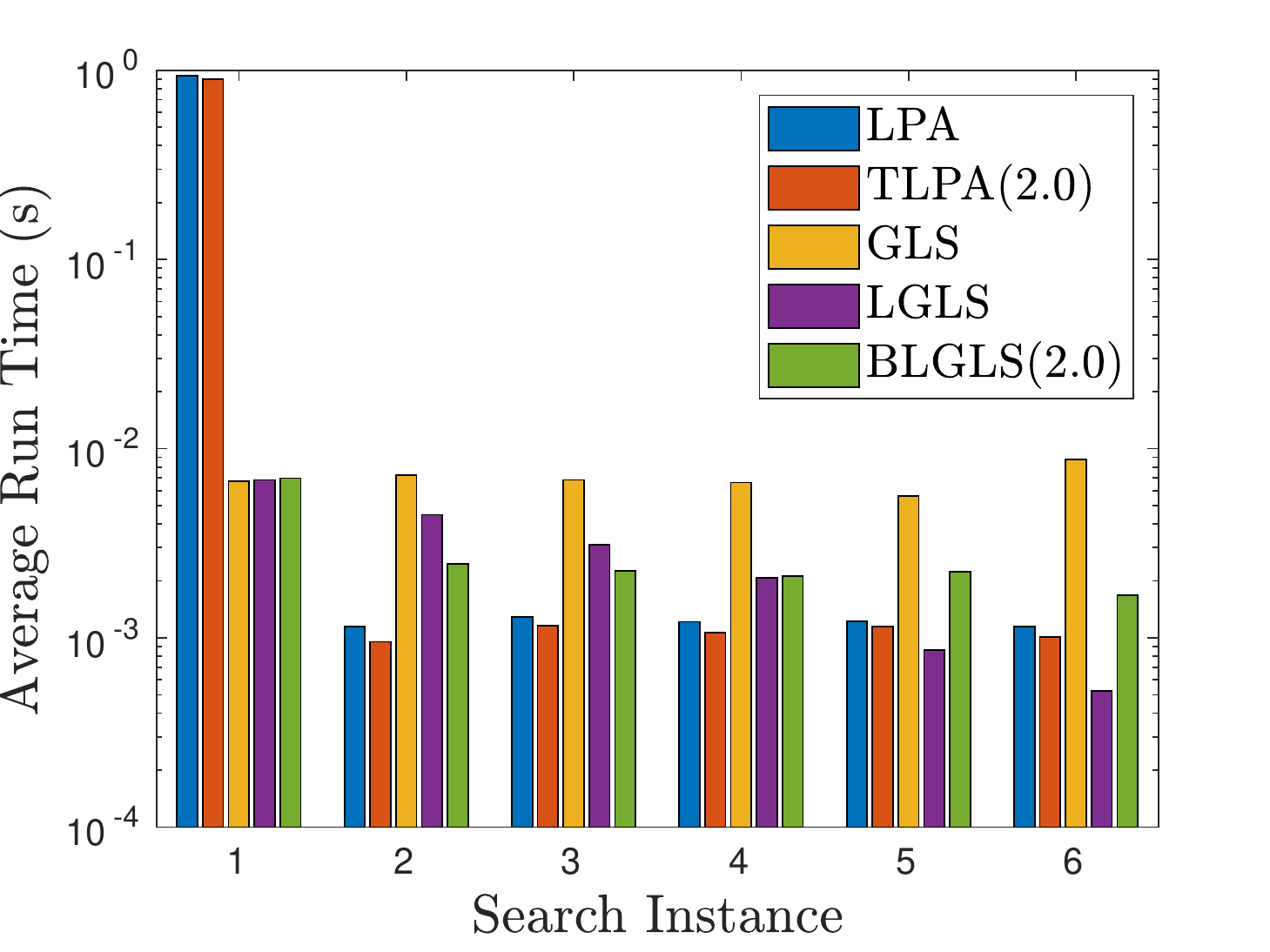}
		\caption{Total time}
	\end{subfigure}
	\caption{The average number of edge evaluations, the average number of vertex expansions, and the average total runtime for six consecutive searches for the non-holonomic racecar problem.}
	\label{lgls:f:stat_racecar}
\end{figure*}

A similar trend as before is observed in this experiment as well. 
The regular incremental search algorithms, namely, LPA* and TLPA*, incur unnecessary edge evaluations compared to the lazy search algorithms. 
The  lazy search versions of the algorithms, namely, GLS, L-GLS, and B-LGLS save a significant amount of edge evaluations at the expense of more search.
The lazy incremental search algorithms, L-GLS and B-LGLS save the computation time in consecutive searches compared to GLS, as they make use of previous search results.
B-LGLS further reduces the number of edge evaluations and vertex expansions compared to L-GLS in  consecutive searches, as it halts the search and evaluation as soon as the current solution is guaranteed to be bounded suboptimal.

\subsubsection{Static Environment and Dynamic Graph.}

In this section, we consider scenarios where the graph representation of a static environment becomes denser over time via incremental sampling. A batch of 100 random uniform samples were added per instance with rejection of infeasible samples due to obstacles, and an improved solution is sought incrementally for each instance based on the previous search result. For each search instance, the time to evaluate the edges, the time to expand the vertices, and the total time to find a new solution were recorded. Each experiment was repeated 50 times, and the average and the standard deviation of the accumulated time of edge evaluations, the accumulated time of vertex expansions, and the solution length as a function of time are plotted for the success ratio to find a solution is above 80\%.

First, we study the effect of the three parameters of B-LGLS, namely the lookahead value, the inflation factor, and the truncation factor. 

\paragraph{Lookahead Variation:} \label{lgls:e:lookahead}

The lookahead value determines how far the heuristic search tree is repaired before evaluating the edges along the path. 
Hence, a higher lookahead value delays edge evaluation, but the search tree may become inconsistent often when there is a discrepancy between the heuristic value and the actual value, requiring more vertex expansions. On the other hand, a lower lookahead value reduces the vertex expansions at the expanse of more edge evaluations. 
In all of the Piano Movers' problem instances, the PR2 robotic arm, and the non-holonomic racecar problems, this trend is clear. 
Figures~\ref{lgls:f:piano_inc_lookahead}, \ref{lgls:f:pr2_inc_lookahead}, and
\ref{lgls:f:racecar_inc_lookahead} show the total edge evaluation time and the total vertex expansion time accumulated as the number of vertices in the current graph for the Piano Movers' problem, the PR2 robotic arm problem, and the non-holonomic racecar problem, respectively.
The higher the lookahead value, the lower the accumulated time to evaluate edges. The lower the lookahead value, the lower the accumulated time to expand vertices. The trade-off between the edge evaluation and the vertex expansion determines the solution convergence rate.

\begin{figure*}[ht]
	\centering
	\begin{subfigure}{0.32\textwidth}
		\includegraphics[width=\myLineScale\linewidth]{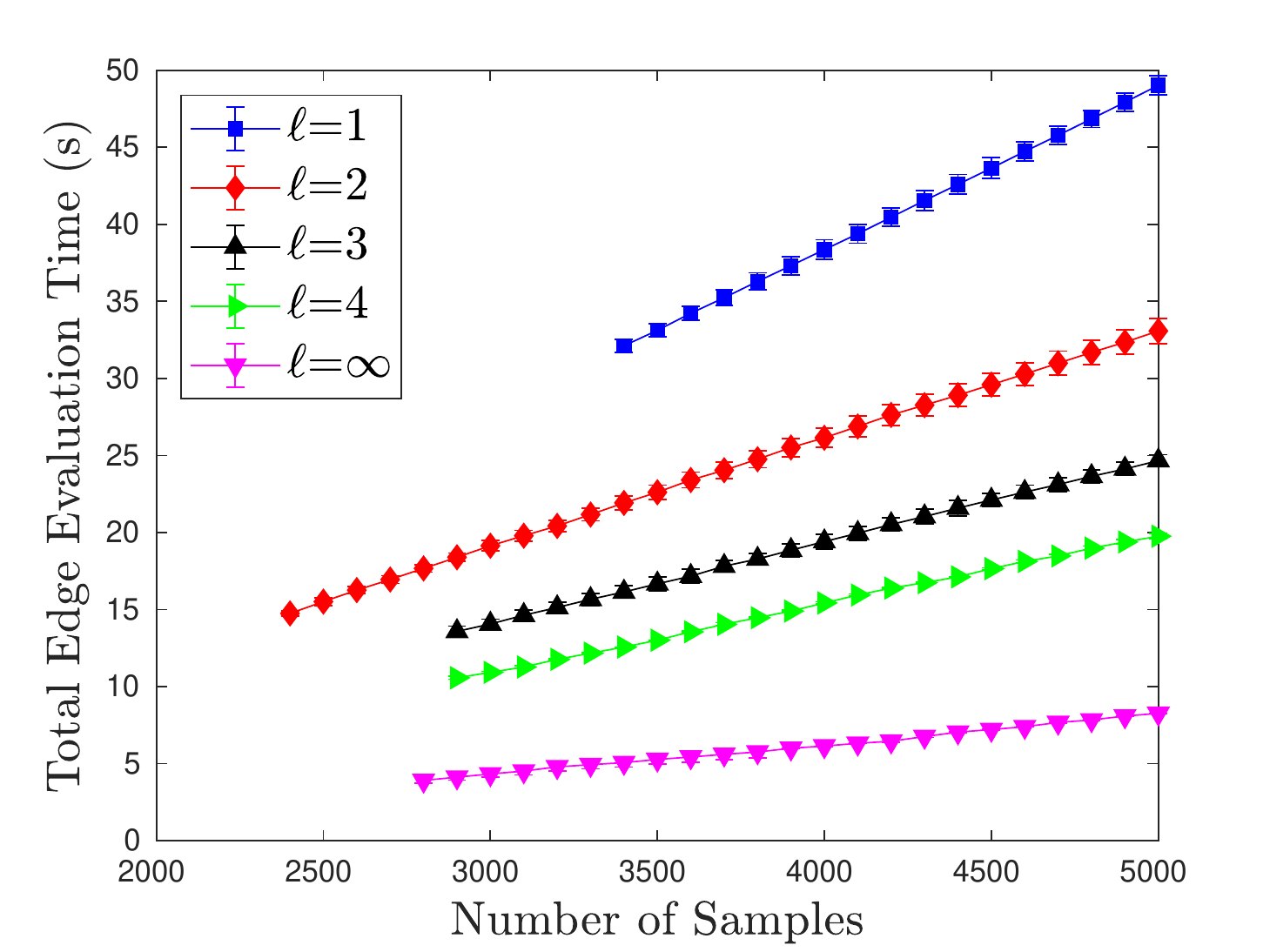}
		\caption{Edge Evaluation}
	\end{subfigure}
	\begin{subfigure}{0.32\textwidth}
		\includegraphics[width=\myLineScale\linewidth]{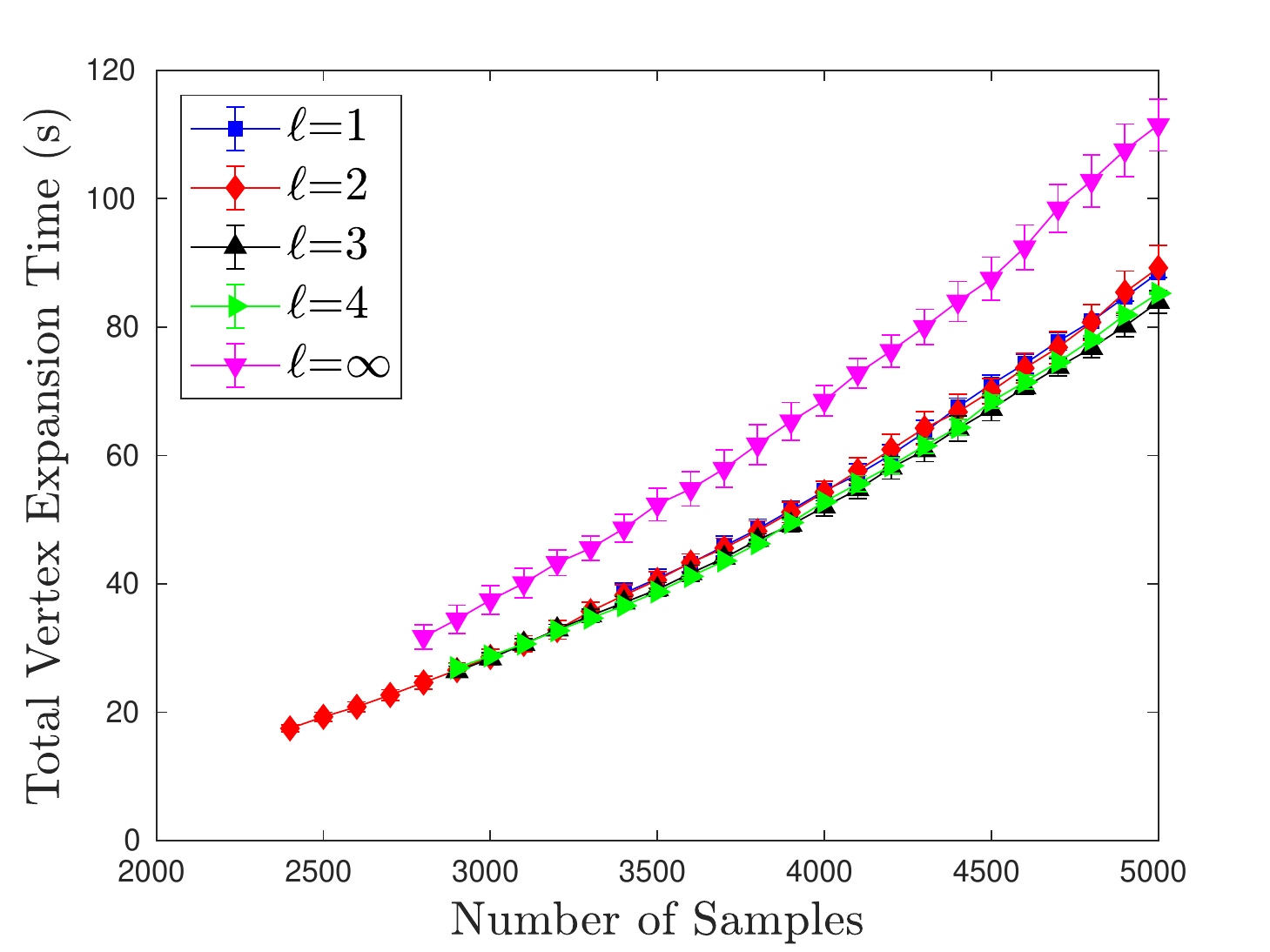}
		\caption{Vertex Expansion}
	\end{subfigure}
	\begin{subfigure}{0.32\textwidth}
		\includegraphics[width=\myLineScale\linewidth]{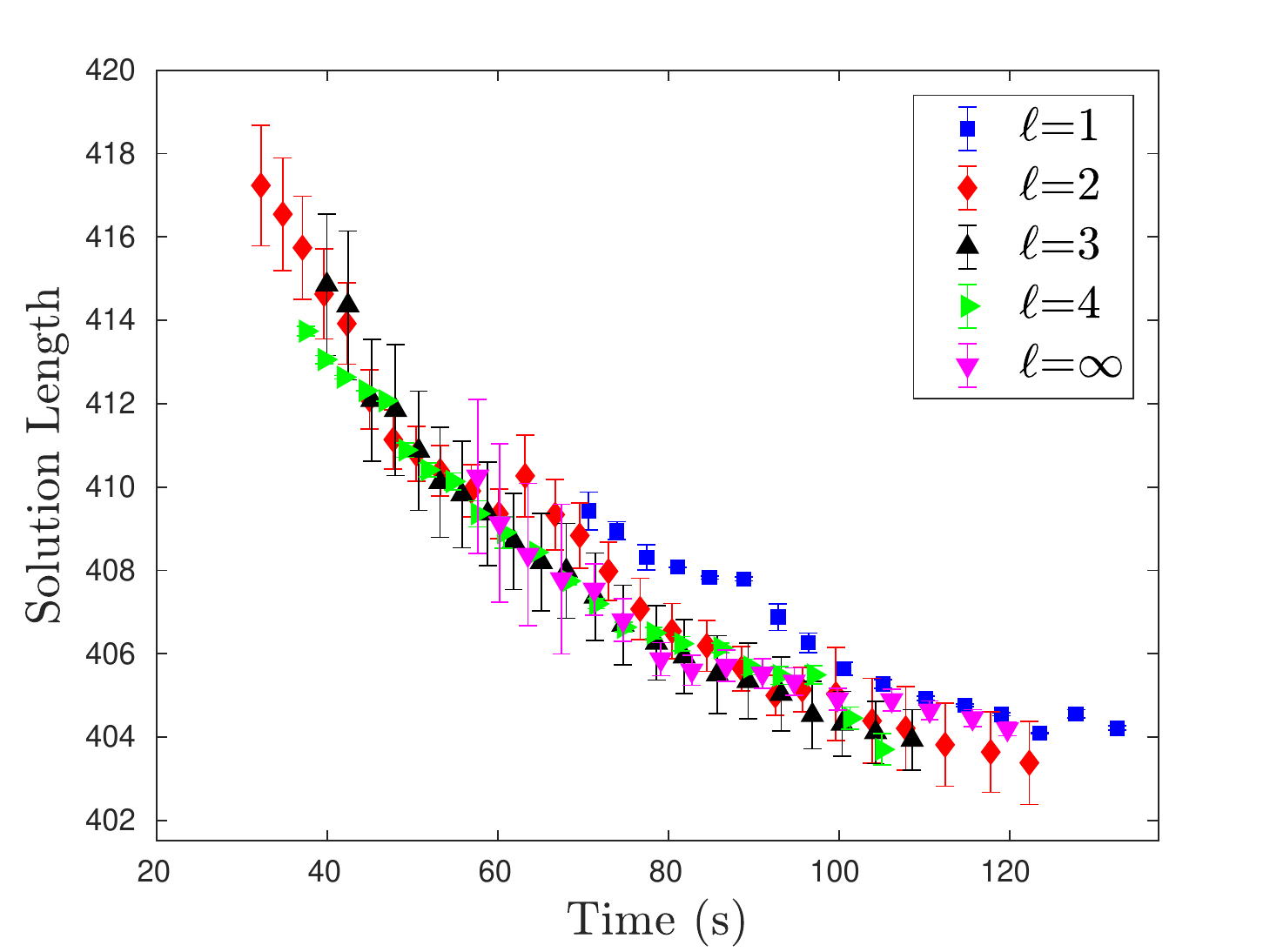}
		\caption{Solution Length}
	\end{subfigure}
	\caption{Lookahead variation of the Piano Movers' problem with $\varepsilon_1=1$ and $\varepsilon_2=1$.}
	\label{lgls:f:piano_inc_lookahead}
\end{figure*}

\begin{figure*}[ht]
	\centering
	\begin{subfigure}{0.32\textwidth}
		\includegraphics[width=\myLineScale\linewidth]{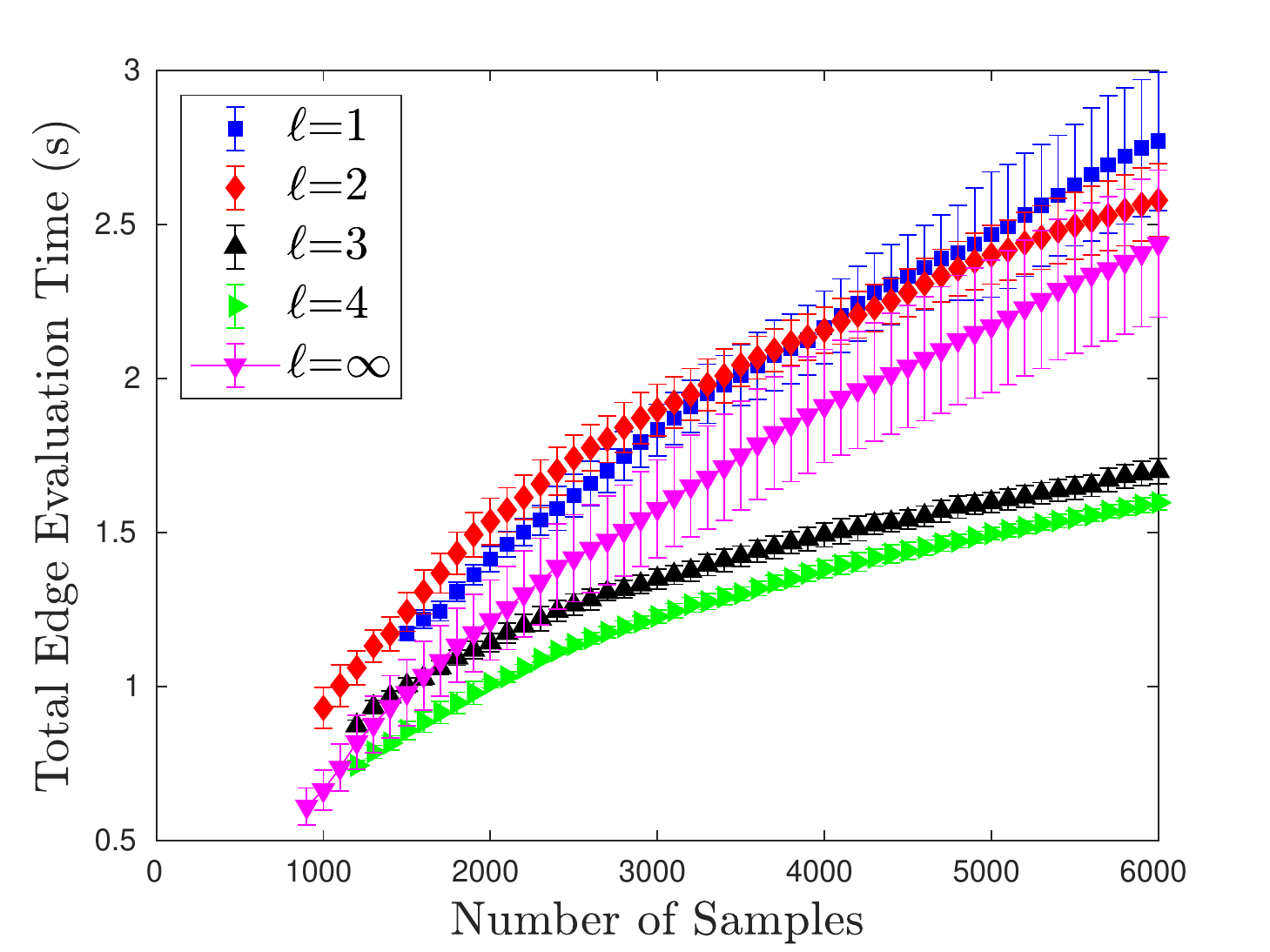}
		\caption{Edge Evaluation}
	\end{subfigure}
	\begin{subfigure}{0.32\textwidth}
		\includegraphics[width=\myLineScale\linewidth]{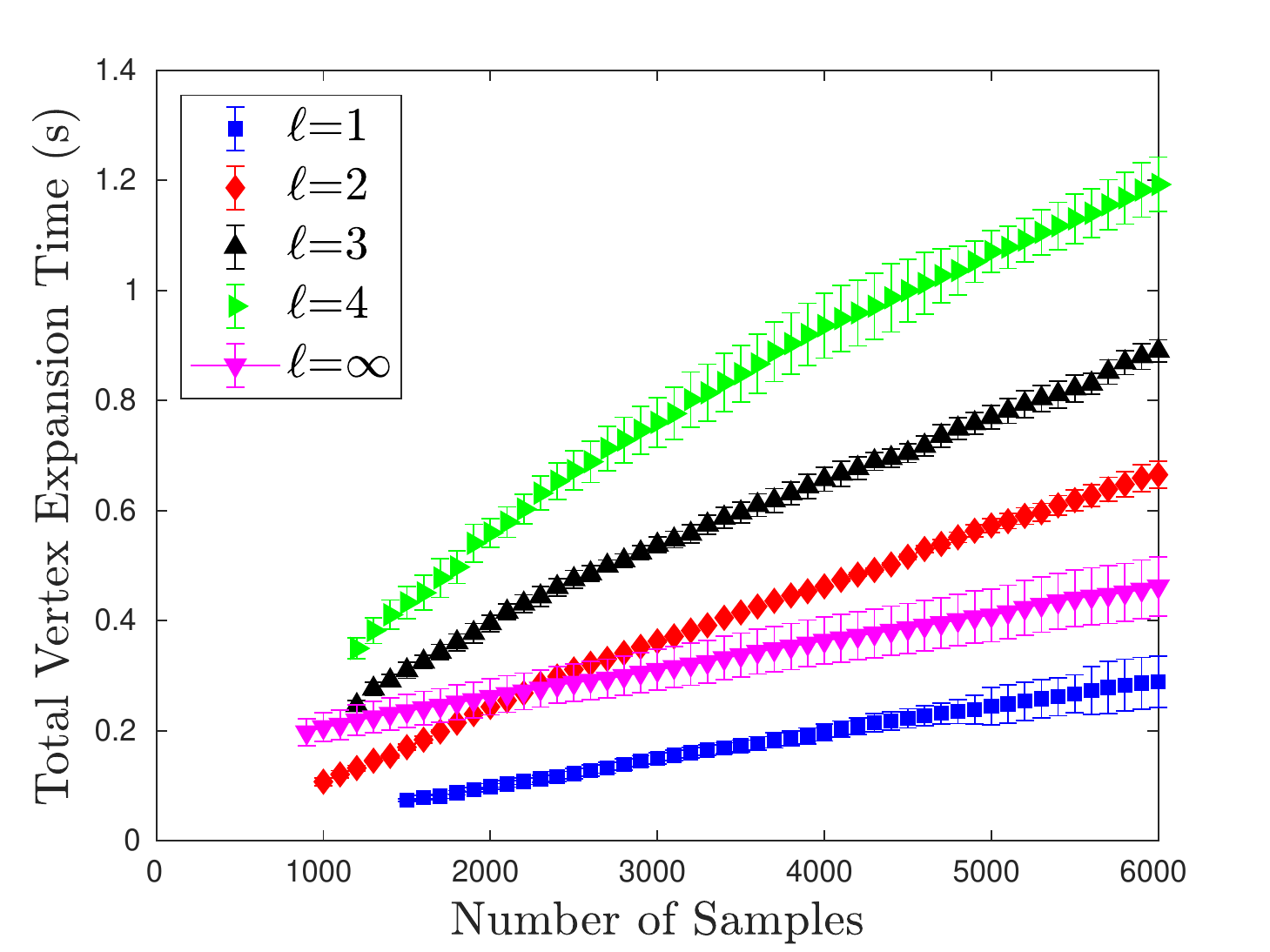}
		\caption{Vertex Expansion}
	\end{subfigure}
	\begin{subfigure}{0.32\textwidth}
		\includegraphics[width=\myLineScale\linewidth]{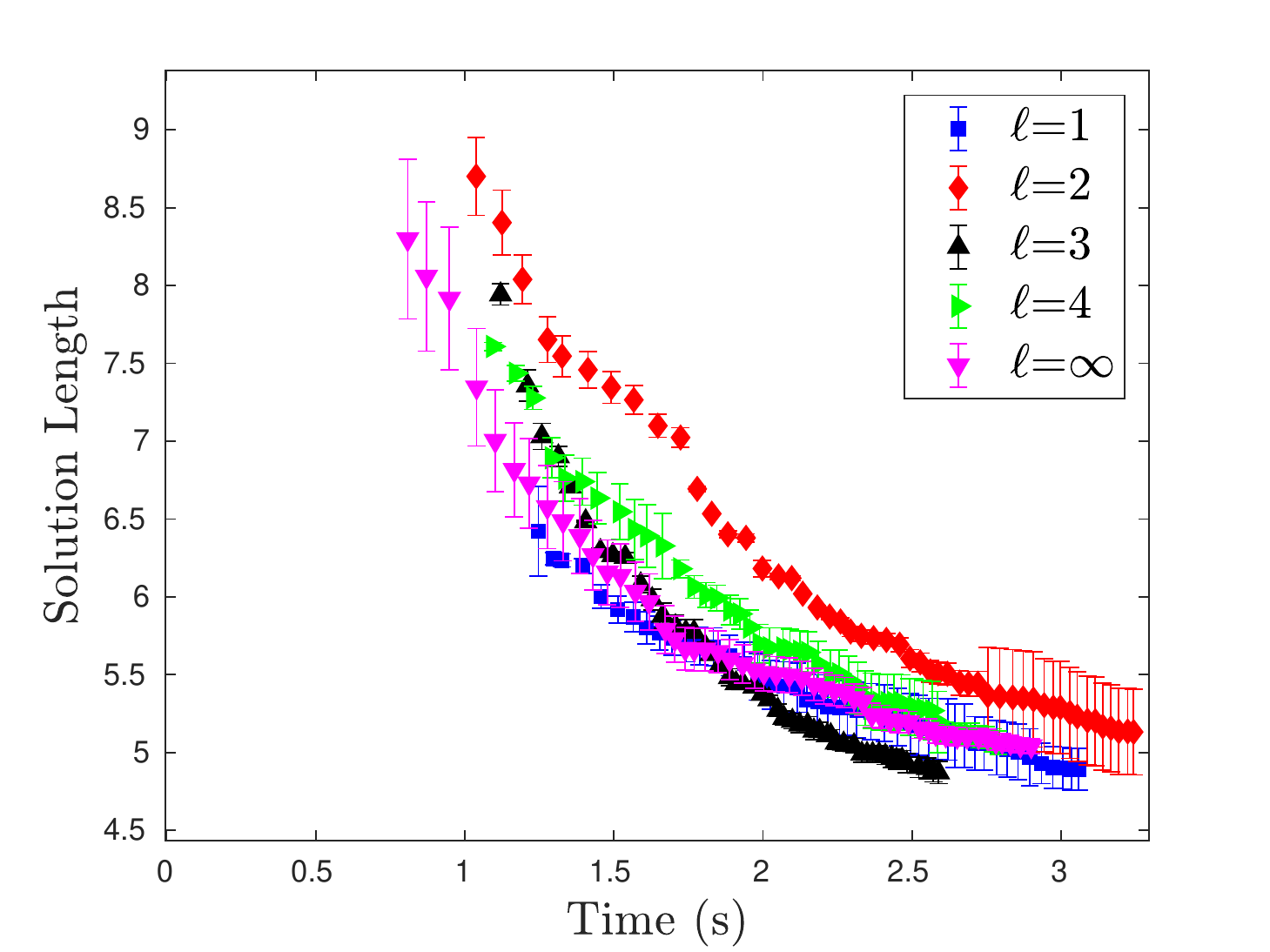}
		\caption{Solution Length}
	\end{subfigure}
	\caption{Lookahead variation of the PR2 robotic arm problem with $\varepsilon_1=1$ and $\varepsilon_2=1$.}
	\label{lgls:f:pr2_inc_lookahead}
\end{figure*}

\begin{figure*}[ht]
	\centering
	\begin{subfigure}{0.32\textwidth}
		\includegraphics[width=\myLineScale\linewidth]{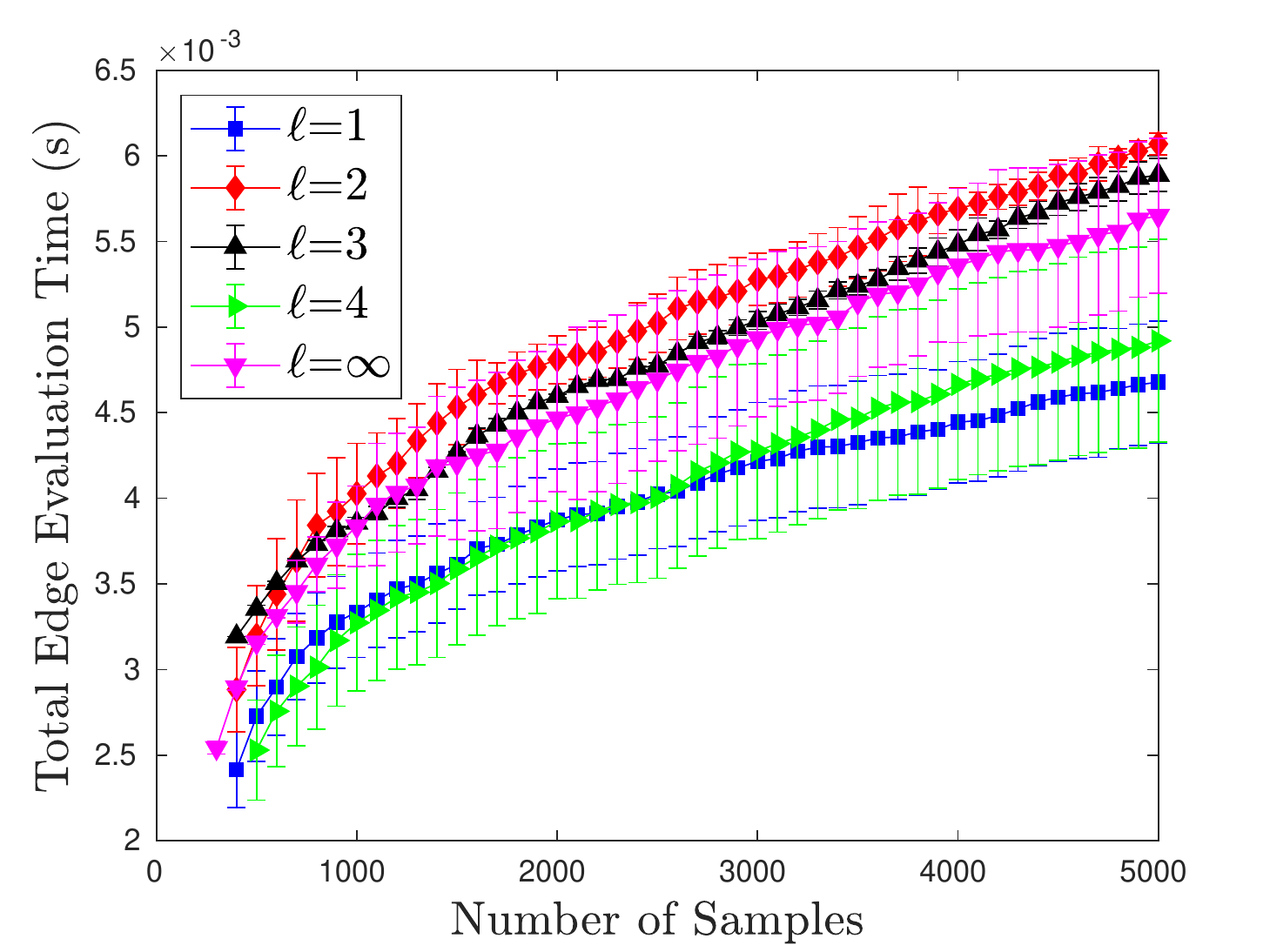}
		\caption{Edge Evaluation}
	\end{subfigure}
	\begin{subfigure}{0.32\textwidth}
		\includegraphics[width=\myLineScale\linewidth]{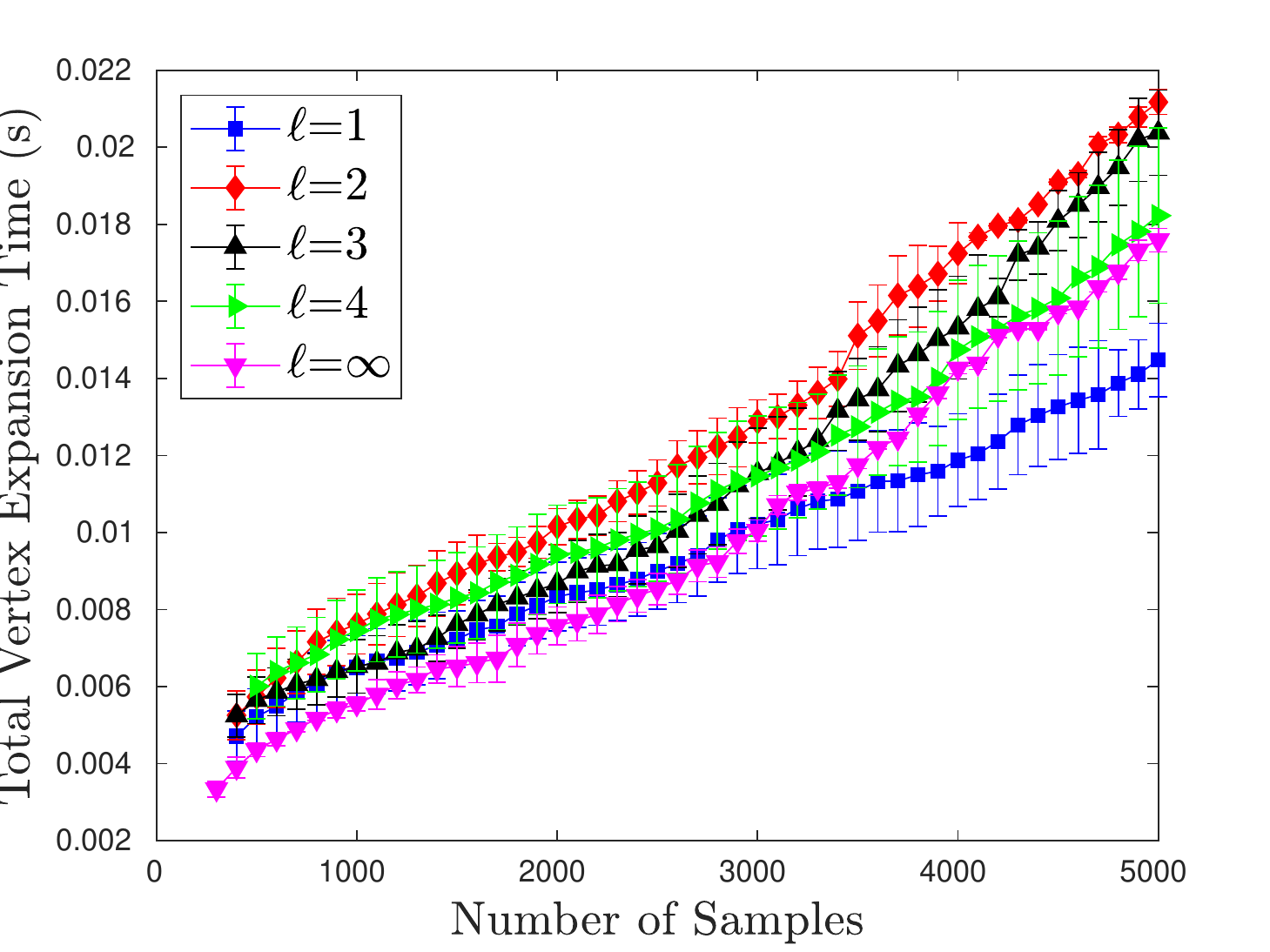}
		\caption{Vertex Expansion}
	\end{subfigure}
	\begin{subfigure}{0.32\textwidth}
		\includegraphics[width=\myLineScale\linewidth]{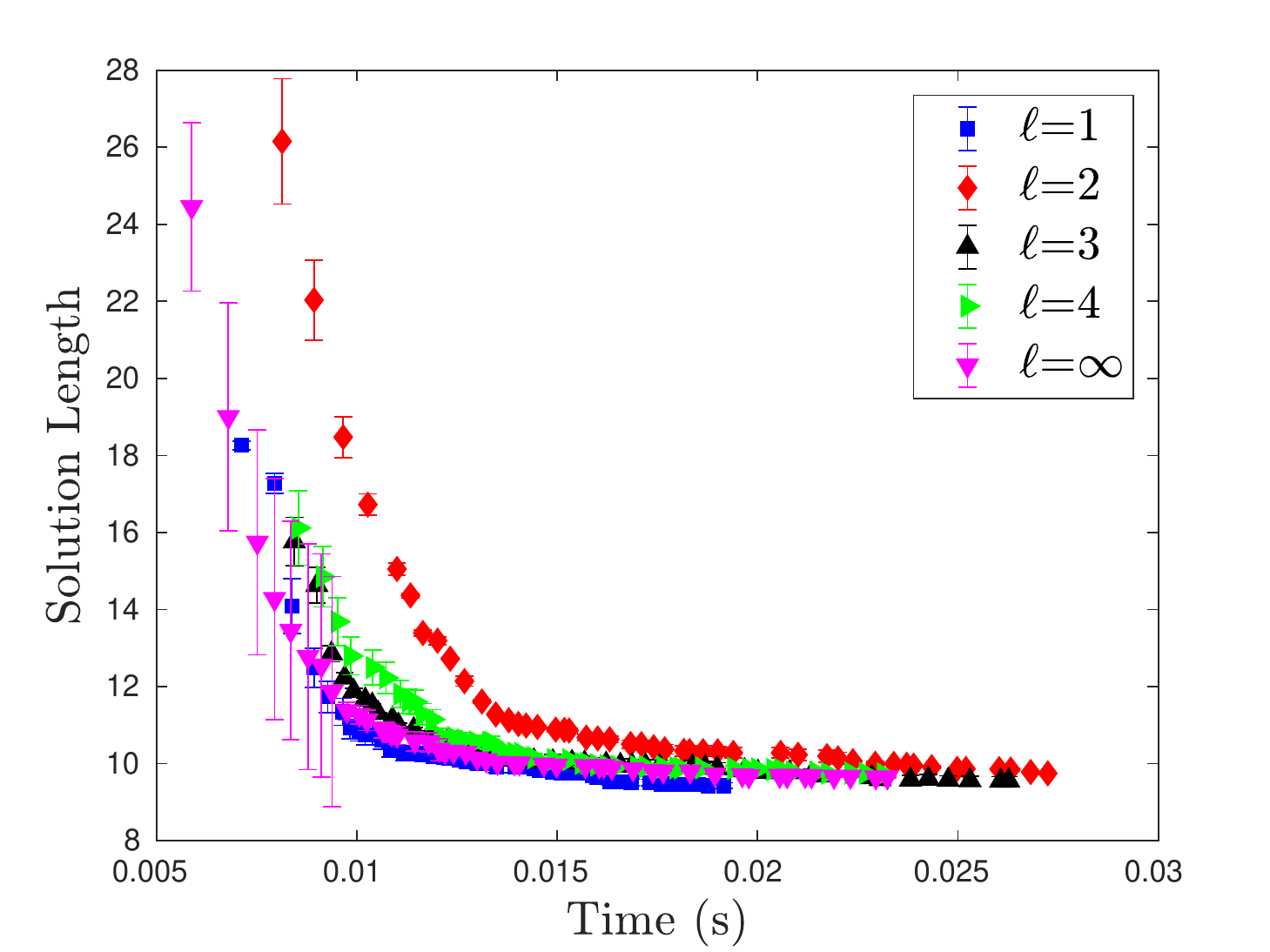}
		\caption{Solution Length}
	\end{subfigure}
	\caption{Lookahead variation of the racecar problem with $\varepsilon_1=1$ 
	and $\varepsilon_2=1$.}
	\label{lgls:f:racecar_inc_lookahead}
\end{figure*}

\paragraph{Effect of Inflation Factor:}

We also investigated the effect of the inflation factor variation of the solution quality and speed.
We inflated the heuristic edge value of the unevaluated edges by a multiplicative factor.
The inflation factor has two effects.
First, it makes the search greedy, making to find the first solution faster. 
Second, it defers the evaluation of newly added edges unless a path including such edges are promising to improve the current solution significantly.
Although a high inflation factor makes it faster to find a first solution, it may not converge asymptotically to the optimal solution if the current solution is already within the given bound.
To address this issue, 
we dynamically varied the inflation factor as a function of the number of vertices in the graph. 
We chose $\varepsilon_1 = 1+5/q$, where $q$ is the number of vertices~\cite{Strub2020b}. 
Having a dynamic inflation factor finds a first solution fast, and then it asymptotically converges to the optimal solution as $\varepsilon_1$ converges to 1.
Figures~\ref{lgls:f:piano_inc_inflation}, \ref{lgls:f:pr2_inc_inflation}, and
\ref{lgls:f:racecar_inc_inflation} show the total edge evaluation time and the total vertex expansion time accumulated as the number of vertices in the current graph for the Piano Movers' problem, the PR2 robotic arm problem, and the non-holonomic racecar problem, respectively. 
Having an inflation factor greater than 1 reduced the edge evaluation time, but it also incurred more vertex expansion time, generally resulting in a slower convergence. 

\begin{figure*}[ht]
	\centering
	\begin{subfigure}{0.32\textwidth}
		\includegraphics[width=\myLineScale\linewidth]{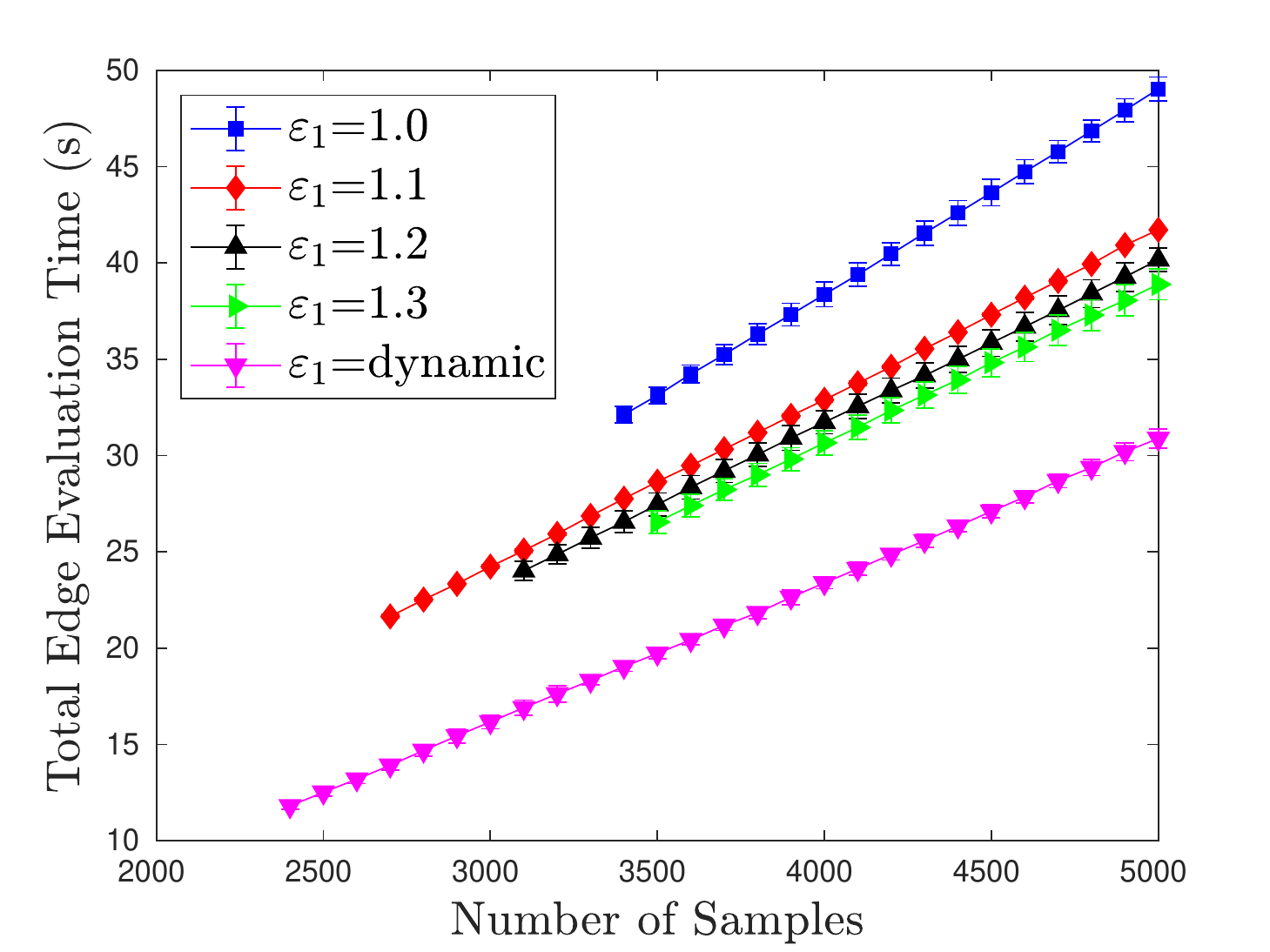}
		\caption{Edge Evaluation}
	\end{subfigure}
	\begin{subfigure}{0.32\textwidth}
		\includegraphics[width=\myLineScale\linewidth]{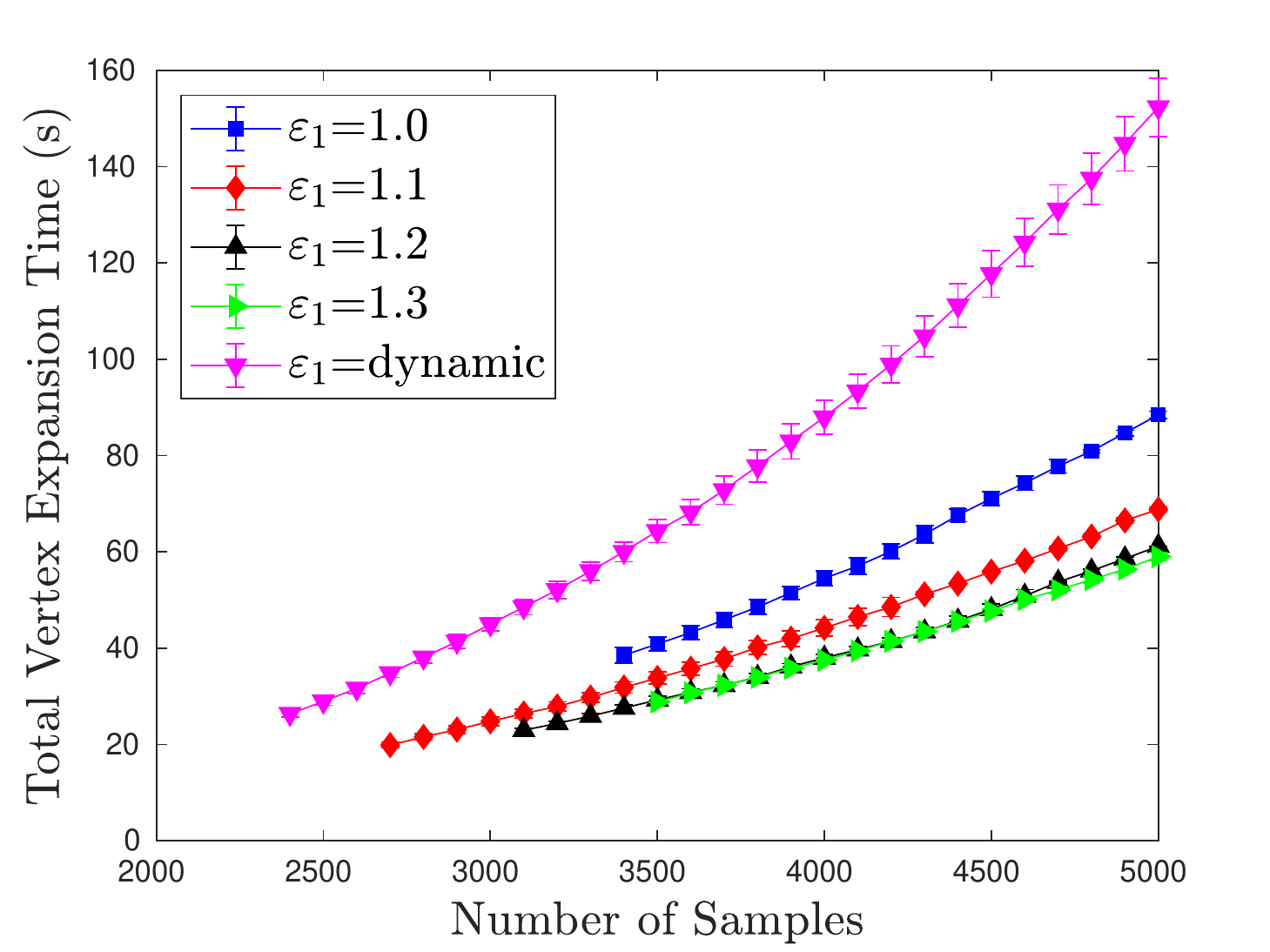}
		\caption{Vertex Expansion}
	\end{subfigure}
	\begin{subfigure}{0.32\textwidth}
		\includegraphics[width=\myLineScale\linewidth]{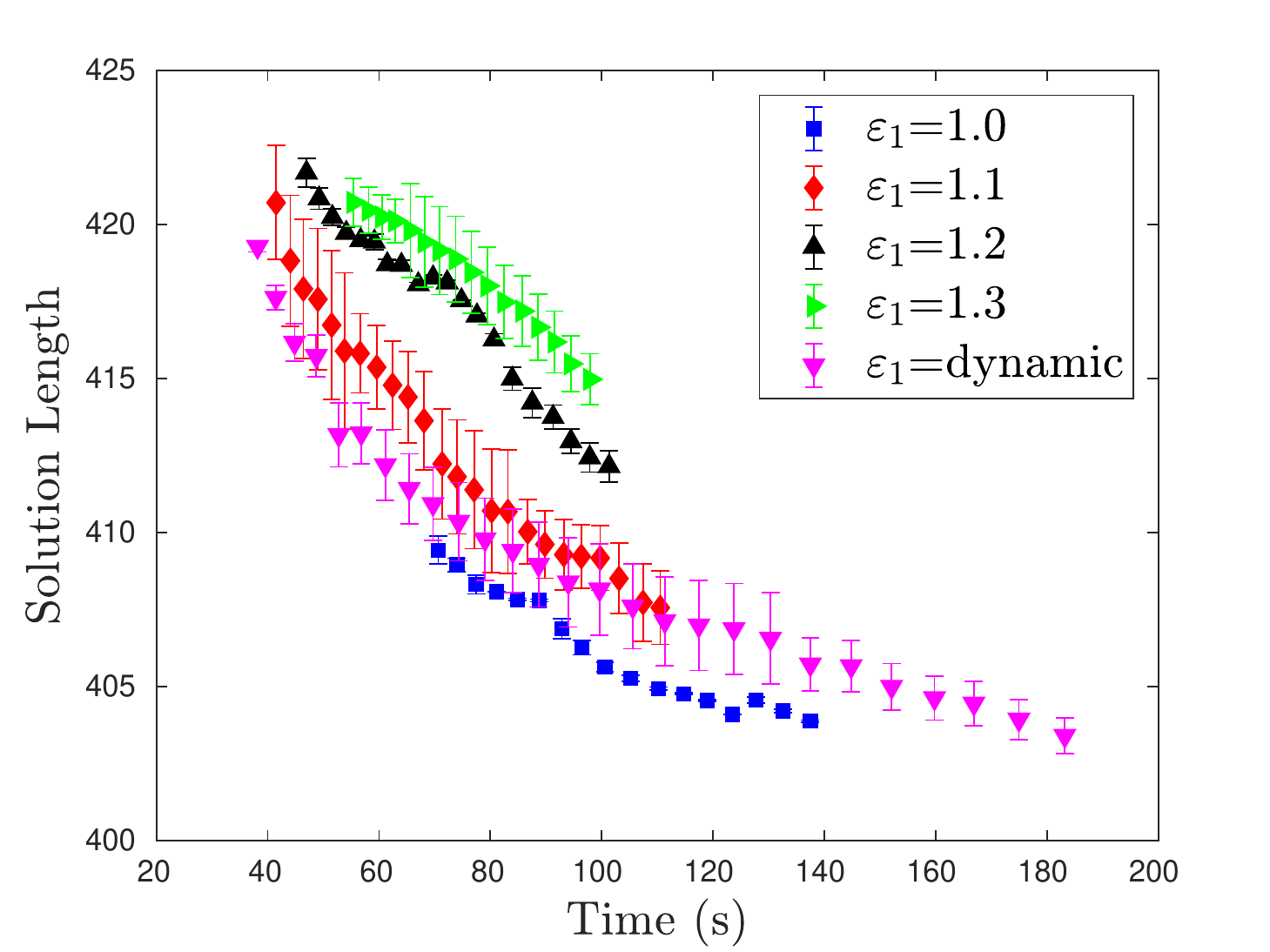}
		\caption{Solution Length}
	\end{subfigure}
	\caption{Inflation factor variation of the Piano Movers' problem.}
	\label{lgls:f:piano_inc_inflation}
\end{figure*}

\begin{figure*}[ht]
	\centering
	\begin{subfigure}{0.32\textwidth}
		\includegraphics[width=\myLineScale\linewidth]{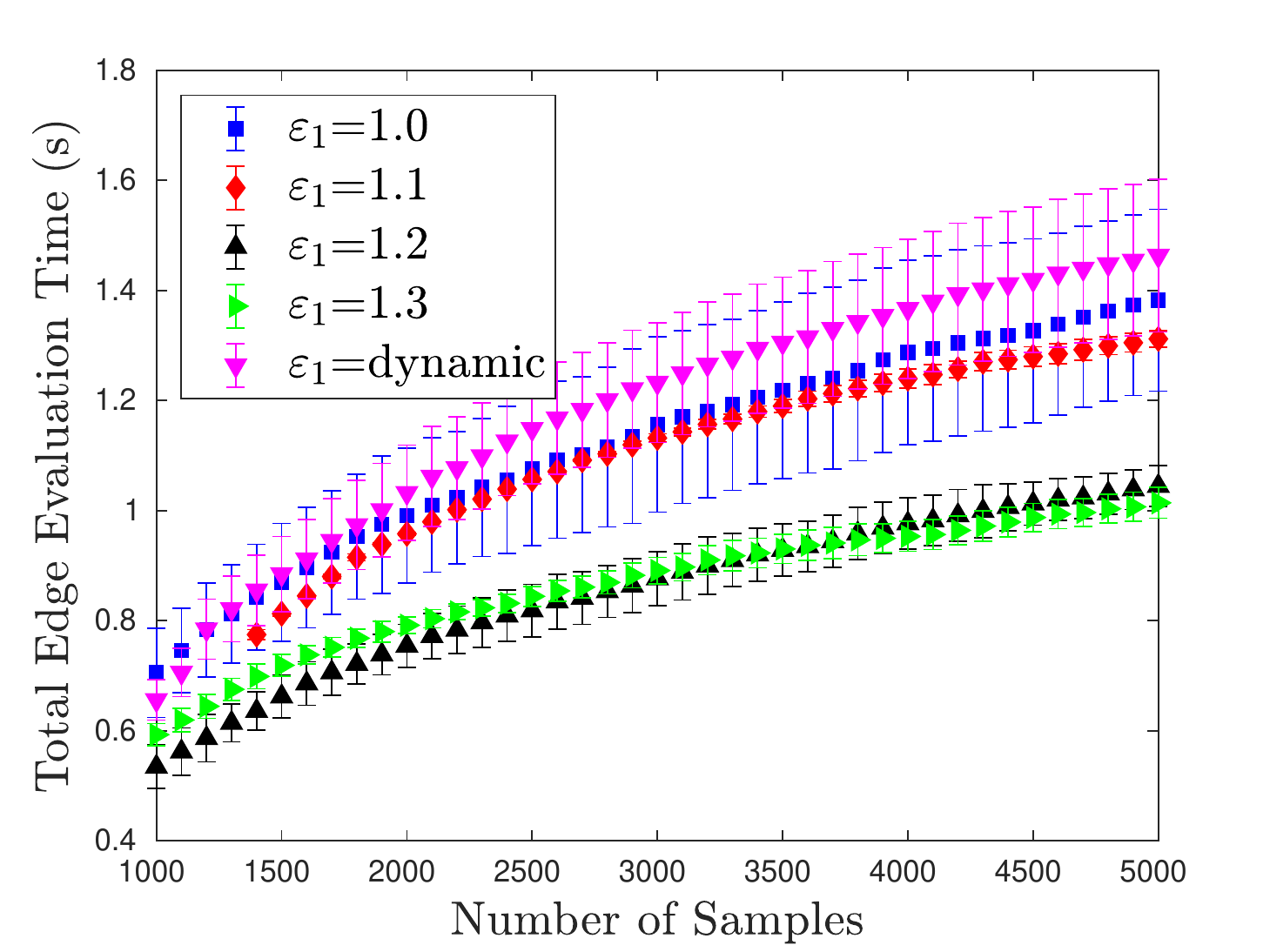}
		\caption{Edge Evaluation}
	\end{subfigure}
	\begin{subfigure}{0.32\textwidth}
		\includegraphics[width=\myLineScale\linewidth]{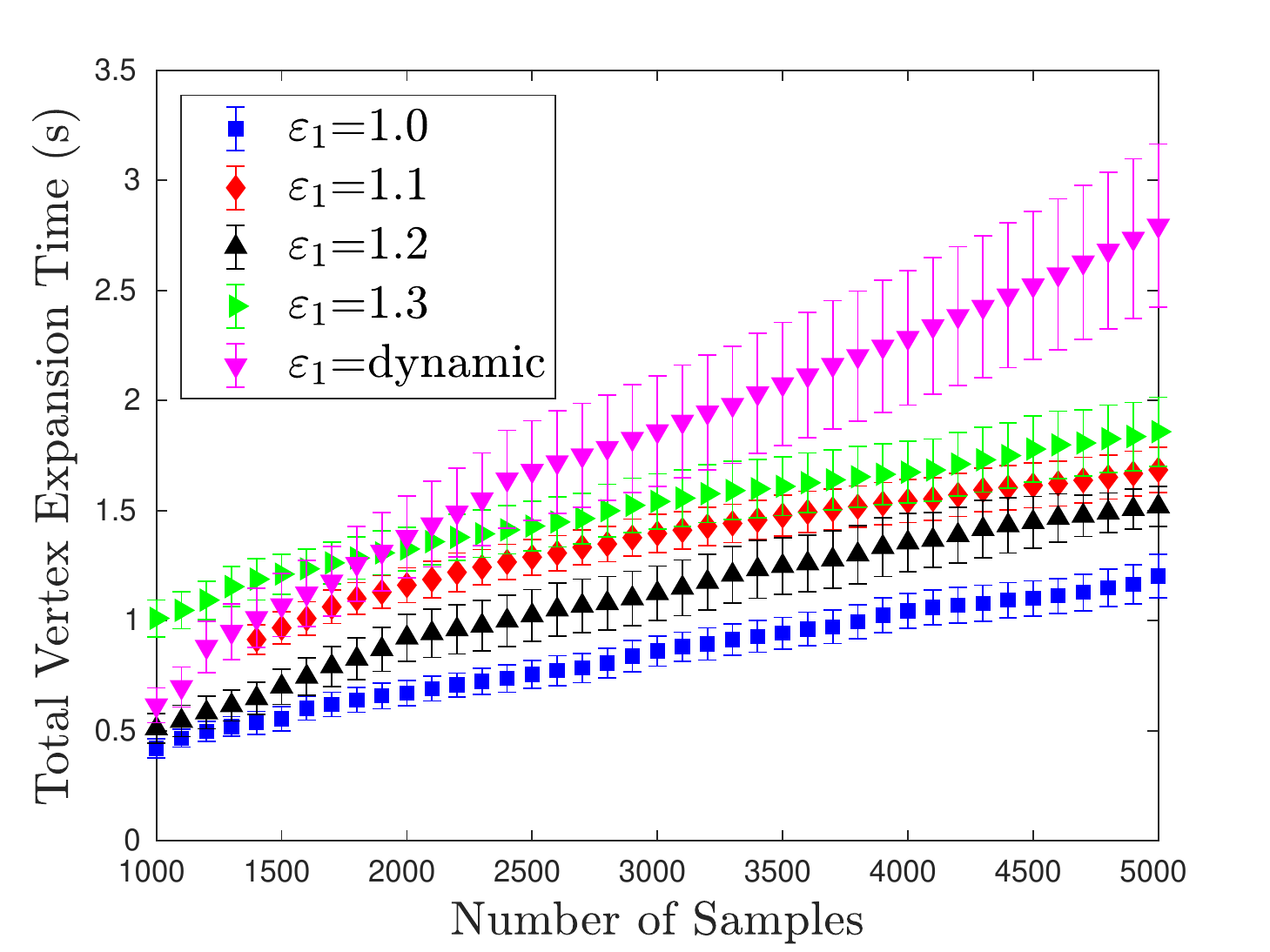}
		\caption{Vertex Expansion}
	\end{subfigure}
	\begin{subfigure}{0.32\textwidth}
		\includegraphics[width=\myLineScale\linewidth]{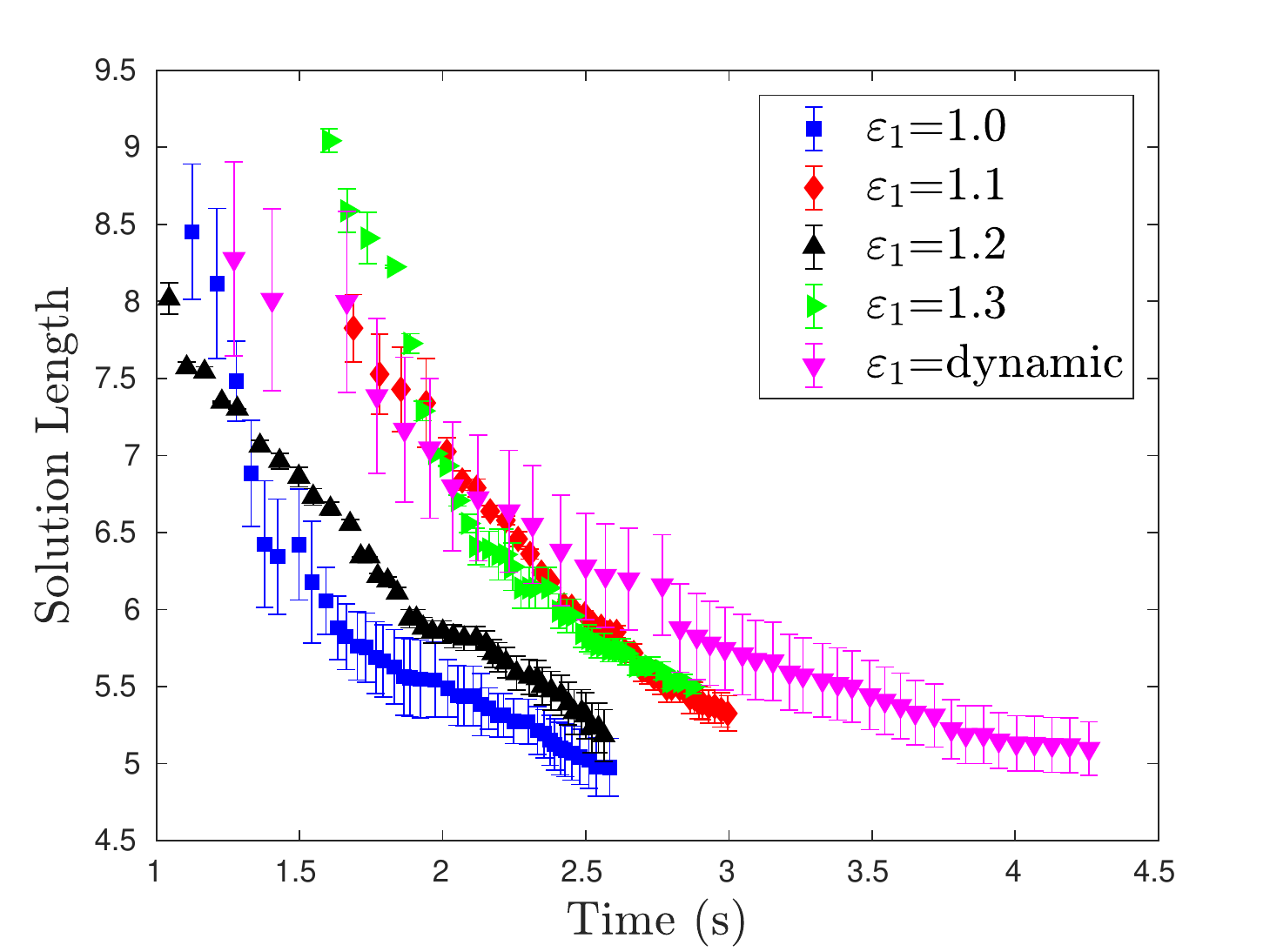}
		\caption{Solution Length}
	\end{subfigure}
	\caption{Inflation factor variation of the PR2 robotic arm problem.}
	\label{lgls:f:pr2_inc_inflation}
\end{figure*}

\begin{figure*}[ht]
	\centering
	\begin{subfigure}{0.32\textwidth}
		\includegraphics[width=\myLineScale\linewidth]{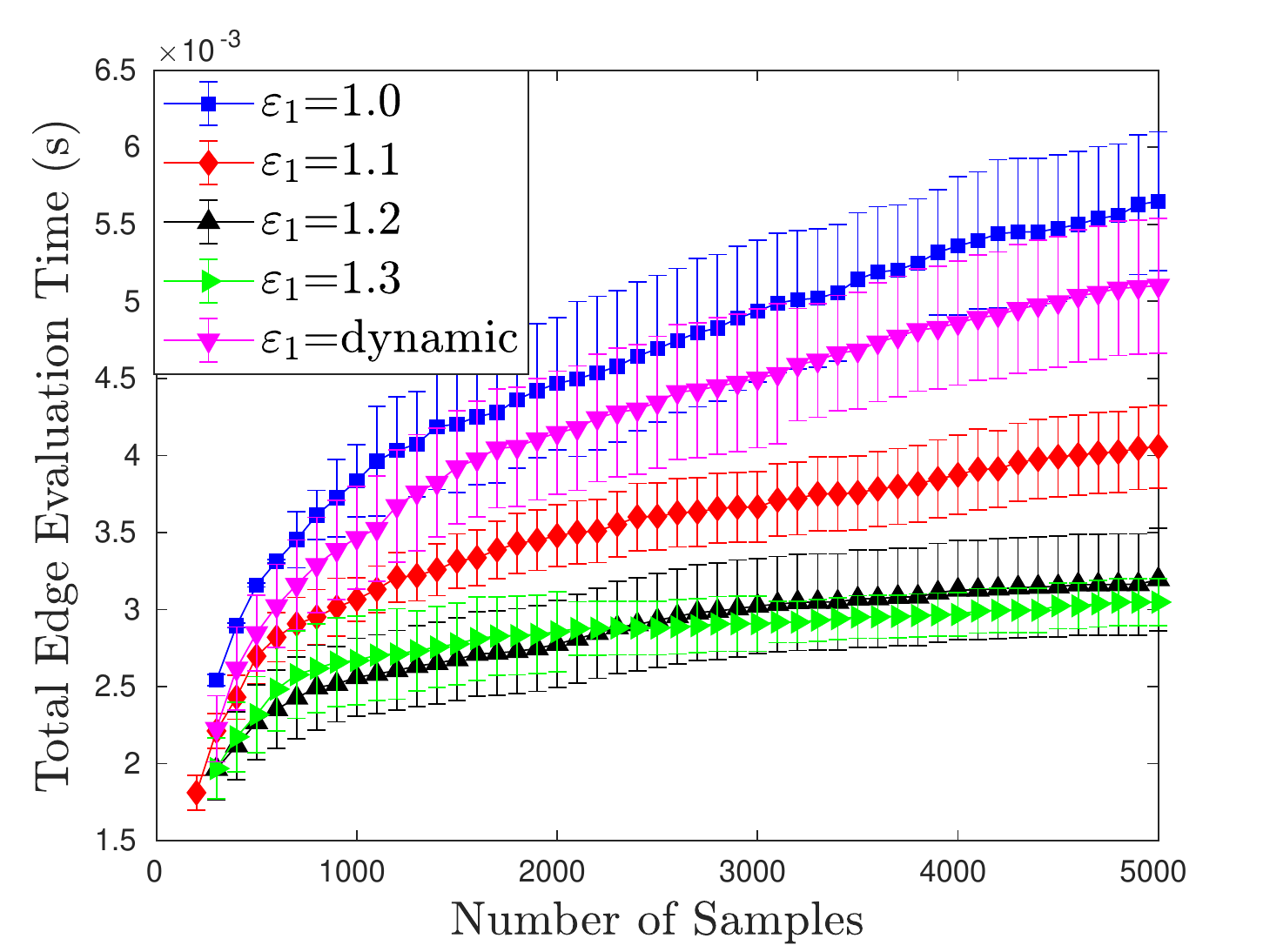}
		\caption{Edge Evaluation}
	\end{subfigure}
	\begin{subfigure}{0.32\textwidth}
		\includegraphics[width=\myLineScale\linewidth]{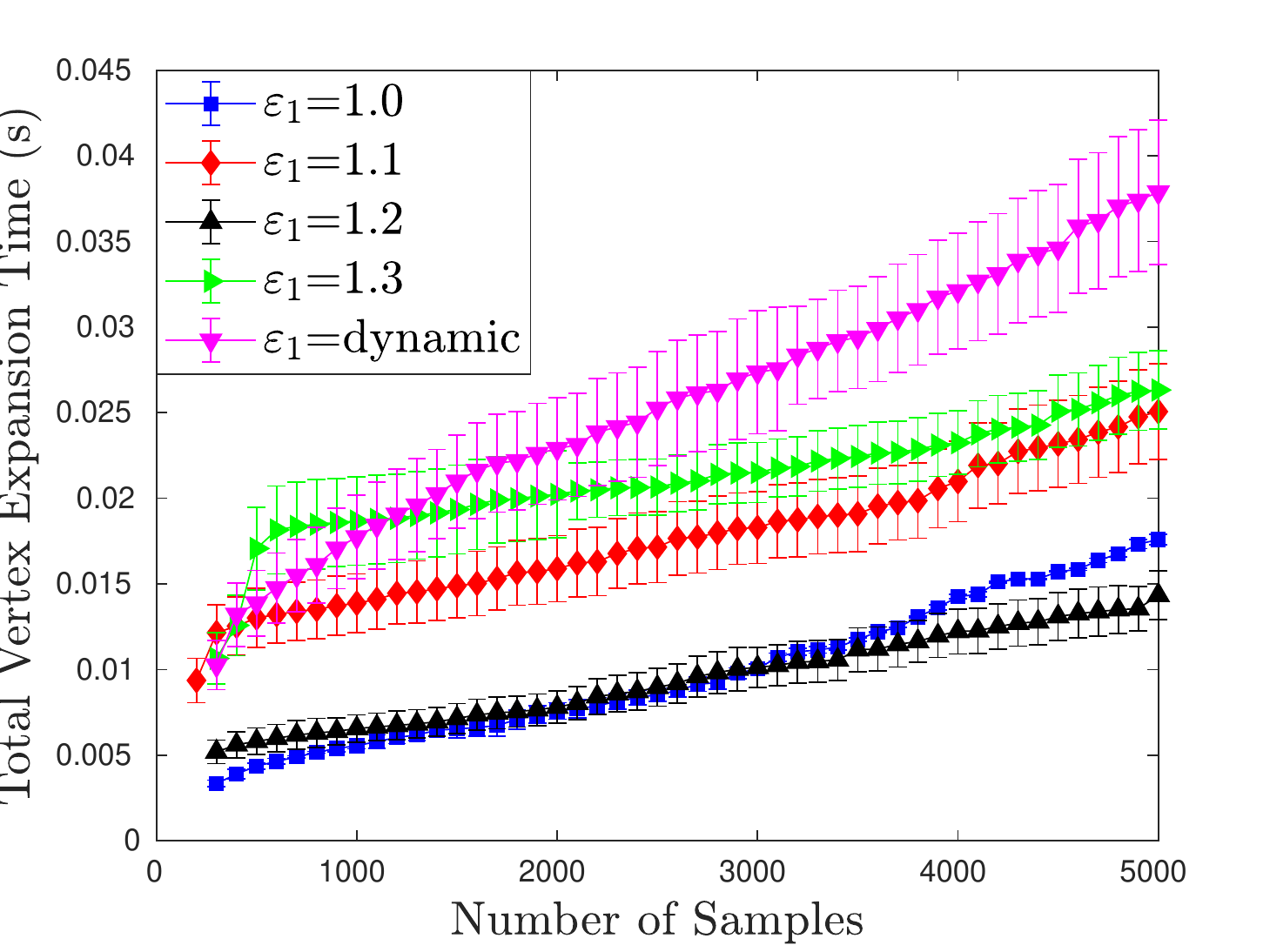}
		\caption{Vertex Expansion}
	\end{subfigure}
	\begin{subfigure}{0.32\textwidth}
		\includegraphics[width=\myLineScale\linewidth]{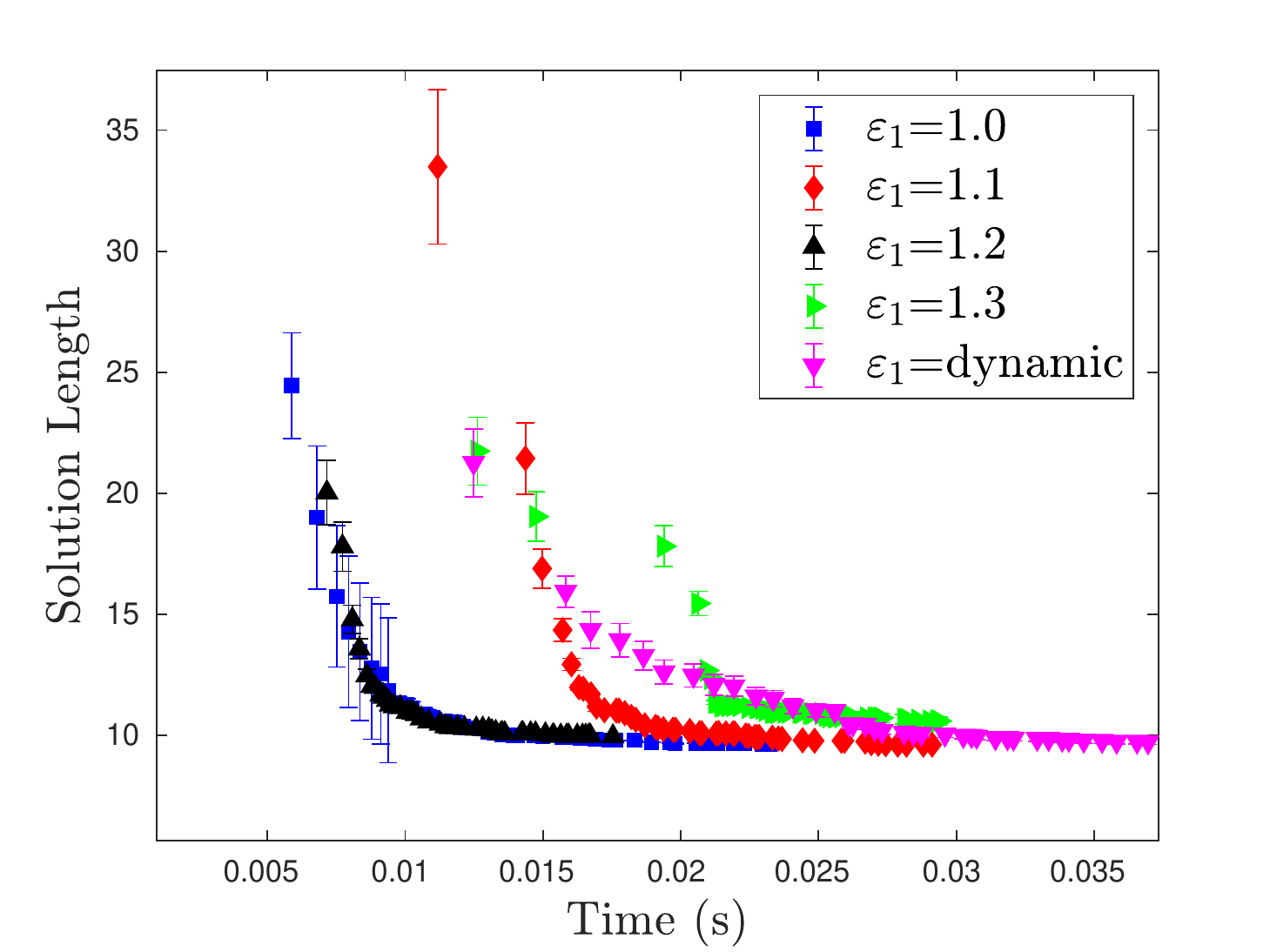}
		\caption{Solution Length}
	\end{subfigure}
	\caption{Inflation factor variation of the RacecarJ problem.}
	\label{lgls:f:racecar_inc_inflation}
\end{figure*}

\paragraph{Truncation Factor Variation:}

The truncation factor determines when to truncate the inconsistency propagation given the changes in the graph. It truncates the inconsistency propagation as soon as the current solution is guaranteed to be bounded suboptimal, saving computational resources until the samples are drawn, which could potentially improve the current solution significantly. 
It saves both the edge evaluation and the vertex expansion times; 
however, as a result of ignoring samples that do not improve the solution by a certain margin, the solution may not converge to the optimal solution. 
To address this issue, we dynamically reduce the truncation factor as a function of the number of vertices, i.e., $\varepsilon_2 = 1 + 10/q$, where $q$ is the number of vertices. 
As $\varepsilon_2$ approaches 1, the solution asymptotically converges to the optimal solution.
Figures~\ref{lgls:f:piano_inc_truncation}, \ref{lgls:f:pr2_inc_truncation}, and
\ref{lgls:f:racecar_inc_truncation} show the total edge evaluation time and the total vertex expansion time accumulated as the number of vertices in the current graph for the piano movers' problem, the PR2 robotic arm problem, and the non-holonomic racecar problem, respectively. 
In all three experiments, having a truncation factor greater than 1 helped find the first solution faster. 
Having a dynamic truncation factor improved the convergence in the Piano Movers' problem and the PR2 robotic arm problem.

\begin{figure*}[ht]
	\centering
	\begin{subfigure}{0.32\textwidth}
		\includegraphics[width=\myLineScale\linewidth]{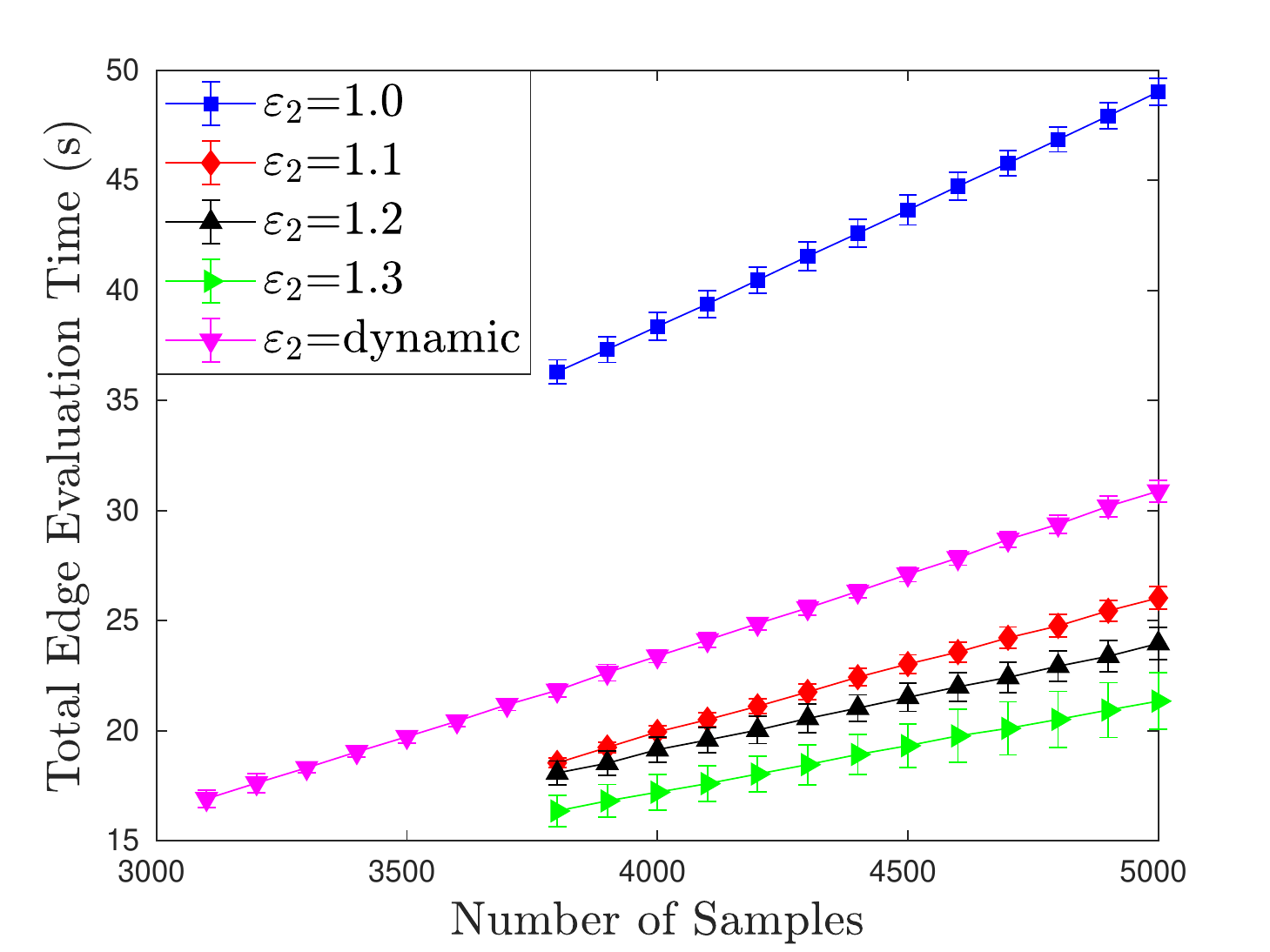}
		\caption{Edge Evaluation}
	\end{subfigure}
	\begin{subfigure}{0.32\textwidth}
		\includegraphics[width=\myLineScale\linewidth]{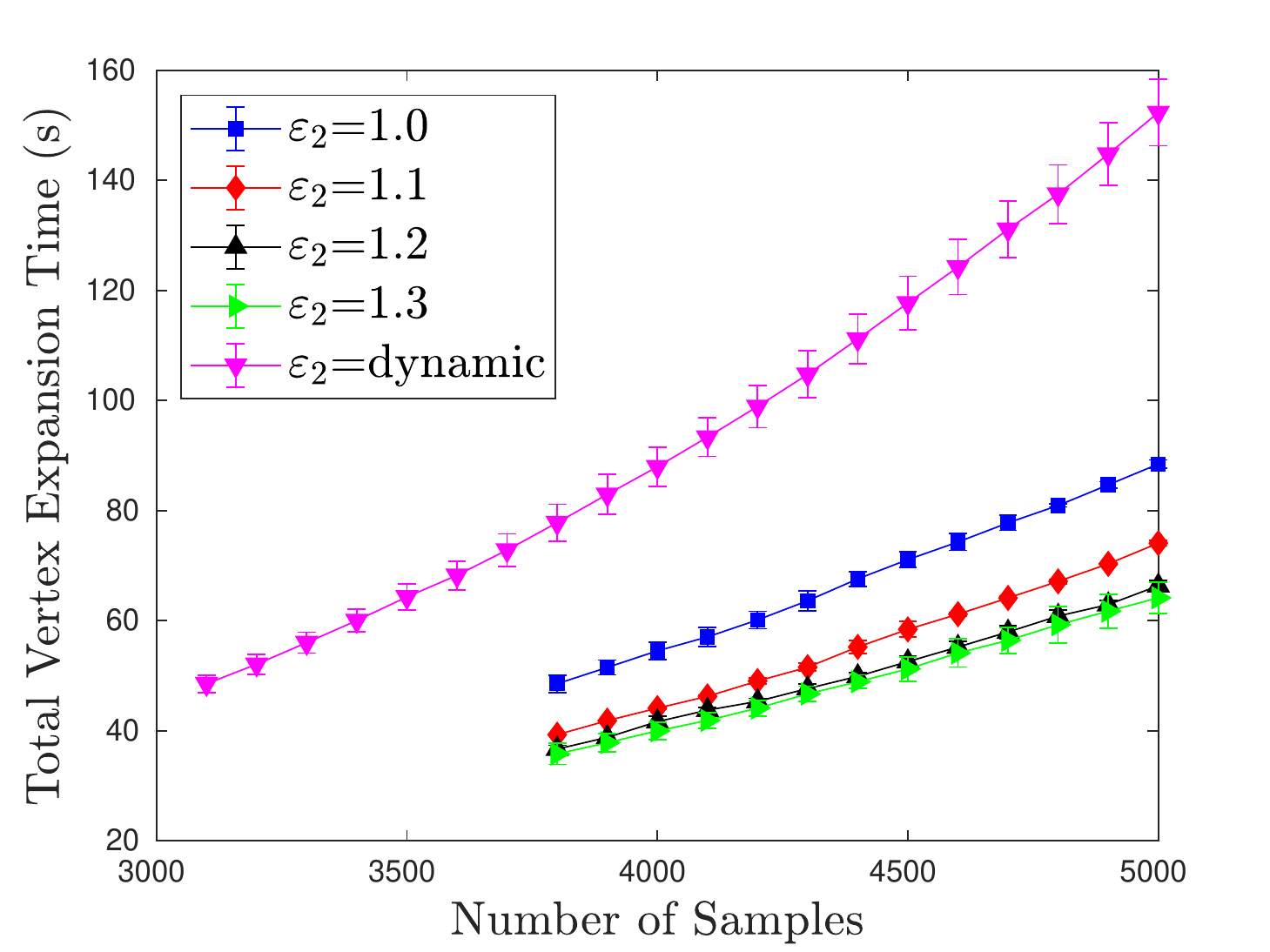}
		\caption{Vertex Expansion}
	\end{subfigure}
	\begin{subfigure}{0.32\textwidth}
		\includegraphics[width=\myLineScale\linewidth]{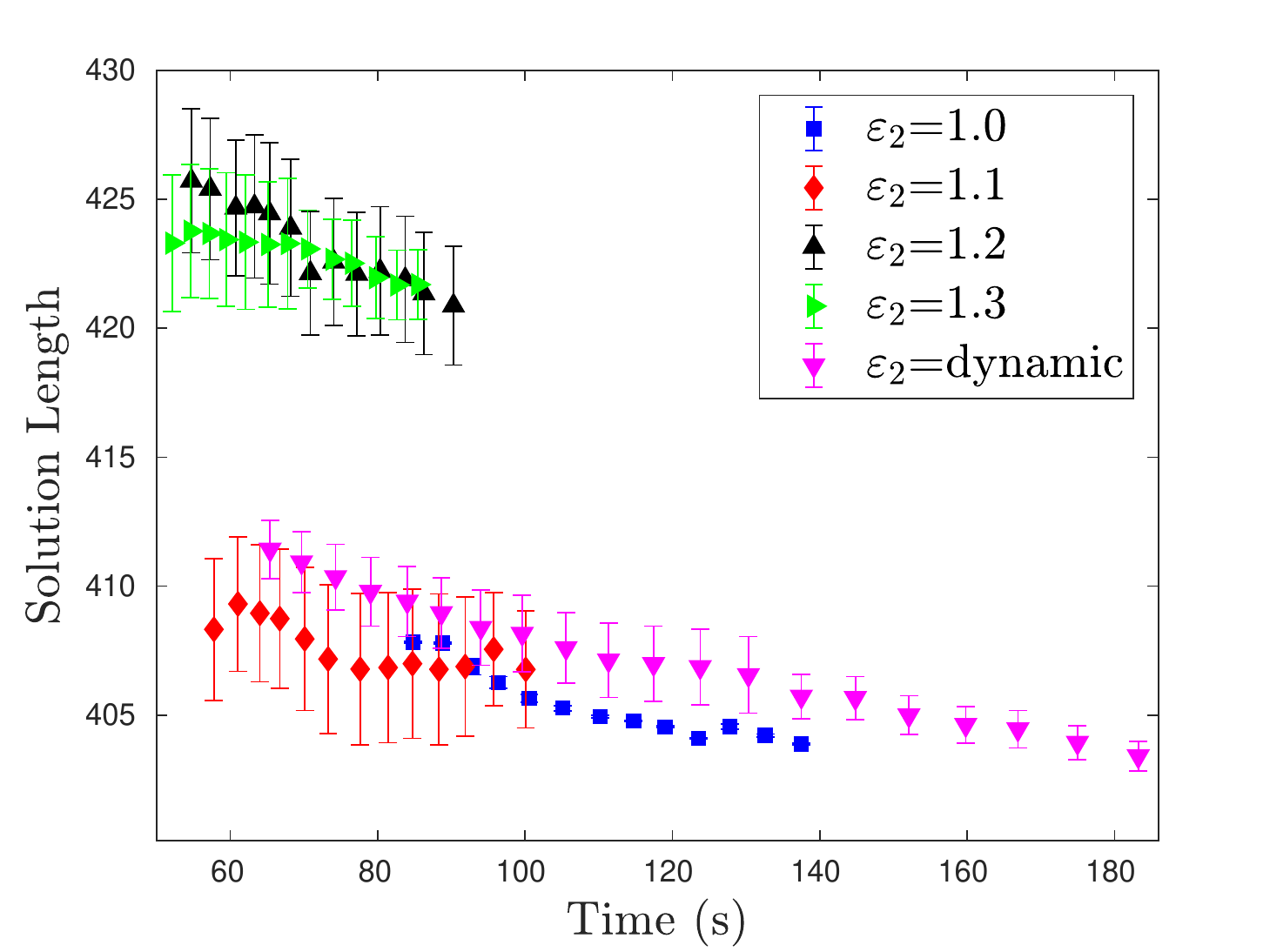}
		\caption{Solution Length}
	\end{subfigure}
	\caption{Truncation factor variation of the Piano Movers' problem.}
	\label{lgls:f:piano_inc_truncation}
\end{figure*}

\begin{figure*}[ht]
	\centering
	\begin{subfigure}{0.32\textwidth}
		\includegraphics[width=\myLineScale\linewidth]{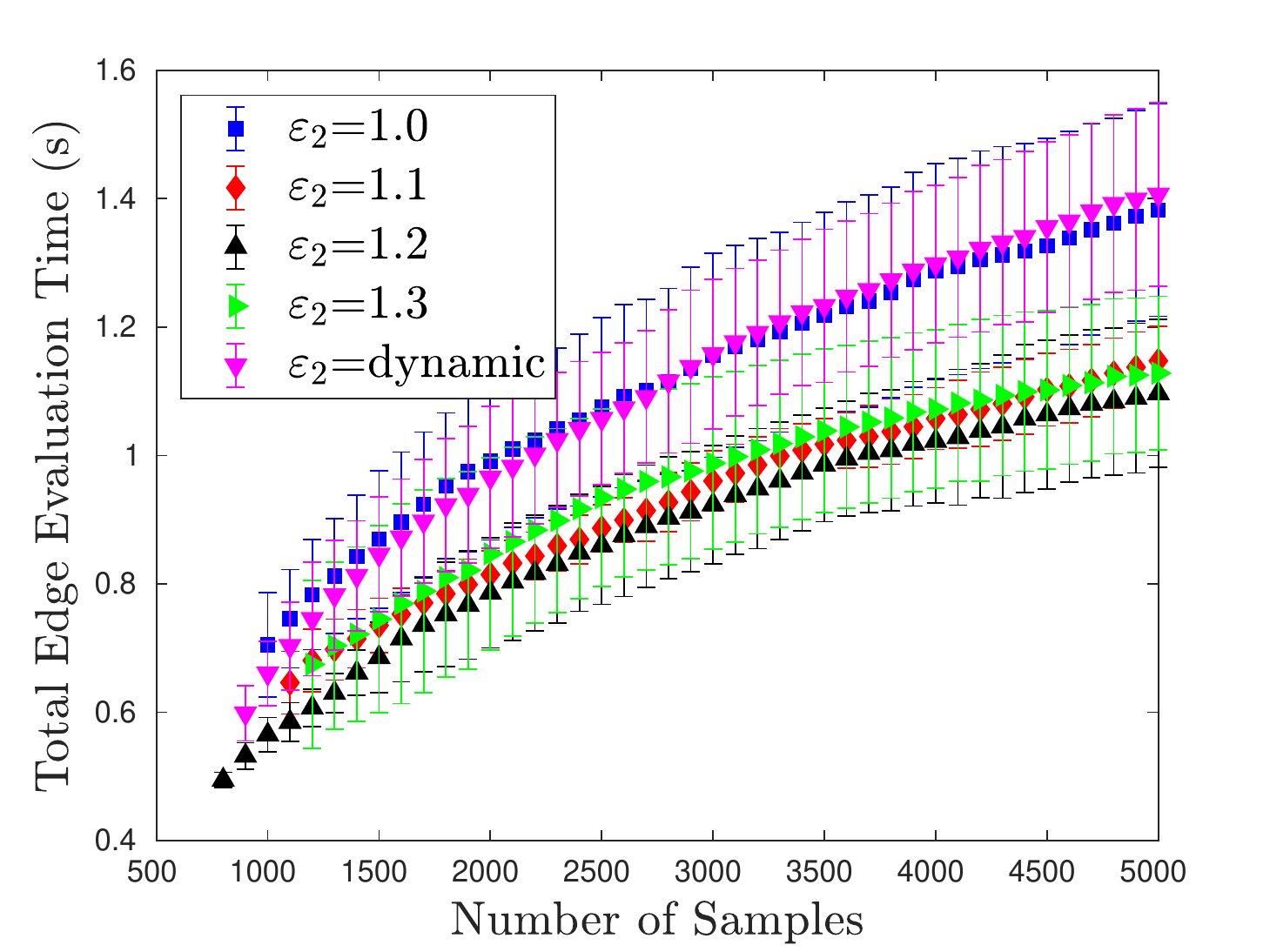}
		\caption{Edge Evaluation}
	\end{subfigure}
	\begin{subfigure}{0.32\textwidth}
		\includegraphics[width=\myLineScale\linewidth]{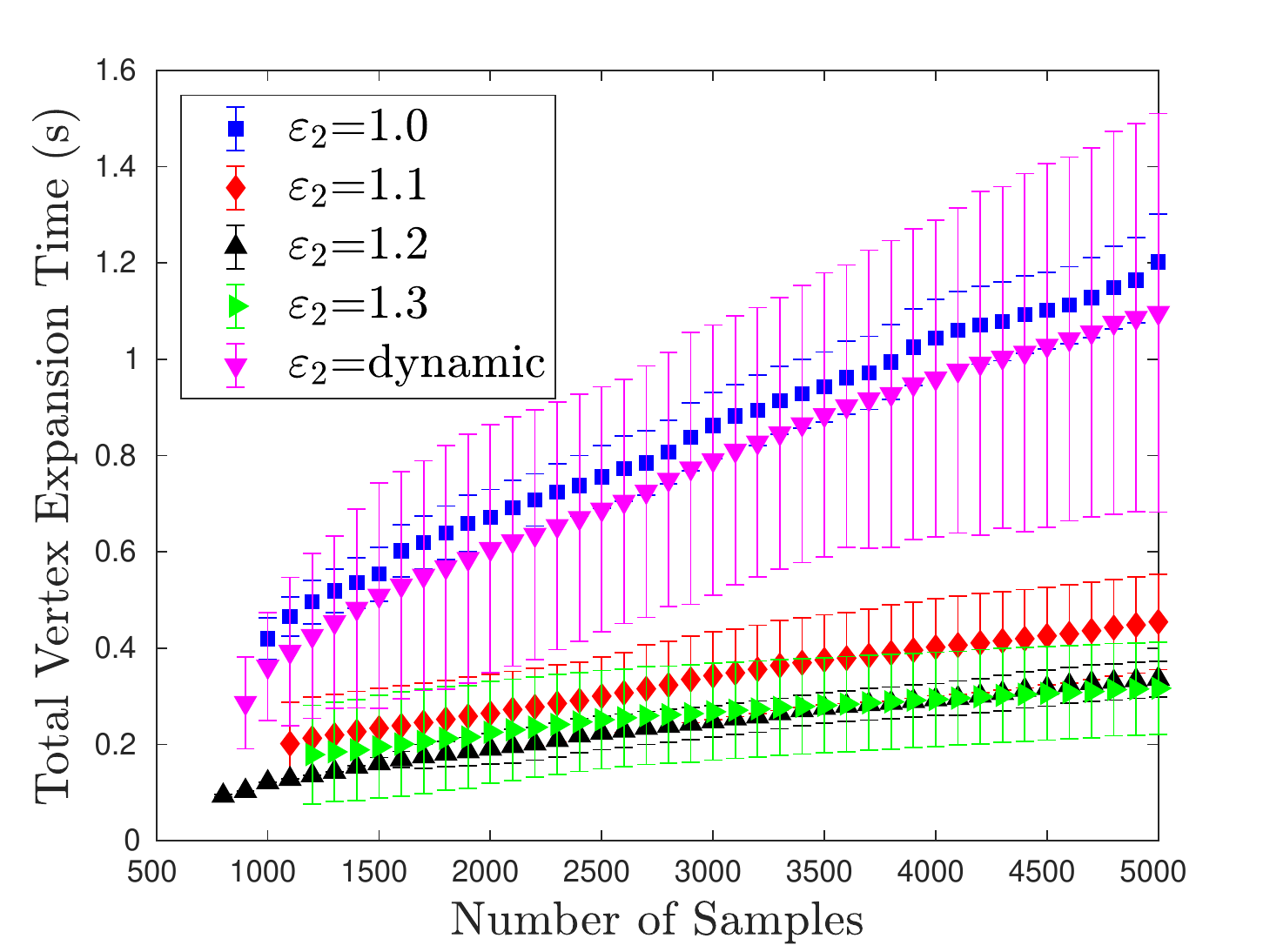}
		\caption{Vertex Expansion}
	\end{subfigure}
	\begin{subfigure}{0.32\textwidth}
		\includegraphics[width=\myLineScale\linewidth]{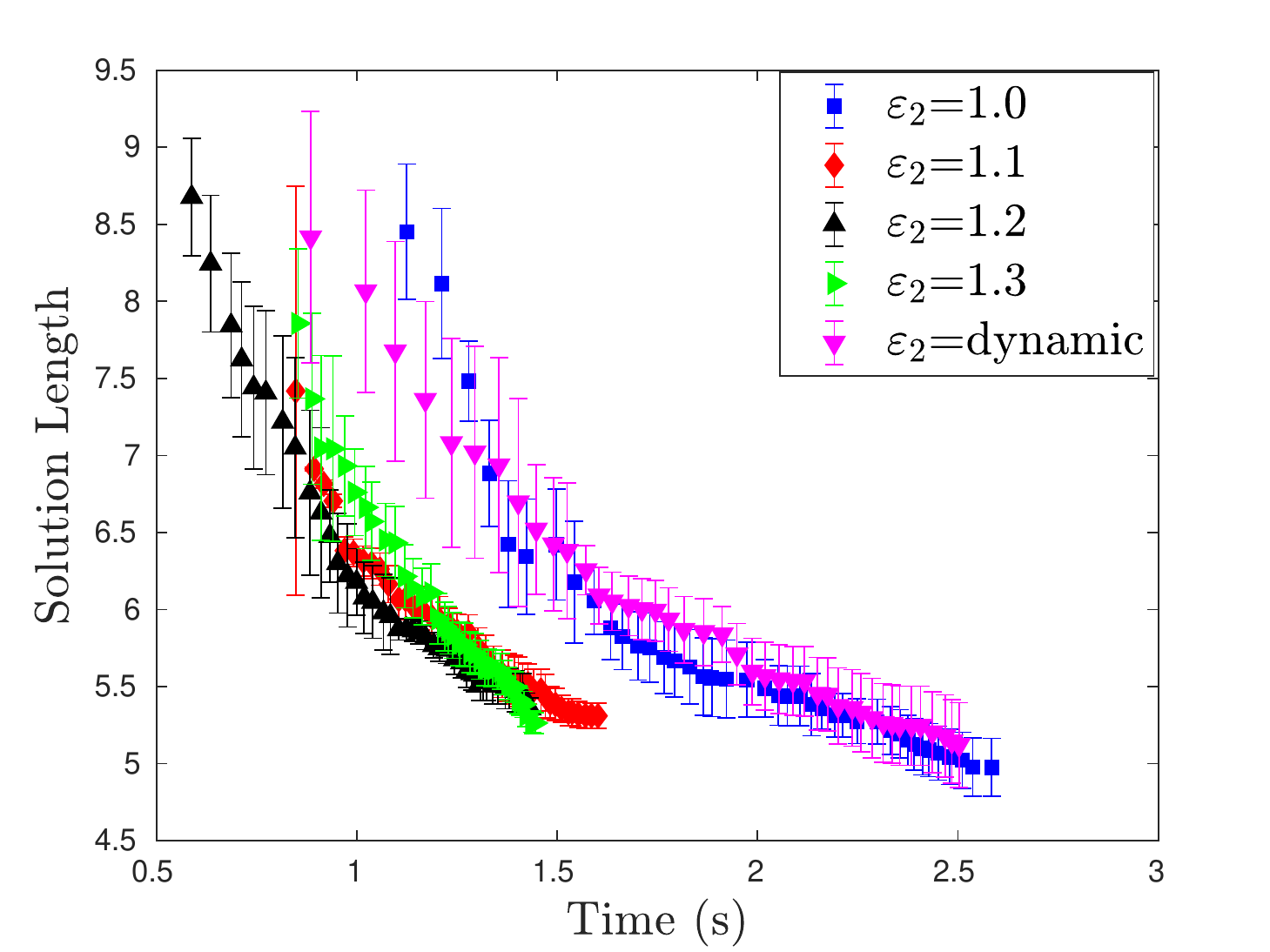}
		\caption{Solution Length}
	\end{subfigure}
	\caption{Truncation factor variation of the PR2 robotic arm problem.}
	\label{lgls:f:pr2_inc_truncation}
\end{figure*}

\begin{figure*}[ht]
	\centering
	\begin{subfigure}{0.32\textwidth}
		\includegraphics[width=\myLineScale\linewidth]{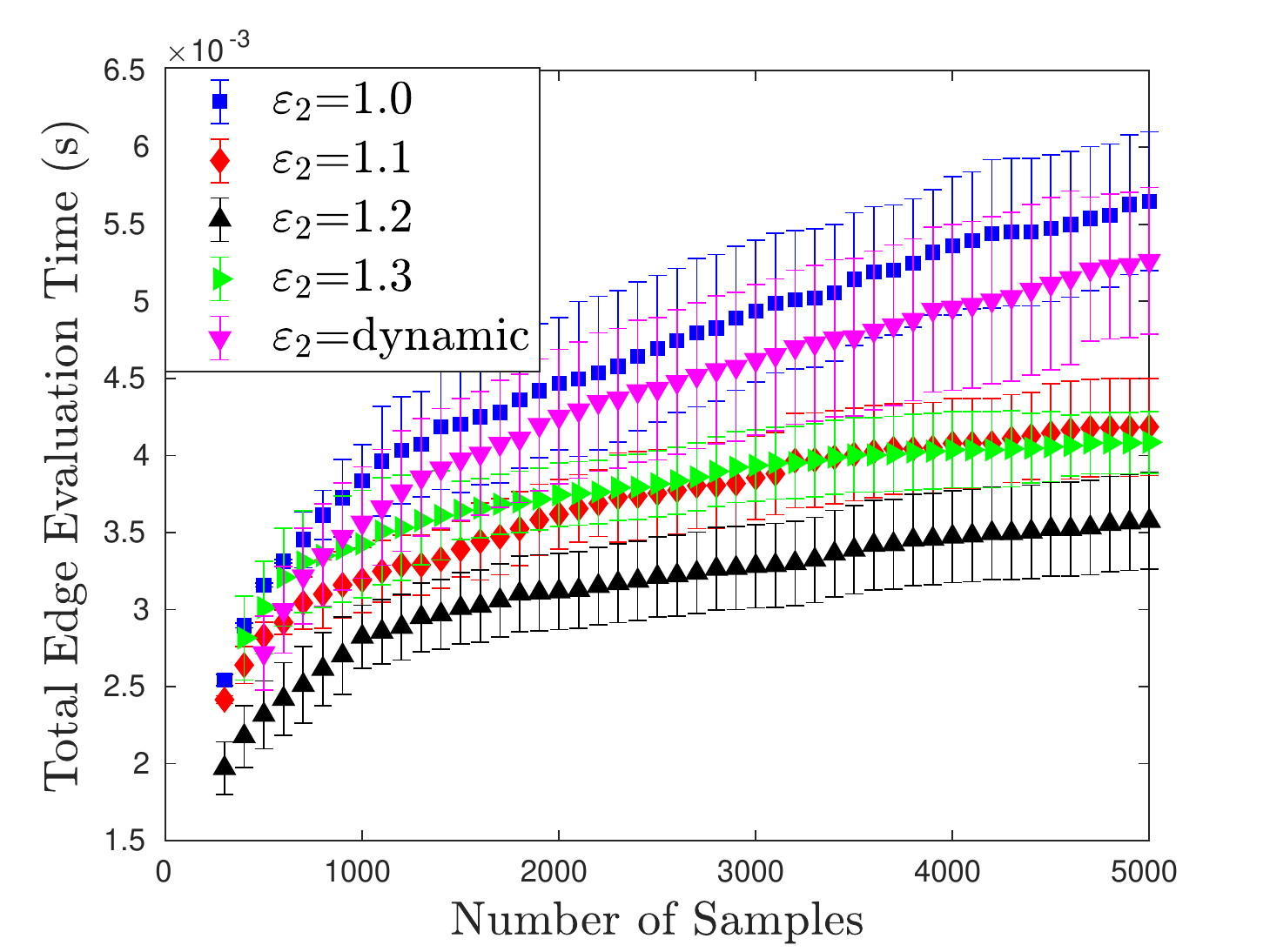}
		\caption{Edge Evaluation}
	\end{subfigure}
	\begin{subfigure}{0.32\textwidth}
		\includegraphics[width=\myLineScale\linewidth]{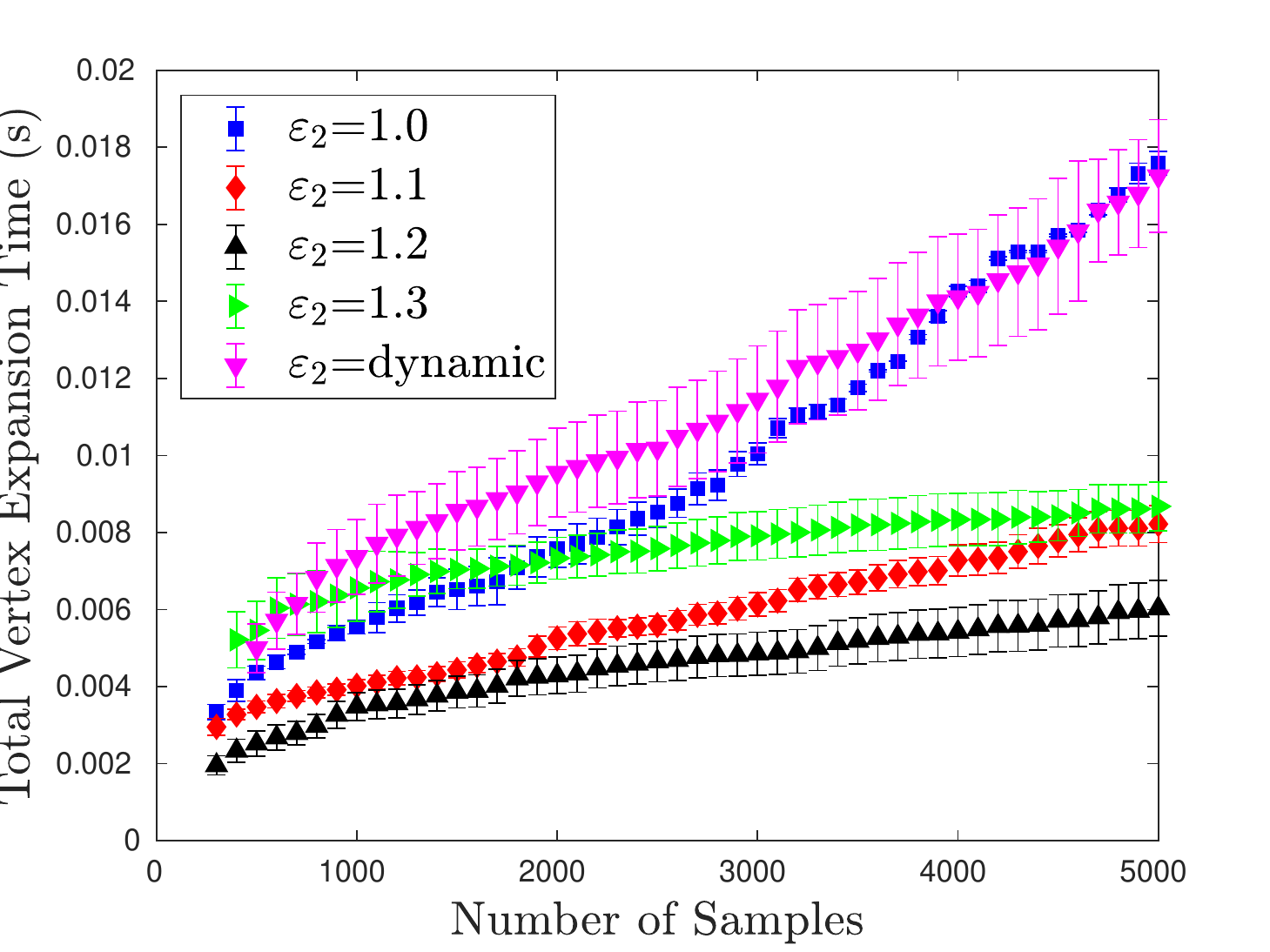}
		\caption{Vertex Expansion}
	\end{subfigure}
	\begin{subfigure}{0.32\textwidth}
		\includegraphics[width=\myLineScale\linewidth]{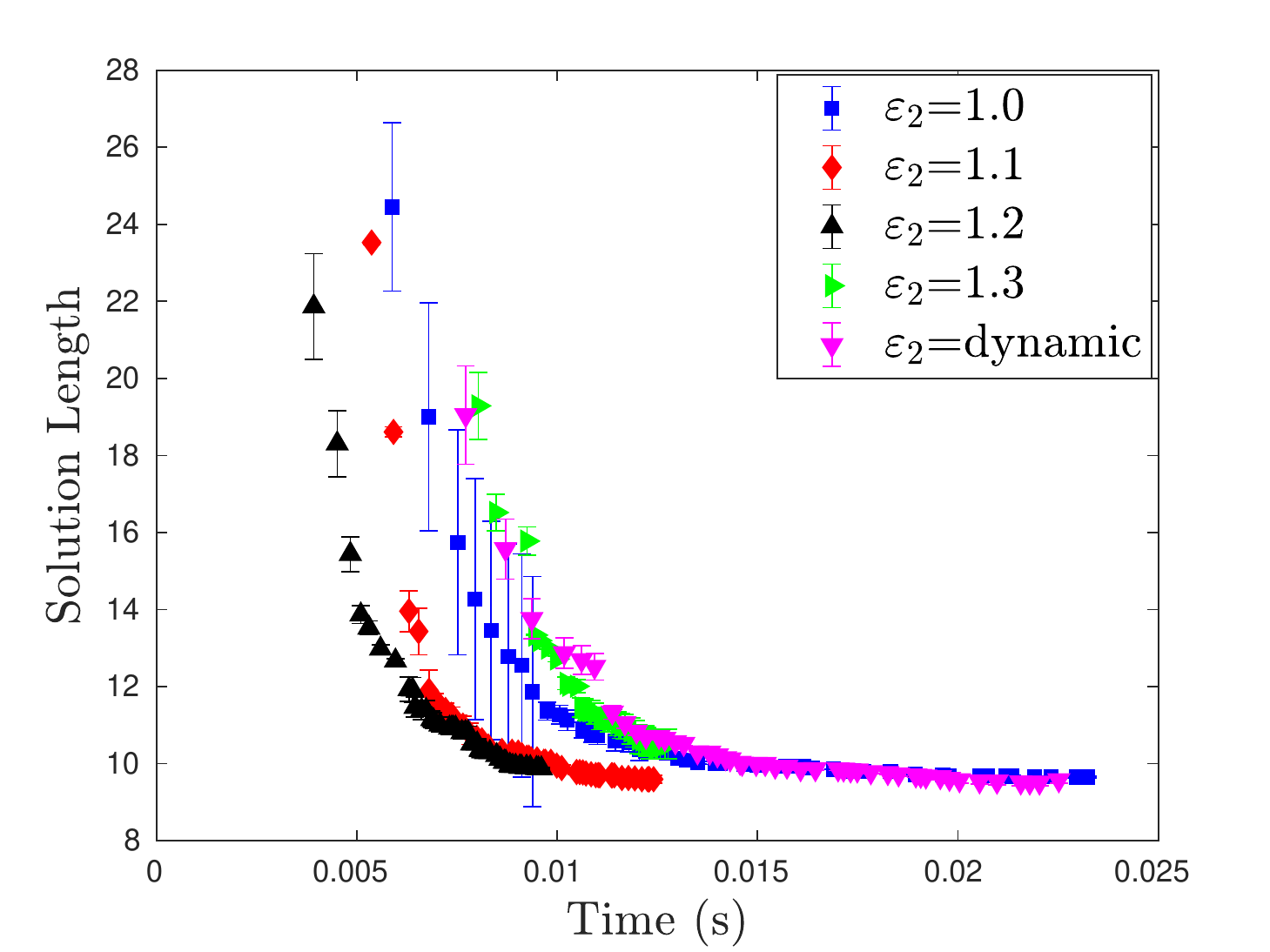}
		\caption{Solution Length}
	\end{subfigure}
	\caption{Truncation factor variation of the RacecarJ problem.}
	\label{lgls:f:racecar_inc_truncation}
\end{figure*}

\paragraph{Comparison between LPA*, TLPA*, L-GLS, and B-LGLS:}

Having examined the effects of the lookahead, the inflation factor, and the truncation factor, in this section we compare the performance of our proposed lazy incremental search algorithms to their non-lazy incremental search counterparts. 
We compared LPA*, TLPA*, L-GLS, and B-LGLS in the same experiment sets.
All of these incremental search algorithms improve the solution as the graph becomes denser as we add 100 samples per iteration.
Figures~\ref{lgls:f:piano_inc_comp}, \ref{lgls:f:pr2_inc_comp} and \ref{lgls:f:racecar_inc_comp} show the total time to evaluate edges, the total time to expand vertices, and the solution length as a function of time, respectively, for the Piano Movers' problem, the PR2 robotic arm problem, and the non-holonomic racecar problem. 
\begin{figure*}[ht]
	\centering
	\begin{subfigure}{0.32\textwidth}
		\includegraphics[width=\myLineScale\linewidth]{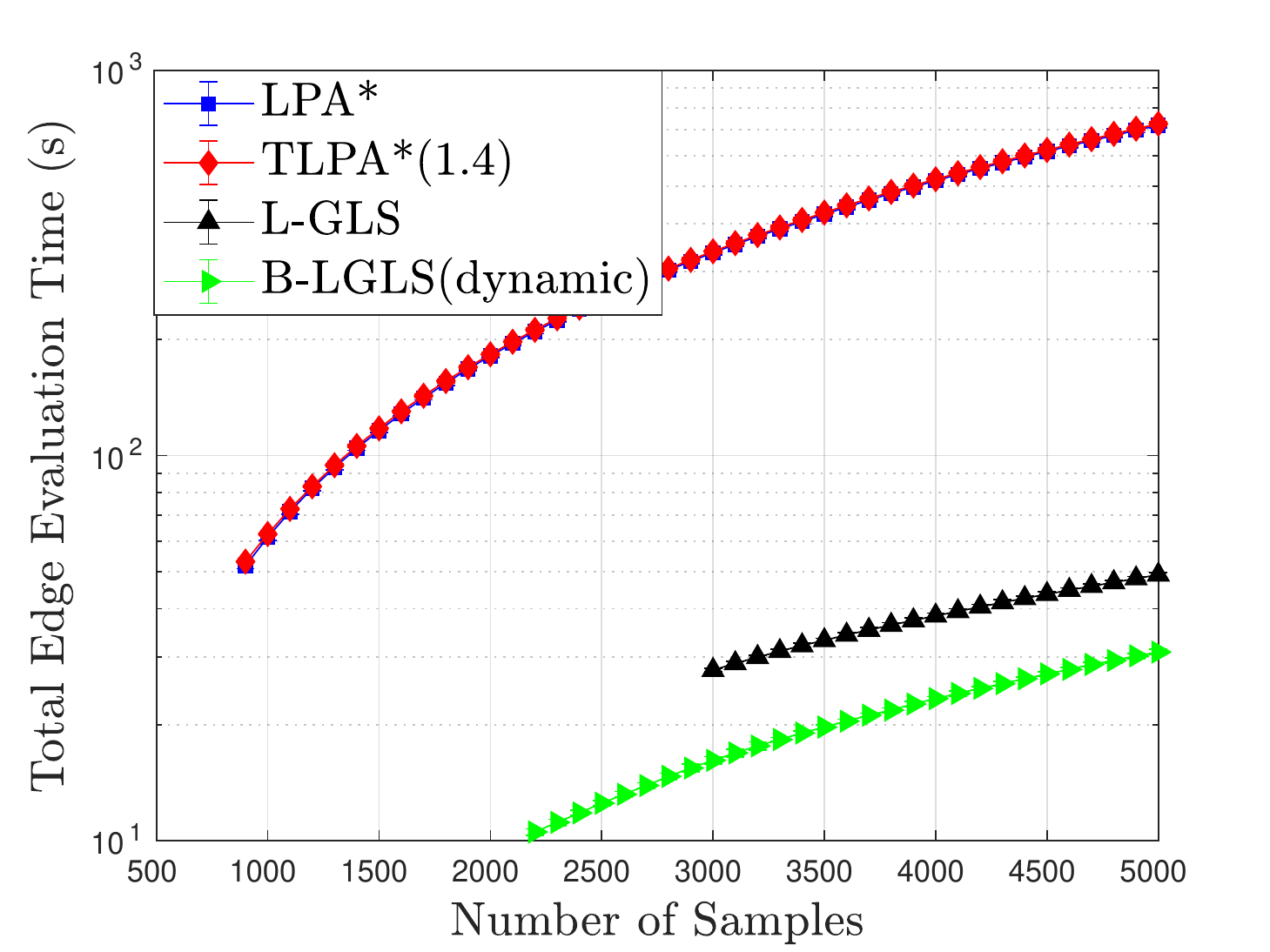}
		\caption{Edge Evaluation}
	\end{subfigure}
	\begin{subfigure}{0.32\textwidth}
		\includegraphics[width=\myLineScale\linewidth]{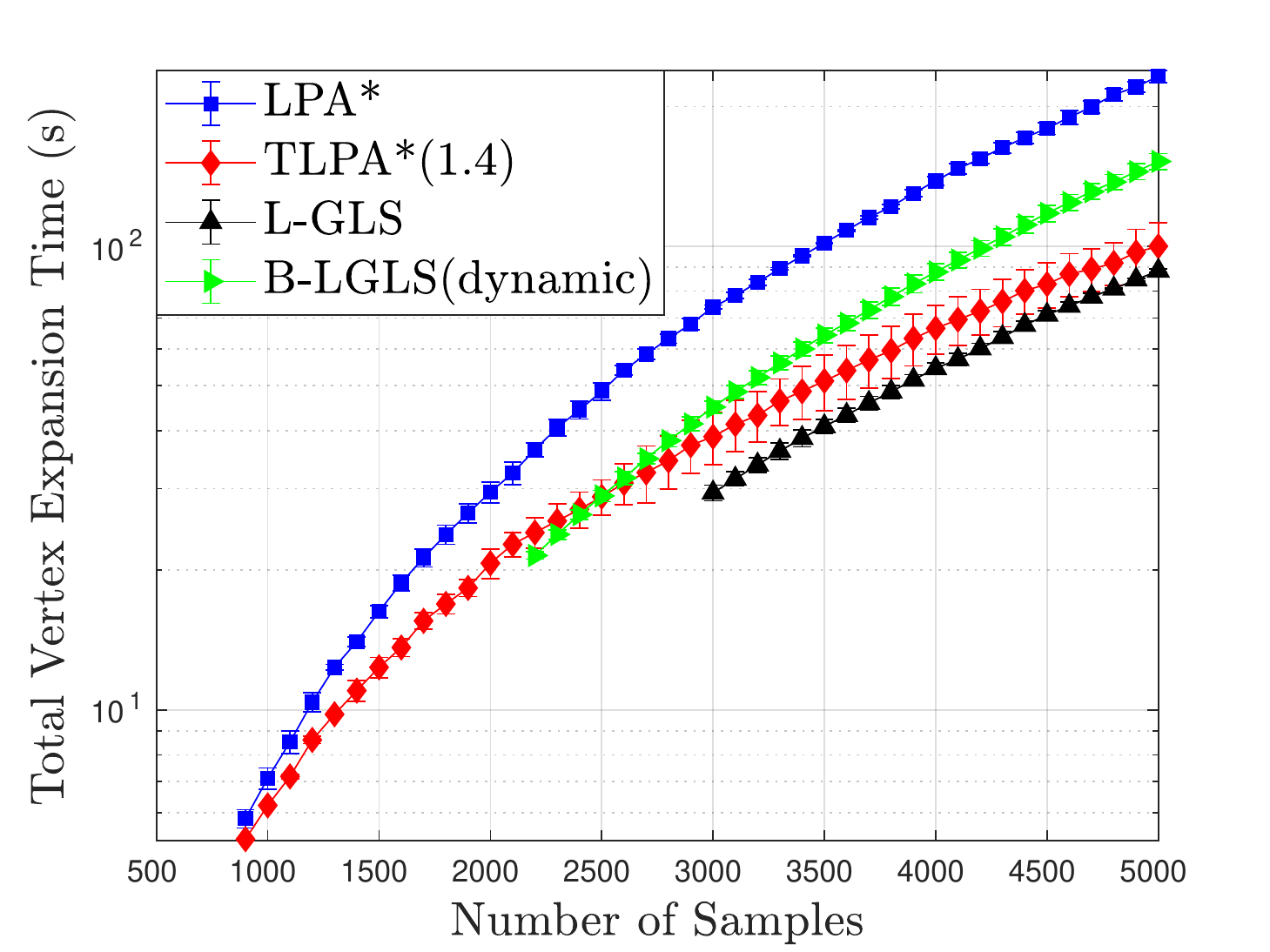}
		\caption{Vertex Expansion}
	\end{subfigure}
	\begin{subfigure}{0.32\textwidth}
		\includegraphics[width=\myLineScale\linewidth]{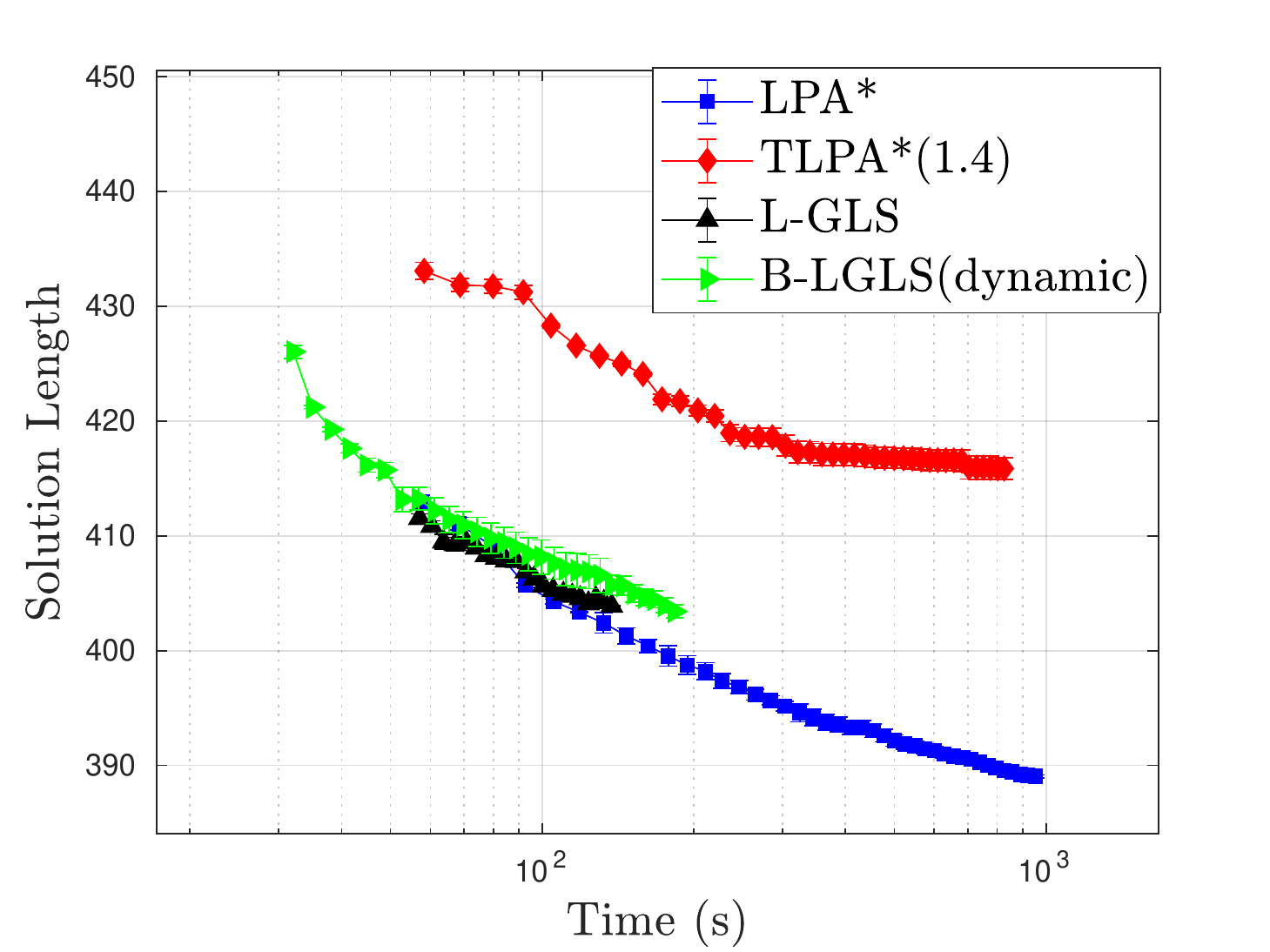}
		\caption{Solution Length}
	\end{subfigure}
	\caption{Comparison between LPA*, TLPA*, B-LGLS for the Piano Movers' problem.}
	\label{lgls:f:piano_inc_comp}
\end{figure*}

\begin{figure*}[ht]
	\centering
	\begin{subfigure}{0.32\textwidth}
		\includegraphics[width=\myLineScale\linewidth]{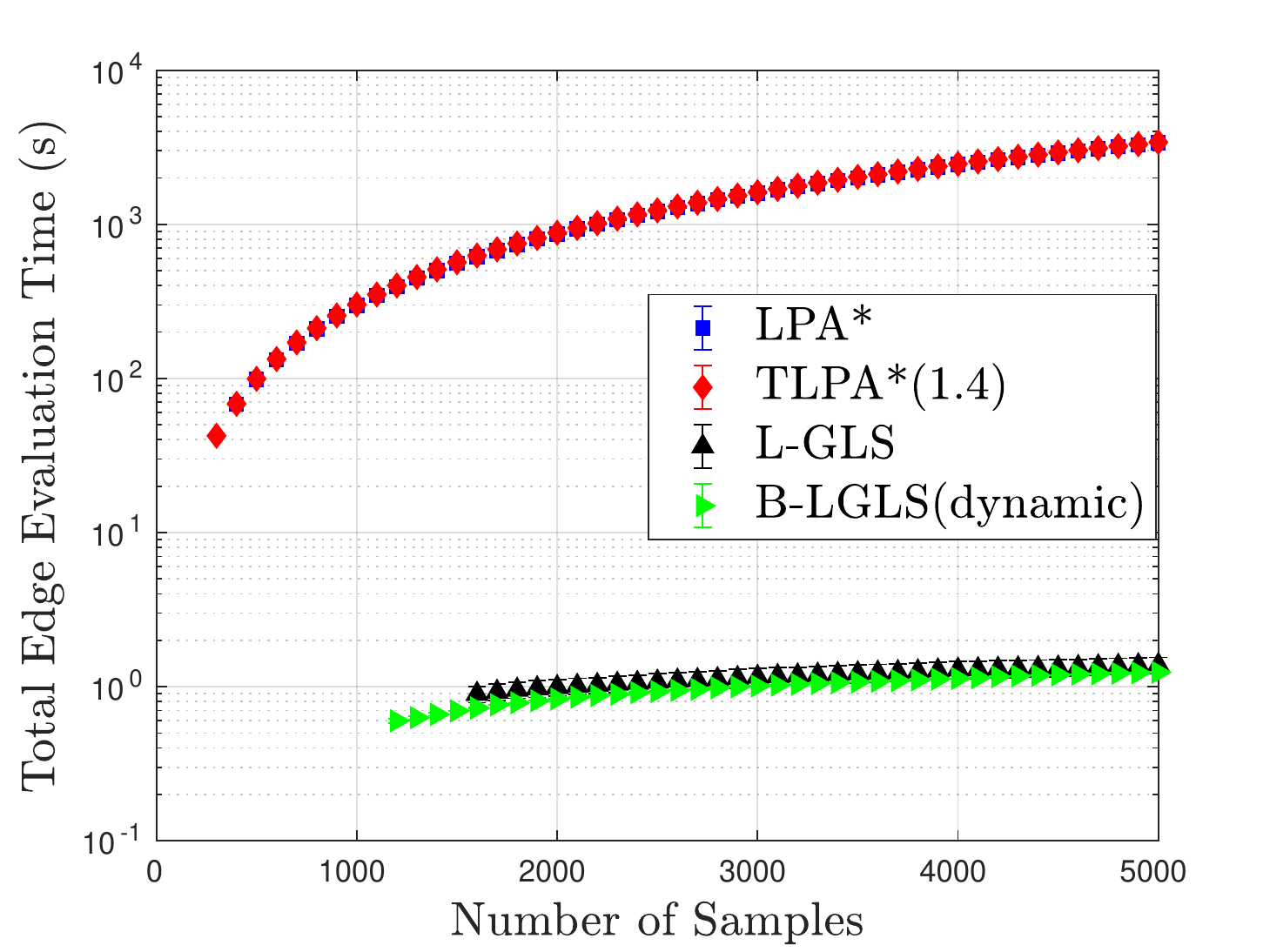}
		\caption{Edge Evaluation}
	\end{subfigure}
	\begin{subfigure}{0.32\textwidth}
		\includegraphics[width=\myLineScale\linewidth]{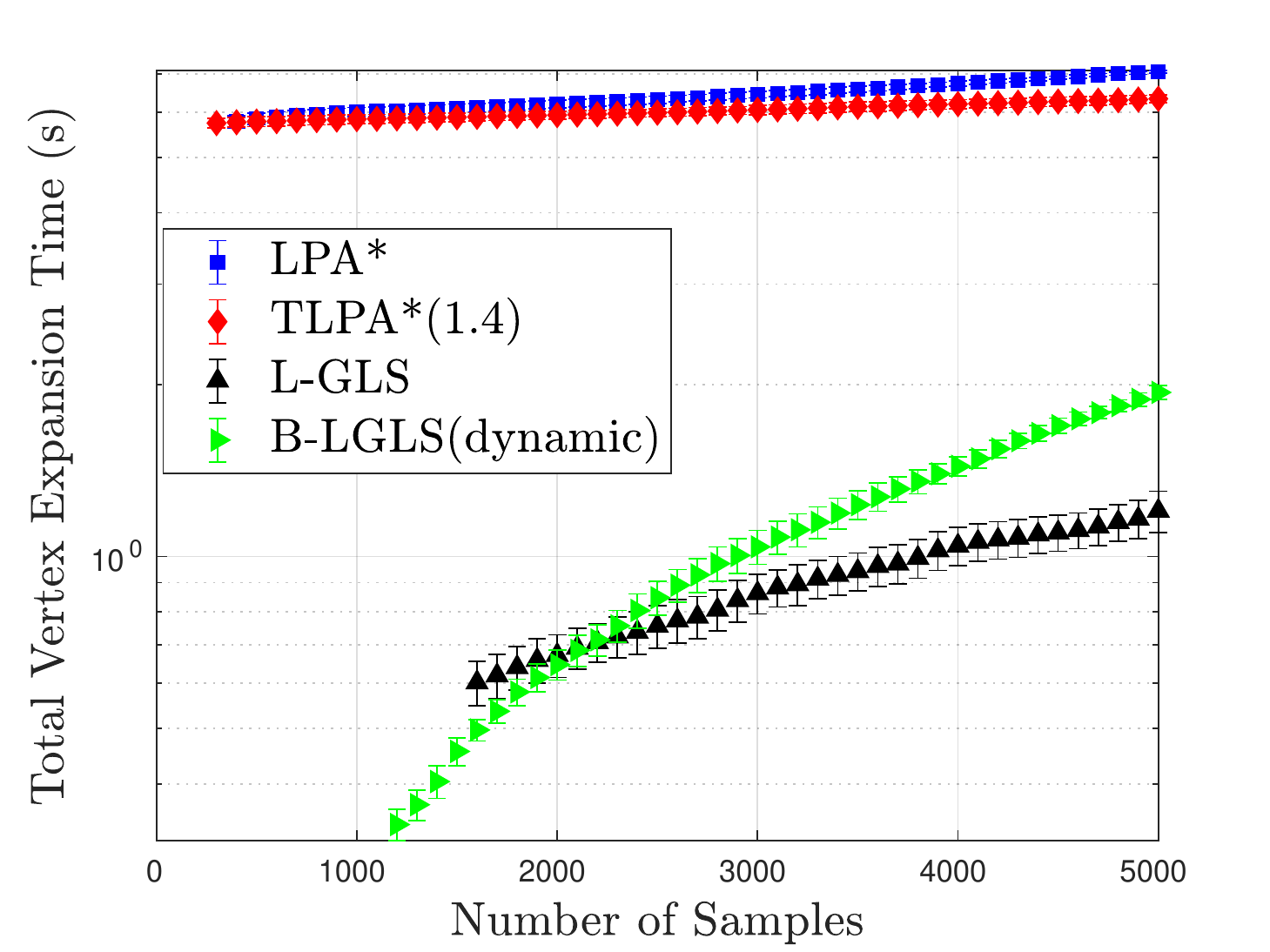}
		\caption{Vertex Expansion}
	\end{subfigure}
	\begin{subfigure}{0.32\textwidth}
		\includegraphics[width=\myLineScale\linewidth]{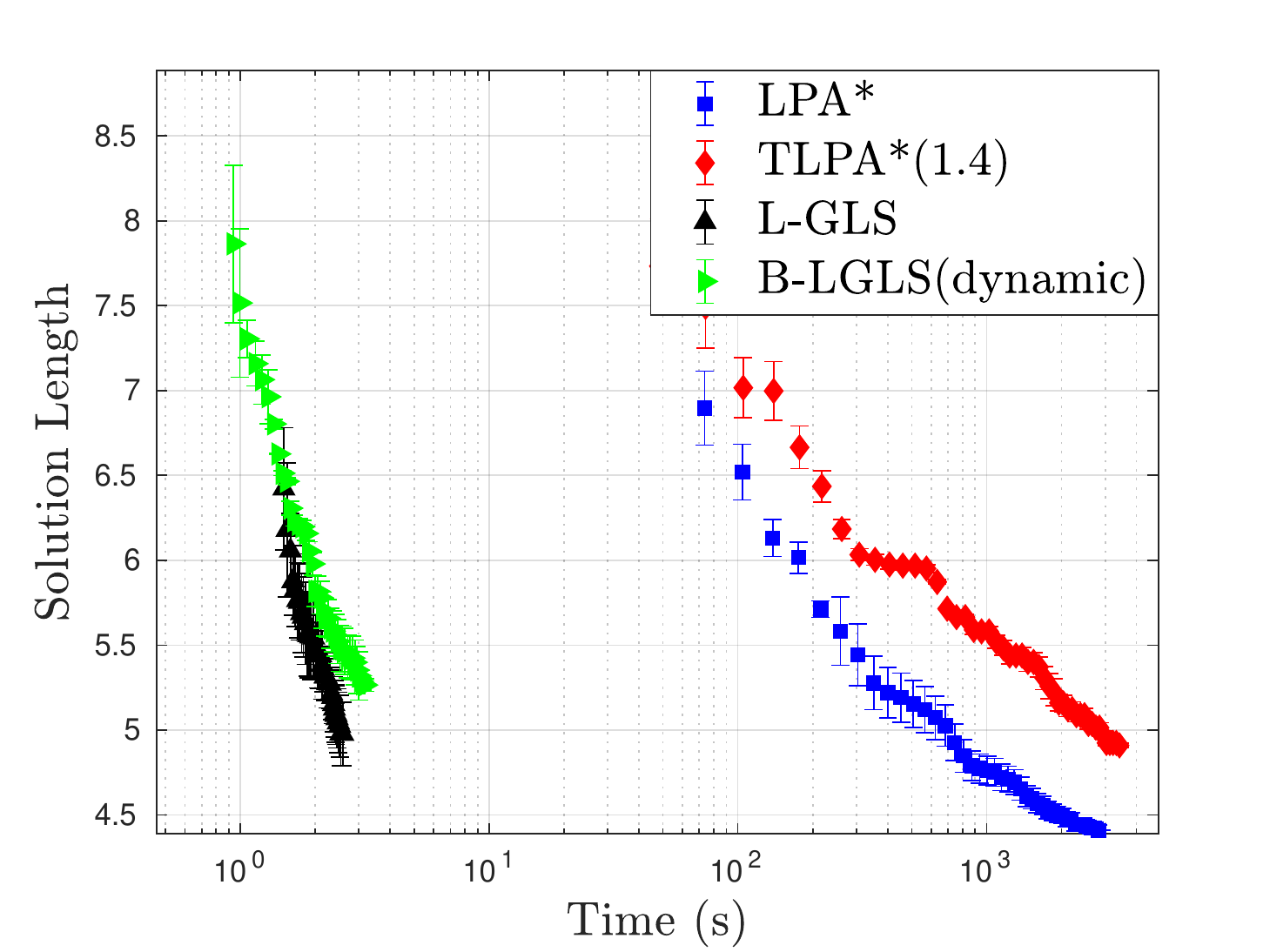}
		\caption{Solution Length}
	\end{subfigure}
	\caption{Comparison between LPA*, TLPA*, B-LGLS for the PR2 robotic arm problem.}
	\label{lgls:f:pr2_inc_comp}
\end{figure*}

\begin{figure*}[ht]
	\centering
	\begin{subfigure}{0.32\textwidth}
		\includegraphics[width=\myLineScale\linewidth]{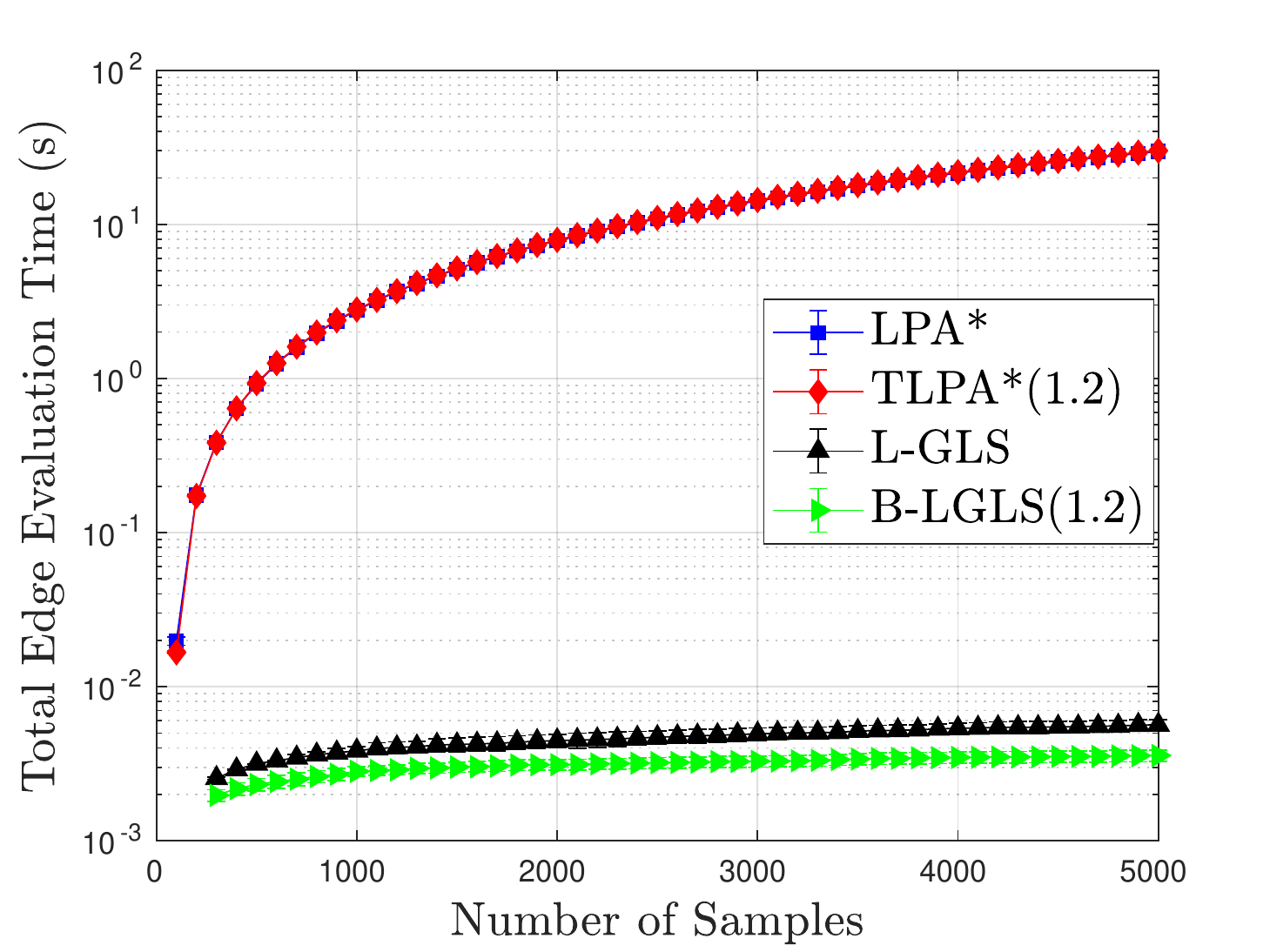}
		\caption{Edge Evaluation}
	\end{subfigure}
	\begin{subfigure}{0.32\textwidth}
		\includegraphics[width=\myLineScale\linewidth]{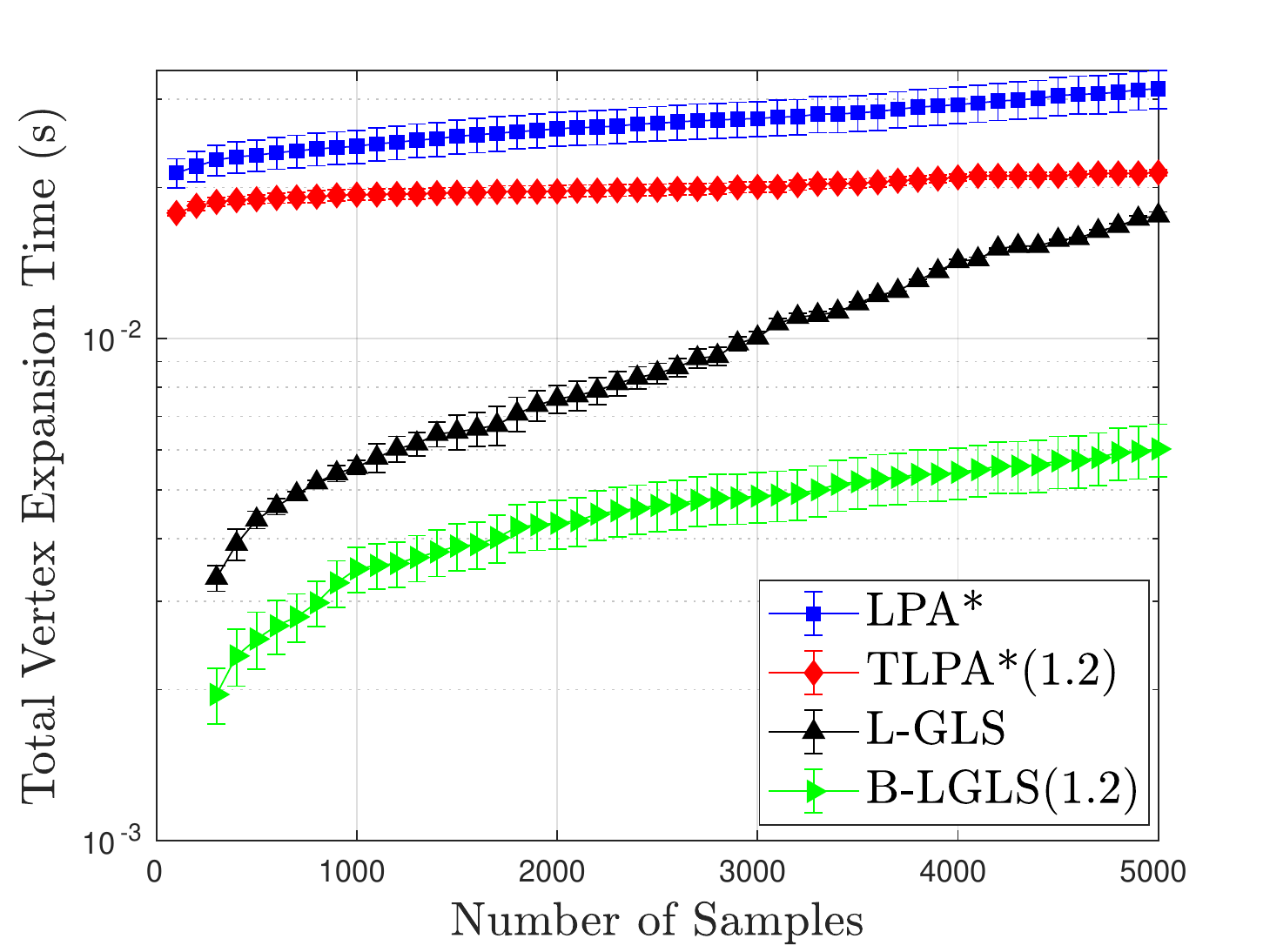}
		\caption{Vertex Expansion}
	\end{subfigure}
	\begin{subfigure}{0.32\textwidth}
		\includegraphics[width=\myLineScale\linewidth]{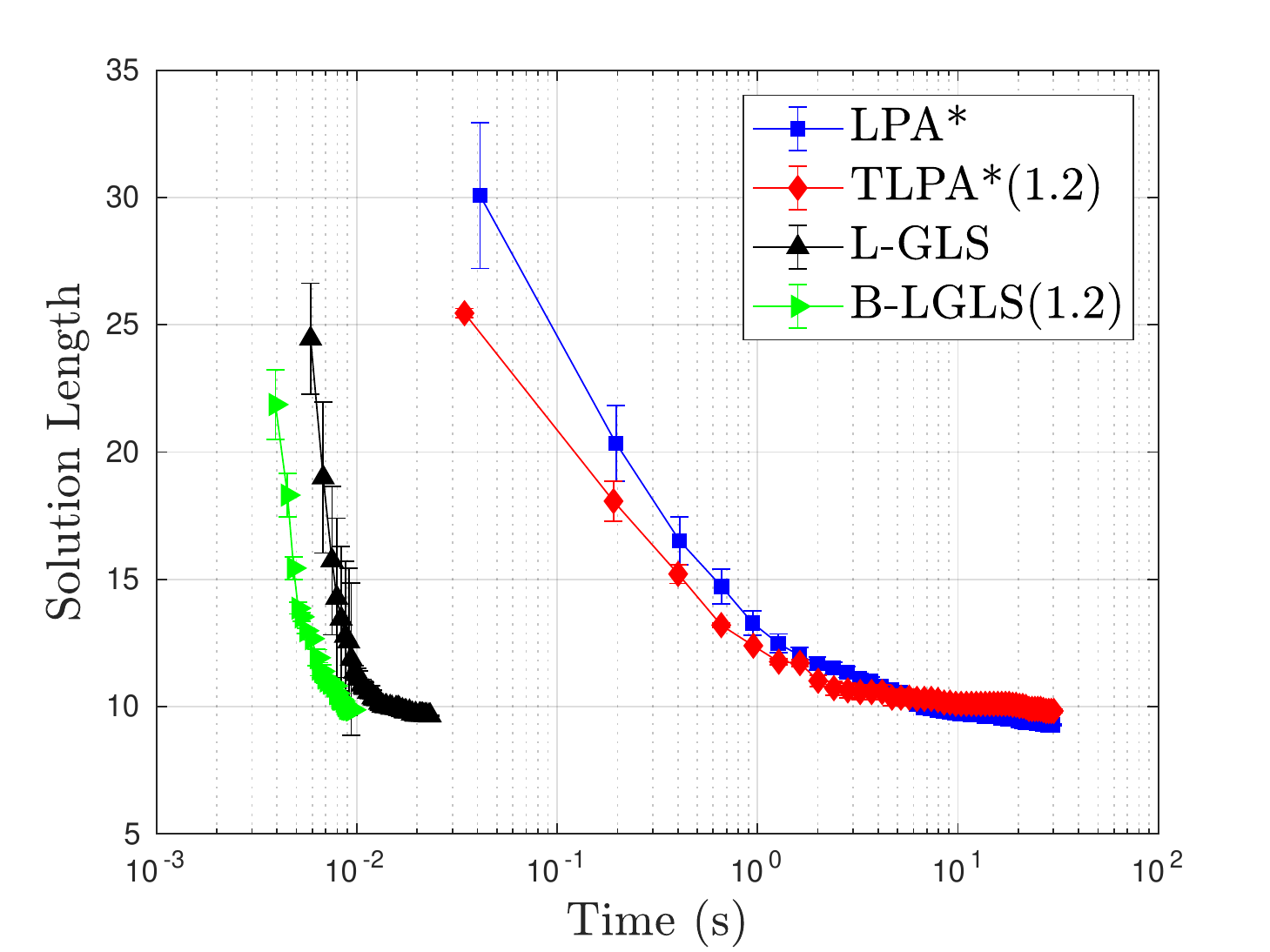}
		\caption{Solution Length}
	\end{subfigure}
	\caption{Comparison between LPA*, TLPA*, B-LGLS for the RacecarJ problem.}
	\label{lgls:f:racecar_inc_comp}
\end{figure*}

LPA* and TLPA* do not use heuristic edge values when they propagate the cost inconsistencies, incurring a significant number of edge evaluations. The proposed lazy incremental search algorithms save time when evaluating edges per search iteration more than one order of magnitude for the Piano Movers' problem and more than two orders of magnitude for the PR2 robotic arm problem and the non-holonomic racecar problem.
These savings resulted in a faster solution convergence rate.  
B-LGLS finds the first solution faster than L-GLS, guided by the inflated heuristic edge weight. 
As the inflation factor and the truncation factor asymptotically approaches 1, B-LGLS also asymptotically converges to the optimal solution. 

For the remaining part of the paper, we describe non-stationary versions of the proposed lazy incremental search algorithms for replanning when the start vertices change. 
We use D*-Lite and Truncated D*-Lite algorithms to find a solution quickly for a mobile robot with expensive edge evaluations and compare the results.

\subsection{Non-Stationary Query Experiment} \label{sec:dynamic_query_experiment}

In this section, we present numerical simulations of our lazy algorithms and compare them 
to D*-Lite and TD*. 
Similar to the stationary replanning problem instances,
we conduct the experiments in the PR2 simulation environment, the Piano Movers problem, and the RacecarJ environment. 
%
For the first two experiments, we generated a random graph with 5,000 vertices each connected to 20 nearest neighbors, 
and solved the motion planning problem for 100 randomly sampled start and goal configurations similar to the previous stationary case.
However, we now change the environment while the robot is moving towards the goal on the found solution path, and thus force the robot to replan.

\subsubsection{Comparison of the Algorithms.}

For the first experiment, we find the shortest path for a random start and goal configuration in scene 1 (Figure~\ref{lgls:f:piano}(a)) of the Piano Movers' problem.
Once an optimal path is computed, the piano is moved towards the goal for one segments of the optimal path when the environment is changed to the one in scene 2 (Figure~\ref{lgls:f:piano}(b)). 
The piano again traverses one segment of the newly found path and the environment is changed again into the one in the scene 3 (Figure~\ref{lgls:f:piano}(c)), forcing the planner to find a new solution. 
We cycle through the environments once more to obtain a total of six consecutive searches.

We compared the results for four different algorithms, namely, D*-Lite, GD* (1-step lookahead), TD* ($\eps_2=2$), and B-GD* (1-step lookahead, $\eps_1=\sqrt{2}, \eps_2=\sqrt{2}$) for 100 randomly sampled start and goal configurations. 
The average results from the performance comparison of the different algorithms in terms of the number of edge evaluations, vertex expansions, and runtime for the piano movers problem is shown in Figure~\ref{dstar:f:piano}.
As shown in Figure~\ref{dstar:f:piano}, the proposed lazy algorithms significantly reduce the number of edge evaluations. 
Furthermore, the bounded algorithms generally reduce the number of vertex expansions via early termination in case a bounded suboptimal solution is found. 
Thus, the average runtime is also reduced for the generalized algorithms.

\begin{figure*}[ht]
	\centering
	\begin{subfigure}{0.32\textwidth}
		\includegraphics[width=\myLineScale\linewidth]{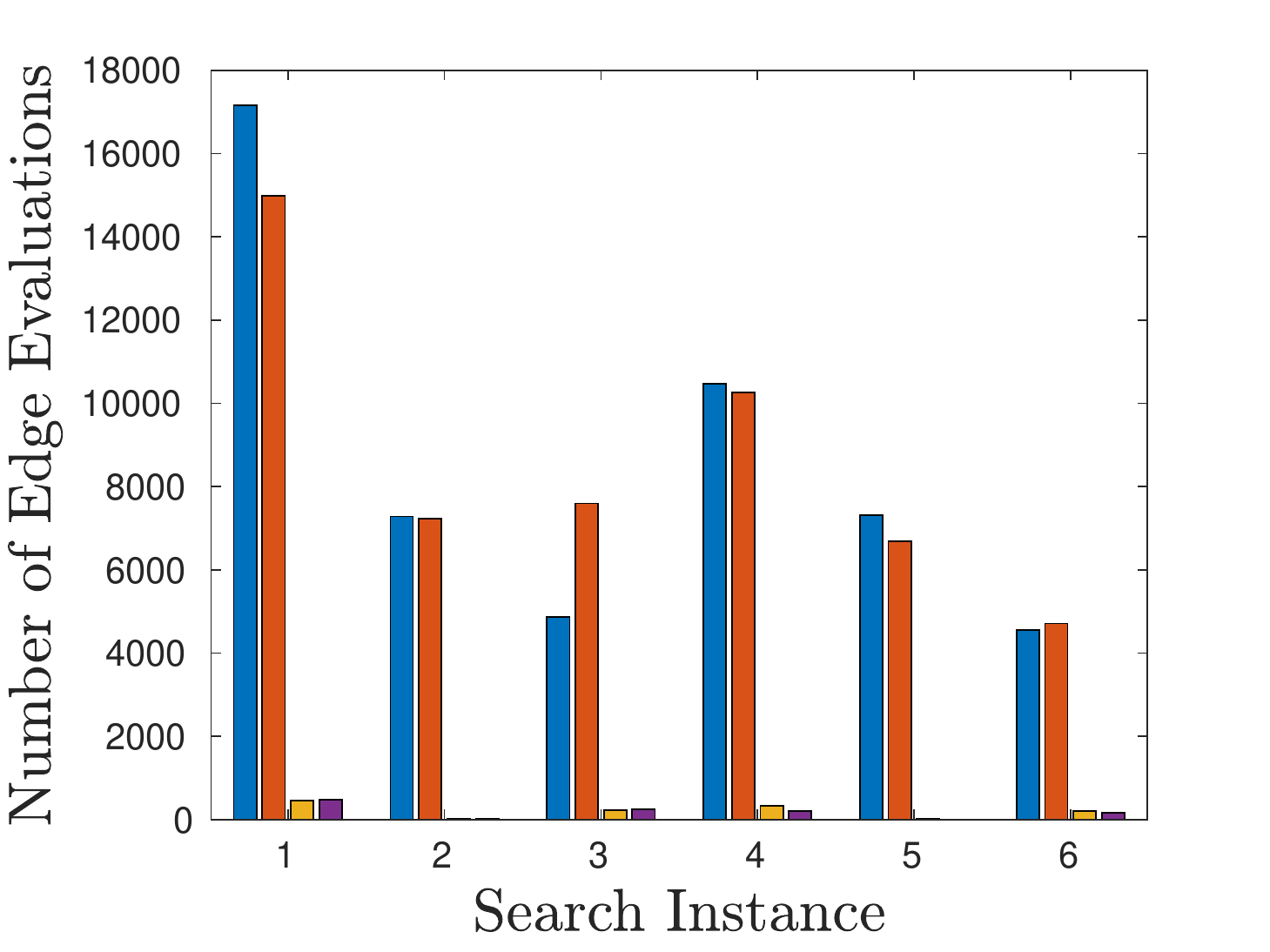}
		\caption{Edge evaluation}
	\end{subfigure}
	\begin{subfigure}{0.32\textwidth}
		\includegraphics[width=\myLineScale\linewidth]{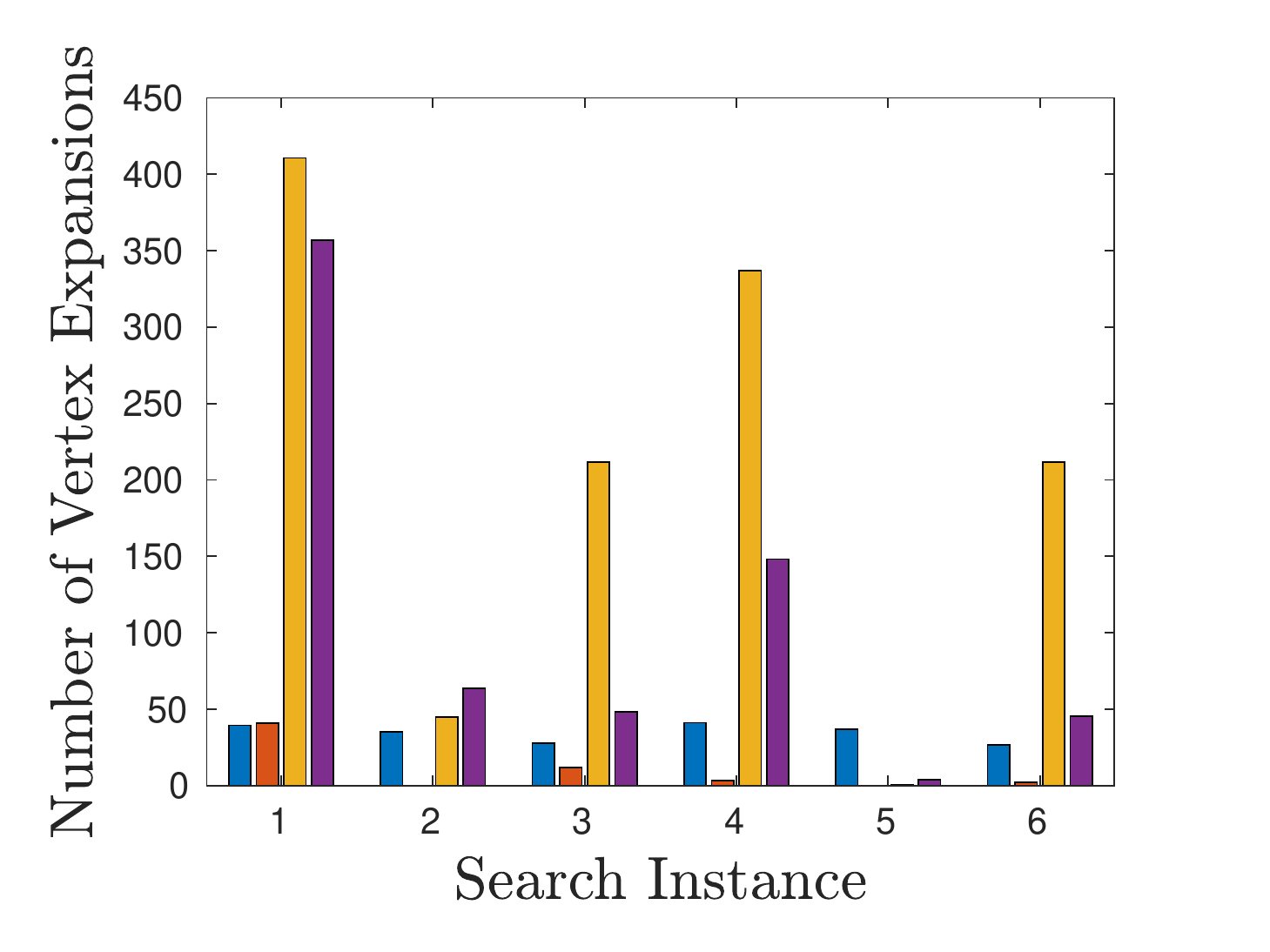}
		\caption{Vertex expansion}
	\end{subfigure}
	\begin{subfigure}{0.32\textwidth}
		\includegraphics[width=\myLineScale\linewidth]{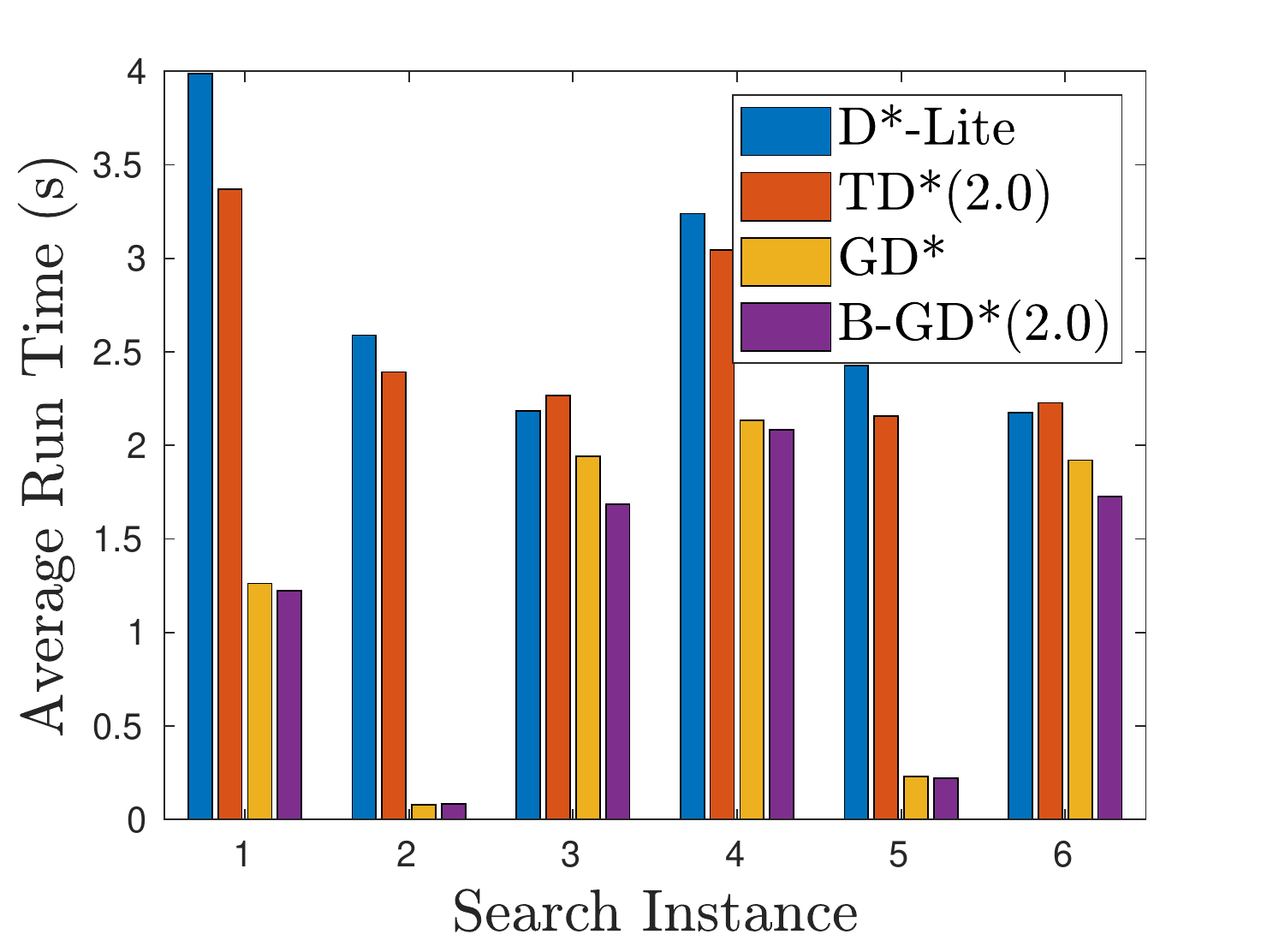}
		\caption{Total time}
	\end{subfigure}
	\caption{The average number of edge evaluations, the average number of vertex expansions, and the average total runtime taken over 100 random experiments in the Piano Movers' environment.}
	\label{dstar:f:piano}
\end{figure*}


A similar experiment for the PR2 environment was conducted. 
Once an optimal path is computed for the environment in scene 1 (Figure~\ref{lgls:f:pr2}(a))
the robot moves towards the goal traversing one segment of the optimal path. 
The environment is then changed to the one shown in scene 2 (Figure~\ref{lgls:f:pr2}(b)), and thus the algorithms replan to find a new optimal path to the goal.
We cycle through the environments twice more to obtain the total of six consecutive searches. 
We conducted the experiment for 100 different randomly generated start and goal configurations for the robot.
The comparison of the average performances results solved by the four different algorithms, namely, D*, GD* (1-step lookahead), TD* ($\eps_2=2$), and B-GD* (1-step lookahead, $\eps_1=\sqrt{2}, \eps_2=\sqrt{2}$) is shown 
in Figure~\ref{dstar:f:stat_pr2}. 

\begin{figure*}[ht]
	\centering
	\begin{subfigure}{0.32\textwidth}
		\includegraphics[width=\myLineScale\linewidth]{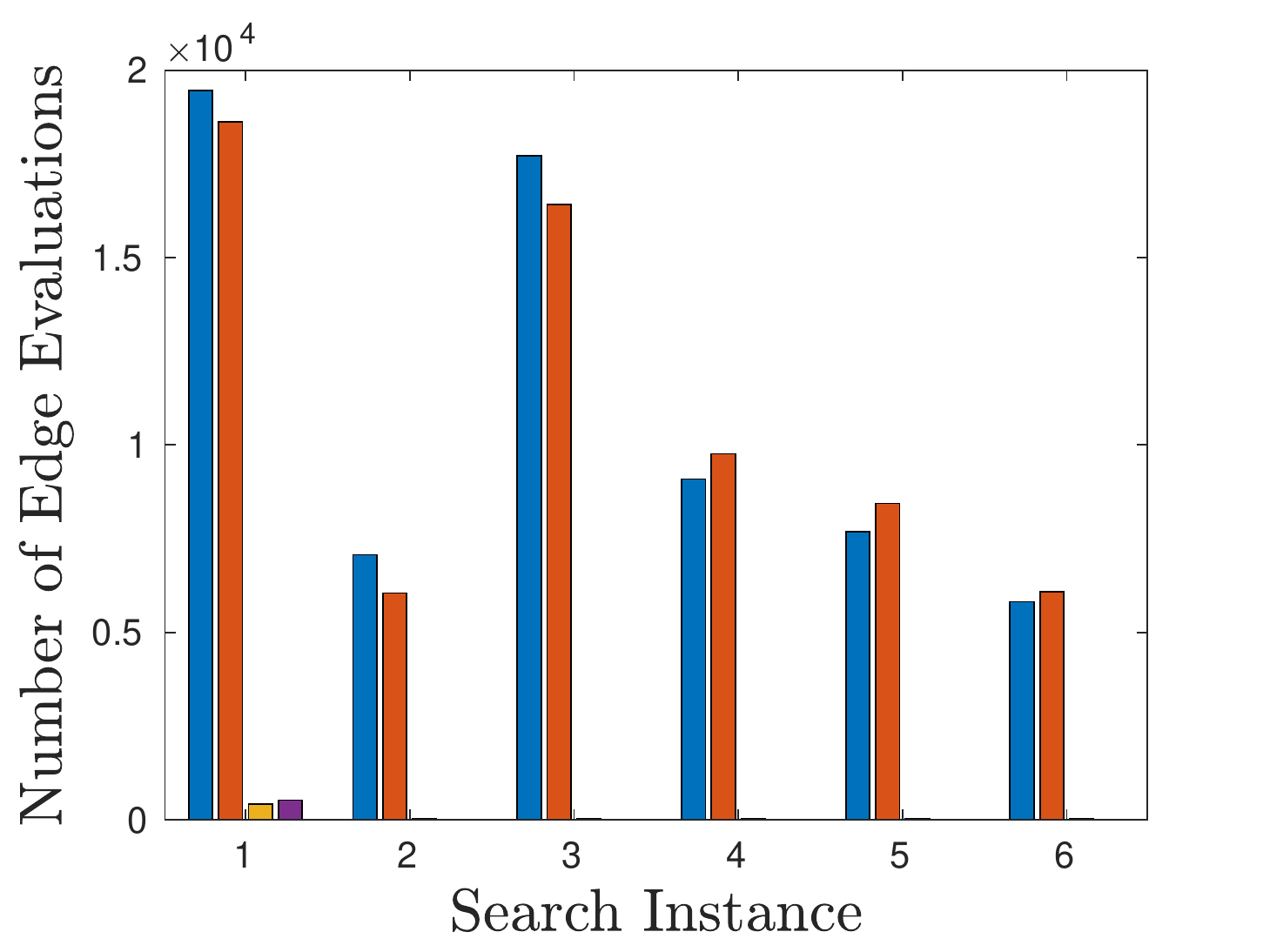}
		\caption{Edge evaluation}
	\end{subfigure}
	\begin{subfigure}{0.32\textwidth}
		\includegraphics[width=\myLineScale\linewidth]{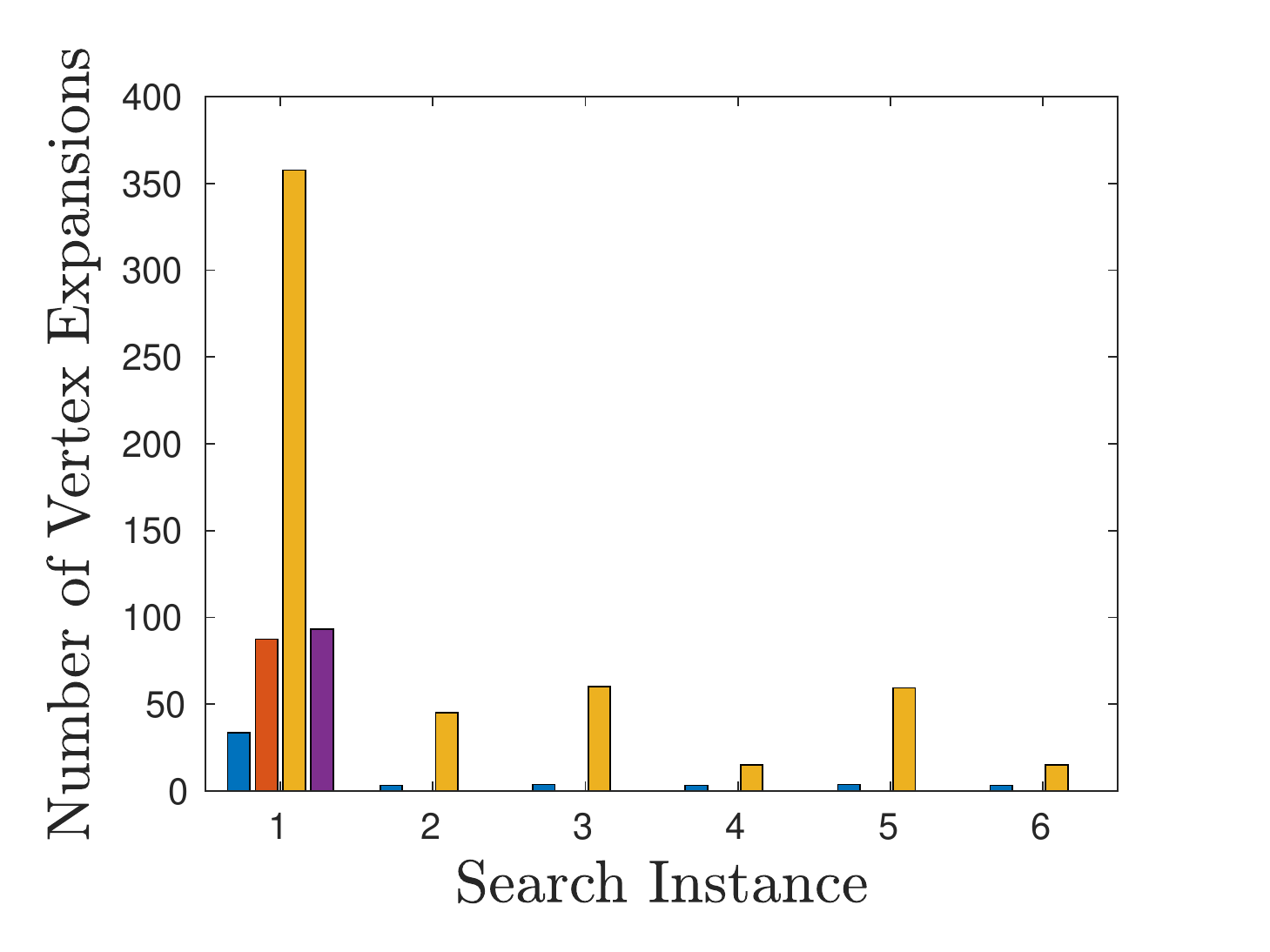}
		\caption{Vertex expansion}
	\end{subfigure}
	\begin{subfigure}{0.32\textwidth}
		\includegraphics[width=\myLineScale\linewidth]{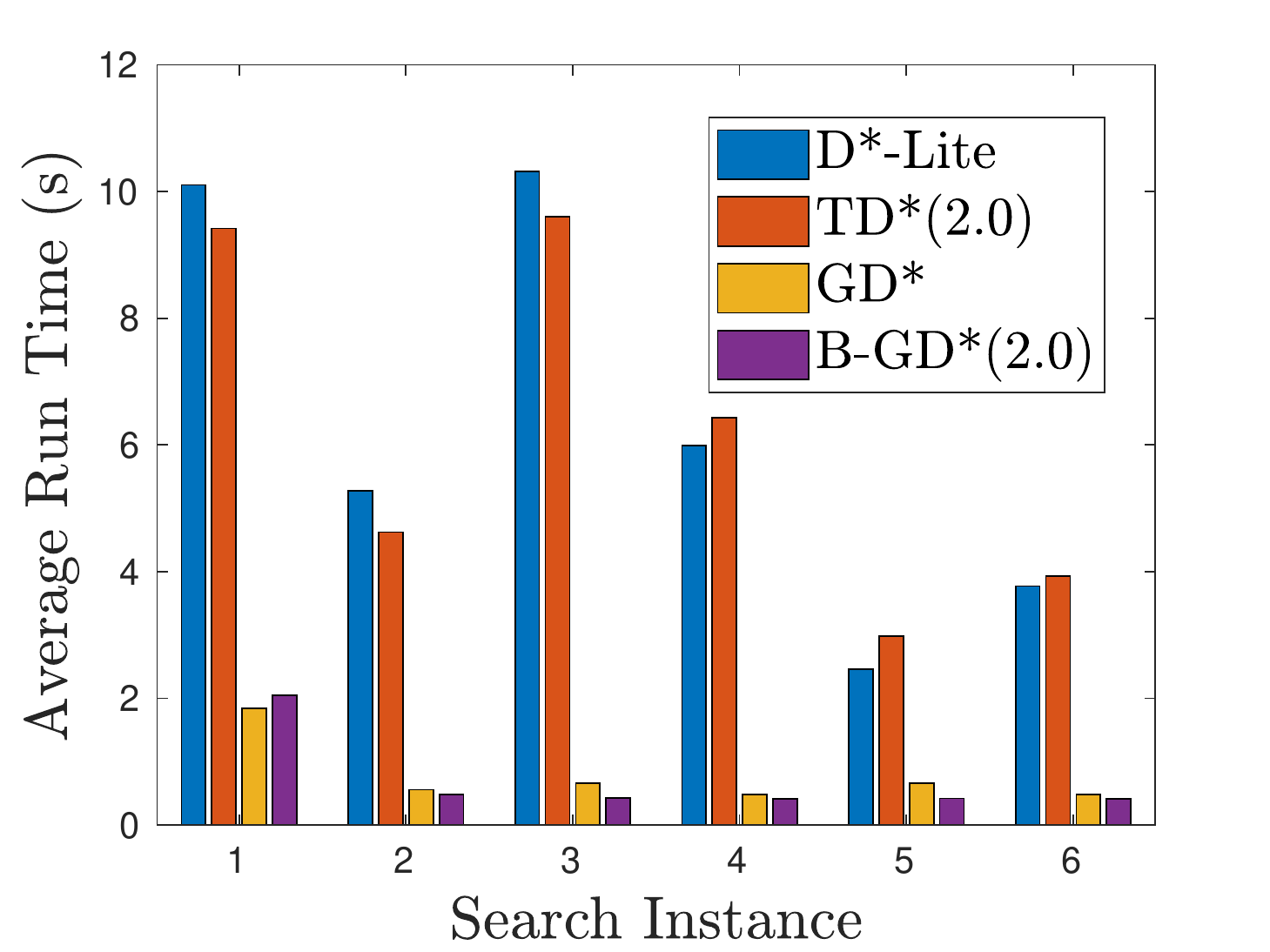}
		\caption{Total time}
	\end{subfigure}
	\caption{The average number of edge evaluations, the average number of vertex expansions, and the average total runtime taken over 100 random experiments in the PR2 environment.}
	\label{dstar:f:stat_pr2}
\end{figure*}

It can be seen that the lazy algorithms (GD* and B-GD*) significantly reduce the number of edge evaluations, and hence the total runtime. 
The bounded version of the algorithm (B-GD*) further reduces the total runtime via terminating upon finding a bounded suboptimal solution. Note that for the PR2 experiment, the GD* algorithm results in an approximately 7X speed-up in finding the initial solution.

Lastly, a similar experiment was conducted in the RacecarJ environment (Figure~\ref{lgls:f:racecar}). 
We constructed the graph via 500 randomly sampled vertices, each having edges connected to the 10 nearest neighbors via Reeds-Shepp curves similar to the stationary case.
Once an optimal path is found, the Racecar traverses one segment of the optimal path and we randomly change 10 percent of the edges in the graph simulating a dynamic environment.
We repeat this to obtain a total of six consecutive searches.
We gathered the experimental data using 100 randomly generated start and goal configurations.

\begin{figure*}[ht]
	\centering
	\begin{subfigure}{0.32\textwidth}
		\includegraphics[width=\myLineScale\linewidth]{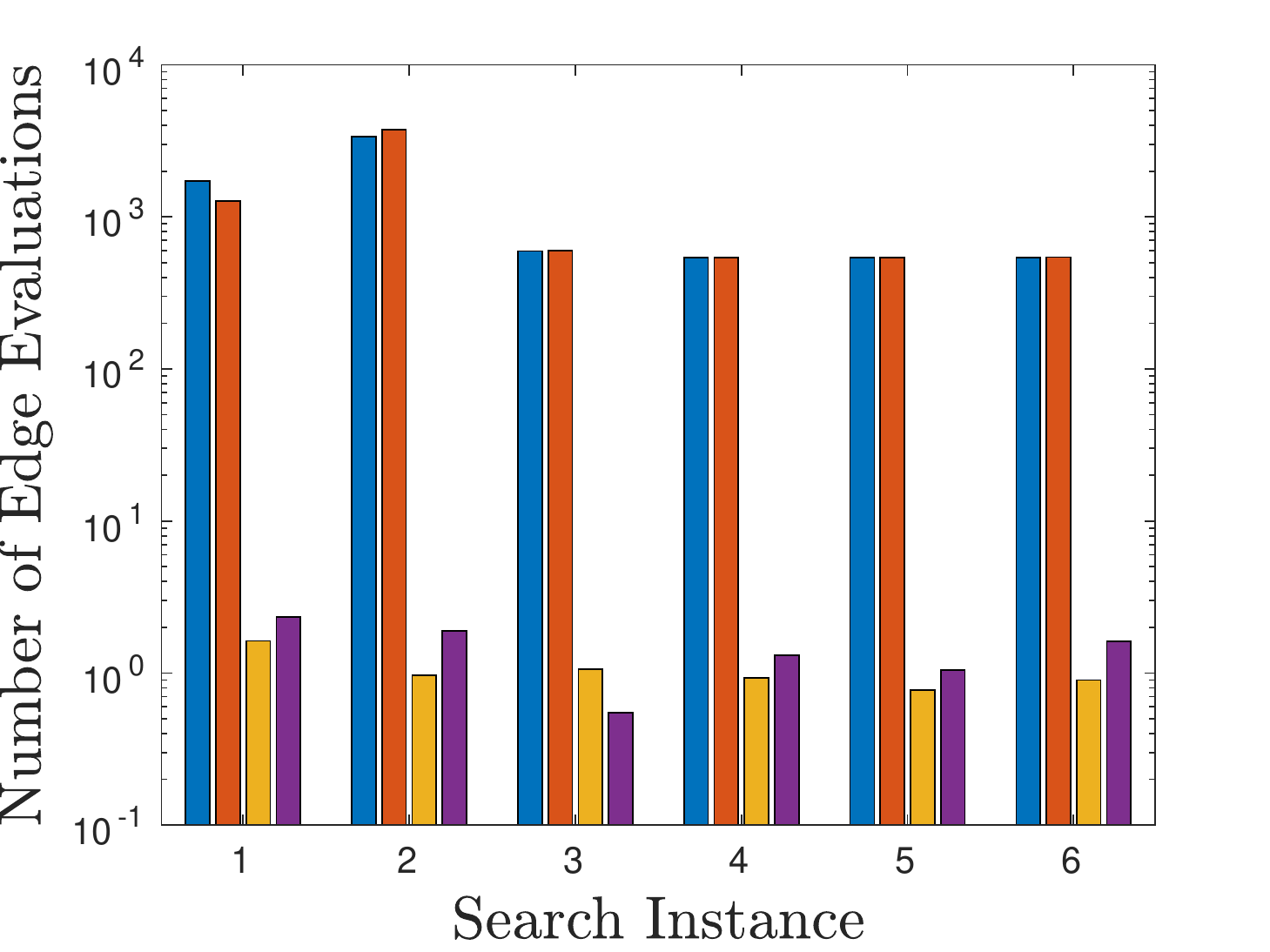}
		\caption{Edge evaluation}
	\end{subfigure}
	\begin{subfigure}{0.32\textwidth}
		\includegraphics[width=\myLineScale\linewidth]{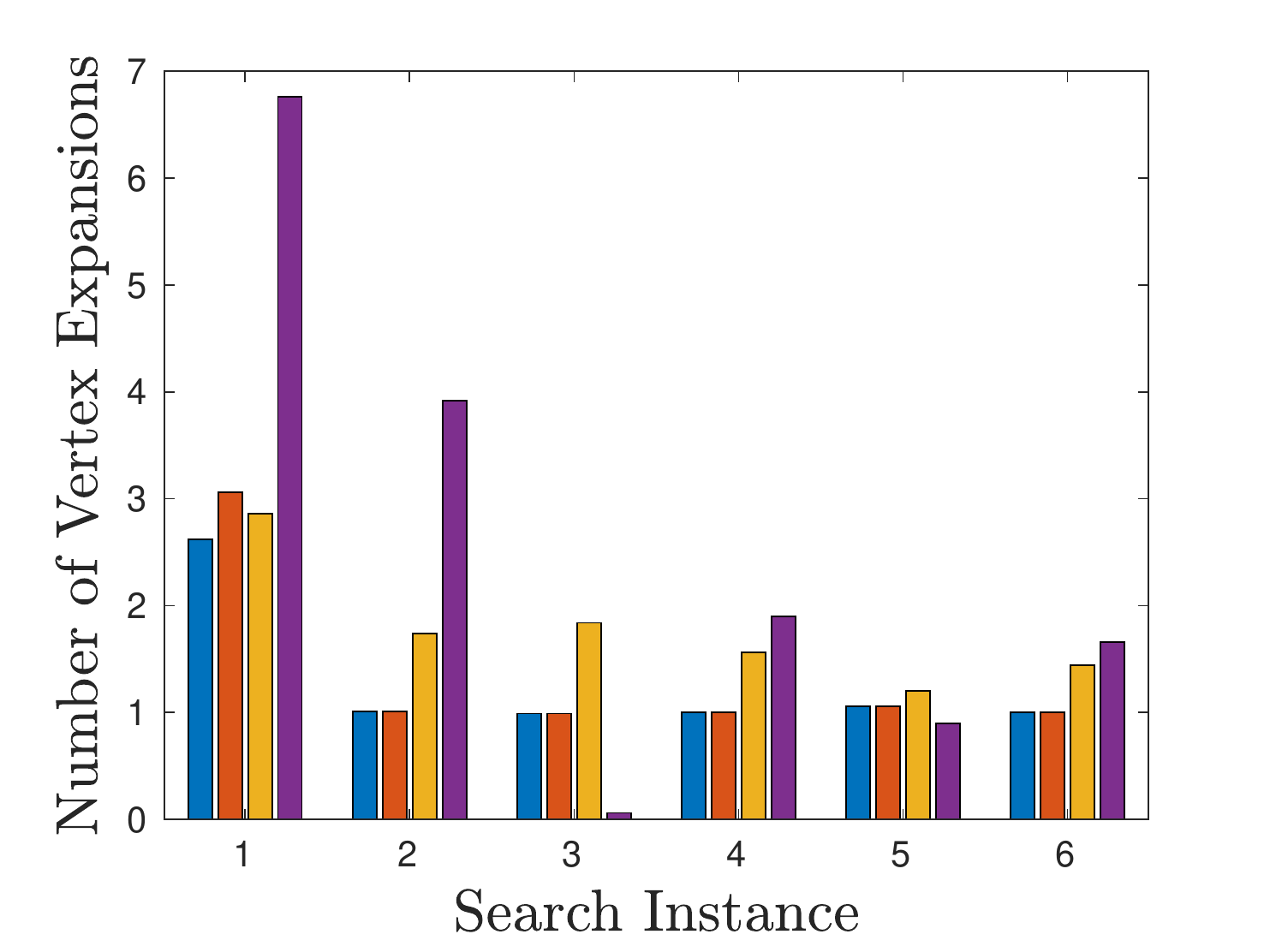}
		\caption{Vertex expansion}
	\end{subfigure}
	\begin{subfigure}{0.32\textwidth}
		\includegraphics[width=\myLineScale\linewidth]{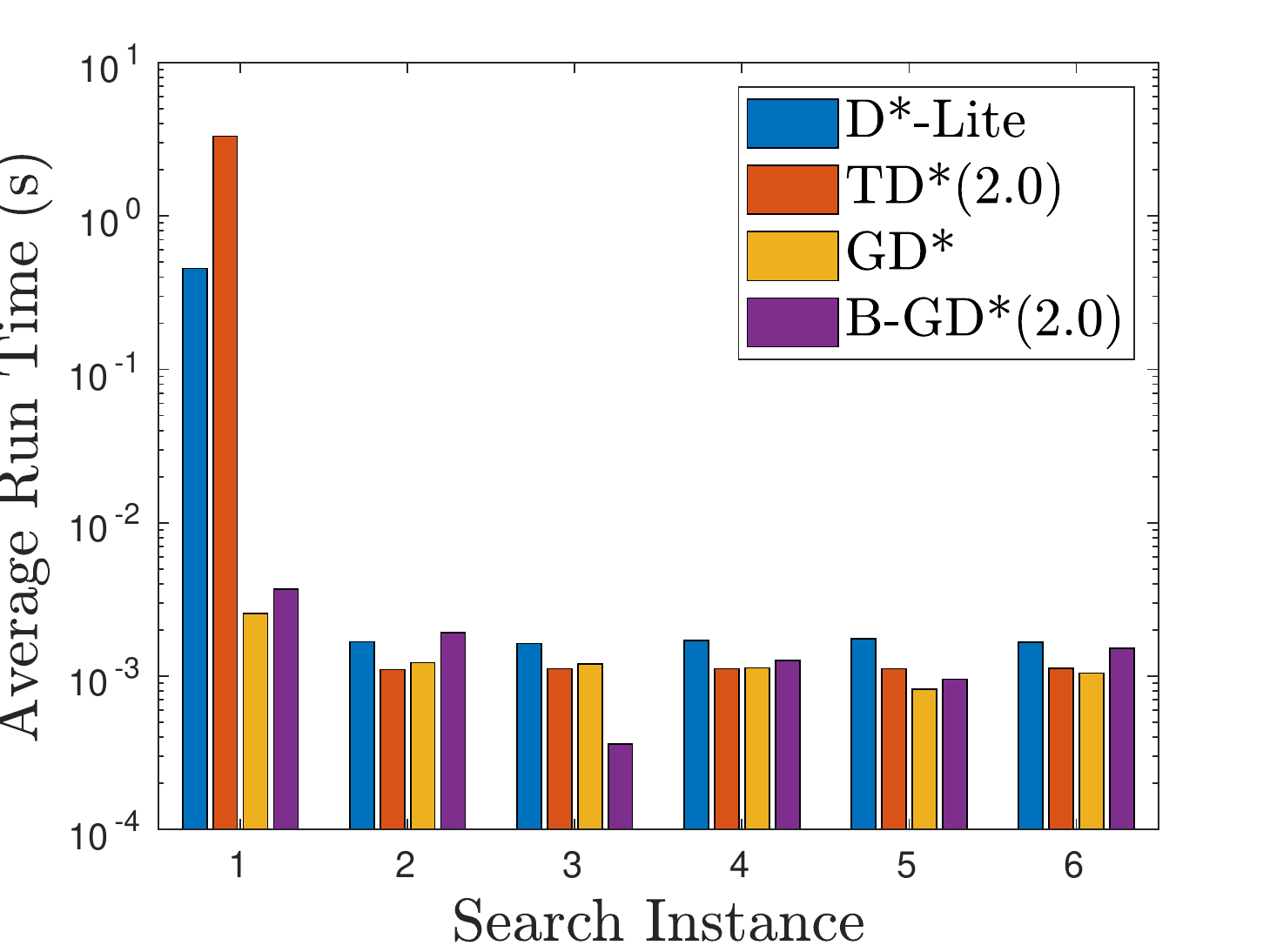}
		\caption{Total time}
	\end{subfigure}
	\caption{The average number of edge evaluations, the average number of vertex expansions, and the average total runtime taken over 100 random experiments for the RacecarJ environment.}
	\label{dstar:f:stat_racecar}
\end{figure*}

The comparison of the average performances results solved by the four different algorithms, namely, D*, GD* (infinite-step lookahead), TD* ($\eps_2=2$), and B-GD* (infinite-step lookahead, $\eps_1=\sqrt{2}, \eps_2=\sqrt{2}$) is shown in Figure ~\ref{dstar:f:stat_racecar}. As expected, the generalized algorithms, namely, GD* and B-GD* significantly reduce the number of edge evaluations in the graph in all the six consecutive searches with the trade-off of increasing the number of vertex expansions (conducting more search). Note that the runtime is also significantly reduced for the first search in this environment.

\subsubsection{Parametric Study.}

The choice of the lookahead value can have a significant effect on the performance of the algorithms. We demonstrate this for the dynamic algorithms for some sample experimental cases.
The comparison between 1-step lookahead and infinite-step lookahead for the PR2 experiments are shown in Figure~\ref{dstar:f:event_pr2}.  
As shown in  Figure~\ref{dstar:f:event_pr2}, an infinite-step lookahead reduces 
edge evaluations, while a 1-step lookahead minimizes vertex expansions within the lazy framework. 
For this particular problem, the infinite-step lookahead is about 5X faster than the $1$-step lookahead.
However, the optimal lookahead value is dependent on the problem.

\begin{figure*}[ht]
	\centering
	\begin{subfigure}{0.45\textwidth}
		\includegraphics[width=\myLineScale\linewidth]{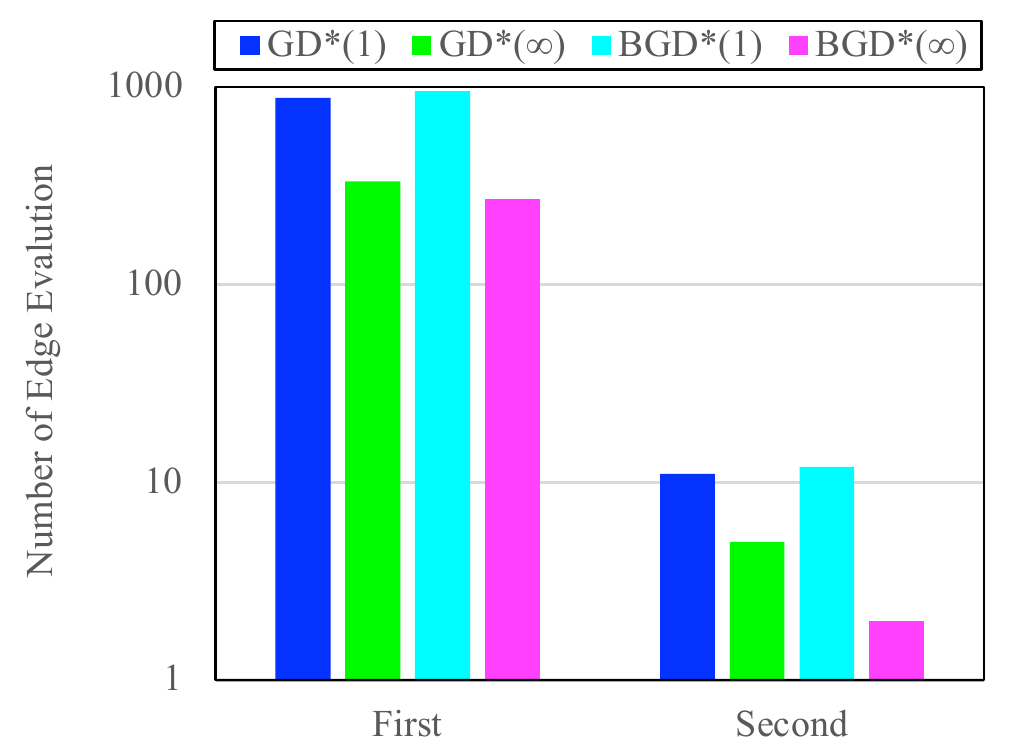}
		\caption{Edge evaluation}
	\end{subfigure}
	\begin{subfigure}{0.45\textwidth}
		\includegraphics[width=\myLineScale\linewidth]{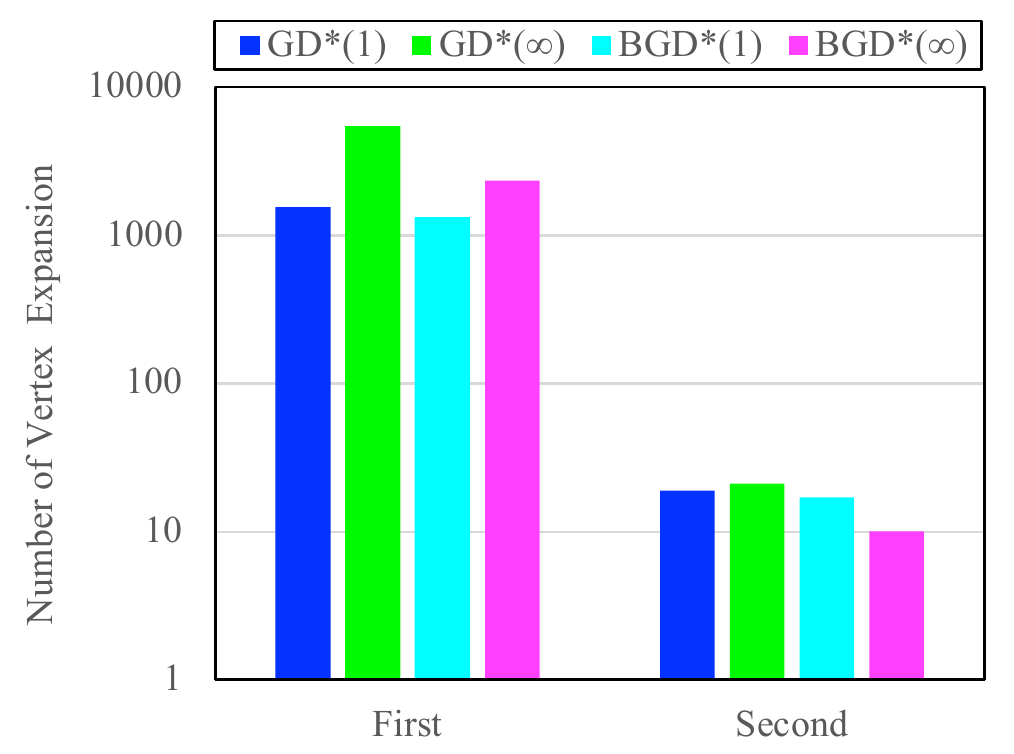}
		\caption{Vertex expansion}
	\end{subfigure}
	\caption{Performance comparison for the extreme lookahead values ($1-$step and $\infty$).}
	\label{dstar:f:event_pr2}
\end{figure*}

Figure~\ref{dstar:f:factor_pr2} demonstrates the effect of different values for the truncation and inflation factors in the PR2 simulation environment. 
As expected, increasing the bounds on the suboptimal solution generally leads to the reduction of both the number of vertex expansions and edge evaluations, resulting in reduced total solution time. 
\begin{figure*}[ht]
	\centering
	\begin{subfigure}{0.45\textwidth}
		\includegraphics[width=\myLineScale\linewidth]{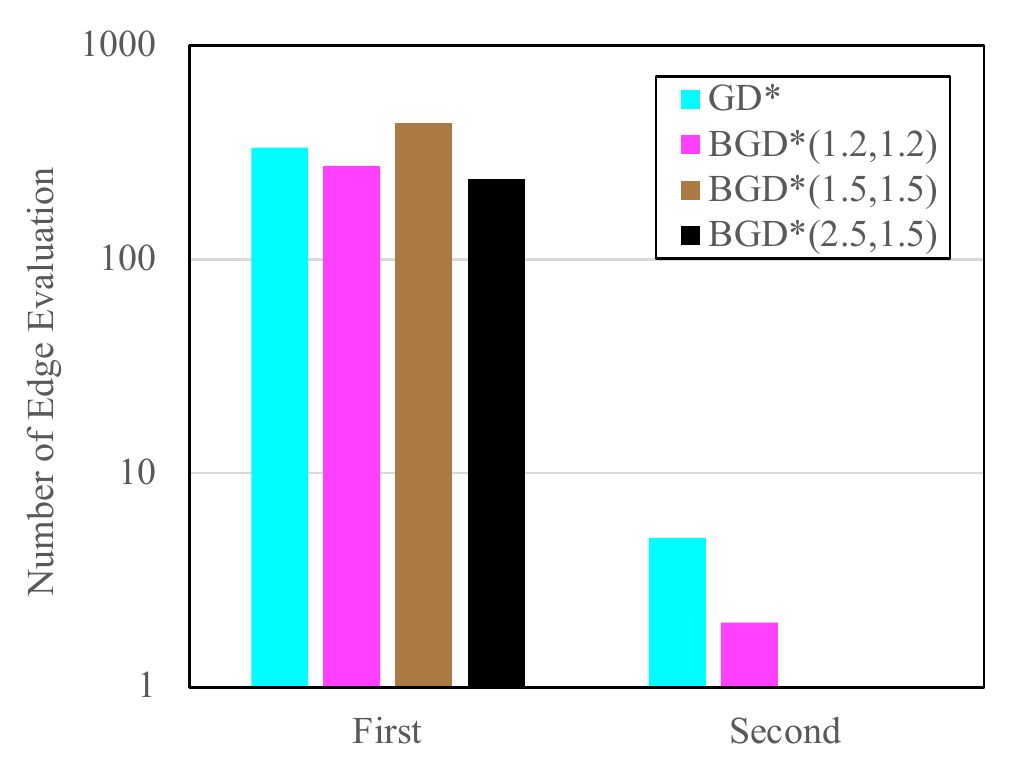}
		\caption{Edge evaluation}
	\end{subfigure}
	\begin{subfigure}{0.45\textwidth}
		\includegraphics[width=\myLineScale\linewidth]{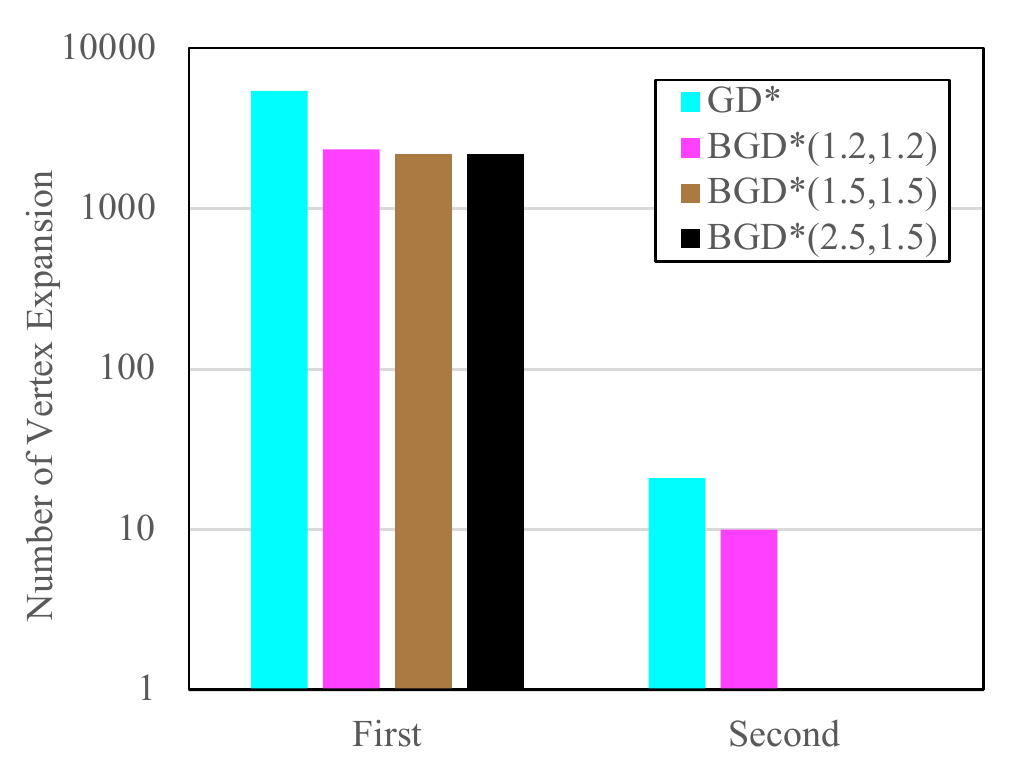}
		\caption{Vertex expansion}
	\end{subfigure}
	\caption{Performance comparison for the different truncation $(\eps_2)$ and inflation factors $(\eps_1)$.}
	\label{dstar:f:factor_pr2}
\end{figure*}

\section{Conclusion} \label{sec:conclusion}

We have presented a novel replanning framework that combines the vertex efficiency of incremental search methods with the edge efficiency of lazy search methods.
Within the proposed lazy incremental search framework, we have presented four different algorithms: L-GLS, B-LGLS, GD*, and B-GD*.
L-GLS and B-LGLS solve a sequence of planning problems with fixed start and goal vertices using the previous search results to efficiently restrict unnecessary edge evaluations.
GD* and B-GD* solve a sequence of planning problems with moving start vertices, generalizing D*-Lite and TD* within the lazy search framework.
We have proven that these algorithms are complete and correct in returning a bounded suboptimal solution given a graph change. 
In our numerical experiments, it was shown that our generalization significantly improves the performance of the classical incremental search algorithms via reducing unnecessary edge evaluations, speeding up replanning.
The proposed improvements enable classical algorithms to be applied in a broader range of applications where edge evaluation is expensive.



\begin{acks}
  This work has been supported by ARL under DCIST CRA W911NF-17-2-0181 and SARA CRA W911NF-20-2-0095, and by NSF under award IIS-2008686.
\end{acks}

\bibliographystyle{plainnat}
\bibliography{root}




\end{document}